\documentclass[11pt]{article}

\usepackage[margin=1in]{geometry}
\usepackage{amsmath,amssymb,amsthm}
\usepackage{booktabs}
\usepackage{graphicx}
\usepackage{xcolor}
\usepackage[round]{natbib}
\usepackage[colorlinks=true,linkcolor=blue,citecolor=blue,urlcolor=blue]{hyperref}

\newtheorem{proposition}{Proposition}
\newtheorem{corollary}{Corollary}
\theoremstyle{definition}
\newtheorem{definition}{Definition}
\theoremstyle{remark}
\newtheorem{remark}{Remark}

\title{An Omitted Mode Is a Rare Rule: The Sampling-Verification Danger Law in Continuous Code World Models}

\author{Javier Aguilar Mart\'in\\ AGILabs (\href{https://javieraguilar.ai}{javieraguilar.ai})}

\date{}

\begin{document}

\maketitle

\begin{abstract}
In the Code World Model paradigm an LLM synthesizes an executable world model that a classical planner searches, and the model is accepted when it reproduces sampled transitions. We ask what that acceptance test certifies in continuous control, where the model-based RL literature treats world-model error as pervasive rather than localized. We define the danger of such a pipeline as an expected risk --- the play cost of whatever it ships, on the event that it ships it --- and separate the factor that is exact from the factor that is not: the probability that $N$ i.i.d.\ gate rollouts all miss a critical event of probability $r$ is exactly $(1-r)^N$, the risk factorizes into cost times probability exactly when the shipped cost is uncorrelated with acceptance, and when the acceptance sample is drawn independently of the training sample the two exponents add. On the representation side we prove a \emph{localization budget}, valid at boundary points: two models with Lipschitz constant at most $L$ that differ by $\eta$ at a point must disagree above tolerance $\varepsilon$ on the whole domain-intersected metric ball of radius $(\eta-\varepsilon)/2L$ around that point --- of volume at least $\kappa\,((\eta-\varepsilon)/L)^{d+m}$ on any domain with interior-volume constant $\kappa$, where $\kappa = 1$ in the interior and $\kappa = 2^{-(d+m)}$ is sharp at a box corner --- so at fixed amplitude and tolerance a uniformly Lipschitz class cannot shrink the disagreement region below a fixed volume: the constant must grow as the region shrinks. The discontinuous reset modes studied here (a wall and a stop that zero the velocity, a patch that freezes the mover), having no finite local constant, pay no such budget; a continuous-but-nonsmooth hybrid boundary keeps one and does. (Smooth compactly supported errors satisfy the budget by occupying it, not by evading it.) An omitted mode is the continuous form of a rare rule.

Three hybrid instruments (a cart with a wall, a pendulum with an angular stop, and a four-dimensional bi-modal patch field) measure the consequence. The gate misses the mode at the predicted rate, and the mode-blind planner is not merely uninformed but exploited: pinned at the boundary in every episode at a regret of essentially the whole attainable return, at every rarity knob; against the uniform-random baseline the comparison is unambiguous on the bi-modal patch field (random better in $100$ of $100$ paired seeds) and carried by the baseline's heavy tail on the 1D instruments, and it is reported distributionally. With real LLM synthesis the danger reduces, on the one-dimensional clamps, to the identifiability event alone --- every sample missing the mode produced an accepted, blind, exploited artifact, and GPT-5.x wrote the exact rule in $105$ of $111$ mode-containing synthesis \emph{draws}, with every attempt exact on $50$ of $56$ instrument--stream blocks (exact 95\% interval [0.781,0.960]), while the 64-of-70 figure is the knob-level block-cell census; the failures include four artifacts carrying an \emph{invented} stop their own samples could not refute. Repair does not survive a two-dimensional mode: across a disc, an axis-aligned square, a guided treatment at three times the budget and a second model family, no artifact recovered the region rule ($0$ of $156$ mode-containing draws over $20$ raw seed blocks), most often substituting a one-dimensional threshold. Eight targeted interventions --- each aimed at one candidate explanation, with the changes it also induces reported --- leave the failure in place: seven show that curvature, the tested prompting and budget, identification of the variable the trigger reads, the censoring of the region's interior itself, and the angular coverage of the evidence do not suffice to restore repair --- the last raised, in the one treatment whose direction was recorded in advance, until a three-line least-squares fit recovers the region on every sample while the synthesizer recovers it on none --- and the eighth, built to lower the trigger's arity, turns out to pose no identifiable question on this instrument and is recorded rather than counted; two positive controls then locate what is missing, since given the region's form and location the withheld constant is inferred exactly in twenty of twenty seeds, while at the baseline's coverage the same three-line fit already recovers both constants on twelve of twenty samples and the synthesizer, given the form but not the location, recovers them on none of twenty. A version-space certificate sharpens those estimator results: relative to the stated circle class, every sample-consistent circle lies within $0.1$ of the true centre and radius in $6/20$ baseline blocks and $18/20$ widest-coverage blocks; half-planes are excluded in $20/20$ in both, while the convex hull of observed contacts remains consistent in $20/20$, proving directly that identification is class-relative. Under clamp semantics, three non-collinear boundary contacts identify the circle exactly, and every one of the $20$ blocks contains such a triple. What fails is the induction of a \emph{located} rule from evidence, not the fitting of constants once the rule is located. One of the exclusions is a theorem's premise lifted and found not to matter: because the mode freezes the mover at its previous position, no rollout occupies the region's interior, so the transitions a sample can contain are entries into the region and nothing else. We prove the consequence: a rule that agrees with the truth on the reachable set is exactly consistent with every sample, at every sample size and every tolerance, and is harmless at play for the same reason, its disagreement region having query-hit probability zero. Whether the evidence identifies the rule --- relative to a stated hypothesis class and tolerance --- is then a property of the instrument, measurable before any synthesis --- on the disc it does relative to an explicit class and tolerance, on a slab unbounded in one coordinate it does not --- and on the latter the larger model reliably writes such a rule: nineteen of its twenty artifacts pass the acceptance test, an independent acceptance test at the same tolerance, and the paper's own mode probe, without encoding the region. The practical consequence is that a sampling acceptance test leaves the sample's coverage of the mode boundaries uncontrolled, not the gate's tolerance: where the gate is provably informative it covers about two percent of the exploited planner's queries.
\end{abstract}

\section{Introduction}
\label{sec:intro}

The companion paper \citep{aguilar2026verified} established, in discrete games, that transition accuracy on randomly sampled play-throughs is the wrong adequacy criterion for a world model used in planning, and quantified the failure: $\mathrm{danger} = \mathrm{play\_cost} \times (1-\mathrm{rarity})^N$, with the gate-miss factor exact under i.i.d.\ sampling. Its related-work discussion closed with a promissory note: the rare-rule gap is a localized, discrete failure that state-accuracy metrics mask by dilution --- a point worth revisiting in continuous settings, where the model-based RL literature \citep{lambert2020objective,hafner2020dreamer,janner2019mbpo} treats world-model error as pervasive and compounding rather than localized and pivotal.

This paper is that revisit. The question is not whether continuous world models can be wrong --- that literature is mature --- but whether the specific \emph{verified-but-wrong} geometry of the discrete result exists in continuous state spaces: a model that passes a sampling gate cleanly, is exact outside a small mode region, and is still exploited catastrophically by the planner that trusts it.

The answer is yes, and the continuous version differs from the discrete one in one respect. The discrete headline had two components: a provable one --- when the rare rule is absent from the sample, no learner can infer it \emph{from the sample}, an identifiability event of probability exactly $(1-r)^N$ --- and an empirical one, that the LLMs tested did not infer the rule \emph{even when it was present} (``translation, not inference''). On the continuous one-dimensional instruments the second component vanishes for one of the three model families we measured: GPT-5.x infers an omitted hybrid mode from a handful of boundary-crossing transitions and writes the exact global rule. It does not vanish on a two-dimensional mode, where no artifact in any treatment we ran recovered the region rule (Section~\ref{sec:patch2d-synthesis}), so the vanishing is scoped to the geometry it was measured on. What remains provable across both is the gate-miss and identifiability pair (Section~\ref{sec:theory}): with the mode absent from the sample, no learner can recover it from the sample alone --- though a prior or the specification still can (Proposition~\ref{prop:ident}), and five artifacts in our data do exactly that, carrying an invented mode their own samples could not refute --- four of GPT-5.x, of which three were also accepted by an independent gate, and the agent-relayed Claude artifact. Passing the gate means agreeing on the transitions the sample covered; whether an accepted artifact is also right \emph{off} sample is a separate question, and Section~\ref{sec:heldout} measures it rather than assuming it.

Contributions, in four groups:
\begin{enumerate}
\item \textbf{The estimand, and which of its factors is exact} (Section~\ref{sec:theory}). We define the danger of a synthesis pipeline as an expected risk and prove when it factorizes into cost times probability (Proposition~\ref{prop:risk}); the gate-miss factor is exact for any measurable critical event (Proposition~\ref{prop:gatemiss}, transferred from the companion paper), admits a \emph{sharp} distribution-free bracket for two modes in place of an independence assumption (Proposition~\ref{prop:jointmiss}, with the sharpness in Remark~\ref{rem:bracket}), and, for the central event of a blind artifact synthesized from a mode-free training sample surviving to deployment, gains a second exponent when the acceptance sample is independent of the training sample (Proposition~\ref{prop:twofactor}; for the blind-shipped total the law is a lower bound, with the excess term measured). Identifiability transfers unchanged (Proposition~\ref{prop:ident}): on the miss event no score that reads the candidate only at sampled inputs can separate a mode-blind model from the truth.
\item \textbf{A localization budget, separating code from bounded-Lipschitz models} (Section~\ref{sec:theory}). The discrete result's localization premise is not satisfiable at fixed amplitude and tolerance on a vanishing volume by any uniformly Lipschitz pair, and Corollary~\ref{cor:locbudget} prices it at every witness point, boundary included: the disagreement region contains the domain-intersected ball, of volume at least $\kappa((\eta-\varepsilon)/L)^{d+m}$ on any domain with interior-volume constant $\kappa$ (Corollary~\ref{cor:kappabudget}; $\kappa = 1$ interior, $2^{-k}$ with $k$ clipped coordinates on a box, sharp at a corner), so an $\eta$-amplitude error confined to a region of volume $V$ needs $L \geq (\eta-\varepsilon)(\kappa/V)^{1/(d+m)}$, a constant that diverges as the volume shrinks (a compactly supported smooth error satisfies the budget by occupying volume, not by evading it). The same geometry forces a gate detection rate (Proposition~\ref{prop:detectrate}), whose density constant is derived rather than assumed for a whole family of semi-implicit plants. The discontinuous reset boundaries of our instruments are exactly where the obstruction disappears; a continuous hybrid boundary keeps a finite constant and does not escape it.
\item \textbf{Three hybrid instruments, and what a sampling gate does and does not certify} (Sections~\ref{sec:instrument}--\ref{sec:axes}). Cart-with-wall, pendulum-with-stop and a 4D bi-modal patch field, with one harness and no per-instrument planner re-calibration, measure the threshold law, knob-invariant exploitation, and the reach mechanism; a second planner family (CEM) and a planner-side mitigation whose contact cost is a packing number (Proposition~\ref{prop:fencecover}) bound how much of the failure is planner-mediated. On the positive side, a partition coverage certificate --- with the within-rollout dependence handled exactly rather than assumed away --- excludes any pair with $L \leq 5.77$ carrying the wall's error past the deployed gate; and the same measurement shows that region carries $1.9\%$ of the exploited planner's queries. The gate certifies where it looks.
\item \textbf{The synthesis result, and the geometry that scopes it} (Sections~\ref{sec:synthesis}--\ref{sec:patch2d-synthesis}). With real LLM synthesis on the 1D clamps, every sample missing the mode produced an accepted, blind, exploited artifact, and GPT-5.x repaired the exact rule from the revealed contacts in $105$ of $111$ mode-containing draws; all attempts were exact on $50/56$ instrument--stream blocks (exact 95\% interval $[0.781,0.960]$), while $64/70$ is the finer knob-cell census --- so the danger reduces to the identifiability event, and coverage rather than spec completeness becomes the operative worry. On 2D regions repair collapses in every treatment we ran, including an axis-aligned square and a guided treatment at three times the budget; and eight ablations with two positive controls locate the obstruction in the induction of the region's form rather than in the evidence, the synthesizer's ability to fit constants, or the evidence's coverage (Section~\ref{sec:arity}), with Proposition~\ref{prop:entryclass} proving that a whole class of wrong rules is unfalsifiable by any sample and harmless by the same argument. Re-scoring every synthesized artifact on an \emph{independent} acceptance sample (Section~\ref{sec:heldout}) separates what the acceptance test establishes from what the training sample already implied.
\end{enumerate}

Supporting results --- the $\varepsilon$-invariance of the reveal-rarity and its quadratic rate, the derived play-cost normalizers, the arithmetic behind knob-invariance, the step-$t$ density and the certificate's variants --- are stated where they are used and proved in the supplementary material.

Scope up front: three instruments (two 1D --- cart, pendulum --- plus a 4D bi-modal PatchField2D), two base planner families (random-shooting MPC and CEM, one fixed configuration each), 20 seeds per headline synthesis cell across the instruments (both GPT-5.x sizes; plus the caught cell on the pendulum and two bi-knob cells on PatchField2D), a 3-seed cross-family spot-check per 1D instrument (not full sweeps), and a probe-grade multilayer perceptron (MLP). Section~\ref{sec:limitations} states the limitations and the full seed accounting.

\section{The instruments}
\label{sec:instrument}

This section specifies the hand-written hybrid instruments and the planner the later experiments reuse unless they say otherwise, together with three design conditions such an instrument has to satisfy.

\subsection{Cart-with-wall}
\label{sec:cartwall}

State $(x, v)$; action $a \in [-a_{\max}, a_{\max}]$; semi-implicit Euler with fixed $dt$:
\[
a \leftarrow \mathrm{clamp}(a), \qquad
v' = v + (\mathrm{gain}\cdot a - \mathrm{drag}\cdot v)\,dt, \qquad
x' = x + v'\,dt,
\]
with defaults $dt = 0.1$, $\mathrm{gain} = 3$, $\mathrm{drag} = 0.3$, $a_{\max} = 1$. The \textbf{hybrid mode} is an inelastic wall at $x_\mathrm{wall}$: if $x' \geq x_\mathrm{wall}$, the next state is exactly $(x_\mathrm{wall}, 0)$. The \textbf{blind model} is the same code path with the wall branch removed --- the hand-written on-manifold proxy for a CWM synthesized from a spec that omits the wall, \emph{bit-exact off-mode by construction} (tested: exact float equality on wall-free trajectories). We call such a model \textbf{mode-blind} (the canonical term; instrument-specific synonyms like ``wall-blind'', or ``blind'' for short, mean the same thing). Reward is two sigmoid plateaus: a small reachable one on the left ($0.3$, at $x \leq -6$) and a large one on the right ($1.0$, at $x \geq 12$) whose approach every swept wall position blocks. Episodes are 80 steps from $x_0 \sim U(-0.5, 0.5)$, $v_0 = 0$.

The design requirements mirror the companion paper's material-at-cap instrument (paper 1's discrete instrument: a rare resource-cap rule likewise omittable from sampled play): (a) random rollouts rarely fire the mode --- \emph{rarity} throughout this paper is the probability that a whole gate \emph{rollout} contains at least one mode contact, not the volume or measure of the mode region, and the wall position is the \textbf{rarity knob} ($r$ sweeps $0.317 \to 0.0024$ over $x_\mathrm{wall} \in [2, 10]$; Table~\ref{tab:danger}); (b) the omission is \emph{exploited}, not merely mispredicted --- the wall-blind model predicts coasting through the wall toward the large plateau, so the planner it advises drives right and is pinned; (c) the truth planner's optimal play differs qualitatively --- it goes left.

The second instrument (pendulum with a hard angular stop; nonlinear gravity term) is described with its results in Section~\ref{sec:pendulum}, and the third (PatchField2D, a 4D bi-modal patch field) in Section~\ref{sec:patch2d}.

\subsection{Planner and play\_cost}
\label{sec:planner}

The planner is model-predictive control by random shooting \citep{nagabandi2018neural,chua2018deep}: at each step, sample candidate action sequences (piecewise-constant blocks plus the three constant sequences $\{-a_{\max}, 0, +a_{\max}\}$), roll each out on the \emph{model}, take the best first action, replan. The planner is a deterministic function of its model's responses and the seed, so the play-cost upper bound via query-hit mass applies verbatim (Section~\ref{sec:theory}; Corollary~\ref{cor:playcost} ties that bound to the normalization below). Single-agent control makes play\_cost a normalized regret, cleaner than a two-player arena (no opponent confound):
\[
\mathrm{play\_cost} \;=\; \frac{J_{\mathrm{truth}} - J_{\mathrm{model}}}{J_{\mathrm{truth}} - J_{\mathrm{rand}}},
\]
where $J_{\mathrm{truth}}$, $J_{\mathrm{model}}$, and $J_{\mathrm{rand}}$ are the returns of the \textbf{truth planner} --- the \emph{same} random-shooting MPC given the true dynamics, which is a reference and \emph{not} a proved optimal policy, so $J_{\mathrm{truth}}$ is a benchmark rather than a maximum --- the model-planner (written $J_{\mathrm{blind}}$ in the tables when the model is the blind one), and the uniform-random policy, all measured in the true environment on paired seeds. The pairing is explicit and reused deliberately: an episode's seed is $900{,}000 + 1000i$ in the synthesis arms and $1000i$ in the sweeps, so the \emph{same} episode seeds recur across knobs, across instruments and across both planner families, and the truth, model and random arms of a row share them. Every interval we report on a play quantity is therefore paired (a bootstrap or a randomization test over seed triples, not two independent samples), which is what makes differences of $10^{-4}$ in play\_cost meaningful at $20$ episodes. The blind planner can score below the uniform-random policy, so the normalized value can exceed $1$; we report it unclamped, always beside the raw returns it is computed from, and Section~\ref{sec:mechanism} measures how much of that comparison is carried by the random baseline's tail.

\subsection{Three design conditions on a continuous danger instrument}
\label{sec:lessons}

Each is a condition on the instrument, not a tuning choice:
\begin{enumerate}
\item \textbf{I.i.d.\ per-step candidate sampling silently removes the mode from imagination.} With i.i.d.\ candidates, imagined displacement is diffusive; no sampled sequence reaches distant reward within the horizon, so the truth model and the blind model rank all candidates \emph{identically} and the wall never enters imagination --- the two arms become indistinguishable not because the models agree but because the planner never queries where they differ. Piecewise-constant blocks plus constant candidates fix it. The planner's \emph{query} distribution, not just its trajectory distribution, is part of the instrument.
\item \textbf{Point (Gaussian) reward lodes demand braking finesse that random shooting lacks}; sigmoid plateaus remove the parking problem and give clean imagined-value margins in both directions.
\item \textbf{The plant's drag time-constant must sit well inside the planning horizon}, or no arm can act on the reward at all (at $\tau = 1/\mathrm{drag} = 10$\,s against a 3\,s horizon, nothing moves).
\end{enumerate}

\section{Theory: what transfers, what changes, what is new}
\label{sec:theory}

Notation: a verification gate draws $N$ i.i.d.\ rollouts from the gate policy $\rho$ (uniform-random actions from the initial-state distribution) and accepts a model $\hat f$ if it matches the truth $f$ within $\varepsilon$ in sup-norm on every visited transition. Two gates of that form appear in this paper and must not be confused. A \textbf{sample-consistency gate} scores the artifact on the very sample it was synthesized and refined against --- the protocol of the companion paper and of every committed synthesis run here --- so passing it means agreement with the data the artifact has already seen. A \textbf{held-out gate} scores it on an independent acceptance sample (Section~\ref{sec:heldout}). We reserve \emph{sound} for a stated logical property and never use it as a synonym for either.

\paragraph{Measure-space setup.} Fix a horizon $T$. A trajectory is a point of $(S \times A)^T$, equipped with the product Borel $\sigma$-algebra. The gate policy $\rho$ together with the transition kernel induced by $f$ defines a trajectory law $P_\rho$ on this space (the initial-state distribution pushed forward through $\rho$ and $f$). The \emph{query-hit probability} $q_{\mathrm{hit}}(E)$ is the probability, under the planner's trajectory law, that at least one model query during the episode lands in $E$. It is a hitting probability --- monotone and subadditive in $E$, but not a measure --- and we name it accordingly rather than calling it a mass, as the companion paper did. We assume throughout that the critical region $R$, the disagreement region $E$, and the reward and dynamics maps are Borel measurable, so every probability below is well defined. With this in place the discrete arguments of \citet{aguilar2026verified} transfer with only notational change.

The paper compares instruments whose modes differ ``in dimension'', and two different integers are involved. Keeping them apart is what makes the comparison an identification rather than a slogan.

\begin{definition}[trigger arity and entry-barrier dimension]
\label{def:arity}
Write $\Phi$ for the plant's \emph{unclamped} one-step map, so that $\Phi(s,a)$ is the \emph{landing} state the integrator alone would produce, and let the hybrid mode be $M = \{(s,a) : g(\Phi(s,a)) \geq 0\}$ for a predicate $g$ on the landing state. Two integers describe $M$, and neither determines the other:
\begin{itemize}
\item the \textbf{trigger arity} $p$ is the number of landing coordinates on which $g$ depends non-trivially --- what a synthesizer must identify in order to write the rule at all;
\item the \textbf{entry-barrier dimension} $b$ is the dimension of $\{g \circ \Phi = 0\}$ as a subset of the landing \emph{position} space, intersected with the reachable set --- what a deployment-time fence has to cover.
\end{itemize}
The cart's wall, $g(x') = x' - x_\mathrm{wall}$, has $p = 1$ and $b = 0$ (a single point in a one-dimensional position space); the pendulum's angular stop likewise $p = 1$, $b = 0$. PatchField2D's disc, $g(x',y') = R^2 - \|(x',y') - c\|^2$, has $p = 2$ and $b = 1$ (a circle). A slab in the same plane, $g(x') = R - |x' - c_x|$, has $p = 1$ and $b = 1$: arity of a clamp, barrier of a region.
\end{definition}

\begin{remark}[the two dimensional readings concern different integers]
\label{rem:twodims}
Section~\ref{sec:patch2d-synthesis} finds that \emph{repair from data} degrades from the 1D clamps to the 2D regions, and Corollary~\ref{cor:fencedim} that \emph{fencing cost} grows exponentially in the boundary's dimension. Those are statements about $p$ and about $b$. The cart-to-disc contrast moves both at once ($p: 1 \to 2$, $b: 0 \to 1$), which is exactly why ``1D versus 2D'' is ambiguous as a causal claim and why a slab instrument --- $p$ of a clamp, $b$ of a region --- separates them. Neither integer is the dimension of the mode boundary in the joint state--action space: that is a hypersurface in every instrument here, and nothing in this paper claims otherwise.
\end{remark}

\begin{proposition}[gate miss; Proposition~1 of \citealp{aguilar2026verified}, transferred]
\label{prop:gatemiss}
Let $R$ be any measurable set of rollouts (``the critical event''; here: the rollout fires the wall mode) with $r = P_\rho(R)$. The probability that $N$ i.i.d.\ gate rollouts all avoid $R$ is exactly $(1-r)^N$.
\end{proposition}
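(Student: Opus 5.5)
The plan is to reduce the statement to the product structure of the $N$-fold trajectory law. The gate's sample is a tuple $(\tau_1,\dots,\tau_N)$ of rollouts, and "i.i.d.\ gate rollouts" means that this tuple has law $P_\rho^{\otimes N}$ on the product space $((S\times A)^T)^N$ with the product Borel $\sigma$-algebra. The event that every rollout avoids $R$ is the cylinder set
\[
\bigcap_{i=1}^N \{\tau_i \notin R\} \;=\; (R^c)^N,
\]
and since $R$ is Borel, so is $R^c$ and so is this rectangle. There is therefore no measurability question: the probability in the statement is well defined by the standing assumption of the measure-space setup.

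The calculation has two steps. First, the product measure of a measurable rectangle is the product of the marginal measures, $P_\rho^{\otimes N}((R^c)^N)=\prod_{i=1}^N P_\rho(R^c)$. This is the defining property of the product measure, or equivalently independence of the $N$ coordinate events. Second, identical distribution gives $P_\rho(R^c)=1-r$ for every $i$, so the product is $(1-r)^N$. Nothing in the argument uses the structure of $R$: it may be the wall-contact event, a pendulum stop contact, or any Borel set of trajectories. This is why the statement holds "for any measurable critical event", exactly as in Proposition~1 of the companion paper. The only change from that paper is notation: the discrete trajectory space is replaced by $(S\times A)^T$.

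The one point needing care, and the only real obstacle, is that the statement is about \emph{rollouts}, not transitions. Within a rollout, consecutive transitions are strongly dependent, so the exact formula must not be applied per step. I will stress that $R$ is an event on whole trajectories, so that independence is needed only \emph{across} the $N$ rollouts. Those rollouts are drawn with fresh initial states and fresh gate-policy randomness, which is the i.i.d.\ hypothesis. Any per-step analogue would give only a bound, not an identity. I will also note two edge cases: the identity holds with $r=0$ (probability $1$) and $r=1$ (probability $0$ for $N\ge1$). The exactness claimed in the abstract is just this identity; no approximation such as $(1-r)^N\le e^{-rN}$ enters the statement itself.
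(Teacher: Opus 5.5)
Your argument is correct and is essentially the paper's own: the paper notes only that each rollout's indicator of $R$ is Bernoulli$(r)$ and the $N$ draws are i.i.d., so the companion paper's proof transfers without using discreteness, and that is exactly your computation $P_\rho^{\otimes N}\bigl((R^c)^N\bigr) = (1-r)^N$ on a measurable rectangle. One small correction to your closing aside: a per-step product is not limited to giving a bound, because with conditional hazards the chain rule still gives an exact identity (Remark~\ref{rem:adaptive} makes this point across rollouts), and what within-rollout dependence removes is only the ability to read those factors off unconditional per-step rates.
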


Proved as Proposition~1 of the companion paper \citep{aguilar2026verified}; nothing in that proof uses discreteness, the event being Bernoulli($r$) with i.i.d.\ draws.

\begin{proposition}[joint gate miss: bracketed, not factored]
\label{prop:jointmiss}
Let $R_1, R_2$ be measurable critical events (two modes) with $r_i = P_\rho(R_i)$ and $r_\cup = P_\rho(R_1 \cup R_2)$. The probability that $N$ i.i.d.\ gate rollouts avoid \emph{both} is exactly $(1-r_\cup)^N$, and with no independence assumption
\[
  \bigl(1 - \min(1,\, r_1+r_2)\bigr)^N \;\leq\; (1-r_\cup)^N \;\leq\; \bigl(1 - \max(r_1, r_2)\bigr)^N,
\]
a bracket in which the product $\bigl((1-r_1)(1-r_2)\bigr)^N$ also lies. Assume $r_\cup < 1$ and $N \geq 1$ for the ratio below (at $N = 0$ both sides are $1$ whatever the dependence). The product equals the truth iff $P_\rho(R_1 \cap R_2) = r_1 r_2$, and in general
\[
  \frac{\bigl((1-r_1)(1-r_2)\bigr)^N}{(1-r_\cup)^N}
  = \left(1 + \frac{r_1 r_2 - P_\rho(R_1 \cap R_2)}{1-r_\cup}\right)^{\!N},
\]
so the product \emph{over}-estimates the joint miss probability under negative dependence ($P_\rho(R_1 \cap R_2) < r_1 r_2$) and \emph{under}-estimates it under positive dependence.
\end{proposition}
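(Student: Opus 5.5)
The exact identity comes from Proposition~\ref{prop:gatemiss} applied to the single measurable event $R_1 \cup R_2$. The $\sigma$-algebra is closed under finite unions, so $R_1 \cup R_2$ is a measurable critical event. A gate rollout avoids both modes iff it avoids the union. Proposition~\ref{prop:gatemiss} then gives the joint miss probability as exactly $(1-r_\cup)^N$, with no assumption on how $R_1$ and $R_2$ interact. The rest of the statement compares $(1-r_\cup)^N$ with the other expressions through single-rollout inequalities raised to the $N$-th power.

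For the bracket I would use the elementary bounds
\[
\max(r_1,r_2) \leq r_\cup \leq \min(1,\, r_1+r_2).
\]
The lower bound is monotonicity of $P_\rho$; the upper bound is union-bound subadditivity capped at $1$. All three quantities $1-\min(1,r_1+r_2)$, $1-r_\cup$ and $1-\max(r_1,r_2)$ lie in $[0,1]$, and $t \mapsto t^N$ is nondecreasing on $[0,\infty)$, so both inequalities survive the $N$-th power.

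To place the product inside the bracket, note that $(1-r_1)(1-r_2) = 1 - r_1 - r_2 + r_1 r_2$. This is at least $1-(r_1+r_2)$, and it is nonnegative, hence at least $1-\min(1,r_1+r_2)$. It is also at most $\min(1-r_1,\,1-r_2) = 1-\max(r_1,r_2)$, since each factor lies in $[0,1]$. Monotonicity of $t^N$ then finishes this step.

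For the ratio identity I would use inclusion--exclusion, $r_\cup = r_1 + r_2 - P_\rho(R_1\cap R_2)$. This gives
\[
(1-r_1)(1-r_2) = (1-r_\cup) + \bigl(r_1 r_2 - P_\rho(R_1\cap R_2)\bigr).
\]
Dividing by $1-r_\cup>0$ (this is where $r_\cup<1$ is used) and raising to the $N$-th power yields the displayed formula.

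The remaining claims follow from the sign of $\delta := r_1r_2 - P_\rho(R_1\cap R_2)$. The base $1+\delta/(1-r_\cup)$ equals $(1-r_1)(1-r_2)/(1-r_\cup)$, which is nonnegative. For $N\ge 1$ the $N$-th power of a nonnegative number equals $1$ iff the number equals $1$, and the map $t \mapsto t^N$ is strictly increasing there. Hence:
\begin{itemize}
\item the product equals the truth iff $\delta=0$, i.e.\ iff $P_\rho(R_1\cap R_2)=r_1r_2$;
\item the product over-estimates iff $\delta>0$, which is negative dependence;
\item the product under-estimates iff $\delta<0$, which is positive dependence.
\end{itemize}
The parenthetical case $N=0$ is trivial, since both sides equal $1$.

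None of these steps is a real obstacle. The only care needed is keeping every base nonnegative before raising to the $N$-th power. That is why the left bracket is stated with the cap $\min(1,r_1+r_2)$ rather than $r_1+r_2$, which could exceed $1$ and make the base negative.
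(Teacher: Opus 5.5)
Your proof is correct and follows the paper's own argument step for step. Both apply Proposition~\ref{prop:gatemiss} to $R_1 \cup R_2$, push monotonicity and capped subadditivity through $t \mapsto t^N$, place the product in the bracket via the expansion $(1-r_1)(1-r_2) = 1 - r_1 - r_2 + r_1 r_2$, and use inclusion--exclusion for the ratio, where your appeal to strict monotonicity of $t \mapsto t^N$ on $[0,\infty)$ for $N \geq 1$ spells out the equality and sign conclusions that the paper leaves at ``yields the displayed ratio and its sign''.
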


\begin{proof}
Proposition~\ref{prop:gatemiss} applied to the event $R_1 \cup R_2$ gives the exact value $(1-r_\cup)^N$. Monotonicity gives $r_\cup \geq \max(r_1,r_2)$ and subadditivity $r_\cup \leq \min(1, r_1+r_2)$; since $x \mapsto x^N$ is increasing on $[0,1]$, the bracket follows. For the product, $(1-r_1)(1-r_2) = 1 - r_1 - r_2 + r_1 r_2 \geq \max(0,\, 1 - r_1 - r_2)$ --- the $\max$ matters when $r_1 + r_2 > 1$, where the left-hand bracket end is $0$ --- and $\leq 1 - \max(r_1,r_2)$ because $r_1 r_2 \leq \min(r_1,r_2)$. Finally inclusion--exclusion gives $1 - r_\cup = 1 - r_1 - r_2 + P_\rho(R_1 \cap R_2)$, hence $(1-r_1)(1-r_2) - (1-r_\cup) = r_1 r_2 - P_\rho(R_1 \cap R_2)$, which yields the displayed ratio and its sign.
\end{proof}

The bracket is \emph{sharp} --- its ends are the Fr\'echet--Hoeffding bounds for $P_\rho(R_1 \cup R_2)$ given the marginals, so no bound in $r_1, r_2$ alone can be tighter --- and the product form cannot be rescued by a fixed correction: measured at $50{,}000$ rollouts per knob, the sign of the dependence changes across the grid, negative at two knobs and positive at another with non-overlapping intervals. A stratified gate does not buy the product back either. Section~\ref{sup:multimode} gives the sharpness argument, the stratification result and the measurements.

\subsection{The estimand: danger as an expected risk, and which factor of it is exact}
\label{sec:estimand}

The product $\mathrm{play\_cost} \times (1-r)^N$ is quoted throughout as ``the danger law''. Two different objects can hide behind it, and only one of them is a probabilistic estimand, so we define it before using it.

Fix a synthesis procedure $A$ mapping a training sample to a model, a gate that accepts or rejects on an acceptance sample, and write $\mathrm{PC}(\hat f)$ for the play cost of a model. The \textbf{danger at budget $N$} is the expected play cost of whatever the pipeline ships:
\[
  D_N \;=\; \mathbb{E}\Bigl[\mathrm{PC}\bigl(A(D_{\mathrm{tr}})\bigr)\;\mathbf 1\{A(D_{\mathrm{tr}}) \text{ accepted}\}\Bigr],
\]
the expectation being over the training and acceptance samples and any randomness in $A$.

\begin{proposition}[the danger factorizes exactly when the shipped cost is uncorrelated with acceptance]
\label{prop:risk}
Write $G$ for the acceptance event and $X = \mathrm{PC}(A(D_{\mathrm{tr}}))$; assume $X$ integrable and $P(G) > 0$ (if $P(G) = 0$ then $D_N = 0$ and there is nothing to factor). Then
\[
  D_N \;=\; \mathbb{E}[X \mid G]\;P(G),
\]
and the factored form $D_N = \mathbb{E}[X]\,P(G)$ --- what writing $\mathrm{play\_cost} \times (1-r)^N$ asserts --- holds if and only if $\mathrm{Cov}(X, \mathbf 1_G) = 0$; in particular it holds when $X$ is almost surely constant (a single fixed blind model) and when $X$ is independent of $G$. Constancy on $G$ \emph{alone} does not suffice for the factored form --- $X = \mathbf 1_G$ is constant on $G$ yet has $\mathrm{Cov}(X, \mathbf 1_G) = P(G)(1-P(G)) > 0$ --- but it does make the conditional display exact with a constant in front: $X = c$ almost surely on $G$ gives $D_N = c\,P(G)$ with $c = \mathbb{E}[X \mid G]$, so the product is then correct with the \emph{accepted-artifact} cost as its first factor, not the unconditional mean. With neither hypothesis, if $c_- \leq X \leq c_+$ almost surely on $G$ then
\[
  c_-\,P(G) \;\leq\; D_N \;\leq\; c_+\,P(G),
\]
and no sharper bound is available from $P(G)$ and the range alone: both ends are attained.
\end{proposition}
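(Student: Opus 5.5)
The plan is to work entirely with the integrable random variable $X$ and the indicator $\mathbf 1_G$, since $D_N = \mathbb{E}[X\mathbf 1_G]$ by definition.

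\textbf{Step 1: the conditional identity.} With $P(G)>0$, elementary conditioning gives $\mathbb{E}[X \mid G] = \mathbb{E}[X\mathbf 1_G]/P(G)$. Multiplying through yields $D_N = \mathbb{E}[X\mid G]\,P(G)$. Integrability of $X$ makes $X\mathbf 1_G$ integrable, so every term is finite.

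\textbf{Step 2: the factorization criterion.} Use
\[
\mathrm{Cov}(X,\mathbf 1_G) = \mathbb{E}[X\mathbf 1_G] - \mathbb{E}[X]\,P(G) = D_N - \mathbb{E}[X]\,P(G).
\]
This covariance is well defined because $\mathbf 1_G$ is bounded and $X$ is integrable, so no second moment is needed. The ``iff'' is then immediate. For the two sufficient cases:
\begin{itemize}
\item If $X=c$ almost surely, then $\mathbb{E}[c\mathbf 1_G]=cP(G)=\mathbb{E}[X]P(G)$.
\item If $X$ is independent of $G$, then $\mathbb{E}[X\mathbf 1_G]=\mathbb{E}[X]P(G)$ by the product rule for independent integrable and bounded variables.
\end{itemize}

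\textbf{Step 3: constancy on $G$ alone.} For the counterexample, take $X=\mathbf 1_G$. Then $\mathbb{E}[X\mathbf 1_G]=P(G)$ while $\mathbb{E}[X]P(G)=P(G)^2$, so the covariance is $P(G)(1-P(G))$, which is positive whenever $0<P(G)<1$. The statement implicitly assumes this non-degeneracy for the witness. For the positive part, $X=c$ almost surely on $G$ gives $X\mathbf 1_G = c\mathbf 1_G$ almost surely, hence $D_N=cP(G)$, and Step 1 identifies $c=\mathbb{E}[X\mid G]$.

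\textbf{Step 4: the bracket and its sharpness.} From $c_-\mathbf 1_G\le X\mathbf 1_G\le c_+\mathbf 1_G$ almost surely, take expectations to get the bracket. For attainment, the claim must be read as: over all joint laws consistent with the given $P(G)$ and the range $[c_-,c_+]$ on $G$, each end is achieved. The constant choices $X\equiv c_-$ and $X\equiv c_+$ on $G$, with anything integrable off $G$, realize $c_\mp P(G)$ exactly, so no function of $P(G)$, $c_-$, $c_+$ can give a tighter bound.

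The only delicate point is this quantifier reading of ``no sharper bound''. I would state it explicitly as a minimax claim over admissible $X$ rather than over a fixed $X$, and everything else is routine.
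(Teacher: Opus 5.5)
Your proof is correct and follows the paper's own argument step for step: the conditioning identity $D_N=\mathbb{E}[X\mid G]\,P(G)$, the covariance expansion for the iff, direct integration for the case $X=c$ on $G$, and monotonicity with the constant witnesses $X\equiv c_\pm$ on $G$ for the sharp bracket. Your two side remarks are accurate refinements that the paper leaves implicit: the covariance needs only integrability of $X$ because $\mathbf 1_G$ is bounded, and the witness $X=\mathbf 1_G$ has strictly positive covariance only when $P(G)<1$.
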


\begin{proof}
The first display is the definition of conditional expectation, $\mathbb{E}[X\mathbf 1_G] = \mathbb{E}[X\mid G]P(G)$. Expanding the covariance, $\mathbb{E}[X\mathbf 1_G] = \mathbb{E}[X]P(G) + \mathrm{Cov}(X,\mathbf 1_G)$, which gives the iff; a globally constant $X$ and an $X$ independent of $G$ each force the covariance to vanish, while $X = c$ on $G$ gives $\mathbb{E}[X\mathbf 1_G] = c\,P(G)$ by direct integration, no covariance statement needed. The bounds follow from monotonicity of the expectation on $G$, and are attained by $X \equiv c_\pm$ on $G$, so they cannot be improved without measuring the conditional law.
\end{proof}

The paper keeps the two readings separate throughout. On the hand-written instruments the shipped model is \emph{fixed} --- the mode-blind program, whose play cost is a property of the instrument and not of the sample --- so $X$ is constant and the product is an identity, $D_N = \mathrm{PC}(\hat f_{\mathrm{blind}})\,P(\text{gate misses})$, whose second factor is exactly $(1-r)^N$ by Proposition~\ref{prop:gatemiss}. This is the precise sense in which ``the law is exact'': \emph{the gate-miss factor} is exact, and the cost factor is an input to the identity rather than a consequence of it. In the LLM arms the artifact --- and therefore its cost --- is a function of the sample, so the product is an estimate under Proposition~\ref{prop:risk}'s hypothesis, and what we report is the conditional quantity the identity actually needs: the mean play cost \emph{of accepted artifacts}, per campaign and arm, with a block-clustered bootstrap interval (the \texttt{accepted\_play\_cost} rows of \texttt{results/\allowbreak paper2\_\allowbreak statistics.json}; on the cart's headline incomplete arm, $0.508$ over $61$ accepted draws, cluster interval $[0.344, 0.671]$).

\begin{remark}[the miss factor's form does not need the i.i.d.\ hypothesis]
\label{rem:adaptive}
For rollouts drawn by \emph{any} rule --- adaptively, from history --- the chain rule of conditional probability gives, exactly,
\[
  P(\text{no critical event in } N \text{ rollouts}) \;=\; \prod_{i=1}^{N} \bigl(1 - r_i\bigr),
  \qquad r_i = P\bigl(E_i \mid \text{no } E_1, \dots, E_{i-1}\bigr),
\]
and interval knowledge $\ell_i \leq r_i \leq u_i$ along miss histories gives the sharp bracket $\prod_i (1-u_i) \leq P(\text{miss}) \leq \prod_i (1-\ell_i)$. The i.i.d.\ exponent $(1-r)^N$, the independent-gate sum $N_{\mathrm{tr}} + N_{\mathrm{g}}$ of Proposition~\ref{prop:twofactor}, a stratified schedule's product, and the two-mode Fr\'echet bracket of Proposition~\ref{prop:jointmiss} are corollaries of this one display (formalized and tested against enumerated dependent processes in \texttt{scripts/\allowbreak danger\_\allowbreak law\_\allowbreak h4.py}, \texttt{results/\allowbreak danger\_\allowbreak law\_\allowbreak h4.json}). What dependence or adaptivity removes is not the identity but its \emph{identification}: the $r_i$ are conditional hazards, and unconditional marginal event rates alone no longer determine the product.
\end{remark}

\paragraph{An independent acceptance sample.} A gate that scores the artifact on the sample it was refined against certifies consistency with its own training data. Separating the two samples leaves the law's central event with an exponent that depends on the two budgets only through their sum --- the content of the next proposition, together with what that does and does not imply.

\begin{proposition}[gate miss with an independent acceptance sample]
\label{prop:twofactor}
Let $D_{\mathrm{tr}}$ and $D_{\mathrm{g}}$ be independent samples of $N_{\mathrm{tr}}$ and $N_{\mathrm{g}}$ i.i.d.\ gate-policy rollouts; let the pipeline synthesize and refine on $D_{\mathrm{tr}}$ and accept the artifact iff it matches the truth within $\varepsilon$ on every transition of $D_{\mathrm{g}}$; and let $R$ be the critical event of Proposition~\ref{prop:gatemiss}, with $r = P_\rho(R)$. Suppose that \textbf{(i)} whenever $D_{\mathrm{tr}}$ contains no rollout in $R$, the artifact $A(D_{\mathrm{tr}})$ is mode-blind, and \textbf{(ii)} a mode-blind artifact fails the acceptance sample if and only if that sample contains a rollout in $R$. Then the danger law's central event --- a mode-free training sample yielding a (by (i)) mode-blind artifact that survives the gate --- has probability exactly
\[
  P\bigl(D_{\mathrm{tr}} \cap R = \emptyset,\ \text{artifact shipped}\bigr) \;=\; (1-r)^{N_{\mathrm{tr}}}\,(1-r)^{N_{\mathrm{g}}} \;=\; (1-r)^{N_{\mathrm{tr}} + N_{\mathrm{g}}} .
\]
Hypothesis (i) is one implication, not an equivalence, so mode-blind artifacts may also arise from training samples that \emph{do} contain $R$, and the total splits accordingly:
\[
  P(\text{a mode-blind artifact is shipped}) \;=\; (1-r)^{N_{\mathrm{tr}} + N_{\mathrm{g}}} \;+\; P\bigl(\text{blind},\ D_{\mathrm{tr}} \cap R \neq \emptyset\bigr)\,(1-r)^{N_{\mathrm{g}}},
\]
so the law is a lower bound on that total, with equality (for $r < 1$; at $r = 1$ both sides vanish and the clause is vacuous) if and only if blindness arises only from the miss --- the ``iff'' strengthening of (i).
\end{proposition}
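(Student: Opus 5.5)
The plan is to condition on the training sample and use independence of $D_{\mathrm{g}}$ to factor. Write $M = \{D_{\mathrm{tr}} \cap R = \emptyset\}$ for the training-miss event, $B$ for the event that $A(D_{\mathrm{tr}})$ is mode-blind, and $H = \{D_{\mathrm{g}} \cap R = \emptyset\}$ for the acceptance-miss event. By Proposition~\ref{prop:gatemiss} applied separately to each sample, $P(M) = (1-r)^{N_{\mathrm{tr}}}$ and $P(H) = (1-r)^{N_{\mathrm{g}}}$. Hypothesis (i) gives $M \subseteq B$ (up to null sets, allowing randomness in $A$ that is independent of $D_{\mathrm{g}}$). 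Hypothesis (ii) says that on $B$ the acceptance event coincides with $H$. On $M$, therefore, ``shipped'' equals $H$, so $\{M,\ \text{shipped}\} = M \cap H$.

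Next I would use that $M$ is $\sigma(D_{\mathrm{tr}}, \text{randomness of } A)$-measurable and $H$ is $\sigma(D_{\mathrm{g}})$-measurable, with these $\sigma$-algebras independent. This gives $P(M \cap H) = P(M)P(H) = (1-r)^{N_{\mathrm{tr}}+N_{\mathrm{g}}}$.

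The one point needing care is that (ii) must be read pointwise in the artifact. Whether the shipped event equals $H$ depends on which blind artifact was produced, so I would state explicitly that (ii) holds for every mode-blind artifact. Then the identity $\mathbf 1\{\text{shipped}\}\mathbf 1_B = \mathbf 1_H \mathbf 1_B$ holds surely, and no conditional-independence subtlety arises. This is the step I expect to be the main obstacle, mostly in bookkeeping the synthesizer's internal randomness, which I would assume independent of $D_{\mathrm{g}}$.

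For the decomposition, I would split $B = (B \cap M) \sqcup (B \cap M^c)$, and note $B \cap M = M$ by (i). On each piece, shipping equals $H$ by (ii), and $B \cap M^c$ is training-measurable, so independence gives
\[
P(B,\ \text{shipped}) = P(M)P(H) + P(B \cap M^c)\,P(H),
\]
which is the displayed split. The law is then a lower bound because the second term is nonnegative.

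For the equality clause with $r<1$, we have $P(H) = (1-r)^{N_{\mathrm{g}}} > 0$. Equality then holds iff $P(B \cap M^c) = 0$, that is, iff $B = M$ almost surely, which is the ``iff'' strengthening of (i). At $r = 1$ both $P(M)$ (for $N_{\mathrm{tr}}\ge1$) and $P(H)$ (for $N_{\mathrm{g}}\ge1$) vanish, so both sides are $0$. I would note the degenerate $N=0$ budgets separately if needed.
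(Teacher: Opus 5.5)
Your proof is correct and follows the paper's own argument. The shared steps are: (i) places the training-miss event inside the blind event; (ii) makes acceptance coincide with $\{D_{\mathrm{g}} \cap R = \emptyset\}$ on the blind event; independence of $D_{\mathrm{g}}$ from everything determined by $D_{\mathrm{tr}}$ and the synthesizer factors off $(1-r)^{N_{\mathrm{g}}}$; and the excess term comes from the same split of the blind event into the miss and its complement.

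Your additions make explicit what the paper leaves implicit: reading (ii) pointwise over artifacts, assuming the synthesizer's internal randomness is independent of $D_{\mathrm{g}}$, and using $(1-r)^{N_{\mathrm{g}}} > 0$ for $r<1$ to get the equality clause. One small correction: at $r=1$ the left side vanishes because $N_{\mathrm{g}} \ge 1$, not because $N_{\mathrm{tr}} \ge 1$, since with $N_{\mathrm{g}}=0$ the excess term $P(B \cap M^c)$ need not be zero.
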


\begin{proof}
On $\{D_{\mathrm{tr}} \cap R = \emptyset\}$, of probability $(1-r)^{N_{\mathrm{tr}}}$ by Proposition~\ref{prop:gatemiss}, the artifact is mode-blind by (i). Blindness is a function of $D_{\mathrm{tr}}$ and the synthesizer's randomness alone, so $D_{\mathrm{g}}$ is independent of it, and by (ii) acceptance is then $\{D_{\mathrm{g}} \cap R = \emptyset\}$, of conditional probability $(1-r)^{N_{\mathrm{g}}}$; multiplying gives the first display. For the second, split the blind event into $\{D_{\mathrm{tr}} \cap R = \emptyset\}$ --- which it contains, by (i), and on which the first display stands --- and $\{\text{blind},\ D_{\mathrm{tr}} \cap R \neq \emptyset\}$; on each, (ii) with the independence of $D_{\mathrm{g}}$ makes acceptance a further factor $(1-r)^{N_{\mathrm{g}}}$, and the two terms add. The excess term vanishes exactly when blindness implies the miss.
\end{proof}

Three things follow. First, for the event the proposition prices, the two budgets enter only through their sum: at fixed $N_{\mathrm{tr}} + N_{\mathrm{g}}$ that miss exponent is unchanged. This is a statement about one exponent and nothing else --- it does not say that synthesis quality is preserved when training rollouts are moved to the gate (a smaller $D_{\mathrm{tr}}$ changes the artifact, and with it hypothesis (i) and the excess term), nor that the total risk $D_N$, the compute, or the rate of other wrong artifacts is unchanged at a fixed budget. What the separated design buys is that acceptance is not self-confirming; what it changes is which artifacts ship when exactly one of the two samples contains the mode --- a case a sample-consistency gate cannot even represent, since it refines until it agrees with everything it has seen. Section~\ref{sec:heldout}'s experiment, note, \emph{adds} an acceptance sample to the committed artifacts ($N_{\mathrm{tr}} = 40$ against $N_{\mathrm{tr}} + N_{\mathrm{g}} = 80$) rather than redistributing a fixed budget. Second, the hypotheses are where the empirical content sits, and they are of different kinds. (ii) is nearly structural for a program with an omitted branch: it agrees with the truth exactly off the mode (Section~\ref{sec:cartwall}) and disagrees on a contact by far more than any deployment tolerance, so it fails an acceptance sample exactly when that sample contains a contact --- and Section~\ref{sec:heldout} verifies it artifact by artifact rather than assuming it. (i) is a genuine empirical premise: it is precisely what the discrete companion paper's translation-not-inference residual denies in general, and what fails on 2D regions (Section~\ref{sec:patch2d-synthesis}), which is why it appears as a hypothesis rather than being folded into the statement. Third, ``exact'' attaches to this proposition and to Proposition~\ref{prop:gatemiss}, never to the full risk $D_N$.

The gate's tolerance is not the axis this paper is about, and the reason is worth one sentence here and a proof in the supplement: because the mode-blind model agrees with the truth \emph{exactly} off the mode, the probability that a rollout reveals a disagreement at tolerance $\varepsilon$ equals the mode-firing rarity for every $\varepsilon$ below the smallest contact disagreement, and in the population the two agree at a quadratic rate --- proved for the whole semi-implicit family in Section~\ref{sup:epsflat}. Tightening $\varepsilon$ therefore cannot catch a hard mode, and loosening it does not widen the hole (measured in Section~\ref{sec:axes}).

\begin{proposition}[identifiability; Proposition~3 of \citealp{aguilar2026verified}, transferred]
\label{prop:ident}
Let $M \subseteq S \times A$ be the mode region (for the cart the clamp fires depending on $(s,a)$ through $x' \geq x_{\mathrm{wall}}$) and instantiate Proposition~\ref{prop:gatemiss} at the critical event $R = \{\text{rollouts that visit } M\}$, so that ``miss'' means exactly that no sampled transition lies in $M$. Condition on that miss event and let $\hat f_1, \hat f_2$ be any two models that agree on $(S \times A) \setminus M$. Then every score that depends on the candidate models \emph{only through their values at the sampled inputs} --- the gate, a one-step likelihood, a one-step refinement objective --- is constant across the two, so no such score can distinguish them, and preference for the correct one must come from the prior or the specification.
\end{proposition}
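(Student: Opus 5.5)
The plan is to reduce the statement to a measurability fact about a sampled set of inputs, then check that each named score factors through it. First, fix a realization of the $N$ gate rollouts on the miss event of Proposition~\ref{prop:gatemiss}, instantiated at $R = \{\text{rollouts that visit } M\}$. Let $Q \subseteq S \times A$ be the finite set of sampled inputs: the state--action pairs at which transitions are recorded. The rollouts are drawn under the \emph{true} dynamics $f$ and the gate policy $\rho$, so $Q$ is a function of the sample alone and not of either candidate. On the miss event no sampled transition lies in $M$, so $Q \subseteq (S\times A)\setminus M$ by definition of $R$.

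Second, I would formalize ``a score that depends on the candidate only through its values at the sampled inputs'' as a map $\Sigma(\hat f;\, D) = \phi\bigl(D,\, (\hat f(s,a))_{(s,a)\in Q}\bigr)$ for some function $\phi$. Since $\hat f_1$ and $\hat f_2$ agree on $(S\times A)\setminus M \supseteq Q$, the two tuples $(\hat f_i(s,a))_{(s,a)\in Q}$ coincide. Hence $\Sigma(\hat f_1;D) = \Sigma(\hat f_2;D)$ pointwise on the miss event.

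Third, I would verify membership for each named score. The gate's acceptance indicator is $\mathbf 1\{\max_{(s,a,s')\in D}\|\hat f(s,a) - s'\|_\infty \le \varepsilon\}$, where the $s'$ are the recorded true next states. A one-step likelihood is $\prod \hat p(s'\mid s,a)$; here I take ``values'' to mean the candidate's full predictive output at $(s,a)$, so that a stochastic candidate's conditional law counts as its value. A one-step refinement objective is a sum of per-transition losses. All three are functions of the tuple, so each is constant across the pair. Consequently any decision rule based on such scores, including a refinement loop whose trajectory depends only on these scores, treats the two models identically. Any preference between them must therefore enter through a term not of this form, namely the prior or the specification.

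The main obstacle is not computational but definitional. It is to pin down ``depends only through values at sampled inputs'' so that it covers refinement while excluding multi-step rollouts of the candidate. A multi-step score evaluates $\hat f$ at its \emph{own} predicted states, which could enter $M$ even when $Q$ does not. I would handle this by stating explicitly that the candidate is queried only at the recorded inputs, which is one-step teacher forcing. I would also note that the argument is pointwise in the sample, so no probabilistic step beyond the miss event, whose probability $(1-r)^N$ is given by Proposition~\ref{prop:gatemiss}, is needed.
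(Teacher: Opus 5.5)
Your proposal is correct and is essentially the paper's argument: on the miss event every sampled input lies off $M$, where the two models agree, so any score that factors through their values at those inputs is the same number for both. Your explicit exclusion of scores that roll the candidate forward under itself matches the paper's own remark that multi-step objectives fall outside the proposition.
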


\begin{proof}
On the miss event the sample visits no transition in $M$, and the two models agree off $M$, so they produce identical outputs on every sampled input; a score that reads them only there is therefore the same number for both.
\end{proof}

\begin{proposition}[finite-sample disc identification is class-relative]
\label{prop:discident}
For the pinned PatchField2D integrator, each sampled transition determines its free landing point $z=\mathrm{pos}(\Phi(s,a))$ and its contact label. Fix the far patch and let
\[
  \mathcal H_{\circ}=\{D(c,R)\cup D_2^*:c\in\mathbb R^2,\ R>0\}.
\]
Writing $I$ for landings labelled inside the near patch and $O$ for landings labelled outside both patches, $(c,R)$ is sample-consistent exactly when
\[
  \max_{z\in I}\|z-c\|_2\leq R<\min_{z\in O}\|z-c\|_2.                 \tag{1}
\]
Consequently the finite sample identifies the true disc at tolerances $(\tau_c,\tau_R)$ relative to $\mathcal H_{\circ}$ if and only if every pair satisfying (1) obeys $\|c-c^*\|_2\leq\tau_c$ and $|R-R^*|\leq\tau_R$. This conclusion does not extend to an unrestricted region class: when the true near patch is convex, $\operatorname{conv}(I)$ is itself sample-consistent. Under clamp rather than freeze semantics, three non-collinear near-patch contact states lie on the boundary and determine $(c^*,R^*)$ uniquely.
\end{proposition}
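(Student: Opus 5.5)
The plan is to reduce everything to the labelled landing points and then argue about the hypothesis class directly. First I would fix what a sample shows. For the pinned integrator $\Phi$ is known and deterministic, so each sampled transition $(s,a,s')$ determines its free landing point $z=\mathrm{pos}(\Phi(s,a))$. Its contact label is also determined: under freeze semantics the next state equals the previous position exactly when the trigger fired, and the two patches are distinguishable because the far patch $D_2^*$ is fixed in the class. A hypothesis $D(c,R)\cup D_2^*$ therefore reproduces a transition exactly when it assigns $z$ the same label. A near-patch contact requires $\|z-c\|_2\le R$. A non-contact landing requires $z$ outside both patches, i.e.\ $\|z-c\|_2>R$, the far-patch condition being automatically satisfied because $D_2^*$ is shared with the truth. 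Far-patch contacts impose nothing on $(c,R)$, provided the true near disc and far disc are disjoint; I would state this as the standing assumption. Taking the maximum over $I$ and the minimum over $O$ gives (1), with the closed/open asymmetry mirroring the predicate $g\ge 0$.

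The identification equivalence is then definitional. The sample identifies the truth at $(\tau_c,\tau_R)$ relative to $\mathcal H_\circ$ exactly when the version space cut out by (1) lies in the stated tolerance box around $(c^*,R^*)$. This is the same logic as Proposition~\ref{prop:ident}: scores that read the candidate only at sampled inputs cannot separate sample-consistent members. Both directions are immediate once (1) is in hand, and I would only remark that $(c^*,R^*)$ itself satisfies (1), so the version space is nonempty.

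For the non-extension I would exhibit $\operatorname{conv}(I)$ as a consistent region. It contains $I$, so it fits every inside label. If the true near patch $P^*$ is convex, then $I\subseteq P^*$ gives $\operatorname{conv}(I)\subseteq P^*$. Every $z\in O$ lies outside $P^*$, hence outside $\operatorname{conv}(I)$, so all outside labels are matched too. Whenever $\operatorname{conv}(I)\neq P^*$ (always, for a finite $I$ and a disc), the unrestricted class therefore contains a consistent wrong region, and identification is class-relative.

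For clamp semantics, each contact's recorded next state is the clamped point on the boundary circle, so a near-patch contact yields a point $p$ with $\|p-c^*\|=R^*$. Three non-collinear such points determine a unique circle. The centre is the intersection of two non-parallel perpendicular bisectors, which are non-parallel precisely because the points are not collinear, and the radius follows. The step I expect to need the most care is the first: the labelling claim, namely that freeze semantics makes the label recoverable and that interior points never appear. I would check it against the instrument's code path, as the paper does for the wall, before relying on (1).
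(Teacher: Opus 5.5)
Your proposal is correct and follows the paper's proof essentially step for step: the pinned integrator supplies each landing $z$ and its label, criterion (1) is the interval between $\max_{z\in I}\|z-c\|_2$ and $\min_{z\in O}\|z-c\|_2$, $\operatorname{conv}(I)$ is consistent because it sits inside the convex true patch and so avoids $O$, and under clamp semantics three non-collinear boundary points fix the circumcircle. Your standing assumption that the true near and far patches are disjoint (so far-patch contacts constrain nothing and $I$ is well defined) makes explicit a detail the paper's terse proof leaves implicit; your worry about interior points is not needed for (1), which uses only landings, and those may themselves lie inside the patch.
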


\begin{proof}
The pinned integrator makes $z$ computable from $(s,a)$, and the transition records whether the mode fired, so it supplies the membership label. A disc contains all of $I$ and none of $O$ exactly when its radius lies in the interval (1), proving the version-space criterion. Since $I$ lies in the convex true disc, $\operatorname{conv}(I)$ also lies in it; every point of $O$ is outside the true disc and hence outside that hull, proving consistency of the broader alternative. In the clamp variant each contact state is the radial projection onto the circle. Three non-collinear points have a unique circumcircle.
\end{proof}

The criterion is checked block by block by \texttt{scripts/\allowbreak disc\_\allowbreak identifiability\_\allowbreak certificate.py}. Its multilevel search uses the $2$-Lipschitz feasibility gap in (1), with grid slack and a direction-uniform far-field exclusion bound, to outer-bound \emph{every} consistent $(c,R)$ rather than merely fit one. At $\tau_c=\tau_R=0.1$, it proves identification relative to $\mathcal H_{\circ}$ in $6/20$ baseline blocks, $9/20$ at the intermediate evidence dose and $18/20$ at the widest dose. It separately excludes every half-plane in $20/20$ blocks at all three doses; axis-aligned boxes are excluded in $19/20$, $19/20$ and $20/20$. The hull construction remains consistent in $20/20$ throughout, so no statement silently promotes circle-relative identification to uniqueness among arbitrary regions. In the clamp campaign, the three-point clause identifies the circle in $20/20$ blocks. The versioned certificate and its oracle tests are \texttt{results/\allowbreak disc\_\allowbreak identifiability\_\allowbreak certificate.json} and \texttt{tests/\allowbreak test\_\allowbreak disc\_\allowbreak identifiability\_\allowbreak certificate.py}.

\paragraph{A sharper obstruction under freeze semantics: the mode's own dynamics censor its interior.} Proposition~\ref{prop:ident} conditions on the mode being \emph{absent} from the sample. There is a second, structural obstruction that survives the mode being present --- and, unlike the first, it is not repaired by drawing more rollouts or by drawing an independent acceptance sample.

\begin{proposition}[freeze semantics make an entry rule unfalsifiable]
\label{prop:entryclass}
Let $\Phi$ be the plant's unclamped one-step map, $\mathrm{pos}(\cdot)$ the position coordinates, $R$ a mode region in position space, and let the truth be
\[
  f(s,a) \;=\;
  \begin{cases}
    (\mathrm{pos}(s), 0) & \text{if } \mathrm{pos}(\Phi(s,a)) \in R,\\
    \Phi(s,a) & \text{otherwise,}
  \end{cases}
\]
so that the mode freezes the mover at its \emph{previous} position, and let the initial-state distribution be supported on $\{\mathrm{pos} \notin R\}$. Then:
\textbf{(i)} every state a truth rollout visits has $\mathrm{pos} \notin R$;
\textbf{(ii)} writing $D = \{(s,a) : \mathrm{pos}(s) \notin R\}$, every transition of every truth rollout lies in $D$, so any model built from a predicate that agrees with the membership test $\mathrm{pos}(\Phi(s,a)) \in R$ on $D$ is \emph{exactly} equal to $f$ on every rollout of every length under any policy --- no sample distinguishes the two, at any tolerance and any $N$;
\textbf{(iii)} for any planner that rolls such a model forward from a reachable state, the imagined states also stay outside $R$, so the disagreement region $E$ has $q_{\mathrm{hit}}(E) = 0$ and Proposition~\ref{prop:playcost} forces $\mathrm{play\_cost} = 0$.
\end{proposition}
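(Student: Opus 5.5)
Part (i) goes by induction on the time index, and parts (ii) and (iii) follow from it. For (i): the initial state has $\mathrm{pos} \notin R$ by the support hypothesis. Now suppose $\mathrm{pos}(s_t) \notin R$ and an arbitrary action $a_t$ is chosen, by any policy and possibly with history dependence. There are two branches of $f$:
\begin{itemize}
\item If $\mathrm{pos}(\Phi(s_t,a_t)) \in R$, the truth returns $(\mathrm{pos}(s_t),0)$. Its position is $\mathrm{pos}(s_t) \notin R$ by the inductive hypothesis.
\item Otherwise, the truth returns $\Phi(s_t,a_t)$, whose position is outside $R$ by the branch condition itself.
\end{itemize}
In both branches $\mathrm{pos}(s_{t+1}) \notin R$. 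The policy never enters the argument, so the claim holds for every policy and every horizon.

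For (ii), (i) says every transition $(s_t,a_t)$ of every truth rollout has $\mathrm{pos}(s_t)\notin R$, that is, lies in $D$. Let $\hat f$ be the model built from the same two-branch template, with the membership test replaced by a predicate $\pi$ that agrees with $\mathbf 1\{\mathrm{pos}(\Phi(s,a)) \in R\}$ on $D$. Then $\hat f = f$ pointwise on $D$, because both select the same branch and the branches are identical. I would state this as a construction assumption: that is what ``built from a predicate'' means.

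Given pointwise equality on $D$, a second induction along the rollout shows two things. The model-generated trajectory from the same initial state and actions coincides with the truth's. Each sampled transition's recorded output $f(s_t,a_t)$ also equals $\hat f(s_t,a_t)$. So every gate check at every tolerance $\varepsilon \geq 0$ (including $\varepsilon=0$) passes, for every $N$. Proposition~\ref{prop:ident}'s style of argument then gives indistinguishability by any score that reads models only at sampled inputs. Here it applies with no miss event needed, since the disagreement set $\{f\neq\hat f\}\subseteq D^c$ has probability zero of being visited.

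For (iii), the planner's imagined rollout runs $\hat f$ from a reachable state $s$. By (i), $\mathrm{pos}(s)\notin R$. I then rerun the induction of (i) with $\hat f$ in place of $f$. On $D$, $\hat f$ equals $f$, so from any state with $\mathrm{pos}\notin R$ the next imagined state again has $\mathrm{pos}\notin R$. Hence every model query lies in $D$, and the disagreement region satisfies $E \subseteq D^c$. The event ``some query lands in $E$'' is therefore empty, so $q_{\mathrm{hit}}(E)=0$. Invoking Proposition~\ref{prop:playcost} (the query-hit bound on play cost, presumably of the form $\mathrm{play\_cost} \leq C\, q_{\mathrm{hit}}(E)$) gives $\mathrm{play\_cost}=0$. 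The underlying reason is that a planner that is deterministic given its model's answers and seed makes identical decisions under $\hat f$ and $f$. So $J_{\mathrm{model}} = J_{\mathrm{truth}}$ seed by seed, provided $J_{\mathrm{truth}}\neq J_{\mathrm{rand}}$ so the normalization is defined.

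The main obstacle is making (iii) rigorous for the planner's query law rather than the truth's trajectory law. Queries originate from imagined states, not only real ones, so I need the closure argument to run on $\hat f$ itself. This is exactly why the hypothesis must be that $\hat f$ uses the freeze template with a predicate agreeing on $D$. If $\hat f$ had a different out-of-mode branch, imagined states could leave $D$.

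I would also state explicitly the measurability facts from the setup. They are needed so that $E$ and the hitting event are Borel, and so that ``$q_{\mathrm{hit}}(E)=0$'' is a statement about an empty, hence null, event.
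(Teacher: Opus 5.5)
Your proposal is correct and follows the paper's proof step for step. It uses the two-branch induction for (i), then agreement of $f$ and $\hat f$ on $D$ plus a second induction along the rollout for (ii), and finally reruns the same closure induction on the model's own imagined trajectory to get $E \subseteq \{\mathrm{pos}(s) \in R\}$ and hence $q_{\mathrm{hit}}(E) = 0$ for (iii).
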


\begin{proof}
(i) By induction on $t$. At $t = 0$ it is the hypothesis on the initial distribution. If the step at $t$ contacts the mode then $\mathrm{pos}(s_{t+1}) = \mathrm{pos}(s_t) \notin R$ by the inductive hypothesis; if it does not, then $\mathrm{pos}(s_{t+1}) = \mathrm{pos}(\Phi(s_t,a_t)) \notin R$ by the definition of a contact. (ii) By (i) every visited $(s,a)$ lies in $D$; the two transition maps agree there, so a second induction gives identical trajectories, hence identical transitions and identical rewards. (iii) The imagined trajectory is generated by the \emph{model}, which freezes on entry exactly as the truth does, so the same induction applies to it; $E \subseteq \{\mathrm{pos}(s) \in R\}$ by (ii), which the imagined trajectory never enters.
\end{proof}

Three consequences. First, the obstruction is \emph{structural}: it is the mode's freeze semantics, not the sample size, that removes the evidence, so an independent acceptance sample --- the fix of Section~\ref{sec:heldout} --- cannot catch a model in this class either. Second, it is \emph{harmless}, by (iii): the models it contains differ from the truth only strictly inside a region no rollout and no imagination reaches. So this is the verified-and-fine cell of the axis separation, now characterised structurally rather than observed. Third, it locates what the evidence \emph{can} identify: on such an instrument a sample witnesses only \emph{entries} into the mode, so it pins the boundary as approached and says nothing about membership --- which is why an artifact can be exact on every transition it will ever be shown and still not encode the region.

Section~\ref{sec:arity} exhibits a synthesized artifact in this class, together with the check that (ii) and (iii) hold of it as stated.

The qualifier is not decoration: a score that rolls a candidate \emph{forward under itself} (a multi-step prediction loss, or any objective evaluated on model-generated states) queries the candidate at inputs the sample never contained, and those can lie in $M$. Such a score is not constant across the two candidates, so it escapes this proposition --- it does not thereby become informative, since the two candidates disagree there precisely because nothing observed says which is right, but the proposition does not cover it and we do not claim it does. This holds for \emph{any} learner --- LLM, linear regression, MLP --- a point Section~\ref{sec:smooth} instantiates empirically.

\begin{proposition}[play-cost upper bound; Proposition~2 of \citealp{aguilar2026verified}, transferred]
\label{prop:playcost}
Assume returns are normalized to $[0,1]$ (WLOG, by rescaling $J \mapsto (J - J_{\min})/(J_{\max} - J_{\min})$; equivalently carry an explicit factor $J_{\max} - J_{\min}$ throughout). Let $E = \{(s,a) : f(s,a) \neq \hat f(s,a)\}$ be the \emph{exact} disagreement region --- exactness is what the coupling needs, since off $E$ the two models must return \emph{identical} answers. For any planner that is a deterministic function of model responses and a seed, $|J(f) - J(\hat f)| \leq q_{\mathrm{hit}}(E)$, where $q_{\mathrm{hit}}(E)$ is the probability that the planner queries its model somewhere in $E$ during an episode.
\end{proposition}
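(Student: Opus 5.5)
The plan is a coupling argument. We run the planner on $f$ and on $\hat f$ with the same seed, in the same true environment, and show that the two episodes coincide except on the event that some model query lands in $E$. Then we bound the return difference by the probability of that event.

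The first step is to fix the probability space. Take the episode seed $\omega$ (the initial state, the environment's randomness, and the planner's internal randomness) as the common source, and build both episodes on it. The planner is a deterministic function of the seed and its model's responses. So at each real step $t$ it issues a finite sequence of queries $(s,a)$, each chosen as a function of $\omega$, the real history and the earlier responses. It then emits an action, and the true environment advances.

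The second step is an induction over the interleaved sequence of queries and real steps. Suppose that up to some point every query issued under $\hat f$ lay outside $E$. Then every response $\hat f(s,a)$ equals $f(s,a)$, since off $E$ the two are equal by definition of the exact disagreement region. Hence the next query, the next action and the next true state are identical in both runs. The induction therefore shows that the real trajectories, and so the returns, coincide on the event $B^c$, where $B$ is the event that the $\hat f$-driven run ever queries in $E$.

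The third step uses the symmetry of this argument. The same induction run from the $f$ side shows that the first query in $E$ happens at the same point in both runs, so the bad event is well defined from either run. We define $q_{\mathrm{hit}}(E) = P(B)$ under the planner's trajectory law with model $\hat f$. By the coupling this equals the same hitting probability computed for the $f$-driven run, which matches the hitting probability as introduced in the measure-space setup.

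The final step is the bound itself. On $B^c$ the two returns are equal. On $B$ they differ by at most $1$, because returns lie in $[0,1]$ after the stated normalization. This gives
\[
|J(f)-J(\hat f)| = \bigl|\mathbb{E}[(J_f-J_{\hat f})\mathbf 1_B]\bigr| \leq P(B) = q_{\mathrm{hit}}(E).
\]
Measurability of $B$ follows from the Borel assumptions on $E$ and on the dynamics and reward maps, composed through the planner's measurable query map.

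The main obstacle I expect is stating the induction cleanly when the planner issues many imagined queries per real step. The queries inside a planning call must be ordered, so the argument has to be set up as an induction over that ordering rather than over time steps. The key point to check is that exactness, not closeness within $\varepsilon$, is what lets the induction pass: a response that is only $\varepsilon$-close could change an argmax, so approximate agreement would not keep the two runs identical.
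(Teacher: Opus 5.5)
Your proposal is correct and follows the paper's proof. Both arguments couple the two runs on a common seed, use exact agreement off $E$ to make them identical up to the first query landing in $E$, note that this first-hit event is therefore the same from either run, and bound the return difference by $P(B) = q_{\mathrm{hit}}(E)$ using realized returns in $[0,1]$. Your explicit induction over the interleaved order of imagined queries and real steps spells out what the paper states in one sentence.
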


\begin{proof}[Proof (coupling; Proposition~2 of \citealp{aguilar2026verified}, recorded here rather than deferred)]
Couple the two runs on a common seed. On the event that no query lands in $E$, $f$ and $\hat f$ return identical responses to every query the planner issues (they agree off $E$), so with shared rng the two runs select the same action sequence and traverse the same states, realizing the same return. That event is unambiguous: the two runs are identical up to the first query landing in $E$, so ``no query ever lands in $E$'' is a single coupling-measurable event with one probability, not two model-dependent ones. The realized returns can therefore differ only on its complement, of probability $q_{\mathrm{hit}}(E)$, where normalized returns differ by at most $1$; taking expectations gives $|J(f) - J(\hat f)| \leq q_{\mathrm{hit}}(E)$. MPC's imagined rollouts are the queries.
\end{proof}

The raw $J$ reported in the tables (e.g.\ $J_\mathrm{truth} = 17.77$) is this normalized quantity rescaled by $J_{\max} - J_{\min}$, so the bound is a statement about the normalized return. The paper's reported play\_cost, however, divides by $J_{\mathrm{truth}} - J_{\mathrm{rand}}$ (Section~\ref{sec:planner}), \emph{not} by $J_{\max} - J_{\min}$; the following corollary states the relationship rather than silently identifying the two.

\begin{corollary}[play\_cost saturation]
\label{cor:playcost}
Let $J_{\max}, J_{\min}$ be the essential supremum and infimum of the \emph{realized} return under the episode law, and let $\bar J \geq J_{\max}$, $\underline J \leq J_{\min}$ be the explicit \emph{bounds} on them proved in Proposition~\ref{prop:normalizers}. With play\_cost the signed normalization of Section~\ref{sec:planner}, Proposition~\ref{prop:playcost} in raw units ($|J(f) - J(\hat f)| \leq q_{\mathrm{hit}}(E)\,(J_{\max} - J_{\min})$) gives
\[
  |\mathrm{play\_cost}| \;\leq\; q_{\mathrm{hit}}(E)\,\frac{J_{\max} - J_{\min}}{J_{\mathrm{truth}} - J_{\mathrm{rand}}}
  \;\leq\; q_{\mathrm{hit}}(E)\,\frac{\bar J - \underline J}{J_{\mathrm{truth}} - J_{\mathrm{rand}}},
\]
the second inequality being the computable one.
\end{corollary}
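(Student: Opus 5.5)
The plan is to derive the corollary in two moves. First, restate Proposition~\ref{prop:playcost} in raw return units. Second, divide by the normalizer of Section~\ref{sec:planner} and replace the realized range with its proved bounds.

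For the raw-unit bound I would rerun the coupling of Proposition~\ref{prop:playcost} directly rather than rescale. Put both runs on a common seed. On the event that no query lands in $E$, the realized returns coincide. On the complement, which has probability $q_{\mathrm{hit}}(E)$, both realized returns lie almost surely in $[J_{\min}, J_{\max}]$, because these are the essential infimum and supremum of the realized return under the episode law. So the two returns differ by at most $J_{\max}-J_{\min}$ almost surely on that event. Taking expectations gives $|J(f)-J(\hat f)| \leq q_{\mathrm{hit}}(E)(J_{\max}-J_{\min})$.

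The one point to check here is that the essential bounds hold for \emph{both} arms. Under the planner's episode law, the model-planner's realized return is still a return realized in the true environment. So I would take $J_{\min}, J_{\max}$ over the union of the arms' laws, or simply note that Proposition~\ref{prop:normalizers} bounds every realized return. This is the step I expect to need care: the statement's ``under the episode law'' must cover both arms, and the essential bounds must be taken jointly.

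Next I would apply this with $f$ the truth and $\hat f$ the deployed model, so that $J(f)=J_{\mathrm{truth}}$ and $J(\hat f)=J_{\mathrm{model}}$, and divide by $J_{\mathrm{truth}}-J_{\mathrm{rand}}$. This requires $J_{\mathrm{truth}}>J_{\mathrm{rand}}$, which I will state as the standing hypothesis that makes play\_cost defined. Since the denominator is positive, the absolute value passes through the division, and since play\_cost is signed and unclamped, no clamping interferes. The result is the first inequality.

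The second inequality is monotonicity. Proposition~\ref{prop:normalizers} gives $\bar J \geq J_{\max}$ and $\underline J \leq J_{\min}$, so $J_{\max}-J_{\min} \leq \bar J-\underline J$. Both factors in front are nonnegative, so multiplying preserves the inequality. I would close by remarking that this second bound is the computable one, because $\bar J,\underline J$ are explicit while the essential bounds are not.
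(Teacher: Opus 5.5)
Your proof is correct and follows the paper's: divide the raw-unit form of Proposition~\ref{prop:playcost} by the positive normalizer $J_{\mathrm{truth}} - J_{\mathrm{rand}}$, then use $J_{\max} - J_{\min} \leq \bar J - \underline J$ from Proposition~\ref{prop:normalizers}. The only difference is that you re-run the coupling instead of quoting the raw-unit bound, which makes explicit that the essential bounds must cover the realized returns of both arms, a point the paper leaves implicit.
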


\begin{proof}
Divide Proposition~\ref{prop:playcost}'s raw-unit bound by the knob-free constant $J_{\mathrm{truth}} - J_{\mathrm{rand}} > 0$, which is what the definition of play\_cost normalizes by; then $J_{\max} - J_{\min} \leq \bar J - \underline J$ by Proposition~\ref{prop:normalizers}.
\end{proof}

Three statuses must be kept apart wherever this ceiling is quoted, and Proposition~\ref{prop:normalizers} supplies only the first: a \textbf{proved bound} ($\bar J$, $\underline J$, valid at every knob and every policy), an \textbf{attained extremum} (a policy exhibited that realizes it --- the push-left trajectory does so for $\bar J$ at $x_\mathrm{wall} \leq 6$, to within $2\times10^{-7}$), and a \textbf{numerical approach} to an extremum whose attainment is not proved (the wider knobs, where the bound exceeds the best policy found by $1.0015\times$ at $x_\mathrm{wall} = 8$ and $1.0799\times$ at $10$). We write $\bar J, \underline J$ for the bounds throughout, and say ``known'' only where attainment is exhibited.

Pointwise extremes are what Proposition~\ref{prop:playcost} needs --- its coupling bounds a realized per-seed difference, not a difference of expectations --- and the distinction is not pedantic here: read as a supremum over policies of the \emph{expected} return, $J_{\max}$ would be $17.697$ on the cart (the best constant policy averaged over $x_0 \sim U(-\tfrac12,\tfrac12)$), \emph{below} the measured $J_{\mathrm{truth}} = 17.772$, whose seeds happen to have $\bar x_0 = -0.123$; the normalization would then exceed $1$ by construction.

Both normalizers are \emph{derived} rather than estimated --- explicit numbers valid at every knob and every policy --- and the knob-invariance of the reported play\_cost is an arithmetic consequence of the exploited planner's return rather than an empirical regularity (Proposition~\ref{prop:knobinv}). Both derivations, with the certificate that the truth planner's own return is knob-free, are in Section~\ref{sup:normalizers}.

There is a positive counterpart to all of this, and it is worth knowing exactly how far it reaches. For \emph{Lipschitz} pairs the gate does certify something: a partition argument on the deployed cart gate excludes any pair with $L = \max(\mathrm{Lip} f, \mathrm{Lip}\hat f) \leq 5.77$ carrying the wall's error of $4.2$, with probability at least $1-\delta$ over the gate's draws. The same measurement shows why it does not rescue sampling verification: the certified region carries $1.9\%$ of the exploited planner's queries. The certificates, their statistical accounting and that measurement are in Section~\ref{sup:coverage}.

\paragraph{What is new: the localization premise is a theorem-shaped obstruction.}

\begin{proposition}[smoothness forbids localized error]
\label{prop:lipschitz}
Let $f, \hat f : S \times A \subseteq \mathbb{R}^{d} \times \mathbb{R}^{m} \to \mathbb{R}^d$ be $L$-Lipschitz on the joint state-action space in the sup-metric on $S \times A$, with $L \in [0,\infty)$ (i.e.\ $L = \max(\mathrm{Lip}\, f, \mathrm{Lip}\, \hat f)$). Suppose $\|f(s_0,a_0) - \hat f(s_0,a_0)\|_\infty = \eta$ at some $(s_0,a_0) \in S \times A$. Fix any tolerance $\varepsilon < \eta$ and write $E_\varepsilon = \{(s,a) \in S \times A : \|f(s,a) - \hat f(s,a)\|_\infty > \varepsilon\}$. If $L > 0$, then $E_\varepsilon$ contains the open metric ball $B\!\left((s_0,a_0), \tfrac{\eta-\varepsilon}{2L}\right) \cap (S \times A)$ (the intersection matters only when $(s_0,a_0)$ lies near $\partial(S \times A)$). If $L = 0$, then $f - \hat f$ is constant and $E_\varepsilon$ is all of $S \times A$ --- the degenerate extreme of the same statement, and the reason the radius is written for $L > 0$.
\end{proposition}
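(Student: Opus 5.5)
The proof is a direct triangle-inequality argument on the difference map $g = f - \hat f$, and I would carry it out in three short steps. First, I would observe that $g$ is $2L$-Lipschitz from $(S\times A, \text{sup-metric})$ to $(\mathbb{R}^d, \|\cdot\|_\infty)$. For any two points $z, z'$ we have $\|g(z)-g(z')\|_\infty \le \|f(z)-f(z')\|_\infty + \|\hat f(z)-\hat f(z')\|_\infty \le 2L\,d(z,z')$, using that each map is $L$-Lipschitz with $L = \max(\mathrm{Lip} f,\mathrm{Lip}\hat f)$. The one bookkeeping point to fix is that the Lipschitz constants are taken with respect to the sup-metric on the domain and the $\infty$-norm on the codomain, exactly as the statement specifies.

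Second, for $L>0$ I would take any $z=(s,a)\in S\times A$ with $d(z,z_0) < (\eta-\varepsilon)/2L$, where $z_0=(s_0,a_0)$. The reverse triangle inequality gives
\[
\|g(z)\|_\infty \;\ge\; \|g(z_0)\|_\infty - \|g(z)-g(z_0)\|_\infty \;>\; \eta - 2L\cdot\frac{\eta-\varepsilon}{2L} \;=\; \varepsilon .
\]
The strict inequality comes from the open ball, so $z\in E_\varepsilon$. Since $z$ was only required to lie in the domain, this is precisely the inclusion $B(z_0,(\eta-\varepsilon)/2L)\cap(S\times A)\subseteq E_\varepsilon$. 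The intersection with the domain arises automatically and needs no interior assumption, which is the boundary-point validity advertised in the abstract.

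Third, in the case $L=0$ both $f$ and $\hat f$ are constant on $S\times A$. I would state this as the premise, reading Lipschitz constant $0$ as meaning $\|f(z)-f(z')\|\le 0$ for all pairs, without appealing to connectedness. It follows that $g \equiv g(z_0)$, so $\|g\|_\infty=\eta>\varepsilon$ everywhere and $E_\varepsilon = S\times A$.

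There is no real obstacle in this argument. The only care needed is the factor $2$, which comes from summing two Lipschitz constants, and the strictness of the final inequality, which the open ball supplies. The volume lower bound $\kappa((\eta-\varepsilon)/L)^{d+m}$ belongs to the later corollaries, not to this statement, so I would not attempt it here.
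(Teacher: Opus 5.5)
Your proposal is correct and is essentially the paper's own proof: the paper likewise notes that $g = f - \hat f$ is $2L$-Lipschitz, uses the reverse triangle inequality to get $\|g(z)\|_\infty \geq \eta - 2L\,\|z - z_0\|_\infty > \varepsilon$ on the open ball intersected with the domain, and handles $L = 0$ by observing that $g \equiv g(z_0)$. Your remarks that the strictness comes from the ball being open and that the domain intersection needs no interiority hypothesis match the paper's reading exactly.
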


\begin{proof}
$g = f - \hat f$ is $2L$-Lipschitz on $S \times A$, so for $\|(s,a) - (s_0,a_0)\|_\infty < (\eta-\varepsilon)/2L$ we have $\|g(s,a)\|_\infty \geq \eta - 2L\,\|(s,a)-(s_0,a_0)\|_\infty > \varepsilon$; the inequality is strict, so the ball is open. The $L = 0$ case is immediate: $g \equiv g(s_0,a_0)$ with $\|g(s_0,a_0)\|_\infty = \eta > \varepsilon$.
\end{proof}

\begin{corollary}[the localization budget]
\label{cor:locbudget}
In the setting of Proposition~\ref{prop:lipschitz} with $L > 0$, write $z_0 = (s_0,a_0)$ and $\rho = (\eta-\varepsilon)/2L$. Then, with no condition on where $z_0$ sits,
\[
  \mathrm{vol}(E_\varepsilon) \;\geq\; \mathrm{vol}\!\left(B_\infty(z_0,\rho) \cap (S \times A)\right),
\]
since the proposition places the intersected ball inside $E_\varepsilon$ and volume is monotone under inclusion. When the ball is not clipped by $\partial(S \times A)$ --- the interior case --- the right side is the volume of a sup-metric cube of side $2\rho$,
\[
  \mathrm{vol}(E_\varepsilon) \;\geq\; \left(\frac{\eta-\varepsilon}{L}\right)^{d+m},
\]
and, equivalently, confining the $\varepsilon$-disagreement of an $\eta$-amplitude error to a set of volume $V$ requires
\[
  L \;\geq\; \frac{\eta-\varepsilon}{V^{1/(d+m)}} .
\]
\end{corollary}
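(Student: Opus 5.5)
The plan is to read the corollary off Proposition~\ref{prop:lipschitz} by monotonicity of Lebesgue measure and then evaluate the ball's volume in the unclipped case. Throughout, $L>0$ and $\rho = (\eta-\varepsilon)/2L > 0$, because $\varepsilon < \eta$.

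\textbf{Step 1: the general inequality.} Proposition~\ref{prop:lipschitz} already gives the inclusion
\[
B_\infty(z_0,\rho)\cap(S\times A)\subseteq E_\varepsilon .
\]
To apply volume, both sets must be Lebesgue measurable. The function $z\mapsto\|f(z)-\hat f(z)\|_\infty$ is continuous, since it is $2L$-Lipschitz. Hence $E_\varepsilon$ is relatively open in $S\times A$. It is therefore Borel whenever $S\times A$ is Borel, which is the standing measurability assumption of the measure-space setup. The intersected ball is Borel for the same reason. Monotonicity of $\mathrm{vol}$ under inclusion then gives the first display. No condition on the position of $z_0$ is used, which is why this version holds at boundary points.

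\textbf{Step 2: the interior case.} Suppose the ball is not clipped, meaning $B_\infty(z_0,\rho)\subseteq S\times A$. Then the intersection is the whole open sup-metric ball. That ball is the product of $d+m$ open intervals $(z_{0,i}-\rho,\,z_{0,i}+\rho)$. Its volume is
\[
(2\rho)^{d+m}=\bigl((\eta-\varepsilon)/L\bigr)^{d+m}.
\]
Combined with Step 1 this yields the second display. The factor $2$ in $2\rho$ cancels the $2$ in the $2L$ Lipschitz constant of $g=f-\hat f$, so no stray constant remains. This is the only point where the choice of the sup-metric matters; a Euclidean ball would bring in the unit-ball volume constant.

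\textbf{Step 3: the budget on $L$.} Suppose the $\varepsilon$-disagreement is confined to a set of volume $V$, meaning $E_\varepsilon$ is contained in a set of volume $V$, so $\mathrm{vol}(E_\varepsilon)\le V$. In the interior case, Step 2 gives
\[
\bigl((\eta-\varepsilon)/L\bigr)^{d+m}\le V.
\]
Taking $(d+m)$-th roots, which is monotone on $[0,\infty)$, and rearranging with $L>0$ gives $L\ge(\eta-\varepsilon)/V^{1/(d+m)}$.

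\textbf{Where care is needed.} The only delicate point is the hypothesis behind the ``equivalently'' clause. It requires the witness $z_0$ to be interior in the sense of Step 2. If it is not, Step 1 still holds, but the volume bound must carry a clipping factor. That is the role of the interior-volume constant $\kappa$ in the later Corollary~\ref{cor:kappabudget}. I would state this scope explicitly. I would also note that the case $V=0$ is excluded, or read as $L=\infty$: in that case no Lipschitz pair can carry the error at all, which is exactly the obstruction the corollary is meant to express.
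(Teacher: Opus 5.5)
Your proposal is correct and follows the paper's own argument: the inclusion from Proposition~\ref{prop:lipschitz} combined with monotonicity of volume, the evaluation of the unclipped sup-ball as a product of intervals of volume $(2\rho)^{d+m} = ((\eta-\varepsilon)/L)^{d+m}$, and taking roots and rearranging to get the bound on $L$. Your measurability check and your restriction of the ``equivalently'' clause to interior witnesses, with the clipped case deferred to Corollary~\ref{cor:kappabudget}, match how the paper handles the boundary.
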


The first display is the always-valid form; what the boundary can cost is a constant, made uniform by one piece of domain regularity.

\begin{corollary}[the budget at the boundary]
\label{cor:kappabudget}
Say a domain $D \subseteq \mathbb{R}^{d+m}$ has \emph{interior-volume constant} $\kappa \in (0,1]$ at scale $r_0 > 0$ if $\mathrm{vol}(B_\infty(z,r) \cap D) \geq \kappa\,(2r)^{d+m}$ for every $z \in D$ and every $r \leq r_0$. If $S \times A$ has interior-volume constant $\kappa$ at scale $r_0$ and $\rho = (\eta-\varepsilon)/2L \leq r_0$, then
\[
  \mathrm{vol}(E_\varepsilon) \;\geq\; \kappa \left(\frac{\eta-\varepsilon}{L}\right)^{d+m},
  \qquad\text{equivalently}\qquad
  L \;\geq\; (\eta-\varepsilon)\left(\frac{\kappa}{V}\right)^{1/(d+m)}
\]
for confinement to volume $V$. For an axis-aligned box with side lengths $\ell_1,\dots,\ell_{d+m}$ this holds at scale $r_0 = \min_i \ell_i$ with the per-point refinement
\[
  \mathrm{vol}\!\left(B_\infty(z,r) \cap D\right) \;\geq\; 2^{-k}\,(2r)^{d+m}, \qquad k = \#\{i : \min(z_i - a_i,\, b_i - z_i) < r\},
\]
where $k$ counts the clipped coordinates: $k = 0$ in the interior recovers the unclipped display of Corollary~\ref{cor:locbudget} ($\kappa = 1$), a face point has $k = 1$ (half the cube survives), an edge $k = 2$, and a full corner $k = d+m$ keeps exactly one orthant. So a box has $\kappa = 2^{-(d+m)}$, attained at a corner --- the constant is sharp --- and the boundary's whole price in the equivalent form is $L \geq (\eta-\varepsilon)/(2V^{1/(d+m)})$: clipping costs at most a factor $2$ in $L$, never the budget itself. The same product formula covers half-bounded and unbounded coordinates (an unbounded coordinate never clips and contributes its full factor $2r$); on this paper's instrument domains the state coordinates are unbounded and the single scalar action lives in $[-a_{\max}, a_{\max}]$, so $k \leq 1$ and $\kappa \geq 1/2$ at every witness with $r \leq 2a_{\max}$.
\end{corollary}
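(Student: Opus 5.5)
The plan has three parts. First we combine Corollary~\ref{cor:locbudget} with the definition of $\kappa$ to get the general claim. Then we compute the box refinement as an explicit product of interval lengths. Finally we check that the constant is sharp and read off the remaining consequences.

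\textbf{General claim.} Corollary~\ref{cor:locbudget} already gives $\mathrm{vol}(E_\varepsilon) \geq \mathrm{vol}(B_\infty(z_0,\rho)\cap(S\times A))$ at every witness point $z_0$, boundary included. Since $\rho \leq r_0$, the interior-volume hypothesis applied at $z=z_0$, $r=\rho$ bounds the right side below by $\kappa(2\rho)^{d+m} = \kappa((\eta-\varepsilon)/L)^{d+m}$. For the equivalent form, suppose $E_\varepsilon$ is confined to a set of volume $V$. Then $V \geq \kappa((\eta-\varepsilon)/L)^{d+m}$, and solving for $L$ gives $L \geq (\eta-\varepsilon)(\kappa/V)^{1/(d+m)}$.

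\textbf{Box refinement.} For $D=\prod_i[a_i,b_i]$ the sup-ball is itself a box, so
\[
B_\infty(z,r)\cap D=\prod_i\bigl((z_i-r,z_i+r)\cap[a_i,b_i]\bigr),
\]
and its volume is the product of the one-dimensional lengths. In coordinate $i$ with $r\leq \ell_i$, there are two cases.
\begin{itemize}
\item If both distances $z_i-a_i$ and $b_i-z_i$ are $\geq r$, the interval is unclipped and has length $2r$.
\item If one of them is $<r$, the interval is clipped on that side. The other side survives because $r\leq\ell_i$: if both distances were $<r$, then $\ell_i<2r$, which is still possible, so this needs care. The length is $\min(z_i-a_i,r)+\min(b_i-z_i,r)$, and since the two distances sum to $\ell_i\geq r$, this is $\geq r$ in either case.
\end{itemize}
So every clipped coordinate contributes at least $r=\tfrac12(2r)$, and multiplying over coordinates gives $2^{-k}(2r)^{d+m}$. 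Taking $k\leq d+m$ yields the uniform constant $\kappa=2^{-(d+m)}$. A half-bounded or unbounded coordinate is handled the same way, with the convention that a missing endpoint never clips.

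\textbf{Sharpness and consequences.} At a corner, say $z=a$, each coordinate interval is exactly $[a_i,a_i+r)$ of length $r$ when $r\leq\ell_i$. The ball then keeps exactly one orthant of volume $r^{d+m}=2^{-(d+m)}(2r)^{d+m}$, so equality holds. The factor-2 statement follows by substitution: $(\kappa/V)^{1/(d+m)}=\tfrac12 V^{-1/(d+m)}$. For the instrument domains, the state coordinates are unbounded and never clip. The single action coordinate, of length $2a_{\max}$, can clip at most once, so $k\leq1$ and $\kappa\geq1/2$ whenever $r\leq 2a_{\max}$.

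The main obstacle is the clipped-coordinate case in which both sides clip. That is exactly where the scale condition $r\leq\min_i\ell_i$ is needed to keep each factor at least $r$. I would handle it with the $\min$ expression above rather than a one-sided case split.
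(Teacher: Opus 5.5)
Your proposal is correct and follows the paper's proof. The general display comes from Corollary~\ref{cor:locbudget}'s intersected-ball bound together with the definition of $\kappa$; the box case comes from the coordinatewise product, with each clipped factor at least $r$; and sharpness comes from a corner, where every factor is exactly $r$. Your per-coordinate bound $\min(z_i-a_i,r)+\min(b_i-z_i,r)\geq\min(\ell_i,r)\geq r$ is also tidier than the paper's ``side with more room'' justification in the doubly clipped case $\ell_i<2r$, where neither side alone has $r$ of room but the two together still do.
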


\begin{proof}
The general displays follow from Corollary~\ref{cor:locbudget}'s first display and the definition of $\kappa$. For the box, the ball-domain intersection is a product of interval intersections, and coordinate $i$ contributes length $\min(z_i + r, b_i) - \max(z_i - r, a_i)$: this is $2r$ when $z_i$ is at distance at least $r$ from both endpoints, and at least $\min(\ell_i, r) \geq r$ otherwise (at least $r$ of the window $[z_i - r, z_i + r]$ lies on the side of $z_i$ with more room, using $r \leq \ell_i$). Multiplying the factors gives $2^{-k}(2r)^{d+m}$. At a corner $z = (a_1,\dots)$ with $r \leq \min_i \ell_i$ every factor is exactly $r$, so the bound is attained. (The case constants are computed from this product formula in exact rational arithmetic by \texttt{scripts/locbudget\_boundary\_constants.py} rather than asserted; \texttt{tests/test\_locbudget\_boundary.py} checks the formula against brute-force Monte Carlo volumes, the sharpness at a corner, and that the scale condition $r \leq \min_i \ell_i$ is load-bearing.)
\end{proof}

So localization is not \emph{forbidden} at finite $L$; it is \emph{purchased}, at a Lipschitz constant that grows as the region shrinks --- and only \emph{exact} localization ($V \to 0$ at fixed amplitude) requires $L \to \infty$. The purchase price is now unconditional: at an interior witness the budget is $((\eta-\varepsilon)/L)^{d+m}$, and at a boundary witness it is the same volume times a domain constant that a box keeps above $2^{-(d+m)}$. This is the precise form of the claim, and what it does not say matters as much, because four statements are easy to merge and only the first is this proposition's.

\begin{remark}[four statements that must be kept apart]
\label{rem:fourstatements}
(i) \emph{Representability.} A pair with Lipschitz constant at most $L$ cannot realize an $\eta$-amplitude error confined below the volume of Corollaries~\ref{cor:locbudget}--\ref{cor:kappabudget}; a \emph{discontinuous reset} boundary --- the kind all three instruments carry: the map jumps ($v \mapsto 0$ at the wall and the stop, the mover frozen at the patch), so no finite local constant exists --- faces no such constraint. The scope cuts both ways: a hybrid boundary that is continuous but nonsmooth (a kink, an actuation saturation) keeps a finite Lipschitz constant and still pays the budget, so the exemption is proved for the reset modes studied here, not for hybrid boundaries as a class. Compactly supported \emph{smooth} errors are not excluded either, and this paper uses one --- the $C^\infty$ bump arm of Section~\ref{sec:axes} --- so ``smooth models cannot localize'' would be false as stated; what is true is the bounded-Lipschitz, fixed-amplitude, arbitrarily-sharp form above.
(ii) \emph{Detectability.} The same geometry forces a gate reveal rate (Proposition~\ref{prop:detectrate}): at fixed amplitude, a smoother error is \emph{easier} for the gate to find, not harder --- measured at $0.18$ against the wall's $0.14$ in Section~\ref{sec:axes}.
(iii) \emph{Exactness at a tolerance.} Agreeing off the mode at $\varepsilon = 10^{-9}$ is a float-level property, which a program inherits from a pinned integrator by construction and a fitted model attains only up to its optimization residual. That is a claim about representation classes, not about either of the above.
(iv) \emph{Learnability from a finite sample.} What a given learner does with a given sample is an empirical question. Section~\ref{sec:smooth} answers it for two learners --- a closed-form linear fit and a probe-grade $h{=}8$ MLP --- and those measurements license no conclusion about modern learned dynamics models, whose floor would be lower and whose structural inability to be bit-exact off a mode is an argument rather than a measurement (Section~\ref{sec:limitations}).
\end{remark}

The disagreement region $E_\varepsilon$ lives in the same joint state-action space as the exact disagreement region $E$ of Proposition~\ref{prop:playcost}, and satisfies $E_\varepsilon \subseteq E$ for every $\varepsilon \geq 0$, so the ball is \emph{directly} a lower bound on the query-relevant region the planner can hit --- no fixed-action slice argument is needed to bring the two together. The inclusion is the only relation the two need: Proposition~\ref{prop:playcost} requires exact agreement off $E$, so it cannot be restated at tolerance $\varepsilon$ without a perturbation term (a $\varepsilon$-sized answer difference can flip an argmax over 200 candidates and decouple the runs). (Restricting to a fixed action $a_0$ recovers the state-slice form: the open ball of radius $(\eta-\varepsilon)/2L$ in $S \times \{a_0\}$ lies in $E_\varepsilon$, the version convenient when reasoning about a single instrument's mode.)

Read as its contrapositive: a model error that is \emph{large somewhere} (large enough to matter at play) and \emph{invisible at tolerance $\varepsilon$ outside a metrically tiny region} requires $L$ large --- in the limit, a discontinuity. This is the sense in which the discrete localization premise (``the wrong model is exact off the rule region'') is discrete: in its exact form it is unsatisfiable at any finite Lipschitz bound, and in its approximate form it costs the volume of Corollaries~\ref{cor:locbudget}--\ref{cor:kappabudget}. Two consequences structure the paper: (i) the natural continuous home of the danger geometry is \textbf{hybrid dynamics with discontinuous resets} --- inelastic contacts, hard stops, regime switches that jump the state --- where the truth itself has unbounded local Lipschitz constant across the mode boundary (our wall: $v$ jumps to 0), and an omitted mode is precisely a rare rule; a continuous hybrid boundary (a kink, a saturation) keeps a finite constant and stays inside the budget, so this home is the reset class and not hybrid dynamics wholesale; (ii) exact off-mode agreement is available \emph{representationally} to programs (an omitted or added branch is bit-exact off the omitted case) and to a bounded-Lipschitz model only at that volume price, a contrast Section~\ref{sec:smooth} measures on two learners.

The geometry also has a measure counterpart --- smoothness does not merely forbid exact localization, it forces a gate detection rate, at a Lipschitz constant that must grow like $N^{1/(d+m)}$ to keep hiding. That proposition, the closed-form visitation density it consumes, and the three scope conditions that decide what it says about \emph{this} instrument are in Section~\ref{sup:detect}. The short version is the one the measurements confirm: at comparable amplitude the smooth error is if anything the more detectable of the two, and its harmlessness comes from play cost rather than from hiding.

\section{The mechanism and the threshold law}
\label{sec:mechanism}

This section measures the danger law's full phenomenology on the hand-written instruments, before any LLM enters the loop. Every ``danger'' column below is the product $\mathrm{play\_cost} \times (1-r)^N$, which is the risk of Proposition~\ref{prop:risk} exactly when that proposition's covariance hypothesis holds; here the artifact is hand-written and fixed, so it holds by construction, and the qualifier only becomes live in the LLM arms of Section~\ref{sec:synthesis}. All numbers CPU-only, with the hand-written blind model as the on-manifold proxy (\texttt{scripts/continuous\_reach.py}; 30{,}000 rarity rollouts and 20 MPC episodes/arm per knob; Wilson 95\% CIs \citep{wilson1927} in the versioned results). One convention holds for every table below: a printed $0$ is a \emph{censored} zero --- no occurrence in the sample, not a demonstrated impossibility --- and its content is the Wilson upper bound, which the versioned JSON carries. The ones stated inline: $0/20$ truth-planner contacts is $[0, 0.16]$, $0/300$ gate passes is $[0, 0.013]$, and $0/2000$ rarity is $[0, 0.0019]$. Every zero-valued table cell now carries its own bound, its own interval, or the note that it is a demonstrated rather than a censored zero, so this convention is a reading aid and not a load-bearing one. Uncertainty on the danger curves is propagated per factor and labelled: every rarity across the $40$ curve rows of the three instruments carries a family-wise simultaneous $95\%$ exact interval propagated monotonically through $\mathrm{play\_cost} \cdot (1-r)^N$ (\texttt{scripts/\allowbreak danger\_\allowbreak law\_\allowbreak h4.py}, \texttt{results/\allowbreak danger\_\allowbreak law\_\allowbreak h4.json}; Bonferroni, so the family coverage holds under the rows' dependence, and the exponent identities the propagation relies on are the corollaries of Remark~\ref{rem:adaptive}), and on the cells with committed paired episode triples --- cart $x_\mathrm{wall}=8$, pendulum $\theta_\mathrm{stop}=1.4$, and PatchField2D $k=(3,7)$ with its three critical-event curves --- the band combines \emph{both} estimated factors, the rarity interval and a paired play-cost bootstrap. For the remaining rows the play-cost factor is a point estimate and the band is rarity-only; the JSON records that as an explicit per-row gap rather than presenting a rarity-only band as a full danger interval.

\begin{table}[ht]
\centering
\small
\begin{tabular}{rrrrrrrrrr}
\toprule
$x_\mathrm{wall}$ & rarity & $J_\mathrm{truth}$ & $J_\mathrm{blind}$ & $J_\mathrm{rand}$ & play\_cost & blind hit & d@20 & d@40 & d@80 \\
\midrule
2 & 0.3168 & 17.77 & $1.4{\times}10^{-5}$ & 0.53 & 1.031 & 1.00 & 0.001 & $2.5{\times}10^{-7}$ & $6.0{\times}10^{-14}$ \\
3 & 0.2135 & 17.77 & $1.3{\times}10^{-5}$ & 0.53 & 1.031 & 1.00 & 0.008 & $6.9{\times}10^{-5}$ & $4.7{\times}10^{-9}$ \\
4 & 0.1352 & 17.77 & $1.9{\times}10^{-5}$ & 0.53 & 1.031 & 1.00 & 0.056 & 0.003 & $9.3{\times}10^{-6}$ \\
5 & 0.0797 & 17.77 & $6.3{\times}10^{-5}$ & 0.53 & 1.031 & 1.00 & 0.196 & 0.037 & 0.001 \\
6 & 0.0439 & 17.77 & $3.8{\times}10^{-4}$ & 0.53 & 1.031 & 1.00 & 0.420 & 0.171 & 0.028 \\
8 & 0.0114 & 17.77 & 0.02 & 0.53 & 1.030 & 1.00 & 0.818 & 0.650 & 0.410 \\
10 & 0.0024 & 17.77 & 0.94 & 0.53 & 0.977 & 1.00 & 0.930 & 0.886 & 0.804 \\
\bottomrule
\end{tabular}
\caption{The danger curve on the wall-position knob. ``blind hit'' = fraction of episodes in which the blind planner's trajectory fires the wall mode; the truth planner's trajectory fires it in no episode at any knob ($0/20$, so $<0.16$); d@$N$ = $\mathrm{play\_cost}\cdot(1-r)^N$. Rarity is measured on \textbf{30{,}000} random rollouts (Wilson 95\% CIs in the versioned JSON), the same sample size as the pendulum's, so the mode-firing and reveal-rarity estimates of the same event are now directly comparable.}
\label{tab:danger}
\end{table}

\begin{figure}[ht]
\centering
\includegraphics[width=0.55\textwidth]{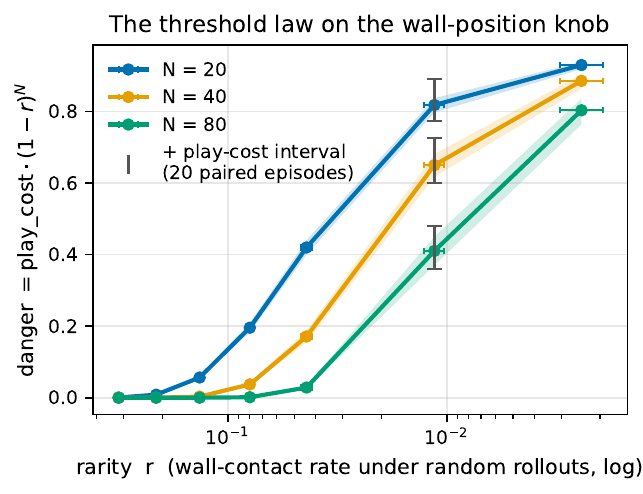}
\caption{The threshold law on the wall-position knob (Table~\ref{tab:danger} rendered): danger $\approx 0$ while the wall is inside the random envelope, rises through the elbow, plateaus at full play\_cost; $N$ shifts the threshold. Bands and horizontal bars are the rarity's Wilson interval propagated through $\mathrm{play\_cost}\cdot(1-r)^N$, which is monotone in $r$, so the transformed interval is the corner pair; a printed point with no visible band has an interval narrower than the marker.}
\label{fig:threshold}
\end{figure}

\begin{figure}[ht]
\centering
\includegraphics[width=0.55\textwidth]{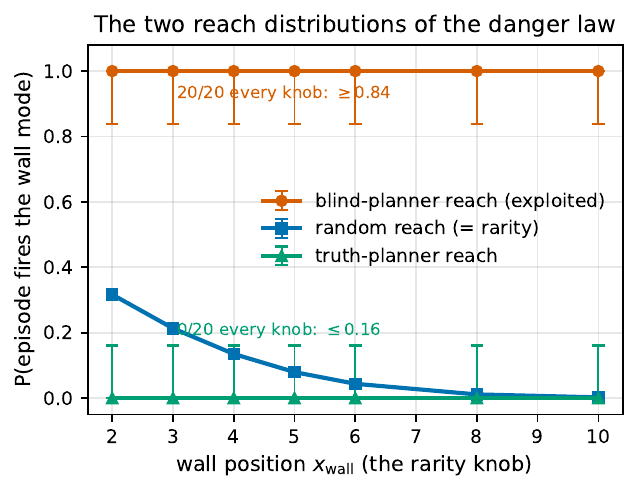}
\caption{The two reach distributions of the danger law: the exploited planner's mode reach is flat at 1.00 across the knob; random reach (= rarity) falls two orders of magnitude; the truth planner's trajectory reach is a censored zero at every knob ($0/20$ episodes) and is drawn as an upper bound rather than as a point --- the mode lives on the truth planner's \emph{query} distribution (Proposition~\ref{prop:playcost}), not its path. Error bars are Wilson intervals: binomial over $20$ episodes for the two reach curves, and over $30{,}000$ rollouts for rarity.}
\label{fig:reach}
\end{figure}

Readings: \textbf{the threshold law again} --- danger $\approx 0$ while the wall sits inside the random-rollout envelope, rises through the elbow, and plateaus at full play\_cost, with the entire elbow inside the sweep (Figure~\ref{fig:threshold}). \textbf{play\_cost is knob-invariant and $\approx 1$} (exceeding 1 at every knob except the largest, where far-plateau leakage gives 0.977): the blind planner is not merely uninformed but \emph{exploited} --- MPC on the wall-less model plans into the phantom region, is pinned at the wall (final $x = x_\mathrm{wall}$ exactly, contact rate 1.00 at every knob), and replans the same doomed plan every step, for the entire episode, ending at a return of $2\times10^{-5}$ against the truth planner's $17.77$. The 0.977 at $x_\mathrm{wall} = 10$ is the sigmoid tail of the far plateau leaking $J_\mathrm{blind} = 0.94$; the mechanism is unchanged. \textbf{The reach mechanism} appears in its cleanest form (Figure~\ref{fig:reach}), with the query/trajectory distinction now visible. Gate-miss exactness is re-verified in-tests and again at gate scale in Section~\ref{sec:axes}.

One qualification belongs with the claim. At the widest knobs the pinned planner scores \emph{above} the uniform-random policy, because the far plateau's sigmoid tail pays a planner frozen short of it; the exploitation is what the mechanism asserts, and ``below random'' is a property of the reward's shape as much as of the planner. Narrowing only the phantom plateau removes the tail and makes the strong form hold at every knob on both instruments, with play\_cost invariant to $10^{-4}$ (Section~\ref{sup:sharp}); we report the default instrument in the main tables and treat that variant as a robustness check rather than as the headline.

\paragraph{At $100$ paired episodes: the exploitation is robust, ``below random'' is not.} Twenty episodes per cell is too few for a comparison against a baseline as heavy-tailed as the uniform-random policy's, so the three headline rows were re-run at \textbf{100} paired episodes with a paired bootstrap and a randomization test (\texttt{scripts/\allowbreak play\_\allowbreak cost\_\allowbreak intervals.py}; the same seed in all three arms, $20{,}000$ resamples). The exploitation claim survives unchanged and tightens: on the cart's headline knob $\mathrm{play\_cost} = 1.014$ with a paired $95\%$ interval $[1.001, 1.033]$, on the pendulum's $0.996$ $[0.996, 0.997]$, and on PatchField2D $1.044$ $[1.020, 1.075]$ --- and the raw quantity behind them is a regret of $17.72$, $19.96$ and $19.75$ return units respectively, which is what those normalized numbers mean.

The comparison against \emph{random}, however, does not survive at face value, and the reason is the baseline rather than the planner. On the cart, $J_{\mathrm{rand}} - J_{\mathrm{blind}} = 0.249$ with interval $[0.024, 0.559]$ and a sign-flip $p = 0.040$, so the \emph{mean} difference is positive --- but $J_{\mathrm{random}}$ is heavy-tailed (median $4.7\times10^{-4}$, maximum $10.58$, skewness $6.1$: a handful of episodes in which the random walk happens to reach the left plateau), and \textbf{in $86$ of the $100$ seeds the blind planner scores \emph{above} random}. So ``below random'' on the cart is a statement about a mean that a few lucky random episodes carry, not about a typical episode, and it is stated here only in that form. At the pendulum's headline knob it is simply false at this sample size ($J_{\mathrm{rand}} - J_{\mathrm{blind}} = -0.078$, random better in $7$ of $100$ seeds) --- that knob is one the sigmoid-tail leak of the previous paragraph affects. On PatchField2D it is unambiguous: $J_{\mathrm{rand}} - J_{\mathrm{blind}} = 0.834$ $[0.395, 1.357]$, random better in $\mathbf{100}$ of $100$ seeds, $p = 1.2\times10^{-14}$.

The load-bearing claim was never the comparison with random --- it is that the planner is \emph{exploited}: pinned at the boundary in every episode, at a regret of essentially the whole available return, with $\mathrm{play\_cost} \approx 1$ and its interval excluding zero by three orders of magnitude. That is what the tables report, and what ``below random'' added was rhetorical rather than evidential. One further caution the larger sample exposes: on PatchField2D the truth planner's own return has a standard deviation of $7.9$ across seeds (median $17.0$, maximum $41.6$), because it sometimes finds a route past the patches to the phantom lode, so the per-seed play cost there is genuinely dispersed (median $0.899$, interquartile range $[0.860, 0.932]$) and its mean is not a typical episode either. Per-seed values for all three rows are in the versioned JSON, and Figure~\ref{fig:perseed} plots them, tails unsummarized.

\begin{figure}[ht]
\centering
\includegraphics[width=\textwidth]{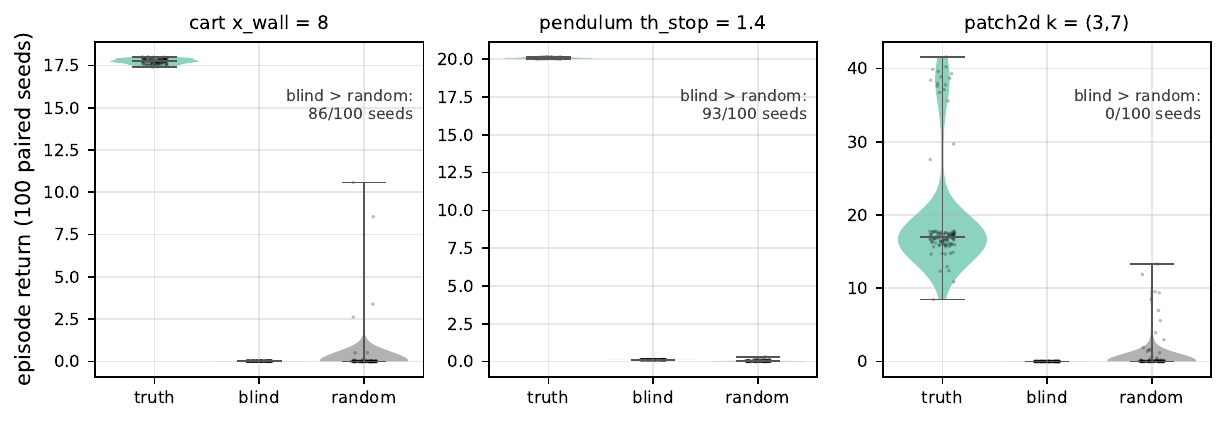}
\caption{Per-seed return distributions for the three $100$-paired-episode experiments (violins with every seed overplotted). The exploitation claim is the gap between the truth planner and the pinned blind planner, and it is total in every panel. The random baseline's heavy upper tail --- a few episodes that wander onto a reward plateau --- is what carries its \emph{mean} above the blind planner's on the cart, against $86$ of $100$ individual seeds; on PatchField2D the comparison is unambiguous in the other direction ($0$ of $100$).}
\label{fig:perseed}
\end{figure}

\subsection{Robustness: the same law on a nonlinear plant}
\label{sec:pendulum}

The cart's off-mode dynamics are linear, which is convenient for Section~\ref{sec:smooth} but invites the worry that the phenomenology depends on it. A second instrument --- a pendulum (gravity term $\sin\theta$, $\theta = 0$ hanging down) with a hard angular stop, same interface, same MPC, same two-plateau reward on $\theta$ --- reproduces the identical picture with \emph{no} re-calibration (rarity is natural here: gravity confines the random walk near the bottom, so climbing to the stop is rare):

\begin{table}[ht]
\centering
\small
\begin{tabular}{rrrrrrr}
\toprule
$\theta_\mathrm{stop}$ & rarity & $J_\mathrm{truth}$ & $J_\mathrm{blind}$ & play\_cost & blind hit & d@40 \\
\midrule
0.8 & 0.2876 & 20.08 & 0.01 & 1.002 & 1.00 & $1.3{\times}10^{-6}$ \\
1.0 & 0.1270 & 20.08 & 0.03 & 1.002 & 1.00 & 0.004 \\
1.2 & 0.0509 & 20.08 & 0.05 & 1.000 & 1.00 & 0.124 \\
1.4 & 0.0196 & 20.08 & 0.12 & 0.997 & 1.00 & 0.452 \\
1.6 & 0.0070 & 20.08 & 0.26 & 0.990 & 1.00 & 0.748 \\
2.0 & 0.0007 & 20.08 & 1.23 & 0.942 & 1.00 & 0.917 \\
\bottomrule
\end{tabular}
\caption{Pendulum-with-stop: threshold law, knob-invariant exploitation (pinned at the stop in every episode at every knob), truth planner untouched by the mode. The mechanism does not care that the plant is nonlinear --- only that the mode is hard and rare under the gate's measure. Rarity is the fraction of \textbf{30{,}000} random rollouts that fire the stop (Wilson 95\% CIs for every row in the versioned JSON). The sample was raised from 3000 specifically because the widest knob was a censored zero there ($0/3000$); at 30{,}000 it resolves to $0.0007$ $[0.0004, 0.0010]$, so every row is now a point estimate and every d@40 a number rather than a bound.}
\label{tab:pendulum}
\end{table}

\subsection{Two modes at once: a 4D bi-modal instrument}
\label{sec:patch2d}

Both instruments so far are one hard boundary in a 2D state. \textbf{PatchField2D} closes both structural gaps at once --- a 4D state and two distinct modes (their contact events are \emph{not} independent, and Section~\ref{sec:patch2d} measures the dependence) --- and asks what the 1D instruments cannot: does the danger law compose mode-wise? State $(x, y, v_x, v_y)$; a \emph{scalar} action $a \in [-a_{\max}, a_{\max}]$ is mapped to a heading $\phi = \pi\,\mathrm{clamp}(a)/a_{\max}$ so all planner machinery (which consumes scalar actions) is reused unchanged; semi-implicit Euler:
\[
v_x' = v_x + (\mathrm{gain}\cos\phi - \mathrm{drag}\,v_x)\,dt, \quad
v_y' = v_y + (\mathrm{gain}\sin\phi - \mathrm{drag}\,v_y)\,dt,
\]
$x' = x + v_x'\,dt$, $y' = y + v_y'\,dt$. The two \textbf{modes} are circular sticky patches $P_i = \mathrm{disc}(c_i, R)$: if $(x', y') \in P_i$ the next state is $(x, y, 0, 0)$ --- the probe stops inelastically at its \emph{previous} position with zero velocity (the 2D analogue of the wall clamp, and the structure the 2D mitigation exploits, Section~\ref{sec:patch2d-mitigation}). Reward is two radial sigmoid lodes, a small real one near the start and a large phantom one behind the patches; the mode-blind planner aims straight at the phantom and freezes at a patch edge. The two knobs are the patch centers' distances along and off the start-to-lode corridor ($k_1$ the near patch, $k_2$ the far one); defaults are frozen once, never tuned per cell.

Table~\ref{tab:patch2d} sweeps the $3\times3$ knob grid (\texttt{scripts/continuous\_patch2d.py}; 600 rarity rollouts and 20 MPC episodes/arm per cell). The per-mode rarities separate cleanly ($r_1 \in [0.085, 0.245]$, $r_2 \in [0.0067, 0.0100]$), $J_\mathrm{blind} = 0$ at every cell (the blind planner freezes at a patch edge every episode; $J_\mathrm{rand} = 0.11$), and play\_cost is knob-invariant at $[1.005, 1.006]$ --- the same below-random exploitation as the 1D instruments, now on a 4D plant. \textbf{The danger law applies per mode and to the pair --- but it does not \emph{factor}.} Proposition~\ref{prop:gatemiss} holds for any measurable critical event, so the per-mode gate-miss factors are $(1-r_i)^N$ and the joint one is $(1-r_\cup)^N$, where $r_\cup$ is the probability that a rollout contacts \emph{either} patch (Proposition~\ref{prop:jointmiss}). We \emph{measure} $r_\cup$ rather than compose it: writing the joint factor as the product $(1-r_1)^N(1-r_2)^N$ would assume the two per-mode contact events are independent within a rollout, and they are not. At 600 rollouts $P(\text{both})$ is only $0$ to $3$ counts per knob, so the product's $-17\%$ to $+12\%$ relative error across this grid is not by itself evidence of anything: six of the nine cells are censored zeros, which force the product to over-estimate by construction. The dependence was therefore re-measured at $50{,}000$ rollouts on four knobs (\texttt{scripts/\allowbreak patch2d\_\allowbreak dependence\_\allowbreak 50k.py}), where it \emph{is} resolvable and does change direction: negative at $(2,6)$ and $(3,7)$, positive at $(4,6)$, all three with Wilson intervals excluding $r_1r_2$, and undecided at $(4,7)$. The mechanism is visible in the sign: a rollout that freezes at the near patch has spent its travel (negative), while reaching a far patch usually means passing the near one (positive). Both columns are in the versioned JSON (\texttt{d40\_joint} and \texttt{d40\_joint\_indep\_approx}); the table reports the measured one. Three qualifications: the $r_2$ knob is only weakly resolved at 600 rollouts (its trend is under-resolved, not flat), the $k_1{=}4$ one-hit bump in $r_2$ is sampling noise rather than a knob effect, and the four-decimal \texttt{d@40} figures in the $P_2$ and joint columns inherit that resolution --- at $4$ to $6$ hits in $600$ the interval on $\mathrm{d@40}\,P_2 = 0.7700$ is $[0.51, 0.91]$, so those digits are bookkeeping, not precision.

\begin{table}[ht]
\centering
\small
\begin{tabular}{rrrrrrrrrr}
\toprule
$k_1$ & $k_2$ & $r_1$ & $r_2$ & $r_\cup$ & $J_\mathrm{truth}$ & play\_cost & d@40 $P_1$ & d@40 $P_2$ & d@40 joint \\
\midrule
2 & 6 & 0.245 & 0.010 & 0.252 & 17.98 & 1.006 & $1.3{\times}10^{-5}$ & 0.673 & $9.3{\times}10^{-6}$ \\
2 & 7 & 0.245 & 0.008 & 0.253 & 18.02 & 1.006 & $1.3{\times}10^{-5}$ & 0.720 & $8.5{\times}10^{-6}$ \\
2 & 8 & 0.245 & 0.007 & 0.252 & 18.08 & 1.006 & $1.3{\times}10^{-5}$ & 0.770 & $9.3{\times}10^{-6}$ \\
3 & 6 & 0.142 & 0.008 & 0.150 & 18.57 & 1.006 & 0.002 & 0.720 & 0.002 \\
3 & 7 & 0.142 & 0.008 & 0.150 & 18.47 & 1.006 & 0.002 & 0.720 & 0.002 \\
3 & 8 & 0.142 & 0.007 & 0.148 & 17.72 & 1.006 & 0.002 & 0.770 & 0.002 \\
4 & 6 & 0.088 & 0.008 & 0.092 & 20.85 & 1.005 & 0.025 & 0.719 & 0.021 \\
4 & 7 & 0.088 & 0.010 & 0.095 & 19.52 & 1.006 & 0.025 & 0.673 & 0.019 \\
4 & 8 & 0.085 & 0.008 & 0.093 & 19.08 & 1.006 & 0.029 & 0.720 & 0.020 \\
\bottomrule
\end{tabular}
\caption{PatchField2D mechanism sweep ($3\times3$ knobs; 600 rarity rollouts, 20 MPC episodes/arm; $J_\mathrm{blind} = 0$, $J_\mathrm{rand} = 0.11$ at every cell). d@40 $P_i$ = $\mathrm{play\_cost}\cdot(1-r_i)^{40}$ (per-mode danger); $r_\cup$ = measured probability that a rollout contacts either patch; d@40 joint = $\mathrm{play\_cost}\cdot(1-r_\cup)^{40}$, the danger law at the joint critical event --- \emph{not} the product $((1-r_1)(1-r_2))^{40}$, which would assume within-rollout independence and errs by $-17\%$ to $+12\%$ here. Full table (with $J_\mathrm{blind}$, $J_\mathrm{rand}$, hit rates) in \texttt{docs/EXPERIMENTS.md}.}
\label{tab:patch2d}
\end{table}

\paragraph{Play cost is planner-dependent --- measured on two families, one configuration each --- and the bound says which direction is forced.} Proposition~\ref{prop:playcost} bounds play cost by the planner's query-hit probability on the disagreement region, so low query reach forces low play cost while high reach merely permits high cost. A second base planner family (the cross-entropy method, one fixed configuration) sits at the low-reach end on all eleven knobs of both 1D instruments and on PatchField2D, with play cost statistically indistinguishable from zero and imagined boundary-crossing strictly below random-shooting MPC's. Limited reach is not knowledge, and the same search that misses phantom reward can miss real reward; the rows, the censoring of the two zero-crossing cells and the caveats are in Section~\ref{sup:cem}.

\section{Axis separation: which errors the gate catches at \texorpdfstring{$\varepsilon = 10^{-2}$}{eps = 1e-2}, and which it misses}
\label{sec:axes}

The classic continuous-model failure axis is pervasive sub-tolerance error; the danger law's axis is a localized hard mode. A tolerance gate must be shown to fail \emph{only} on the second axis, and only at the $(1-r)^N$ rate.

\paragraph{What fixes $\varepsilon$, and why the axis separation does not depend on it.} We use $\varepsilon = 10^{-2}$ as a \textbf{representative tolerance}, not a deployment-derived one: these are deterministic simulators with no sensor noise and float64 arithmetic, so there is no physical error budget to read it off. What the value does is sit in the middle of the only two scales the instrument supplies --- three orders of magnitude above the integrator's float noise ($\sim\!10^{-16}$, and $10^{-9}$ is where the pinned-integrator gate of Section~\ref{sec:synthesis} runs), and between two and three orders \emph{below} the mode's own disagreement ($4.2$ on the cart) --- while amounting to about $0.06\%$ of the $18$-unit position span between the two reward plateaus and $0.1\%$ of the reachable speed range. The claim that matters is that the choice is immaterial: the $\varepsilon$-sweep below varies it over eight orders of magnitude, $10^{-9}$ to $0.3$, and the axis separation is unchanged throughout. Likewise $N = 40$ is inherited from the companion paper's sweep so the two studies' counts are comparable; it is not derived from a verification cost model, and the tables report $\mathrm{d@}N$ at $N \in \{20, 40, 80\}$ so the threshold's dependence on it is visible rather than assumed away. Five arms, one table (\texttt{scripts/continuous\_axes.py}; reveal-rarity $=$ P(a random rollout contains a transition where truth and model differ $> \varepsilon$), 20{,}000 rollouts; pass@40 over 300 independent $N{=}40$ gates; 20 MPC episodes/arm). For the hard mode, the mode-firing rarity of Tables~\ref{tab:danger}--\ref{tab:pendulum} and the reveal-rarity here coincide as events --- the mode error exceeds every $\varepsilon$ in the swept range (the $\varepsilon$-sweep below), so a rollout fires the mode iff it reveals a disagreement --- which licenses the shared symbol $r$.\footnote{With both samples raised, the two estimates of the same event now agree to the third decimal: mode-firing rarity $0.1352$ on 30{,}000 rollouts (Table~\ref{tab:danger}) against reveal-rarity $0.1351$ on 20{,}000 (Table~\ref{tab:axes}). At smaller samples they read $0.1430$ and $0.1385$; the gap was resampling noise.}

\begin{table}[ht]
\centering
\small
\begin{tabular}{lrrrrr}
\toprule
arm & reveal-rarity & $(1-r)^{40}$ & pass@40 measured & play\_cost & d@40 \\
\midrule
wall@4 omitted & 0.1351 & 0.0030 & 0.003 & 1.031 & 0.0031 \\
wall@8 omitted & 0.0103 & 0.6622 & 0.667 & 1.030 & 0.6821 \\
drag bias $\times$1.03 (sub-$\varepsilon$) & 0.0001 & 0.9940 & 0.997 & $0$ (exact) & $0$ (exact) \\
drag bias $\times$2.0 (supra-$\varepsilon$) & 1.0000 & $<10^{-4}$ & $<0.013$ & $0$ (exact) & $0$ (exact) \\
$C^\infty$ bump@4, amp 0.5 & 0.1829 & 0.0003 & $<0.013$ & $0$ (exact) & $0$ (exact) \\
$C^\infty$ bump@4, amp 1.0 & 0.2018 & 0.0001 & $<0.013$ & $-0.745$ & $-0.0001$ \\
\bottomrule
\end{tabular}
\caption{Axis separation at $\varepsilon = 0.01$, $N = 40$; d@$N$ = $\mathrm{play\_cost}\cdot(1-r)^N$, as in Table~\ref{tab:danger}. Danger lives in one quadrant only: rare $\wedge$ hard-mode. Reveal-rarity is measured on \textbf{20{,}000} rollouts (raised from 2000 because the sub-$\varepsilon$ arm was a censored zero there): it now resolves to $0.0001$, and its predicted pass rate $0.9940$ sits inside the measured $[0.9814, 0.9994]$ instead of above it.}
\label{tab:axes}
\end{table}

\begin{figure}[ht]
\centering
\includegraphics[width=0.6\textwidth]{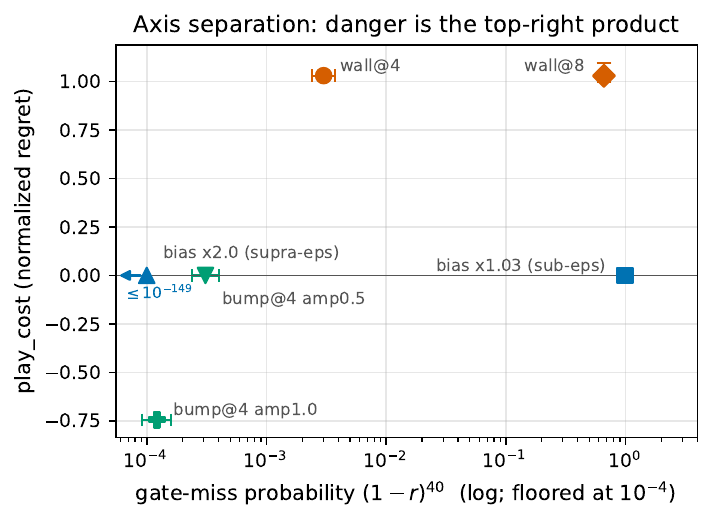}
\caption{Table~\ref{tab:axes} as the danger quadrant: gate-miss probability (log) vs play\_cost. Only the rare hard mode reaches the top right; danger is the product of the coordinates. Horizontal bars propagate each arm's rarity interval through $(1-r)^{40}$; the play\_cost axis carries a paired interval where per-episode returns exist and no bar where the arms are bit-identical, since there the difference is exactly zero rather than uncertain.}
\label{fig:axes}
\end{figure}

Readings (Table~\ref{tab:axes}, rendered as the danger quadrant in Figure~\ref{fig:axes}): \textbf{gate exactness at gate scale} --- measured pass@40 matches $(1-r)^{40}$ in both wall rows --- 0.003 vs 0.0030 and 0.667 vs 0.6622, each prediction inside the 300-gate Wilson interval. Two bookkeeping points matter for reading any of these agreements. First, $\hat r$ carries its own error, and the prediction $(1-\hat r)^{40}$ inherits it: propagating the rarity's Wilson interval, the $20{,}000$-rollout reveal-rarity $0.01025$ $[0.00895, 0.01174]$ predicts a pass rate in $[0.623, 0.698]$, which contains the measured $0.667$; on $2000$ rollouts, $\hat r = 0.0125$ $[0.0085, 0.0184]$ predicts $[0.476, 0.711]$, comfortably containing $0.667$. A point prediction like $0.6046$ compared against the measurement's interval, while its own interval goes unquoted, would misread that cell as a marginal failure; the propagated comparison is the valid one. Second, this instrument has \emph{two} rarity estimates for what Proposition~\ref{prop:epsinv} says is one event: $0.01140$ from the $30{,}000$-rollout mode-firing sweep of Table~\ref{tab:danger} (predicting $0.631$) and $0.01025$ from the $20{,}000$-rollout reveal-rarity here (predicting $0.662$). Their intervals overlap and the proposition says the events coincide below $\varepsilon^\star$, so this is sampling noise and not a contradiction --- but the $10\%$ relative gap is larger than either interval's half-width suggests one should ignore, and we quote the reveal-rarity throughout this table and the firing rarity throughout Table~\ref{tab:danger} rather than mixing them. The proposition is the observed acceptance rate, not asymptotic decoration. \textbf{The gate polices the pervasive axis}: a global drag bias above tolerance is revealed on every rollout and never accepted; a sub-tolerance bias is accepted at 0.997 --- which the law now \emph{predicts} rather than excuses: that arm does reveal a disagreement on a thin slice of rollouts (reveal-rarity $0.0001$, the extreme velocity tail), for a predicted pass rate of $0.9940$ --- and it is harmless at play. Verified-and-fine is a real cell, and the gate finds it. \textbf{Smoothness kills consequence, not detectability}: the $C^\infty$ drag bump at the wall's location has comparable rarity to the wall (0.18 vs 0.14 --- it is just as \emph{detectable}, confirming Proposition~\ref{prop:lipschitz} is about error geometry, not about hiding from the gate, and matching the direction of Proposition~\ref{prop:detectrate}) yet play\_cost 0.000 at amplitude 0.5. At amplitude 1.0 play\_cost turns \emph{negative} ($-0.745$): the truth planner, seeing the slowdown near its horizon edge, is over-pessimistic and often settles for the small plateau, while the bump-blind planner pushes through and wins. A smooth localized omission produces planner-side timing effects of ambiguous sign; only the hard mode produces the one-way exploitation geometry (pinned, forever, below random).

\textbf{On the three instruments and the swept grid, the separation does not depend on the tolerance.} A sweep over $\varepsilon \in \{10^{-9}, \dots, 0.3\}$ leaves the mode arms' reveal-rarity flat on all three instruments while the pervasive-bias arms switch sharply at their own error scale, and $\mathrm{pass@}40 \approx (1-r)^{40}$ continues to hold for the mode arms at every $\varepsilon$ in the grid (Section~\ref{sup:epssweep}). The gate's $\varepsilon$ is a pervasive-error dial, not a mode-detection dial: tightening it cannot catch the hard mode, and loosening it does not widen the hole.

\section{The exploitation is planner-mediated: a distrust-region fence collapses it on the 1D instruments}
\label{sec:mitigation}

The exploitation measured above is planner-mediated rather than model-mediated, and a planner-side fix collapses it without touching the model or the gate --- which does \textbf{not} contradict the danger law, since the gate still accepts a wrong model and Proposition~\ref{prop:ident} is untouched. Distrust-region replanning fences the positions of the model's refuted predictions and truncates any imagined trajectory that crosses a fence. On the two 1D instruments a single contact suffices to fence the mode on all eleven knob rows --- a separation fact rather than a covering one, with a measured range of validity --- and the mitigation is bit-identical to plain MPC when the model is right. On a 2D circular mode it degrades and, at the farthest knob, fails outright in $7$ of $20$ episodes, because its tie-break is an unsigned distance. The design, the packing bound on its contact cost, the dimensional reading of that bound and the 2D failure are in Section~\ref{sup:mitigation}.

\section{LLM synthesis: on the 1D clamps the danger reduces to the identifiability event}
\label{sec:synthesis}

Real-LLM arms (\texttt{scripts/continuous\_danger\_synthesis.py}; Azure GPT-5.x mini and large\footnote{\textbf{Exact models.} ``large'' is the Azure OpenAI deployment \texttt{gpt-5.4} and ``mini'' is \texttt{gpt-5.4-mini}, both at API version \texttt{2025-04-01-preview}; the cross-family arms are \texttt{Qwen/Qwen3-Coder-30B-A3B-Instruct} (Hugging Face Inference Providers router) and Claude Sonnet (agent-relayed, Section~\ref{sec:synthesis}). Every result JSON under \texttt{results/} records its own \texttt{model} field, so each number is attributable to a named deployment; the runs are dated 2026-07-07 to 2026-07-20 in the repository history. We write ``GPT-5.x'' in prose where the claim holds for both sizes.}; $N = 40$ training rollouts which double as the gate, as in the companion paper's sweep; $\varepsilon = 10^{-9}$ pinned-integrator gate; \textbf{20 seeds/cell on the headline $x_\mathrm{wall} = 8$ cell, both sizes} (the companion paper's standard); 6 MPC play episodes/seed; per-seed JSON with the synthesized code versioned in the repository). The contract pins the integrator (Section~\ref{sec:cartwall}'s equations, stated in the spec text, constants generated from the environment instance so they cannot drift); the \emph{full} arm includes the wall clause, the \emph{incomplete} arm omits it. Crucially, each seed logs whether the wall fired in its training sample --- the identifiability event that the companion paper could not condition on post hoc.

\paragraph{One design fact governs every pooled count in this paper.} The gate sample of seed index $i$ is drawn with rollout seed $10^4(i{+}1)$ \emph{independently of which model is being run} (\texttt{scripts/continuous\_danger\_synthesis.py}), so at every instrument and knob the mini and large arms are synthesized from the \emph{same} samples. Pooling the two sizes therefore doubles the number of \emph{synthesis draws}, not the number of independent gate samples: a pooled $k/2n$ is $n$ shared samples with two model draws each. Two consequences we hold to throughout. (i) Any empirical check of the gate-miss rate $(1-r)^N$ is a per-size statement --- the mode-absent/present split is a property of the sample, so it is \emph{identical} across sizes by construction and cannot be pooled at all. (ii) For the conditional outcomes (blind-and-exploited, repaired), pooling across sizes is meaningful because the synthesis draw is independent, but a Wilson interval over the pooled count assumes independence the samples do not have; we therefore quote the per-size bound as the conservative one and mark pooled bounds as such wherever they appear.

\textbf{On the two headline cells the shared-sample constraint is removed, not caveated.} A \texttt{-{}-seed-offset} run shifts the sample block ($\mathrm{rollout\_seed} = 10^4(i{+}1{+}\mathrm{offset})$), so re-running the \emph{large} arm at offset 20 gives it a block \emph{disjoint} from mini's. The headline cart cell and the headline pendulum knob therefore have three arms: mini on block $S_0$, large on $S_0$ (same samples, different model) and large on $S_{20}$ (different samples, same model) --- two orthogonal replications instead of one. Pooling mini@$S_0$ with large@$S_{20}$ is then a pool of \emph{independent} samples, and the Wilson interval over it is legitimate. Every bound below labelled ``disjoint'' is of that kind; the per-size bound is still reported for the cells where only shared blocks exist (the pendulum caught knob, PatchField2D, the ablations).

\paragraph{Full arm (both sizes, 40 seeds): gate 1.000 in 0 refinement iterations, wall probes exact, play at truth parity --- every seed.} The pinned-integrator premise holds with a real LLM: correct synthesis is float-exact through the sandbox, so $\varepsilon = 10^{-9}$ costs nothing and the tolerance axis is fully disarmed. As in the discrete setting, given the rule, the model translates it perfectly.

\paragraph{Incomplete arm.} Three-way structure:
\begin{enumerate}
\item \textbf{Wall absent from the sample} (the $(1-r)^N$ event; 10/20 seeds at $x_\mathrm{wall} = 8$ for each size, consistent with $(1-0.0114)^{40} \approx 0.63$): every such seed --- \textbf{20/20 across mini and large} --- passed the gate at 1.000, fully wall-blind on the probes, and was exploited at play: pinned at the wall, contact rate 1.0, \textbf{play\_cost 0.999}. Because \texttt{-{}-seed-offset 20} supplied the large arm with a sample block \emph{disjoint} from mini's, the two together are $\mathbf{20/20}$ over $20$ distinct gate-sample blocks --- a bound at the right unit rather than over draws that share evidence: exact (Clopper--Pearson) $95\%$ lower bound $\mathbf{0.832}$, Wilson $0.839$. The two arms differ in \emph{both} block and model size, so the interval covers the mixture rate rather than either arm's; per arm it is $10/10$ blocks, exact lower bound $0.692$. The third arm (large on $S_0$, the same 10 samples mini saw) reproduces the conditional model-for-model, $10/10$. The gate-miss rate is now poolable too: $20/40$ of the independent samples lacked the mode against the predicted $0.6046$ (Wilson $[0.35, 0.65]$). This is the discrete headline, synthesized end-to-end in a continuous CWM: a verified model, exact outside the mode region, that performs worse than random.
\item \textbf{Wall present, repaired:} the LLM does \emph{not} stay blind. It reads the failing transitions and writes the true global rule --- \texttt{if x2 >= 8.0: return [8.0, 0.0]} (or the equivalent \texttt{if x2 > 8.0: x2 = 8.0}) --- not a curve fit. At 20 seeds both sizes repaired \textbf{every} wall-present seed: \textbf{large 10/10 in 0--1 iterations} (two from the synthesis examples alone), \textbf{mini 10/10 in 0--5 iterations}. This is the divergence from the discrete setting, where rules demonstrated by example transitions were persistently not learned (translation-not-inference). A numerically-manifested discontinuity is learnable from data in a way a symbolic game rule was not.
\item \textbf{Wall present, not repaired:} at 20 seeds on the headline cell GPT-5.x produced \textbf{no stalls}, but stalls are real where they occur (the 5-seed $x_\mathrm{wall} = 4$ cell; the Qwen cross-family arm below). They are not near-misses of the rule but \textbf{superstitious local patches}: e.g.\ \texttt{if abs(x2 - 4.0) <= 0.15 and abs(v2) <= 2.5: x2 = 4.0; v2 = 0.0} (verbatim from the $x_\mathrm{wall} = 4$ cell, mini seed 20000, gate $0.974$), or Qwen's \texttt{if x2 >= 8.0 and v2 <= 0.0: ...} --- clamps fitted to the \emph{observed manifestation} of the mode (low-speed, near-wall contacts), which mispredict other approaches to the wall. \textbf{The gate rejected every one of them} (gate $0.49$--$0.999$, never 1.000).
\end{enumerate}

\paragraph{Two other model families.} Spot-checks in two further families (Qwen via an open-router deployment, and Claude relayed through an agent scaffold with byte-identical pipeline messages; 3 seeds plus a control per instrument, so small-$n$) separate the two halves of the result cleanly. The mode-absent blind-and-exploited event fires in \emph{every} family, as Proposition~\ref{prop:ident} requires of a property of the sample. Repair-from-data instead differs in \emph{mechanism}: GPT-5.x recovers almost every revealed clamp exactly, Qwen recovers none and the gate refuses its superstitious patches, and Claude recovers most through a \emph{symmetry prior} that generalizes one-sided evidence into a symmetric pair of boundaries --- which produces a fourth artifact class the trichotomy does not cover: an artifact accepted at gate $1.000$ while carrying an \emph{invented} mode its own sample cannot refute. That is Proposition~\ref{prop:ident}'s prior caveat measured directly, and it is why the paper's off-sample regularity is stated as measured rather than proved. Section~\ref{sup:crossfamily} gives the arms, the artifact and the relay caveats.

\paragraph{Second-instrument robustness (pendulum-with-stop, Section~\ref{sec:pendulum}, 20 seeds/cell, both sizes).} The synthesis arm is not cart-only. Running the identical pipeline on the nonlinear pendulum --- headline $\theta_\mathrm{stop} = 1.4$ (rarity 0.019) and caught $\theta_\mathrm{stop} = 1.0$ (rarity 0.128; ``caught'' labels a cell whose mode is common enough that the gate sample essentially always contains it, so the identifiability event essentially never fires) --- reproduces every branch, including a 3-seed Qwen cross-family spot-check at the headline knob (Table~\ref{tab:pendulum-synthesis}).

\begin{table}[ht]
\centering
\footnotesize
\begin{tabular}{lccc}
\toprule
cell (20 seeds each) & full & absent $\to$ blind \& exploited & present $\to$ repaired (stalled) \\
\midrule
mini $\theta_\mathrm{stop}=1.4$ & 20/20 & $9 \to 9$ (pc 0.995) & $11 \to 11$ (0) \\
large $\theta_\mathrm{stop}=1.4$ & 20/20 & $9 \to 9$ (pc 0.995) & $11 \to 11$ (0) \\
mini $\theta_\mathrm{stop}=1.0$ & 20/20 & $0 \to$ --- & $20 \to 20$ (0) \\
large $\theta_\mathrm{stop}=1.0$ & 20/20 & $0 \to$ --- & $20 \to 20$ (0) \\
Qwen $\theta_\mathrm{stop}=1.4$ (3 seeds) & 3/3 & $1 \to 1$ (pc 0.995) & $2 \to 0$ (2 stalled @0.9997) \\
\bottomrule
\end{tabular}
\caption{Pendulum synthesis cells (20 seeds/cell, both GPT-5.x sizes; 3-seed Qwen spot-check at $\theta_\mathrm{stop}=1.4$). $k \to m$ = of the $k$ seeds in that branch, $m$ had the stated outcome (blind \& exploited, resp.\ repaired); the parenthetical count is stalled seeds; pc = play\_cost.}
\label{tab:pendulum-synthesis}
\end{table}

Pooled across both knobs and both sizes (Table~\ref{tab:pendulum-synthesis}), every mode-absent occurrence was blind and exploited at play\_cost 0.995 (headline knob, disjoint blocks: $15/15$ distinct blocks, exact $95\%$ lower bound $0.782$, Wilson $0.796$; the same-sample pair mini/large@$S_0$ adds $9/9$ draws each, which are not extra blocks) --- the same fixed-point exploitation as the cart's play\_cost $\approx 1$ --- and GPT-5.x passed the repair criterion on \textbf{62/62} mode-present seeds, with no stalls, writing the exact angular clamp (verbatim at the headline knob: \texttt{if th2 >= 1.4: return [1.4, 0.0]}) in \textbf{58} of them; the other \textbf{four} write that clamp \emph{and} a phantom stop on the negative side, which the mode probe cannot see (Section~\ref{sec:patch2d-synthesis}) --- counting those four as non-repairs, as Section~\ref{sec:patch2d-synthesis} does, $30$ of $34$ distinct blocks are repaired on every attempt, exact $95\%$ interval $[0.725, 0.967]$; the per-size, per-knob cells are 11/11 and 20/20 draws by the probe criterion. Qwen reproduces the mode-absent blind-exploited event but stalls on both its mode-present seeds, the same superstitious-patch signature as on the cart; Claude's agent-relayed spot-check on this instrument produced the phantom-mode artifact discussed in Section~\ref{sec:synthesis} (an accepted artifact carrying a stop at $\theta = -1.4$ its sample never reaches, plus a stall at the other mode-present seed). Repair is model-dependent; identifiability, being a property of the sample, is not --- on this instrument too. The mechanism was already validated on two instruments (Section~\ref{sec:pendulum}); the synthesis result now is as well: a nonlinear plant with an angular, not positional, hard stop reproduces the same danger law and the same repair capability, so the repair finding is not a cart artifact.

\subsection{Repair does not transfer to a 2D region mode: 0/156 across the treatments run}
\label{sec:patch2d-synthesis}

Everything above --- and the (b)-residual-vanishes claim of the Introduction --- was measured on \emph{one-dimensional} hard boundaries: a position clamp (cart) and an angular clamp (pendulum), on which GPT-5.x recovered the rule in \textbf{105 of 111} mode-revealing synthesis draws. Two aspects of the criterion govern how that count may be read, and both tighten it.

First, the criterion. The paper's repair test is a gate pass plus a \texttt{mode\_blindness} probe score of $0$, and that probe fires only where the \emph{truth's} mode is active --- so it cannot see an artifact that \emph{invents} a mode elsewhere. Re-testing every artifact the probe calls repaired against the truth on a dense state--action grid (\texttt{scripts/\allowbreak repair\_\allowbreak exactness\_\allowbreak 1d.py}) finds \textbf{four} that are not exact: all four write the correct angular clamp \emph{and} a second, phantom stop on the negative side --- one at the symmetric $-1.4$, three at $-2.0$, which is where the left reward plateau sits. Two are \texttt{gpt-5.4-mini} and two are \texttt{gpt-5.4}, so the invented-mode class the paper attributes to the Claude arm is not family-specific; it appears in the main arms of both GPT-5.x sizes, at a rate of $4$ of $109$. Those four are counted as non-repairs here.

Second, the unit. The draws are not independent trials: the gate sample's random stream depends on the seed index and offset alone --- not on the instrument, the knob, the patch shape or the prompt variant --- so those $111$ draws sit over $36$ distinct gate-sample \emph{blocks} ($22$ on the cart, $34$ on the pendulum, some shared), and varying a knob adds a treatment rather than a sample (\texttt{scripts/\allowbreak sample\_\allowbreak stream\_\allowbreak census.py}, \texttt{scripts/\allowbreak paper2\_\allowbreak statistics.py}). The primary inference unit is the \emph{instrument--stream} block --- one stream pushed through one instrument's dynamics is one sample, and two instruments sharing a stream are dependent only through common random numbers --- and at that unit every attempt is exact on $50$ of $56$ blocks, exact $95\%$ interval $[0.781, 0.960]$. Clustered all the way down to the raw stream, the strictest possible unit, $30$ of $36$ blocks are repaired on every attempt, an exact $95\%$ interval of $[0.672, 0.936]$. Resolved the other way, by knob, the census is $64$ of $70$ knob-level block cells --- a block sampled at two knobs contributes two cells, so those are treatment cells, not additional samples. The cart is unaffected by the first correction: all $33$ of its repairs are exact on the grid, as its mode is a half-line with no far side to over-cover. Per instrument, the corrected criterion leaves the two alike in rate and different in kind: the pendulum is $30/34$ blocks all-repair (exact $95\%$ interval $[0.725, 0.967]$; its four exceptions are the four phantom-stop blocks --- over-coverage the probe accepted), while the cart is $20/22$ --- both its exceptions are superstitious local patches the gate \emph{rejected}, and one of them is the only mode-present draw its block ever produced --- giving an exact lower bound of $0.708$ for all-repair, or $0.772$ for the weaker estimand ``some attempt on a fresh sample repairs''. Both cart exceptions live in the 5-seed $x_\mathrm{wall} = 4$ cell. PatchField2D asks whether that repair survives a genuinely 2D mode --- a circular boundary $(x'-c_x)^2 + (y'-c_y)^2 \le R^2$ --- and whether an artifact can repair one patch while staying blind to the other (\textbf{partial repair}, which the design predicted and per-mode blindness makes measurable). We ran the identical pipeline (Azure GPT-5.x mini and large, 20 seeds/cell, $N = 40$, $\varepsilon = 10^{-9}$) on two bi-knob cells, $k = (3,7)$ and $k = (5,9)$, classifying each seed by which modes its training sample contained \{miss both, see one miss the other, see both\}. The full arm is clean (20/20 per cell, both sizes).

\paragraph{The instrument was built to detect partial repair; none occurred in its 66 draws.} Across the \textbf{66} see-one-miss-the-other seeds (64 see-$P_1$-miss-$P_2$ plus 2 miss-$P_1$-see-$P_2$; a complete census of this branch over $20$ raw seed blocks, not a sampled estimate), \textbf{zero} produced a partial-repair certificate --- not one artifact repaired the seen patch and stayed blind to the unseen one while passing the gate. More strongly, of the \textbf{76} incomplete-arm seeds whose sample contained \emph{at least one} mode, \textbf{0/76 recovered the circular disc rule at all}.

\paragraph{What the artifacts do instead.} A code inspection of all $76$ artifacts, confirmed by an independent behavioural audit that probes each \texttt{step()} on a state grid, locates the failure: plant translation succeeds in ${\approx}74/76$, and what collapses is induction of the region. The dominant class is \textbf{dimensional reduction} --- the disc written as a half-plane at the right location and the wrong shape ($38/76$ by source, $39/76$ by behaviour) --- with pure-blind, superstitious local patch and failed-disc classes making up the rest, and \textbf{no artifact encoding its seen patch} even where one was seen. The $\varepsilon$-exactness alternative is falsified: no correct-form disc failed on arithmetic. Two ablations then exclude two candidate mechanisms. A guided treatment at $3\times$ budget removes the half-plane entirely and yields bounded 2D regions instead --- ellipses, rectangles, unions of micro-discs --- fitted to the \emph{hull of the observed freeze positions} and, in $36/40$ artifacts, conditioned on the current rather than the landing position; none is the true disc. An axis-aligned square with flat edges is not repaired either, and fails with the errors \emph{reflected}: discs written on square evidence. A third family reproduces the same template set. Section~\ref{sup:artifacts} gives the full audit, the three ablations and the per-iteration ledger.

\paragraph{In these campaigns certification was all-or-nothing: no partial artifact was accepted.} Certification occurred \emph{iff} the sample missed \emph{all} modes: the only certified incomplete artifacts were the \textbf{4/80} miss-both seeds (2 per size at $k = (5,9)$), blind on both patches and exploited at play\_cost $1.095$, contact rate $1.0$ --- the doubly-blind fixed point, at the joint-miss event of probability $(1-r_\cup)^N$ (Section~\ref{sec:patch2d}: measured for the pair, not factored). Every see-one-miss-the-other artifact was \emph{rejected}, not certified: the all-or-nothing gate refused every partial artifact rather than certifying a half-repair. The gate was therefore \emph{sample-consistent} throughout, in the only sense it can be: no accepted artifact contradicted a transition of the sample it was scored on. That is the acceptance criterion restated, not a guarantee about the artifact. Two cautions on reading the failing gates: a high near-miss score (e.g.\ $0.9997$) reflects the \emph{rarity of a mode contact} in a short rollout, \emph{not} a near-repair, so more refinement iterations would not populate a partial-repair branch that does not exist; and the mini/large partitions are seed-identical by construction (shared seeds --- the design fact of Section~\ref{sec:synthesis}, which holds at every instrument, not only here), so the two sizes are not independent replications.

\paragraph{What the collapse is about, after eight ablations.} On the 1D clamps the discrete paper's (b)-residual all but vanished for GPT-5.x; on the 2D regions it returns in every treatment we ran. Six candidate causes are \emph{measured negatives}: it is not boundary curvature (the axis-aligned square with flat edges fails identically, with the errors reflected --- discs written on square evidence), not the tested prompting and budget (one region-first prompt at $3\times$ budget fails, in a different class --- other prompts, longer contexts, refinement with memory or an explicit fitting tool remain untried), not identification of the variable the trigger reads, not the censoring of the region's interior itself --- the mechanism Proposition~\ref{prop:entryclass} makes the natural candidate --- which two further campaigns lift without restoring repair, and not --- the one prediction in the 2D program written down before its test --- the angular coverage of the contacts, raised until a three-line least-squares fit recovers the region on every sample while the synthesizer still recovers it on none (all in Section~\ref{sec:arity}). A seventh candidate, the trigger's arity in the sense of Definition~\ref{def:arity}, is the one the instrument cannot rule on: the slab built to lower it has a target identified only up to Proposition~\ref{prop:entryclass}'s class, so that intervention excludes nothing and is recorded rather than counted.

What the campaigns establish about the \emph{evidence} is a separate and narrower thing. Proposition~\ref{prop:entryclass} shows that the mode's freeze semantics keep every visited state outside the region, so a sample witnesses only entries; whether that pins the rule then depends on whether a rollout can reach the region's far side, which is measurable in advance and differs across these instruments. On the disc it can, so the rule is identified there --- relative to the circle class and tolerance of Proposition~\ref{prop:discident}, in the block counts measured in Section~\ref{sec:arity} --- and the failure is a genuine one of induction: what the artifacts write instead is drawn from a small library of low-complexity forms --- a 1D threshold, a radial ball, a reward-landmark zone, the hull or the bounding box of the observed contacts --- and the gate refuses each in turn. That library is the template prior, and after eight interventions and two positive controls it is the description that survives: each intervention removes a candidate cause and the failure stays. What interventions on the \emph{instrument} cannot do is single out a mechanism inside the model --- an inability to carry out an algebraic fit over textually presented transitions, a memoryless refinement loop, and the absence of any system-identification objective in the prompt are all compatible with every campaign here, and we do not claim to distinguish among them. Removing curvature, budget, arity, variable ambiguity and the interior's censoring each leaves the failure in place --- the last while supplying eleven times more mode evidence and, separately, an easier rule --- and the controls locate what is missing exactly: given the region's form \emph{and} its location the synthesizer infers the remaining constant to float precision in $20$ of $20$ seeds, while a plain least-squares circle fit recovers both constants from the same evidence on $12$ of $20$ samples and the synthesizer given only the form recovers none (Section~\ref{sec:arity}). Raising the evidence's angular coverage until the fit succeeds on \emph{all} $20$ leaves the synthesizer at none of $20$, which is what makes the prior a disposition rather than a weight. What is not induced is a \emph{located} rule: the form alone does not rescue the failure, the form and its location together do.

Two readings this does not support. It is not that the synthesizer cannot represent a region: told the rule, it writes the disc, the square, the slab and the boundary projection at gate $1.000$ in zero refinement iterations, in every arm and both sizes. And it is not simply that more evidence would suffice --- the interior-witnessing campaigns supply more and it does not help --- though on an instrument whose far side is unreachable no sample at any size distinguishes the truth from an entry rule, and there the larger model reliably writes the latter (Section~\ref{sec:arity}). What does \emph{not} change across any of this is the provable core: identifiability and the gate-miss law hold verbatim, no accepted artifact contradicted its acceptance sample, and the joint-miss event remains where the law's \emph{exact} factor lives; the estimand $D_N$ itself also collects whatever every other accepted artifact costs --- the invented-stop class of Section~\ref{sec:synthesis} included --- which the per-campaign accepted-cost means report rather than the law absorbing.

\section{An independent acceptance sample}
\label{sec:heldout}

Every arm above scores the artifact on the sample it was synthesized and refined against. That is the protocol the companion paper used and the one deployed pipelines use, and it is a \emph{sample-consistency} gate: passing it means agreeing with the data the artifact has already seen. Proposition~\ref{prop:twofactor} says what changes when the acceptance sample is drawn independently, under two hypotheses. Both are testable on the artifacts we already have, because every synthesized program is versioned with its cell: for each of the \textbf{1034} committed artifacts we reproduce its training block $D_{\mathrm{tr}}$, draw two further blocks disjoint from it and from each other --- an acceptance sample $D_{\mathrm{g}}$ of the same size ($N_{\mathrm{g}} = 40$ rollouts) and an evaluation sample $D_{\mathrm{eval}}$ of $100$ --- and re-score the artifact on both without any refinement (\texttt{scripts/\allowbreak heldout\_\allowbreak gate\_\allowbreak audit.py}, \texttt{scripts/\allowbreak heldout\_\allowbreak gate\_\allowbreak paper\_\allowbreak numbers.py}). Refine on $D_{\mathrm{tr}}$, accept on $D_{\mathrm{g}}$, classify on $D_{\mathrm{eval}}$ \emph{is} the prospective protocol; evaluating it after the fact costs no LLM calls and changes nothing about its validity. Two validity checks come first: the three blocks are verified disjoint at the level of individual rollout seeds, and re-scoring an artifact on its \emph{reproduced} training block returns the accuracy the original run stored in $102$ of $102$ spot-checks, which is what establishes that the reconstruction is the original sample.

\paragraph{Both hypotheses hold where they were tested, so the exponents add there.} Hypothesis (ii) --- a mode-blind artifact fails the acceptance sample exactly when that sample contains a mode contact --- is nearly structural for a program with an omitted branch, and it is exact in the data: over the $60$ mode-blind artifacts synthesized from a mode-free training block, held-out acceptance and a mode-free acceptance sample coincide in every case ($25$ accepted with the mode absent from $D_{\mathrm{g}}$, $35$ rejected with it present, no off-diagonal cell). Hypothesis (i) --- a mode-free training block yields a mode-blind artifact --- is the genuine empirical premise, and on the cart's headline cell it holds for $30$ of $30$ draws. That check is cell-local, and extending it across the audited arms strengthens it: every one of the $60$ incomplete-arm draws whose training block misses the mode is probe-blind, so (i) has no known violation among the $1034$ API-arm artifacts, and the nearest counterexample to a global reading is the agent-relayed Claude artifact --- which carried an \emph{invented} mode, and sits outside these arms. The four GPT phantom stops of Section~\ref{sec:synthesis} are \emph{not} violations of (i): their training blocks contain the true mode (one contact transition each), and what they bound is a different claim --- the reach of evaluation-sample exactness --- taken up below. With both in place the proposition's prediction is a statement about the sample alone, so its unit is the distinct block: at $r = 0.0114$ the two-factor prediction is $(1-r)^{80} = 0.399$ against a measured $10/40 = 0.250$ over the $40$ distinct blocks, $95\%$ interval $[0.142, 0.402]$; the one-factor rate the earlier sections check is $(1-r)^{40} = 0.631$ against $20/40 = 0.500$, interval $[0.352, 0.648]$. Both predictions sit inside their intervals. Note the scope of what the second exponent buys: it halves the probability of the law's central event --- a blind artifact from a mode-free training block surviving to deployment --- and at fixed total budget $N_{\mathrm{tr}} + N_{\mathrm{g}}$ that one exponent is unchanged. It is a statement about that exponent alone (Proposition~\ref{prop:twofactor}): it does not price synthesis quality, total risk, or compute under a redistributed budget.

\paragraph{The sample-consistency gate accepts artifacts an independent gate rejects.} Of the \textbf{650} draws that reached $1.000$ on their own sample, \textbf{40} are rejected by an independent acceptance sample of the same size --- $6.2\%$ of draws, spanning $25$ distinct rollout-seed blocks. We report these as counts and attach no binomial interval: the draws share rollout-seed blocks across campaigns and treatments, so they are not independent trials from any population such an interval would describe. All $40$ are incomplete-arm artifacts, the full arm regressing on none of its own; $3$ draws move the other way --- below $1.000$ in-sample (at $0.9975$ to $0.9991$) yet accepted by the independent sample --- which is why $650 - 40$ leaves $613$ accepted rather than $610$. The failures are located where the law says they must be: $39$ of the $40$ fail \emph{only} on mode contacts and are off-mode exact, and the one exception is one of the four phantom-stop artifacts of Section~\ref{sec:synthesis}, rejected exactly where its invented stop fires ($11$ off-mode failing transitions on $D_{\mathrm{g}}$). So a $1.000$ score on the training sample is worth what Proposition~\ref{prop:ident} says it is worth and no more, and the quantity it overstates is measurable: about one accepted draw in sixteen here.

\paragraph{What acceptance does buy, measured rather than inspected.} The paper's off-sample regularity is a measurement, not a code-inspected observation: of the \textbf{613} draws an independent gate accepts, all \textbf{613} are exact outside the mode region \emph{on the further independent $100$-rollout evaluation sample} --- every failing transition of $D_{\mathrm{eval}}$, where there is one, is a mode contact. Two scope limits belong beside the number. The draws share seed blocks, so no binomial bound is attached to $613/613$; and exactness on $D_{\mathrm{eval}}$ is exactness under the evaluation distribution, not a global property of the artifact --- an accepted artifact carrying an invented mode in a region that distribution rarely visits would pass this check, and the pendulum's phantom stops (Section~\ref{sec:synthesis}) are exactly that shape. This is the sense in which ``verified but wrong'' is precise here: acceptance establishes exactness away from the mode on samples it never saw, drawn from the same distribution, and says nothing about the mode or about regions those samples rarely reach.

\paragraph{The limit an independent gate does not remove.} One limit belongs beside the result, and it is not a matter of sample size. Proposition~\ref{prop:entryclass} exhibits a class of models that agree with the truth on \emph{every} transition of \emph{every} rollout --- the mode's freeze semantics remove the evidence that would separate a membership rule from an entry rule --- and a synthesized artifact in that class is accepted here at $\varepsilon = 10^{-9}$ by its own sample, by an independent acceptance sample and by a $100$-rollout evaluation sample alike (Section~\ref{sec:arity}). An independent gate makes acceptance mean what it says about the sampled inputs; it does not make it mean more than that. What saves this particular class is not the gate but Proposition~\ref{prop:entryclass}(iii): its members are wrong only strictly inside a region no planner rolling them forward can reach, so the play cost is zero by the same argument that makes them unfalsifiable.

Two further scope notes. The re-scoring covers the $1034$ API-arm artifacts; the agent-relayed Claude cells are not in it, so Claude's phantom-mode artifact is not among the $613$. The $613$ do, however, contain known-wrong members of the same class, and they make the evaluation-exactness limit concrete. Of the four GPT phantom-stop artifacts (Section~\ref{sec:synthesis}) --- true mode repaired, an invented second stop where their samples are silent --- \textbf{three are accepted by the independent gate and score $1.000$ on $D_{\mathrm{eval}}$}, because no evaluation rollout happens to enter the invented stop's region, while the behavioural audit's state--action grid shows all four wrong there ($131$ to $280$ mismatched grid points each); the fourth drew gate and evaluation samples that do visit the region and is rejected ($13$ off-mode failures on $D_{\mathrm{eval}}$). Chance decided which one of the four was caught. That is Proposition~\ref{prop:ident} operating on the checks themselves: acceptance and evaluation read the artifact only at sampled inputs, so what convicts the other three is the grid, not any rollout. And an independent gate is not a solution to the problem this paper is about: it makes acceptance mean what it says, but the mode still has to appear in \emph{some} sample, and Proposition~\ref{prop:twofactor}'s exponent $N_{\mathrm{tr}} + N_{\mathrm{g}}$ is the same $N$ the danger law always had.

\section{What the 2D collapse is about: eight interventions, two controls, and a target that is sometimes not identifiable}
\label{sec:arity}

Section~\ref{sec:patch2d-synthesis} reported that no artifact recovered the 2D region rule and that the artifacts are \emph{consistent with} a low-complexity template prior, a mechanism that design does not isolate. Two further campaigns hold most of the confound fixed and vary one thing each. Both were run after the finding they respond to (Section~\ref{sup:prespec}); the first answers a different question from the one it was built to pose, and is reported against the question it does answer.

\paragraph{Whether the evidence identifies the rule --- relative to a stated class and tolerance --- is a property of the instrument, computable in advance.} Proposition~\ref{prop:entryclass} says a sample witnesses only \emph{entries} into the mode. Whether that suffices to pin the rule depends on one further question: can a rollout reach the region's \emph{far} side? Measured over $2000$ rollouts per instrument (\texttt{scripts/\allowbreak mode\_\allowbreak identifiability.py}), the answer separates the instruments the paper compares. On the disc the mover goes \emph{around} the patch --- the patches are bounded in $y$ --- and reaches $x = 14.11$, with $4695$ visited states east of the near patch's far edge; the square likewise ($3757$). So on those instruments the gate \emph{distribution} reaches the region's far side --- a population fact, and no more. Three claims must be kept apart. \textbf{Population support}: the distribution constrains the region from more than one side, which is what the $2000$-rollout measurement establishes; it says nothing about a particular finite sample. \textbf{Recovery by a specified estimator}: the three-line least-squares circle fit lands within $0.1$ of both constants on $12/20$ baseline samples and $20/20$ at the dose's $185^\circ$ coverage. This is an existence result, not uniqueness. \textbf{Finite-sample identification relative to a specified class}: Proposition~\ref{prop:discident}'s version-space certificate proves that \emph{every} consistent circle lies within the same tolerance on $6/20$ baseline samples and $18/20$ widest-dose samples. It excludes half-planes in $20/20$ at both doses and boxes in $19/20$ and $20/20$, but the hull of the observed contacts remains consistent in every block; therefore the finite evidence does not identify a unique arbitrary region or the entire artifact template library. The induction failure can be charged against estimator-level recoverability in all $20$ widest-dose blocks, and against universal circle-class identification in $18$ of them; both statements are reported because they answer different questions. A \emph{slab}, being unbounded in $y$, is the instrument that separates the two integers Remark~\ref{rem:twodims} keeps apart, and it cannot be circumvented: the furthest position reached is $x = 5.00$, exactly its near face, and \textbf{no} state east of it is ever visited. Its rule is therefore identified only up to Proposition~\ref{prop:entryclass}'s equivalence class, and ``recover the true rule'' is not a well-posed target there.

\paragraph{Eight interventions, and what each shows did not suffice.} Each is aimed at
one candidate explanation; none is single-variable in the strict sense --- the guided
treatment also changes the example count and the budget, the post-state variants also
change dwell time and the amount of mode evidence, the dose also changes the start
distribution and the rollout count (held to the same contact count by calibration) ---
and the changes each induces beyond its target are stated with its campaign
(Section~\ref{sup:ablations}; the full per-intervention matrix --- target
hypothesis, co-changes, prediction status, unit, control, licensed and
unlicensed inference --- is versioned in
\texttt{results/\allowbreak h6\_\allowbreak exclusion\_\allowbreak matrix\_\allowbreak v1.json}).
All were run after the finding they respond to
(Section~\ref{sup:prespec}), and none restores repair. Table~\ref{tab:ablations} is the
ledger; Section~\ref{sup:ablations} gives each campaign, its admissibility check and its
artifacts.

\begin{table}[ht]
\centering
\small
\begin{tabular}{llrl}
\toprule
\# & what it targets & repaired & what did not suffice, as tested \\
\midrule
1 & region-first guidance, $3\times$ budget & 0/40 & prompting and budget \\
2 & axis-aligned square, flat edges & 0/40 & boundary curvature \\
3 & a second model family (Claude) & 0/3 & one family's idiosyncrasy \\
4 & trigger arity: a band in one coordinate & 0/40 & \emph{nothing --- target unidentifiable} \\
5 & the trigger's argument named & 0/40 & variable identification \\
6 & the mover stops inside the region & 0/40 & the interior's censoring \\
7 & the mover clamped to the boundary & 0/40 & the same, at matched evidence \\
8 & angular coverage of the contacts & 0/40 & the coverage of the evidence \\
\bottomrule
\end{tabular}
\caption{The eight ablations on the 2D mode. ``repaired'' counts mode-containing synthesis
draws recovering the region rule, by the criterion of Section~\ref{sup:ablations} (a gate
pass, a mode probe at $0$, \emph{and} exact agreement with the truth on a state--action
grid, since the probe alone accepts a rule that over-covers the region). Ablation 4 excludes
nothing because its target is not identifiable --- the measurement above --- which is
recorded rather than reported as a $0/40$ like the others. Ablations 6 and 7 carry
complementary confounds: 6 supplies $11\times$ more mode evidence, 7 matches the baseline's
evidence and pays in the rule's complexity, so the negative survives both. Ablation 8 holds
the contact count fixed and raises only their angular coverage, to the point where a
three-line least-squares fit recovers the region on 20 of 20 samples; it is the only one of
the eight whose direction was written down before the run.}
\label{tab:ablations}
\end{table}

\paragraph{Two positive controls bracket the frontier: the constants follow from form and location; nothing follows from the form alone.} Every result above is a negative, and a negative is worth what the guarantee that its target is learnable is worth. Two controls supply that guarantee, one from inside the pipeline and one from outside.

Inside: replace the incomplete arm's missing clause with a \emph{partial} one that states the rule's form and its effect while withholding constants (\texttt{-{}-mode-hint}; the withheld constants appear nowhere in the text, asserted rather than assumed). Two levels, $20$ seeds each on \texttt{gpt-5.4}, and they separate completely.
\begin{itemize}
\item Given the form \emph{and} the centres, with only the radius withheld --- one unknown scalar --- the synthesizer infers it \textbf{exactly in $20$ of $20$} seeds. Every artifact agrees with the truth at $\mathrm{IoU} = 1.000$ and on all $9020$ points of the state--action grid, and all $20$ are accepted by an independent gate and an independent $100$-rollout evaluation at $\varepsilon = 10^{-9}$; one writes the comment ``radius inferred from the provided transitions''. Four of the twenty need no refinement iteration at all.
\item Given the form alone, with the centres and the radius withheld, it recovers \textbf{$0$ of $20$}, best agreement $0.132$, and $16$ of the $20$ write a region small enough to be a point on the probe grid --- the memorisation class again.
\end{itemize}

Outside: a plain algebraic least-squares circle fit on exactly the evidence the synthesizer was handed (\texttt{scripts/\allowbreak region\_\allowbreak fit\_\allowbreak baseline.py}). The contact landings are derivable from the sample by anyone who read the contract --- a contact is recognisable from the transition alone, and the landing follows from the contract's own integrator --- and they cover a median $111^\circ$ of the circle, partial coverage rather than a thin crescent. Even so the fit recovers both constants to within a tenth of the radius on $\mathbf{12}$ of the $20$ samples, and the centre alone on $13$, with no prior and no language model.

Together these place the failure precisely. It is not the evidence: on twelve of these samples three lines of linear algebra recover the region, and the synthesizer given the form recovers none of them. It is not an inability to fit constants: given the centres it fits the radius to float precision, every time. It is not representational: told the rule outright, every arm writes it at gate $1.000$ in zero iterations. What the synthesizer does not do is \emph{perform the fit} --- it substitutes a template and, when the template is refused, memorises the contacts. That is the region-template prior, stated as sharply as this instrument can state it: \textbf{a \emph{located} rule is what is not induced: the form alone does not rescue the failure, and once the form and its location are given the constants follow exactly}. On the eight samples where the trivial fit also fails, the negative is not attributable to the synthesizer, and we do not attribute it.

\paragraph{Within the reachable range ($111^\circ$--$185^\circ$), the prior does not yield to coverage --- and the direction was recorded before the run.} The controls above leave one mechanism question open. It is the only treatment in the 2D program whose two possible answers were both written down before its test --- which is weaker than pre-registration, since the question itself arose from the results it responds to, and Section~\ref{sup:prespec} classifies it that way (B21): if the obstruction is a prior over forms, enough evidence should overcome it; if more evidence changes nothing, it is a limit rather than a prior. The instrument can pose the question because the start distribution is a knob. Starting episodes on a ring around the near patch instead of in the box widens the \emph{angular coverage} of the contacts --- the arc they actually span, $360^\circ$ less the largest angular gap --- while the rollout count is lowered to hold the \emph{number} of contacts at the baseline's median of 14.5, so the dose is coverage and not quantity (\texttt{scripts/\allowbreak calibrate\_\allowbreak evidence\_\allowbreak dose.py}; the trap survives, play-cost 0.8701 and blind contact rate 0.92). Coverage saturates near 185$^\circ$, so the achievable range is 111$^\circ$ to 185$^\circ$, and the trivial fit traces it: it recovers both constants on 12 of 20 samples at the baseline's 111$^\circ$, 16 of 20 at 129$^\circ$, and \textbf{20 of 20} at 185$^\circ$. Thus every widest-dose sample supports recovery by that specified estimator; the stronger universal circle-class certificate holds in $18/20$ (Proposition~\ref{prop:discident}). The synthesizer recovers none. Given the form and asked only for its location and size it recovers \textbf{0 of 20} (best agreement 0.191); given no clause at all it recovers \textbf{0 of 20} (best 0.220), while the translation arm on the same wider sample still writes the rule at gate $1.000$ in zero iterations, $20$ of $20$. The narrower ring at 129$^\circ$ is the machinery control and behaves like the baseline (0 of 20), so the null at 185$^\circ$ is not an artifact of starting on a ring. Two consequences, one for each direction of the comparison. The estimator-level attribution becomes clean: the earlier control had to set aside the 8 samples on which the trivial fit also fails, and at 185$^\circ$ there are none to set aside --- on every sample in the dose arm the region is recoverable by the declared fit and the synthesizer does not recover it. At the stronger universal level the same contrast holds on $18$ blocks, with the two remaining blocks explicitly classified as underdetermined at tolerance rather than charged to synthesis. And the mechanism is bounded from the other side: within the range this instrument can reach, the failure does not respond to the dose at all, so ``prior'' should be read as a fixed disposition and not as a weight that more evidence overcomes. What lies beyond 185$^\circ$ this instrument cannot say: full coverage of the circle requires visiting the region's far side from every bearing, and the freeze semantics forbid it (Proposition~\ref{prop:entryclass}).

\paragraph{The two campaigns bound what an independent acceptance sample buys, and the bound is the evidence's identifiability.} On the disc, \emph{every} incomplete-arm artifact that reaches gate $1.000$ on its own sample is rejected by an independent acceptance sample: \textbf{8 of 8 draws over 6 distinct rollout-seed blocks}, at held-out accuracies $0.9944$ to $0.9997$. (Four are in the guided landing-prompt arm at $k = (3,7)$ and two in each size at $k = (5,9)$; the two sizes share their blocks, which is why the block count is $6$ and not $8$. The default disc arm reaches $1.000$ on none, so there is nothing there for a second sample to catch.) On the slab, nineteen reach gate $1.000$ and an independent sample rejects \textbf{0 of 19}. Nor is the independent gate merely strict: on the positive control of Section~\ref{sec:arity}, where the form and the centres are given and the artifacts are exactly right, it accepts \textbf{20 of 20} at $1.000$. That is not a difference in how carefully the two gates were run; it is Proposition~\ref{prop:entryclass} and the identifiability measurement above. Where the target is identified --- in the class-relative, finite-sample sense of Proposition~\ref{prop:discident} --- in-sample acceptance can be over-fitting and a held-out sample catches it; where the target is identified only up to the proposition's equivalence class, in-sample acceptance is the correct inference and there is nothing for a held-out sample to catch. An independent gate is exactly as effective as the evidence's identifiability permits, and no more --- which is the honest scope of Section~\ref{sec:heldout}.

Two consequences. The ablation cannot be read as an answer about arity, because its target is unidentifiable; we say so rather than reporting $0/40$ as if it were the disc's $0/156$. And the paper's repair criterion is not sufficient: a probe that fires only where the truth's mode is active cannot see a rule that over-covers it. Re-testing every 1D artifact the probe calls repaired against the truth on a dense grid (\texttt{scripts/\allowbreak repair\_\allowbreak exactness\_\allowbreak 1d.py}) finds four more of that kind, which is why Section~\ref{sec:patch2d-synthesis} reports $105$ of $111$: the four probe-passing phantom-stop artifacts are counted as non-repairs.

\section{Localization is representational: smooth learners and a mode-capable baseline on the same samples}
\label{sec:smooth}

``Representational'' is the narrowest of the four statements Remark~\ref{rem:fourstatements} keeps apart, and it is the only one this section tests. If localization is representational in that sense, two things must be checkable on non-code learners trained on the \emph{same data} (\texttt{scripts/continuous\_smooth\_probe.py}; the two most favorable smooth learners: closed-form linear least squares --- off the wall, the dynamics are \emph{exactly linear}, so this is the smooth best case --- and a small tanh MLP, probe-grade):

\begin{table}[ht]
\centering
\small
\begin{tabular}{llrrcc}
\toprule
model & trained on & off-mode err mean / max & probe err & gate $10^{-9}$ & gate $10^{-2}$ \\
\midrule
linear-LSQ & wall-free & $3.6\mathrm{e}{-15}$ / $1.7\mathrm{e}{-14}$ & 4.18 & \textbf{PASS} & \textbf{PASS} \\
linear-LSQ & wall-data & $1.9\mathrm{e}{-03}$ / $1.2\mathrm{e}{-02}$ & 4.17 & fail & fail \\
MLP h=8 & wall-free & $3.5\mathrm{e}{-03}$ / $5.0\mathrm{e}{-02}$ & 4.20 & fail & fail \\
MLP h=8 & wall-data & $6.0\mathrm{e}{-03}$ / $4.9\mathrm{e}{-02}$ & 4.19 & fail & fail \\
\bottomrule
\end{tabular}
\caption{Smooth learners on the synthesis samples ($x_\mathrm{wall} = 8$). ``probe err'' is the wall-region probe error (4.2 $\approx$ predicting straight through the wall).}
\label{tab:smooth}
\end{table}

\begin{figure}[ht]
\centering
\includegraphics[width=0.6\textwidth]{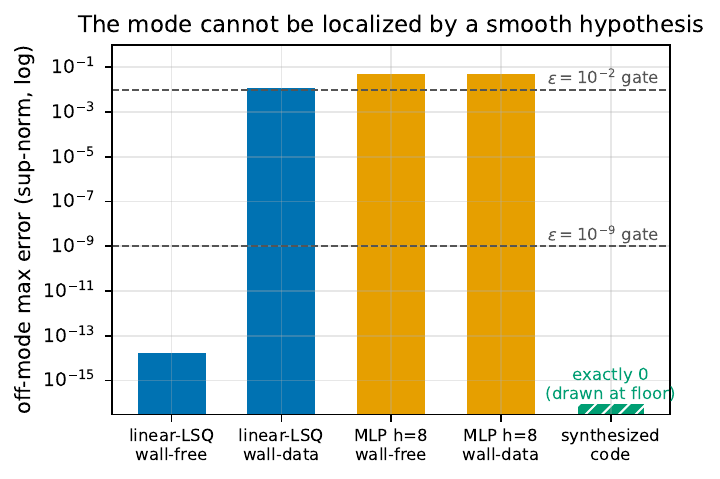}
\caption{Off-mode max error (log) by model and training data. The wall-free linear fit sits at float noise and passes both gates (blind); four contact rows tilt it twelve orders of magnitude; synthesized code is exactly zero off-mode.}
\label{fig:smooth}
\end{figure}

\textbf{Identifiability is learner-independent, live.} On the wall-free sample the linear model recovers the off-mode dynamics to $10^{-15}$, \textbf{passes the $\varepsilon = 10^{-9}$ gate}, and is exactly as wall-blind as the synthesized blind code (probe error 4.18: it predicts straight through the wall). Proposition~\ref{prop:ident} instantiated on a second hypothesis class. The proposition's exact content is that on the miss event no sample-measurable score separates a mode-blind model from the true one, so acceptance carries no information about the mode --- not that every accepted model is blind, which is false and which this paper's own data refute: Claude's pendulum artifact was accepted while carrying an \emph{invented} mode, its prior having supplied off-sample content that happens to be wrong in the other direction. Blindness is what a prior-free learner gets; the identifiability event is what makes any of it possible. The $(1-r)^N$ hole is not an LLM property.

\textbf{A mode-capable learned class closes the exactness gap where the evidence identifies the mode --- measured, not conceded.} The two learners above cannot contain the mode; the natural objection is a learner whose hypothesis class does. We test the most favourable such baseline: the instrument's own off-mode integrator and reward with only the event function learned from the same training samples --- a hard threshold on the cart, up to two separated circular event regions fitted by algebraic least squares on PatchField2D (\texttt{scripts/\allowbreak mode\_\allowbreak capable\_\allowbreak baseline\_\allowbreak h5.py}, \texttt{results/\allowbreak mode\_\allowbreak capable\_\allowbreak baseline\_\allowbreak h5.json}; the training seed block is the unit, $20$ blocks per arm). On the cart it behaves exactly like the code arm: from the wall-containing sample ($4$ contact transitions) it recovers the threshold exactly ($\hat x_\mathrm{wall} = 8.0$), is float-exact on all $3200$ held-out transitions, passes the $\varepsilon = 10^{-9}$ gate with zero bad transitions, and plays at truth parity (play\_cost $0.0$); three contacts already suffice, zero contacts never do; and from the wall-free sample it passes the same gate mode-blind --- Proposition~\ref{prop:ident} on a third hypothesis class. On PatchField2D, given the two-circle form, it recovers the near patch within $0.1$ of both constants on $12$ of $20$ baseline blocks and \emph{both} patches on $0$ of $20$ ($1$ of $20$ at the widest angular coverage) --- what it lacks is far-patch evidence, not representation. Three conclusions therefore separate cleanly: expressibility and float-level exactness are properties of any class containing the mode, code or not; induction from finite evidence succeeds for the numeric estimator exactly where the identifiability measurements of Section~\ref{sec:arity} say the evidence permits it; and the LLM synthesizer, handed the same form, recovers the located rule on none of its $20$ samples (Section~\ref{sec:arity}) --- the gap between the estimator's $12/20$ and the synthesizer's $0/20$ is the induction failure isolated from representation.

\textbf{With the mode in the data, code and smooth part ways.} Four contact rows out of 3200 tilt the linear fit by \textbf{twelve orders of magnitude} off-mode ($1.7\mathrm{e}{-14} \to 1.2\mathrm{e}{-02}$ max; Figure~\ref{fig:smooth}) --- it fails both gates \emph{and still} has the mode wrong (probe 4.17). The smooth hypothesis cannot put the error on the mode; it leaks everywhere (Proposition~\ref{prop:lipschitz}'s geometry, with Corollary~\ref{cor:locbudget} pricing the volume the leak must occupy, observed here as a least-squares tradeoff). The synthesized code, on the \emph{same sample}, wrote the exact clamp and passed at float precision. The MLP combines both axes: a pervasive ${\sim}5\mathrm{e}{-3}$ floor that no gate accepts, and a never-learned mode.

Consequence for the CWM paradigm: the paradigm \emph{creates} in continuous control the localized failure class that the learned-model literature says does not dominate there --- and, by the same representational fact, creates an exact-repair capability that no uniformly Lipschitz class can match (Corollary~\ref{cor:locbudget}). That capability is not exclusive to code: a learned event function whose class contains the mode matches code's float-exactness on the 1D clamp, measured above. What code supplies without being told is the class itself --- any computable rule, no pre-specified event family --- and the 2D campaigns show that this universality does not, by itself, deliver the induction of a located rule. Code cuts both ways, and the gate's sampling coverage decides which way.

\section{Related work}
\label{sec:related}


\paragraph{Counterexample-guided inductive synthesis: the same loop, a different oracle.}
CEGIS \citep{solarlezama2006sketching,solarlezama2008thesis} alternates a \emph{learner}, which proposes a candidate consistent with the counterexamples seen so far, and a \emph{verifier}, which either certifies the candidate against a specification or returns a new counterexample. The guarantee is inherited entirely from the verifier: one that is \emph{complete relative to a specification} either proves the candidate on the whole input domain or produces a witness. SyGuS standardized the format \citep{alur2013sygus}; the oracle-guided theory \citep{jha2010oracle,jha2017theory} makes what is synthesizable a function of which oracles are available, with programming by example \citep{gulwani2011flashfill,gulwani2017synthesis} as the bottom case, where generalization beyond the examples comes from a ranking function or a domain bias and not from a guarantee. Our synthesize--gate--refine pipeline \emph{is} a CEGIS loop --- LLM as learner, the gate's failing transitions as counterexamples, a refinement iteration as a CEGIS iteration --- with one load-bearing substitution: the counterexample oracle is random sampling of transitions, not a verifier. A verifier's ``no counterexample'' means \emph{there is none}; a sampling gate's means \emph{none was drawn in $N$ draws}, an event of probability exactly $(1-r)^N$ for a rule of rarity $r$ (Proposition~\ref{prop:gatemiss}), on which the loop is programming by example and the learner's prior decides everything off-sample. That weaker oracles buy weaker conclusions is not news to this literature and we do not re-derive it: it is the content of \citet{jha2017theory}, and the work that ran CEGIS against physical dynamics, where no complete verifier exists, substituted simulation and then recovered soundness by other means --- a bounded verifier, or an SMT check on the final candidate \citep{kapinski2014simulation,ravanbakhsh2019clf} --- or kept the verifier complete by pushing it into a theory solver \citep{abate2018cegist}. We add a closed-form law for how incomplete a sampling oracle is on a localized-rule failure, a proof that the miss event is unlearnable \emph{from the sample} by any learner (Proposition~\ref{prop:ident}), and a measurement of what a planner does with the accepted artifact. In the CEGIS sense our gate certifies one thing only --- the absence of counterexamples in a sample --- which is to say, nothing universal.

\paragraph{Statistical model checking.} SMC replaces exhaustive verification with simulation plus a statistical test and states what a sample size buys: acceptance sampling with Type-I/Type-II bounds \citep{younes2002acceptance}, Chernoff--Hoeffding complexity for an $(\epsilon,\delta)$ estimate of a property's probability \citep{herault2004apmc}, the black-box case \citep{sen2004blackbox}, and the standing difficulty that a rare property needs sample size scaling like the inverse of its probability \citep{legay2010smc}. Our gate is an SMC procedure for ``the model matches the plant to tolerance $\varepsilon$'' run without error bounds; we improve on none of that sample-complexity machinery, and \citet{herault2004apmc} is what would size a gate at a target confidence. SMC bounds the probability of a wrong verdict on a \emph{fixed} property, whereas we bound the probability that the sample never exhibits the phenomenon at all --- the verdict on the property actually tested is correct and uninformative --- and where SMC stops at the verdict we push it through a planner, whose loss is adversarially selected rather than proportional to the residual (Proposition~\ref{prop:playcost}).

\paragraph{Automata learning and active queries.} Passive data does not hand you the machine --- minimum-state automaton identification from given data is NP-hard \citep{gold1978complexity} --- whereas L$^*$ learns a regular language in polynomial time from membership queries \emph{plus an equivalence oracle} \citep{angluin1987learning}, which is what supplies a counterexample a random sample may never contain \citep{vaandrager2017model,settles2012active}. Proposition~\ref{prop:ident} is the continuous, hybrid-mode instance of that gap, and Angluin's decomposition reads our gate in one line: membership queries drawn at random, no equivalence oracle, which is why more refinement iterations cannot complete the loop. We inherit no query-complexity result --- their target is finite and exactly identifiable under exact queries, ours a program over a continuous space with tolerance $\varepsilon$ and a stochastic query distribution --- which is why we prove a probability law instead.

\paragraph{Falsification and counterexample-guided refinement.} Falsification searches for inputs driving a hybrid or cyber-physical system to violate a specification \citep{annpureddy2011staliro,corso2021survey,yamagata2021falsification}; CEGAR refines an abstraction at the spurious counterexample \citep{clarke2000cegar,clarke2003hybridcegar,alur2006predicate}; counterexample-guided data augmentation feeds falsified inputs back as training data \citep{dreossi2018cegda,dreossi2019verifai}. The message is uniform, and we do not improve on it: directed search finds the rare violating input random sampling misses. Our gate is the passive dual --- uniform random testing whose blind spot is precisely the mode a falsifier would hunt for, with $(1-r)^N$ quantifying the gap --- and the contrast with \citet{dreossi2018cegda} is our thesis in one sentence: there the counterexample that fixes the model comes from a falsifier, here from the gate's own random draw, and on the miss event there is none. CEGAR is also the template for our distrust-region mitigation, differing in the bookkeeping: it refines an abstraction over-approximate by construction, so refinement preserves the guarantee, whereas the fence refines a \emph{trust set} around an artifact with no such property --- hence a measured collapse of the exploitation and, at the farthest two-dimensional knob, $7$ of $20$ episodes pinned at blind-level return (Section~\ref{sec:patch2d-mitigation}).

\paragraph{Property-based testing with adaptive generators.} The gate is random testing of a program against an oracle and $(1-r)^N$ is the standard rare-input coverage gap \citep{claessen2000quickcheck,rubinstein2017montecarlo}, but uniform generation is not where that field stopped: generators can be steered by search toward inputs maximizing a utility \citep{loscher2017targeted,loscher2018automating}, derived from the property so constrained inputs arise by construction \citep{lampropoulos2017luck}, or guided by coverage feedback \citep{padhye2019zest}. Ours sits at the uniform, non-adaptive end of that spectrum, and each of the three has an instantiation here that we do not implement: a generator targeting distance travelled toward a mode boundary, one derived from the contract so mode-entering transitions occur by construction, and coverage-guided rollout mutation whose signal is which branches of the synthesized program executed --- available because the model is code. What we add is not a generator but that the planner is the adversary which finds the untested branch at deployment.

\paragraph{Experiment design and active system identification.} What is recoverable is a property of the input signal and not only of the estimator: optimal input design \citep{mehra1974optimal}, identification-for-control and the revival of experiment design \citep{gevers2005identification}, identifiability under a given excitation \citep{ljung1999sysid}, and the learning-theoretic result that actively chosen inputs achieve sample complexities passive excitation cannot \citep{wagenmaker2020active}. This literature implies our prescription and we re-derive none of it: our gate is a passive, non-adaptive experiment, and Proposition~\ref{prop:ident} is its identifiability statement specialized to a hybrid mode and a code hypothesis class. There the design reduces \emph{parameter} variance in a fixed parametric family and the failure is graded; ours is discrete --- a rule is present or absent in the program --- and its consequence is read through a planner rather than an estimation-error norm. The active boundary probing we list as follow-up work (Section~\ref{sec:limitations}) is an experiment-design problem and we name it as one.

\paragraph{Identification of hybrid and piecewise-affine systems.} Recovering a piecewise-affine or hybrid system is a joint classification-and-regression problem --- the partition of the state-input space as well as the affine law in each region --- and it is classical: clustering \citep{ferraritrecate2003clustering}, mixed-integer programming \citep{roll2004identification}, bounded-error set membership \citep{bemporad2005boundederror}, algebraic geometry \citep{vidal2003algebraic}, the tutorial and mixed logical dynamical formulations \citep{paoletti2007hybrid,bemporad1999control}, and trace-mining of hybrid automata \citep{medhat2015mining,soto2021synthesis}. We contribute no identification algorithm; our question is what a \emph{sampling verifier certifies} when the mode is missed, and what a planner then does --- the case these methods assume away, since none forms a region containing no samples. One is already active: \citet{soto2019membership} synthesizes linear hybrid automata from membership queries, the closest existing form of the fix we prescribe. Proposition~\ref{prop:lipschitz} and Corollary~\ref{cor:locbudget} are also a statement about this literature's hypothesis classes --- an exactly localized disagreement is unavailable to a bounded-Lipschitz class at any volume --- which is why we take code rather than piecewise-affine with a fixed mode count.

\paragraph{Contact-rich and discontinuous dynamics.} Contact is the physical archetype of our instruments and this literature's finding runs in the same direction: smooth end-to-end learning is the wrong parameterization and a complementarity-structured implicit loss does far better \citep{pfrommer2021contactnets}, stiff contact makes the loss landscape pathological for deep learners as a fundamental matter \citep{parmar2021stiff}, and even standard analytic planar contact models are limited on real data \citep{fazeli2020limitations,fazeli2017contact}. \citet{parmar2021stiff} is the empirical shadow of Proposition~\ref{prop:lipschitz} and \citet{pfrommer2021contactnets} takes structurally our remedy --- put the discontinuity in the hypothesis class --- neither of which we improve upon. The difference is what fails: there contact is present in the data and the failures are accuracy failures; here the mode is absent and the artifact is \emph{certified}. Our repair-from-data result is the code analogue --- given contact transitions in the sample the synthesizer recovered the exact switching rule on the one-dimensional clamps, whereas the two smooth fits we ran spread the error instead (Table~\ref{tab:smooth}) --- and, like this literature, we stop short of contact-rich manipulation.

\paragraph{Hypothesis classes that can represent modes.} Classes that \emph{can} express mode structure are mature: mixtures of experts with a learned gate \citep{jacobs1991moe}, recurrent switching linear dynamical systems whose discrete state depends on the continuous one \citep{linderman2017rslds}, neural ODEs with learned event functions \citep{chen2021eventfn}, and neural hybrid automata that infer the number of modes and their transitions \citep{poli2021nha}. This family scopes our representational claim rather than being scoped by it: Proposition~\ref{prop:lipschitz} quantifies over \emph{Lipschitz} pairs, and a hard gate or a learned event function reintroduces exactly the unbounded local structure it requires, so it is not an argument that neural models cannot represent hybrid modes --- our MLP is a probe at $h = 8$ scale, not a baseline (Section~\ref{sec:limitations}), and Section~\ref{sec:smooth} measures a favourable learned event-function baseline from this family directly. What survives is class-relative on the representation side and learner-independent on the identifiability side: no architecture here can infer a mode from a sample that never enters it (Proposition~\ref{prop:ident}), and float-level exactness --- zero off-mode error at $\varepsilon = 10^{-9}$ (Table~\ref{tab:smooth}) --- is a property of any class containing the mode: the measured event-function baseline attains it on the 1D clamp exactly as code does, while the smooth fits cannot (Section~\ref{sec:smooth}).

\paragraph{Conformal and PAC guarantees for learned dynamics.} Conformal prediction gives any predictor distribution-free finite-sample coverage under exchangeability \citep{vovk2022alrw} and has been carried to dynamics and control: prediction regions over trajectories, planned against \citep{lindemann2023conformal}; learned monitors that reject unreliable violation predictions \citep{bortolussi2019npm}; PAC-Bayes bounds for policies across environments \citep{majumdar2021pacbayes}. We derive no better certificate, and the comparison sharpens our result rather than softening it: their coverage holds with respect to the \emph{calibration} distribution, which here is the gate distribution, and the planner is not exchangeable with it --- the region our certificate covers carries $1.9\%$ of the exploited planner's queries, and a region containing every step-$t$ level set carries $7.8\%$ (Section~\ref{sec:limitations}). Calibrating conformally on gate rollouts would attach to the same wrong artifact a marginal guarantee that is honest over the calibration distribution and silent about the planner's queries; a planner could treat the conformal sets as uncertainty and react to their width, but sets calibrated where the mode never fires carry no widening at the mode, so the failure is not an artifact of choosing a metric certificate. Conversely ours is a sup-norm statement with an explicit $\varepsilon + 2L\rho$ constant (Propositions~\ref{prop:coverage} and~\ref{prop:partition}), which conformal prediction does not give, and it needs a finite Lipschitz constant, which conformal prediction does not --- where ours is weakest, the mode's local constant being unbounded.

\paragraph{Runtime assurance, safety filters, and robust MPC.} Every guarantee in this family is anchored to something known a priori: a verified conservative baseline plus a switching rule \citep{seto1998simplex,sha2001simplicity}, a shield synthesized from a safety automaton \citep{alshiekh2018shielding}, recursive feasibility across iterations of a learned controller \citep{rosolia2018lmpc,hewing2020learningmpc}, a minimally invasive projection keeping a known backup viable \citep{wabersich2021safetyfilter}, or a bounded model-error description \citep{rawlings2017mpc}. Our failure is invisible to every anchor \emph{derived from the learned model or its training distribution}: the error is unbounded but confined to a region the gate accepted as error-free, so a robustness margin tuned to the off-mode residual does not see it and a safety filter built on the same accepted model inherits the blind spot. An anchor specified independently of the model --- a verified conservative baseline, a shield synthesized from a safety automaton --- does not inherit it; it simply pays its conservatism whether or not a mode was omitted. The distrust-region fence has the shape of a safety filter with the anchor moved --- assembled from deployment-time refutations rather than an a-priori safe set, correcting a certified-but-wrong model rather than an unsafe policy --- which is why we report it as a mitigation with measured behaviour, its two-dimensional failure included (Section~\ref{sec:patch2d-mitigation}), and not as a certificate.

\paragraph{Decision-aware model learning and objective mismatch.} Prediction loss is the wrong objective for a model that will be planned with, and this literature has formalized the right one: losses weighting error by its effect on the Bellman update \citep{farahmand2017vaml,farahmand2018itervaml}, the value equivalence principle --- models are equivalent if they induce the same Bellman updates over a set of functions and policies \citep{grimm2020vep} --- an existence proof at scale for training purely for planning utility \citep{schrittwieser2020muzero}, and the finding that prediction accuracy and control performance diverge while planners exploit model error \citep{lambert2020objective,janner2019mbpo,hafner2020dreamer}. Our headline is a value-equivalence statement: the gate tests \emph{prediction} equivalence on the gate distribution, and we exhibit an artifact prediction-equivalent to within $10^{-9}$ everywhere the gate looked that is value-\emph{anti}-equivalent, capturing essentially none of the attainable return at the knobs swept. Their constructive programme is to \emph{train} for value equivalence, which needs a value function or a policy at training time; here the model is written before the planner runs, so our prescription is a change of \emph{test distribution}, not of loss --- and their results concern approximation quality within a parametric class, whereas our failure is an all-or-nothing missing rule that no reweighting of a loss recovers from a sample that never touched it. Our contribution to that conversation is the \emph{verified-but-wrong localized} regime, which a bounded-Lipschitz model can represent only at the volume price of Corollary~\ref{cor:locbudget} and which neither learner we fit represents at all (Table~\ref{tab:smooth}), plus a closed-form acceptance-failure law confirmed at gate scale.

\paragraph{Code world models.} Code World Models \citep{lehrach2025cwm} and code-as-policy \citep{liang2023codeaspolicies,gao2023pal} supply the paradigm, and the companion paper \citep{aguilar2026verified} is the discrete study this paper extends: we transfer its provable core, show that its empirical residual (translation, not inference) does not transfer on the one-dimensional clamps, and add the localization budget separating code from bounded-Lipschitz hypotheses.

\section{Limitations}
\label{sec:limitations}

\paragraph{Three instruments; repair is geometry-scoped, mitigation fails outright on a 2D boundary at distance.} Cart-with-wall, pendulum-with-stop, and PatchField2D span 1D and 4D state and one and two modes. The \emph{mechanism} survived every extension --- it is measure-theoretic --- and so did the gate's strict sample consistency (the all-or-nothing 2D gate accepted no partially-right artifact). The residue is sharper and different. \textbf{Repair-from-data collapses on 2D region boundaries, curved or flat}: GPT-5.x recovered the rule in 105 of 111 revealed 1D clamps (four more passed the mode probe while carrying an invented second stop) but recovered the 2D region rule in 0/156 mode-containing attempts over 20 distinct gate samples pooled over the disc, the square ablation, and the guided/$3\times$-budget treatment (Section~\ref{sec:patch2d-synthesis}). The prompting/budget and curvature outs are now \emph{measured negatives}; what could restore repair --- two-sided (inside) evidence, active boundary probing, or richer evidence summaries --- is the subject of ongoing follow-up work. The \textbf{2D mitigation} likewise generalizes but only partially, and its failure mode at the farthest knob is not a longer transient but lock-in: $7$ of $20$ episodes end pinned at blind-level return, because the fence's tie-break is an unsigned distance and a curved boundary lets the planner be pinned against it (Section~\ref{sec:patch2d-mitigation}). A signed outward normal is the obvious fix and is untested. Contact-rich manipulation, moving boundaries, and $3{+}$ modes remain future work.

\paragraph{Two planner families, one fixed configuration each.} Random-shooting MPC sits at the high-query-reach/high-play-cost end and CEM at the low-query-reach/near-zero-play-cost end (Section~\ref{sec:cem}); only the latter is \emph{implied} by Proposition~\ref{prop:playcost}, since the bound forces low play cost from low reach but does not force the converse. This is stronger than a one-family result but not planner-universal: CEM was measured at one prototype-fixed setting, with no hyperparameter sweep, and its pendulum truth returns expose local optima. Its non-exploitation is reach-limited, not knowledgeable or safe --- the same search that misses phantom reward can miss real distant reward. Gradient-based shooting and tree search \citep{coulom2006mcts,kocsis2006bandit} remain untested. Distrust-region replanning (Section~\ref{sec:mitigation}) is a mitigation layered on MPC, not a third family; its hard-boundary guarantee and its behavior with other planners remain untested.

\paragraph{Synthesis cells are modest, and the two sizes share their samples.} The gate sample is drawn from the seed index alone, so mini and large are synthesized from identical samples at every instrument and knob (Section~\ref{sec:synthesis}): every pooled ``both sizes'' count is $n$ distinct gate samples with two synthesis draws each, and the distinct-sample $n$ is half the pooled one --- 10 mode-absent samples on the cart headline cell, 31 knob-resolved mode-present samples across the two pendulum knobs for the shared-block pools (34 distinct pendulum blocks once the \texttt{-{}-seed-offset} arm is included), 38 mode-containing on the PatchField2D disc cells, 20 on each ablation campaign. Per-size Wilson bounds are quoted for exactly this reason wherever only shared blocks exist. On the two \emph{headline} cells the constraint was removed instead: the large arm was re-run on a disjoint seed block (\texttt{-{}-seed-offset 20}), so there the counts are over independent blocks and the bounds are legitimate at that unit (exact lower bounds $0.832$ cart, $0.782$ pendulum), and even the $(1-r)^N$ gate-miss check becomes poolable ($20/40$ independent cart samples lacked the mode against a predicted $0.63$). With that accounting: 20 seeds/cell on the headline $x_\mathrm{wall} = 8$ cart cell for both GPT-5.x sizes, plus a 3-seed Qwen cross-family spot-check; the pendulum arm adds two knobs (headline and caught) at the same 20 seeds/cell, both sizes, plus its own 3-seed Qwen spot-check. The wall/mode-absent conditional is 20/20 across cart sizes and three runs and 18/18 across pendulum sizes: per size that is 10/10 (Wilson lower bound 0.72) and 9/9 (0.70), the bounds we rely on, against pooled 0.84 and 0.824 whose trials share samples. The GPT-5.x repair rate is 20/20 on the cart headline cell and 62/62 pooled on the pendulum (31 distinct shared-block samples) by the probe criterion, of which 58 are exact on a dense grid and four carry an invented second stop. The cross-family arms are small-$n$ (3 seeds plus one control per instrument per family) and cover two alternate families (Qwen, and Claude agent-relayed), not a sweep of models --- and their coverage is uneven across instruments: both families were run on the two 1D instruments, and on PatchField2D both families carry both arms --- Claude's agent-relayed, and Qwen's as a backend-matched campaign (both arms on one pinned vLLM configuration, provenance versioned), with the incomplete cells refused $3/3$ (Section~\ref{sup:ablations}); they show repair is model-dependent \emph{in mechanism} --- Qwen 0/2 mode-present (superstitious patches), Claude repairing most but via a symmetry prior that oscillated on two seeds and had one phantom, unfalsifiable mode accepted. The PatchField2D synthesis arm adds two bi-knob cells (20 seeds each, both GPT-5.x sizes) plus two ablation campaigns (region-guidance $3\times$ budget; axis-aligned square with flat edges --- 20 seeds each, both sizes); its finding is a \emph{negative} one --- 0/76 repair of the circular mode, 0/66 partial repair, 0/156 pooled, plus 0/3 in a Claude cross-family arm on the same cell whose failure classes match (Section~\ref{sec:patch2d-synthesis}) --- so repair-from-data is model-dependent and evidence-dependent; Section~\ref{sec:arity} isolates which, and Proposition~\ref{prop:entryclass} says when the target is not identifiable at all.

\paragraph{The mode-blindness probe covers the true mode's region and no other.} By construction \texttt{mode\_blindness} (\texttt{src/cwm/continuous/contract.py}) fires probes where the truth's mode is active, so it establishes whether the \emph{revealed} mode is encoded but is blind to a mode the model \emph{invents} elsewhere: Claude's phantom pendulum stop (Section~\ref{sec:synthesis}) scored blindness 0.0 while carrying an extra, non-existent stop at $\theta = -1.4$. Detecting invented modes needs code inspection, or probes seeded outside the sampled region --- the classification alone does not suffice, and our phantom-mode finding rests on reading the accepted code, as the companion paper's artifact-level analysis does.

\paragraph{The MLP is a probe, not a baseline; the mode-capable baseline is favourable by design.} The MLP substantiates the representational point at $h{=}8$/pure-Python scale; a tuned modern dynamics model would have a lower floor but the same structural inability to be bit-exact off a mode at $\varepsilon = 10^{-9}$ (an argument, not yet a measurement, at scale). The learned event-function baseline of Section~\ref{sec:smooth} removes that objection where its class contains the mode, but it is deliberately favourable --- the off-mode plant is pinned and known, only the event function is learned --- so it bounds what representation explains, not what an end-to-end hybrid learner (mixture-of-experts, learned event functions at scale) would achieve; those remain untested.

\paragraph{Proposition~\ref{prop:lipschitz} bounds geometry, not probability.} The ball is metric; converting to gate-visitation probability needs the gate measure on that ball, which is instrument-specific. The empirical complement (Table~\ref{tab:smooth}'s twelve-orders tilt) carries the quantitative weight here. The budget itself is unconditional at boundary points (Corollaries~\ref{cor:locbudget}--\ref{cor:kappabudget}), but its exemption clause is scoped: ``no finite local constant'' is proved for the discontinuous reset modes studied --- the velocity-zeroing wall and stop and the freezing patch --- while a hybrid boundary that is continuous (a kink, a saturation) remains Lipschitz and inside the budget, so no claim is made that every hybrid boundary escapes it.

\paragraph{The coverage certificate reaches only the smooth case.} What transfers from the companion paper's enumeration is a covering-number statement for Lipschitz pairs, and on the deployed cart gate it is informative: no pair with $L \leq 5.77$ can carry the wall's error of $4.2$ past the gate on the certified region. The residue is that this is exactly the case the paper is \emph{not} about --- a discontinuous reset mode has no finite local Lipschitz constant, so no $L$ makes the certificate apply to it --- and that the certified region carries $1.9\%$ of the exploited planner's queries. Section~\ref{sup:certscope} states the scope in full, including what closing the within-rollout dependence and the step-$t$ density did and did not buy.

\paragraph{The instruments are designed and deterministic; bounded observation noise is measured, process noise and external benchmarks are not.} All three instruments were built around the law, so prevalence on externally specified dynamics is not established here. One robustness axis is now measured rather than assumed: under known coordinatewise $\mathrm{Uniform}[-\eta,\eta]$ observation noise, a support-compatibility gate preserves the acceptance law on a fixed $200$-block CartWall@4 panel through the pre-specified primary level $\eta = 0.1$ (blind-pass increase exactly $0$) and first crosses the frozen $+0.05$ masking boundary at $\eta = 1$ (blind pass $32/200$ blocks, exact CP95 $[0.1121, 0.2183]$; Section~\ref{sup:noisygate}, with the analysis pre-specified and hash-pinned). Process noise, an unknown noise law, planning under noise, and an externally authored contact or saturation benchmark with its own critical event remain open; the last is the decisive follow-up, and the analysis for it should be frozen before any outcome is inspected, as the noise analysis was.

\paragraph{play\_cost $> 1$ is a normalization artifact} (blind $<$ random), reported unclamped and explained; the headline claims never depend on the excess over 1.

\section{Conclusion}
\label{sec:conclusion}

The companion paper ended by diagnosing the gate failure as a reach-distribution shift and prescribing verification on the distribution the planner visits. This paper shows the diagnosis is not about discreteness. The gate-miss law, the identifiability argument, and the play-cost bound are measure-theoretic and survive the move to continuous state spaces intact; what the move changes is \emph{where the localized failure can live} (discontinuous reset boundaries --- an omitted mode is a rare rule) and \emph{who can express it} (programs place it exactly, while any class with a finite Lipschitz bound pays the volume price of Corollary~\ref{cor:locbudget}, now priced at boundary witnesses too via the interior-volume constant of Corollary~\ref{cor:kappabudget} --- Proposition~\ref{prop:lipschitz} and Table~\ref{tab:smooth}; smooth compactly supported errors and continuous hybrid boundaries are not excluded, and a measured learned event function attains code's float-exactness where its class contains the mode, Section~\ref{sec:smooth}). On two minimal hybrid instruments the entire discrete phenomenology reproduces --- threshold law, knob-invariant exploitation, the reach mechanism --- with the gate's acceptance rate closely tracking $(1-r)^N$ at gate scale (within sampling noise; Section~\ref{sec:axes}).

The synthesis experiment then delivers this paper's own finding: on the 1D instruments the continuous regime is the one where the danger law is \emph{dominant} --- near-exhaustive for the shipment of mode-blind artifacts. Given the mode in its sample, the LLM repairs it exactly in $105$ of $111$ draws; every attempt is exact on $50$ of $56$ instrument--stream blocks (exact 95\% interval $[0.781,0.960]$), and the treatment-resolved census is $64/70$ knob-level block cells --- including from the synthesis examples alone. Of the six others, the gate caught two superstitious patches, and four artifacts repaired the true stop while adding an \emph{invented} one their samples could not refute --- a class their own acceptance samples could not catch --- the independent gate caught one of the four, by the luck of its draw --- and the behavioural grid convicts in full; given the mode absent, every learner tested (LLM, exact linear regression, a mode-capable event-function baseline) is accepted blind and the planner is exploited at a regret of essentially the whole attainable return --- below the random baseline's mean where measured, though on the 1D instruments that comparison rides the baseline's heavy tail (Section~\ref{sec:mechanism}). That repair, though, is geometry-dependent: on a third, 4D bi-modal instrument with 2D modes it inverts --- neither GPT-5.x size recovers the region rule from data (0/156, including an axis-aligned square with flat edges and a guided $3\times$-budget treatment), two further model-family spot-checks at tiny $n$ fail in the same classes, and eight interventions plus two positive controls place the obstruction in the induction of a \emph{located} rule --- not in curvature, the tested prompting and budget, variable ambiguity, the interior's censoring, the evidence's coverage (raised to where a three-line fit is exact on every sample), or an inability to fit constants, each shown insufficient by measurement, with the arity intervention recorded as unanswerable on this instrument rather than excluded (Section~\ref{sec:arity}), and the exhaustiveness claim is scoped back to the 1D clamps, though the gate remains sample-consistent (Section~\ref{sec:patch2d-synthesis}). The provable core is the gate-miss and identifiability propositions (Section~\ref{sec:theory}): the mode-absent event is a hole no learner can close \emph{from the sample} --- a prior or the specification could still supply the mode --- and its probability is in closed form. Sample-covered exactness is the acceptance criterion itself; that accepted artifacts were also right off-sample --- with the known phantom-mode exceptions --- four pendulum artifacts of GPT-5.x and the Claude artifact, each wrong only where its own sample is silent --- is an empirical regularity of the models tested (GPT-5.x repairs; Qwen stalls and is refused; Claude repairs most through a symmetry prior), not a theorem. For practitioners the message is one clause sharper than the discrete one: on the 1D clamps measured here, \emph{sample coverage of the mode boundaries is the whole game} --- on the 2D regions coverage was raised and did not suffice (Section~\ref{sec:arity}), so the residual there is the located-rule induction gap --- the synthesis stack will translate what it is told and repair what it is shown, and no stage of it can \emph{learn from the sample} what the sample never touched (a prior or the specification still can, and Claude's did).

\appendix

\part*{Supplementary Material}
\addcontentsline{toc}{part}{Supplementary Material}

\noindent The material below supports the main article and is not needed to read it. It
contains the results the main text states in one sentence and cites: the $\varepsilon$-axis
propositions, the derived play-cost normalizers, the coverage certificates and their
statistical accounting, the detectability rate, the second planner family, the planner-side
mitigation, the complete LLM protocol, the reproducibility manifest, and the
pre-specification ledger.

\section{The reveal-rarity's \texorpdfstring{$\varepsilon$}{eps}-invariance, and the rate that replaces its threshold}
\label{sup:epsflat}

\begin{proposition}[$\varepsilon$-invariance of a hard mode's reveal-rarity]
\label{prop:epsinv}
Fix a hard mode, a gate policy, and a model that agrees with the truth \emph{exactly} off the mode --- which the mode-blind model does by construction, the two sharing an integrator. For a rollout $\omega$ put
\[
  D(\omega) \;=\; \max\{\,\|f(s,a) - \hat f(s,a)\|_\infty \;:\; (s,a) \text{ a mode contact of } \omega\,\},
\]
with $D(\omega) = 0$ if $\omega$ never contacts the mode. Then the reveal-rarity at tolerance $\varepsilon$ --- the probability that a gate rollout contains a transition on which truth and model differ by more than $\varepsilon$ --- is exactly $P_\rho(D > \varepsilon)$, non-increasing in $\varepsilon$. Consequently, with $\varepsilon^\ast = \operatorname*{ess\,inf}\{D : D > 0\}$,
\[
  \text{reveal-rarity}(\varepsilon) \;=\; \underbrace{P_\rho(D > 0)}_{\text{mode-firing rarity } r} \quad\text{for every } \varepsilon < \varepsilon^\ast,
\]
and $\mathrm{pass@}N = (1-r)^N$ is \emph{exactly} $\varepsilon$-invariant on that range.
\end{proposition}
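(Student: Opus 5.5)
The plan is to rewrite the reveal event as a level set of the single random variable $D$, and then read off the three claims from elementary properties of distribution functions. Everything rests on the exact off-mode agreement of $\hat f$ and $f$. For any rollout $\omega$, a transition $(s,a)$ of $\omega$ that is not a mode contact has $\|f(s,a)-\hat f(s,a)\|_\infty = 0 \le \varepsilon$. Such a transition can never be a revealing one for any $\varepsilon \ge 0$. So $\omega$ reveals at tolerance $\varepsilon$ if and only if some mode contact of $\omega$ has disagreement exceeding $\varepsilon$.

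A rollout has finitely many transitions, since the horizon $T$ is fixed. The disagreements over its contacts therefore form a finite set, and the maximum $D(\omega)$ exceeds $\varepsilon$ exactly when one of its members does. With the convention $D=0$ on contact-free rollouts, this gives the event identity $\{\text{reveal at }\varepsilon\} = \{D > \varepsilon\}$ for every $\varepsilon \ge 0$.

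Measurability of $D$ follows from the standing Borel assumptions of the measure-space setup. $D$ is a finite maximum, over $t \le T$, of $\mathbf 1\{(s_t,a_t)\in M\}\cdot\|f-\hat f\|_\infty(s_t,a_t)$. Each term is a Borel function of the trajectory coordinates. Hence reveal-rarity$(\varepsilon) = P_\rho(D>\varepsilon)$, and monotonicity is immediate because $\{D>\varepsilon\}$ shrinks as $\varepsilon$ grows.

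For the plateau, I first note that $\{D>0\}$ is the mode-firing event, and this needs the hard-mode hypothesis. Every contact must carry a strictly positive disagreement, so that firing the mode is equivalent to $D>0$. This is the step I would make explicit, since it is where ``hard'' is used. For the cart, the disagreement at a contact is the gap between the clamped state $(x_{\mathrm{wall}},0)$ and the unclamped landing $\Phi(s,a)$. This gap is positive except on the null boundary case $x'=x_{\mathrm{wall}}$, $v'=0$. I would either fold that case into the definition of a contact or note that it is $P_\rho$-null, so that $P_\rho(D>0)=r$.

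Then, for $0\le\varepsilon<\varepsilon^\ast$, I use the definition of the essential infimum: $P_\rho(0<D\le\varepsilon)=0$. Therefore
\[
P_\rho(D>\varepsilon) = P_\rho(D>0) - P_\rho(0<D\le\varepsilon) = r.
\]
If $\varepsilon^\ast=0$, the range is empty or trivial, and the statement holds vacuously beyond $\varepsilon=0$.

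The final clause follows by applying Proposition~\ref{prop:gatemiss} to the event $R_\varepsilon=\{D>\varepsilon\}$. The gate accepts exactly when none of its $N$ i.i.d.\ rollouts reveals, so $\mathrm{pass@}N(\varepsilon)=(1-P_\rho(R_\varepsilon))^N=(1-r)^N$, independent of $\varepsilon$ on $[0,\varepsilon^\ast)$.

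The main obstacle is bookkeeping rather than analysis. It consists of making the identification ``contact $\Leftrightarrow$ positive disagreement'' precise, including the measure-zero tangential contacts, and of stating the essential-infimum step so that it uses only the conditional law of $D$ on $\{D>0\}$.
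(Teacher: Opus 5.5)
Your proof is correct and follows the paper's own argument: exact off-mode agreement turns the reveal event into $\{D>\varepsilon\}$, the plateau below $\varepsilon^\ast$ is the definition of the essential infimum, and the $\mathrm{pass@}N$ clause is Proposition~\ref{prop:gatemiss} applied to that event. Your extra bookkeeping is a tightening, not a different route. It covers the measurability of $D$, and it makes explicit that $P_\rho(D>0)$ equals the mode-firing rarity only because contacts carry positive disagreement almost surely, which the paper builds silently into the label ``hard mode''.
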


\begin{proof}
Off the mode the two models agree exactly, so any transition with $\|f - \hat f\|_\infty > \varepsilon$ is a mode contact; a rollout therefore contains one iff its largest contact disagreement exceeds $\varepsilon$, i.e.\ iff $D(\omega) > \varepsilon$. The display is then the definition of the essential infimum, and $\mathrm{pass@}N$ follows from Proposition~\ref{prop:gatemiss}.
\end{proof}

Two readings of $\varepsilon^\ast$ must be kept apart. Against the \emph{empirical} measure of one gate sample, $\varepsilon^\ast$ is the smallest positive $D$ observed and the identity above is exactly what Table~\ref{tab:epsstar} reports --- in-sample, and no more. Against the \emph{population} measure, $\varepsilon^\ast = 0$ under Proposition~\ref{prop:epsrate}'s approachability hypothesis --- which on these instruments is an assumption the data are consistent with, not a theorem about them --- so the display is empty there and the population content of the flatness is not a threshold at all but the quadratic rate proved next. Writing one symbol $r$ for the two rarities is therefore licensed exactly in-sample, and in population to within $C\varepsilon^2$.

The threshold does not converge to a positive constant: the running minimum falls monotonically and does not settle --- on \texttt{wall@4}, $0.420$ at $25$ firing rollouts, $0.123$ at $200$, $0.041$ at $3200$. The population threshold is zero, and what stands in its place is a rate, which holds at every $\varepsilon$ rather than below an unstable cutoff.

\begin{proposition}[flatness at a rate for clamped semi-implicit plants; no population threshold]
\label{prop:epsrate}
Consider any plant of the semi-implicit form
\[
  \omega_{t+1} = \omega_t + \bigl(\mathrm{gain}\cdot a_t + F(\theta_t, \omega_t)\bigr)dt,
  \qquad \theta_{t+1} = \theta_t + \omega_{t+1}\,dt,
\]
with $F \in C^1$ \emph{arbitrary}, run over a horizon $T$ under the gate policy ($a_t$ i.i.d.\ $U(-a_{\max}, a_{\max})$, drawn independently of the state), and carrying a hard mode that clamps the integrated coordinate: if $\theta_{t+1} \geq \theta_{\mathrm{stop}}$ the next state is $(\theta_{\mathrm{stop}}, 0)$. Let the model be the same plant with the clamp removed, let $D(\omega)$ be a rollout's largest sup-norm disagreement over its mode contacts, and let $r_{\mathrm{fire}} = P(D > 0)$ be the mode-firing rarity. Then for every $\varepsilon > 0$,
\[
  0 \;\leq\; r_{\mathrm{fire}} - \mathrm{reveal\text{-}rarity}(\varepsilon) \;=\; P(0 < D \leq \varepsilon) \;\leq\; C\,\varepsilon^2,
  \qquad C = \frac{T\,M}{2\,\mathrm{gain}\,a_{\max}},
\]
where $M = \max\bigl(\|p_{\theta_0}\|_\infty,\ (2\,\mathrm{gain}\,dt^2 a_{\max})^{-1}\bigr)$ bounds the density of the clamped coordinate's absolutely continuous part below the stop at every step (the law also carries an atom \emph{at} the stop, which the proof handles separately). If moreover ($T \geq 3$) the mode is \emph{slowly approachable at the horizon's end} --- there is a boundary state $(\theta_{\mathrm{stop}}, \omega^\star)$ with two-sided margin $|\omega^\star + F(\theta_{\mathrm{stop}},\omega^\star)\,dt| < \mathrm{gain}\,dt\,a_{\max}$ such that for every $\delta > 0$ there are an event $H_\delta$ of positive probability on the history through step $T-3$ on which \emph{no clamp has yet fired}, and a nonempty open action rectangle $U_\delta \subseteq (-a_{\max}, a_{\max})^2$, with steps $T-2$ and $T-1$ also clamp-free for histories in $H_\delta$ and $(a_{T-3}, a_{T-2}) \in U_\delta$ and $(\theta_{T-1}, \omega_{T-1})$ landing within $\delta$ of $(\theta_{\mathrm{stop}}, \omega^\star)$ --- then $\operatorname*{ess\,inf}\{D : D > 0\} = 0$, so no positive $\varepsilon$ makes the two rarities equal: they agree only in the limit, at the quadratic rate above.
\end{proposition}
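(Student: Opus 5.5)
The plan is to work one contact at a time and reduce everything to the \emph{first} contact of a rollout. Every contact lies on a single rollout whose $D$ is a maximum over its contacts, so $\{0<D\le\varepsilon\}$ forces the first contact to have disagreement at most $\varepsilon$.

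\textbf{Equality and the atom.} The identity $r_{\mathrm{fire}}-\mathrm{reveal\text{-}rarity}(\varepsilon)=P(0<D\le\varepsilon)$ is immediate from Proposition~\ref{prop:epsinv}, since model and truth share the integrator off the clamp. The first contact cannot occur from the atom at $(\theta_{\mathrm{stop}},0)$, because reaching the atom already requires a contact. Hence the atom never enters the first-contact event, and this is how the statement's ``atom handled separately'' will be discharged.

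\textbf{Quadratic bound.} A union bound over the step index $t<T$ of the first contact supplies the factor $T$. At a contact step the truth returns $(\theta_{\mathrm{stop}},0)$ and the model returns $(\theta_{t+1},\omega_{t+1})$, so
\[
D=\max\bigl(\theta_{t+1}-\theta_{\mathrm{stop}},\,|\omega_{t+1}|\bigr).
\]
Because $\theta_t<\theta_{\mathrm{stop}}\le\theta_{t+1}=\theta_t+\omega_{t+1}dt$, the velocity is positive, and $D\le\varepsilon$ forces two small windows.
\begin{itemize}
\item The velocity window is $\omega_{t+1}\in(0,\varepsilon]$. Conditionally on the step-$t$ state, $\omega_{t+1}=\omega_t+(\mathrm{gain}\,a_t+F)dt$ is uniform with density $(2\,\mathrm{gain}\,a_{\max}dt)^{-1}$, because $a_t$ is independent of the state. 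The $C^1$ term $F$ is frozen by this conditioning, which is why $F$ can be arbitrary.
\item The position window is $\theta_t\in[\theta_{\mathrm{stop}}-\omega_{t+1}dt,\ \theta_{\mathrm{stop}})$, an interval of length at most $\varepsilon\,dt$ just below the stop. Its probability is controlled by the density bound $M$ on the absolutely continuous part of $\theta_t$.
\end{itemize}
For the density bound $M$: at $t=0$ it is $\|p_{\theta_0}\|_\infty$. For $t\ge1$, $\theta_t=\theta_{t-1}+\omega_{t-1}dt+(\mathrm{gain}\,a_{t-1}+F)dt^2$ is a convolution with a uniform law of width $2\,\mathrm{gain}\,a_{\max}dt^2$, so its density is at most $(2\,\mathrm{gain}\,dt^2a_{\max})^{-1}$ whatever the previous law was; on the clamp-free branch the atom contributes nothing below the stop.

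Integrating the velocity window against the position law gives an $\varepsilon\cdot\varepsilon$ product with constant $M/(2\,\mathrm{gain}\,a_{\max})$ per step. I would do this integral carefully via the Jacobian of $(\theta_t,a_t)\mapsto(\theta_{t+1},\omega_{t+1})$ rather than by treating the two windows as independent, so that the $dt$ factors cancel as in the stated $C$. This bookkeeping is where I expect the main obstacle: a crude product of the two windows leaves a stray $1/dt$, so I would try the exact change of variables first.

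\textbf{No population threshold.} Fix $\delta>0$ and take a history in $H_\delta$ with $(a_{T-3},a_{T-2})\in U_\delta$. This event has positive probability, is clamp-free, and puts $(\theta_{T-1},\omega_{T-1})$ within $\delta$ of $(\theta_{\mathrm{stop}},\omega^\star)$. The two-sided margin condition means that for $\delta$ small the map $a\mapsto\omega_T$ sweeps an interval containing small positive values. The set of $a_{T-1}$ with $\omega_T\in(0,\eta]$ therefore has positive length for every $\eta$. On that set $\theta_T\ge\theta_{\mathrm{stop}}$, because $\theta_{T-1}$ is $\delta$-close to the stop; if not, I would absorb this by shrinking $\delta$ relative to $\eta dt$. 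The overshoot is then $O(\delta+\eta dt)$, so $0<D\le\varepsilon$ has positive probability for every $\varepsilon$. This gives $\operatorname*{ess\,inf}\{D:D>0\}=0$, and continuity of $F$ supplies the uniformity in $\delta$.
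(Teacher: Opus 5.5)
Your outline has the same architecture as the paper's proof. It uses the identity from Proposition~\ref{prop:epsinv} and a union over steps of a velocity window intersected with a strip of width $\varepsilon\,dt$ below the stop. It gets conditional uniformity of the post-action velocity from the fresh $a_t$, and the bound $M$ from the fresh $a_{t-1}$ entering $\theta_t$ at order $dt^2$. For the essential infimum it uses a clamp-free approach to $(\theta_{\mathrm{stop}},\omega^\star)$. Your first-contact reduction is a good touch: it is what guarantees $\theta_t<\theta_{\mathrm{stop}}$ at the selected contact. The paper's proof states this as if it held at every contact, although re-contacts from the pinned state $(\theta_{\mathrm{stop}},0)$ have $\theta_t=\theta_{\mathrm{stop}}$.

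The gap is the step you flag as the main obstacle. It is misdiagnosed, and the route you choose for it does not work with $M$. The direct product leaves no stray $1/dt$. Using $\omega_{t+1}\le\varepsilon$, enlarge your $\omega_{t+1}$-dependent window to the fixed strip $S_\varepsilon=[\theta_{\mathrm{stop}}-\varepsilon\,dt,\theta_{\mathrm{stop}})$. This strip is $\theta_t$-measurable, so you can condition on the state:
\[
  P\bigl(\{0<\omega_{t+1}\le\varepsilon\}\cap\{\theta_t\in S_\varepsilon\}\bigr)
  =\mathbb{E}\bigl[\mathbf 1\{\theta_t\in S_\varepsilon\}\,P(0<\omega_{t+1}\le\varepsilon\mid\theta_t,\omega_t)\bigr]
  \le\frac{\varepsilon}{2\,\mathrm{gain}\,dt\,a_{\max}}\cdot M\varepsilon\,dt
  =\frac{M\varepsilon^2}{2\,\mathrm{gain}\,a_{\max}}.
\]
No independence of the two windows is used, only that the conditional bound is uniform in the state. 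The $1/dt$ in the velocity density cancels the $dt$ in the strip width, and this is exactly the paper's argument.

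The change of variables $(\theta_t,a_t)\mapsto(\theta_{t+1},\omega_{t+1})$, as you set it up, holds $\omega_t$ fixed. It therefore consumes the density of $\theta_t$ conditional on $\omega_t$, which $M$ does not bound. Given the history through $t-1$, $\omega_t$ and $\theta_t=\theta_{t-1}+\omega_t\,dt$ are driven by the same action $a_{t-1}$, so fixing $\omega_t$ fixes $\theta_t$. If you want the sharper triangular region, integrate out $a_t$ first. The conditional probability is then at most $(\varepsilon-(\theta_{\mathrm{stop}}-\theta_t)/dt)_+/(2\,\mathrm{gain}\,dt\,a_{\max})$, and integrating it against a density at most $M$ gives $C/2$.

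In the second part, ``on that set $\theta_T\ge\theta_{\mathrm{stop}}$'' is false as written, since a contact needs $\omega_T\ge(\theta_{\mathrm{stop}}-\theta_{T-1})/dt>0$. Your fallback is the right repair, and it is what the paper does. By continuity of $F$, pick $\delta_0,\kappa>0$ such that every state within $\delta_0$ of $(\theta_{\mathrm{stop}},\omega^\star)$ keeps $|\omega+F(\theta,\omega)\,dt|<\mathrm{gain}\,dt\,a_{\max}-\kappa$. Take $\delta<\min(\delta_0,\eta\,dt,\kappa\,dt)$, and restrict $a_{T-1}$ to the preimage of $\omega_T\in[(\theta_{\mathrm{stop}}-\theta_{T-1})/dt,\min(\eta,\kappa))$. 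That is an interval of positive length inside the swept range, and $a_{T-1}$ is independent of the history. Because $\theta_{T-1}<\theta_{\mathrm{stop}}$, the overshoot is in fact at most $\omega_T\,dt$, so $\eta=\varepsilon/\max(1,dt)$ gives $0<D\le\varepsilon$.

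Finally, say explicitly that this contact is the rollout's only one. Every earlier transition is clamp-free by hypothesis and no transition follows it. This matters because $D$ is a maximum over contacts and any additional coarse contact would push it above $\varepsilon$; it is the reason the window sits at the horizon's end.
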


\begin{proof}
At a contact the truth clamps the velocity to $0$ while the model returns the pre-clamp post-action velocity $\omega'_{t+1}$, so $D \geq |\omega'_{t+1}|$; and the clamp fires only if $\theta_t + \omega'_{t+1}\,dt \geq \theta_{\mathrm{stop}}$ with $\theta_t < \theta_{\mathrm{stop}}$, so $\omega'_{t+1} \leq \varepsilon$ forces $\theta_t \in [\theta_{\mathrm{stop}} - \varepsilon\,dt,\, \theta_{\mathrm{stop}})$. Hence
\[
  \{0 < D \leq \varepsilon\} \;\subseteq\; \bigcup_{t < T}\ \{0 < \omega'_{t+1} \leq \varepsilon\} \cap \{\theta_t \in [\theta_{\mathrm{stop}} - \varepsilon dt,\, \theta_{\mathrm{stop}})\},
\]
and two facts independent of $F$ bound each term. First, $a_t$ is drawn independently of $(\theta_t, \omega_t)$ and enters $\omega'_{t+1} = \omega_t + F(\theta_t,\omega_t)dt + \mathrm{gain}\,dt\,a_t$ additively, so conditionally on $(\theta_t,\omega_t)$ it is uniform on an interval of length $2\,\mathrm{gain}\,dt\,a_{\max}$ and $P(0 < \omega'_{t+1} \leq \varepsilon \mid \theta_t, \omega_t) \leq \varepsilon/(2\,\mathrm{gain}\,dt\,a_{\max})$. Second, for $t \geq 1$ separate the two branches of step $t$ itself. Write
\[
  \theta^{\mathrm{pre}}_t \;=\; \underbrace{\theta_{t-1} + \omega_{t-1}dt + F(\theta_{t-1},\omega_{t-1})\,dt^2}_{\text{free of } a_{t-1}} \;+\; \mathrm{gain}\,dt^2\,a_{t-1}
\]
for the unclamped landing --- an expression valid whatever clamps fired at \emph{earlier} steps, since those only change $(\theta_{t-1}, \omega_{t-1})$, a history-measurable quantity (a clamp resets it to $(\theta_{\mathrm{stop}}, 0)$, still free of $a_{t-1}$). The realized coordinate is $\theta_t = \theta^{\mathrm{pre}}_t$ on $\{\theta^{\mathrm{pre}}_t < \theta_{\mathrm{stop}}\}$ and $\theta_t = \theta_{\mathrm{stop}}$ otherwise, so the conditional law of $\theta_t$ given the history through $t-1$ is \emph{not} uniform: it is a uniform density below the stop plus an atom at $\theta_{\mathrm{stop}}$ itself. The target strip $[\theta_{\mathrm{stop}} - \varepsilon dt, \theta_{\mathrm{stop}})$ excludes the atom, and on $\{\theta_t < \theta_{\mathrm{stop}}\}$ the two coordinates coincide, so $P(\theta_t \in [\theta_{\mathrm{stop}} - \varepsilon dt, \theta_{\mathrm{stop}})) = P(\theta^{\mathrm{pre}}_t \in [\theta_{\mathrm{stop}} - \varepsilon dt, \theta_{\mathrm{stop}}))$; and $\theta^{\mathrm{pre}}_t$, conditionally on that history, \emph{is} uniform on an interval of width $2\,\mathrm{gain}\,dt^2a_{\max}$ (the fresh $a_{t-1}$ enters affinely with slope $\mathrm{gain}\,dt^2$), with density at most $(2\,\mathrm{gain}\,dt^2a_{\max})^{-1}$; at $t = 0$ the bound is $\|p_{\theta_0}\|_\infty$. Either way it is at most $M$, so $P(\theta_t \in [\theta_{\mathrm{stop}} - \varepsilon dt, \theta_{\mathrm{stop}})) \leq M\varepsilon\,dt$. Multiplying the two and union-bounding over the $T$ steps gives the display. The second fact is where the semi-implicit \emph{ordering} is used: the action reaches the integrated coordinate within the same step, at order $dt^2$.

For the essential infimum, fix $\varepsilon > 0$. By continuity of $F$ there are $\delta_0 > 0$ and $\kappa > 0$ such that every state within $\delta_0$ of $(\theta_{\mathrm{stop}}, \omega^\star)$ has $m := \omega + F(\theta,\omega)\,dt \in (-\mathrm{gain}\,dt\,a_{\max} + \kappa,\ \mathrm{gain}\,dt\,a_{\max} - \kappa)$. Take $\delta < \min(\delta_0,\ \varepsilon\,dt/\max(1,dt),\ \kappa\,dt)$ and the $H_\delta$, $U_\delta$ the hypothesis supplies. On that event no clamp fires at any step before the final transition, so $(\theta_{T-1},\omega_{T-1})$ is the \emph{unclamped} two-step image of $(a_{T-3},a_{T-2})$ --- and only on such an event may the family-wide two-action Jacobian identity (Section~\ref{sup:coverage}), $\det \partial(\theta_{T-1},\omega_{T-1})/\partial(a_{T-3},a_{T-2}) = \mathrm{gain}^2 dt^3 \neq 0$, be invoked at all, since a clamp at step $T-1$ collapses the coordinate to the constant $\theta_{\mathrm{stop}}$ and a clamp at step $T-2$ confines the pair to a one-parameter affine image, a Lebesgue-null set in the plane. The actions being uniform with positive density on $U_\delta$, the pair has a conditional density bounded below on the open set $W := \mathrm{image} \cap B_\delta(\theta_{\mathrm{stop}},\omega^\star)$, nonempty by the hypothesis; every $(\theta,\omega) \in W$ has $\theta \in (\theta_{\mathrm{stop}} - \delta, \theta_{\mathrm{stop}})$ --- inside the strip, since $\delta \leq \varepsilon\,dt$ --- and carries the margin. So the event $\{(\theta_{T-1},\omega_{T-1}) \in W\} \cap \{\text{no clamp before the final transition}\}$ has positive probability. Conditionally on it the final action is fresh, so $\omega'_T = m + \mathrm{gain}\,dt\,a_{T-1}$ is uniform on an interval containing $(0, \kappa)$, and the final-step clamp fires with $\omega'_T \leq \varepsilon/\max(1,dt)$ exactly when $\omega'_T \in [(\theta_{\mathrm{stop}} - \theta_{T-1})/dt,\ \varepsilon/\max(1,dt)]$ --- a subinterval of positive length, its lower end being below $\delta/dt \leq \min(\varepsilon/\max(1,dt),\ \kappa)$. Because no clamp fired earlier, this final-transition contact is the rollout's \emph{only} contact; this is why the window sits at the horizon's end, since $D$ is a maximum over the rollout's contacts and an earlier or later coarse contact would push it past $\varepsilon$. On the constructed event the contact's sup-norm disagreement carries \emph{two} components --- the clamped velocity, $\omega'_T$, and the position excess $\theta_{T-1} + \omega'_T dt - \theta_{\mathrm{stop}} \leq \omega'_T\,dt$ (positive part; $\theta_{T-1} < \theta_{\mathrm{stop}}$) --- so $D = \max(\omega'_T,\ \theta^{\mathrm{pre}}_T - \theta_{\mathrm{stop}}) \leq \omega'_T \max(1, dt) \leq \varepsilon$, and $D \geq \omega'_T > 0$; hence $P(0 < D \leq \varepsilon) > 0$ for every $\varepsilon > 0$.
\end{proof}

Two constraints multiply, which is the whole content: a contact revealed only faintly needs both a nearly-zero post-action velocity \emph{and} a position already inside a strip of width $\varepsilon\,dt$. The constant is the same object the coverage certificate uses: substituting $M$ gives $C = T\,dt\,c$ with $c = (4a_{\max}^2\mathrm{gain}^2dt^3)^{-1}$ the universal step-$t$ density constant of the same family, so one constant serves both the flatness rate and the certificate. On this paper's instruments that is $M \leq 16.67$ and $C = 222$ for the cart \emph{and} the pendulum (identical, since they share $T$, $dt$, $\mathrm{gain}$ and $a_{\max}$; \texttt{scripts/\allowbreak eps\_\allowbreak flatness\_\allowbreak rate.py} derives both from the env objects). The bound is loose by orders of magnitude --- a union bound over eighty steps times two worst-case densities --- so the exponent is the content, and it is measured on both arms (60{,}000 rollouts each): $2.57$ on \texttt{wall@4} and $2.10$ on \texttt{stop@1.0}, both at or above the proved $2$, with the bound holding at every grid point. Because the proposition covers the nonlinear plant too, the pendulum's $2.10$ is a confirmation of a proved rate rather than the only evidence for it; what the near-equality adds is that the two-constraint mechanism is close to tight there, the union bound and the worst-case densities together costing only the gap between $2.10$ and $2$. The approachability hypothesis remains an \emph{assumption} about the plant --- no finite sample witnesses a for-every-$\varepsilon$ statement. What the data supply is consistency: observed single-contact disagreements reach down to $0.0018$ over the $919$ contacts on \texttt{wall@4}, three orders below the $\mathrm{gain}\,dt\,a_{\max} = 0.3$ margin the hypothesis needs, with nothing suggesting a positive floor. PatchField2D is outside the family, its action setting a force \emph{direction} rather than entering additively.

The rate also explains the threshold's instability: the minimum of $n$ draws from a law whose tail vanishes like $\varepsilon^{p}$ falls like $n^{-1/p}$, so $\varepsilon^\ast$ cannot converge to a positive limit. Table~\ref{tab:epsstar} accordingly reports an in-sample identity, and the population statement is Proposition~\ref{prop:epsrate}.

Note what $\varepsilon^\ast$ is: a per-rollout \emph{minimum} of a per-contact \emph{maximum}. A pinned rollout contacts the mode many times, and it only has to contact it coarsely \emph{once} to be revealed, which is why $\varepsilon^\ast$ sits two orders of magnitude above the smallest single-contact disagreement ($0.0018$ over the $919$ contacts on \texttt{wall@4}; \texttt{scripts/eps\_invariance\_threshold.py}).

Two constraints govern how the threshold may be used. First, $\varepsilon^\ast$ is a property of a \emph{sample}, so comparing a threshold computed on one rollout stream against dips measured on another compares two random quantities rather than testing a prediction; on four arms it disagrees on two. Computed on the sweep's own stream the relation is an \emph{identity}: reveal-rarity equals firing-rarity at every grid $\varepsilon$ below $\varepsilon^\ast$, and the first grid point at or above it is where the sweep dips. Table~\ref{tab:epsstar} reports it that way, in-sample, and claims nothing beyond the identity.

Second, a minimum is an unstable statistic --- upward-biased for the population infimum and monotonically decreasing in $n$ --- so no conclusion may rest on the last digit or on a comparison against the grid's edge. Measured across independent streams of the same size, $\varepsilon^\ast$ moves by a factor of $1.8$ on \texttt{wall@4} ($0.114$, $0.146$, $0.201$) and $1.4$ on \texttt{wall@8} ($0.396$, $0.438$, $0.538$, each from only $21$ to $25$ firing rollouts). And at ten times the sample it drops well below all of them: $0.065$ on \texttt{wall@4} and $0.272$ on \texttt{wall@8}. That last figure matters, because $0.272 < 0.3$: the claim that \texttt{wall@8} is flat \emph{throughout} the grid is a statement about the $2000$-rollout sample, not about the instrument, and at $20{,}000$ rollouts the grid's top point would dip. We therefore state flatness as holding on the range each sample resolves, which is what the proposition licenses, and not as a property of the arm.

\begin{table}[ht]
\centering
\small
\begin{tabular}{lrrrl}
\toprule
mode arm & $\varepsilon^\ast$ & firing & first grid $\varepsilon \geq \varepsilon^\ast$ & measured in the same sample \\
\midrule
cart \texttt{wall@8} & 0.3959 & 25 & none in grid & flat through $\varepsilon = 0.3$ \\
cart \texttt{wall@4} & 0.1137 & 277 & 0.3 & $0.1385$ to $\varepsilon = 0.1$, $0.1355$ at 0.3 \\
pendulum \texttt{stop@1.0} & 0.0791 & 282 & 0.1 & $0.1410$ to $3{\times}10^{-2}$, $0.1400$ at 0.1, $0.1240$ at 0.3 \\
pendulum \texttt{stop@1.4} & 0.0805 & 35 & 0.1 & $0.0175$ to $3{\times}10^{-2}$, $0.0170$ at 0.1, $0.0155$ at 0.3 \\
\bottomrule
\end{tabular}
\caption{The $\varepsilon$-invariance threshold, computed on the sweep's own rollout stream (2000 rollouts/arm, gate policy only, \texttt{--seed 10000} to match \texttt{continuous\_eps\_sweep.py}). Within one sample the relation is an identity, not a prediction: reveal-rarity is flat below $\varepsilon^\ast$ and the first grid point at or above it is where the sweep departs. The ``firing'' column is the number of rollouts the minimum is taken over --- $25$ on \texttt{wall@8} --- which is why the threshold itself carries the sample-to-sample spread discussed above and must not be read to the last digit.}
\label{tab:epsstar}
\end{table}

\section{The play-cost normalizers, and why knob-invariance is arithmetic}
\label{sup:normalizers}

\begin{proposition}[the normalizers, derived]
\label{prop:normalizers}
Fix the cart instrument with $x_\mathrm{wall} < x_\mathrm{right}$, and let $x_{\min}(t)$ be the position at step $t$ of the trajectory that starts at $x_0 = -x_0^{\mathrm{range}}$ and takes $a_s = -a_{\max}$ throughout. Then for \emph{every} policy and every admissible $x_0$,
\[
\begin{gathered}
  J \;\leq\; \sum_{t=1}^{T}\left[\frac{a_\mathrm{left}}{1 + e^{(x_{\min}(t) - x_\mathrm{left})/w}} \;+\; \frac{a_\mathrm{right}}{1 + e^{(x_\mathrm{right} - x_\mathrm{wall})/w}}\right]
  \;=:\; \bar J, \\[2pt]
  J \;\geq\; T \min_{x \in [x_{\min}(T),\, x_\mathrm{wall}]} r(x) \;=:\; \underline J,
\end{gathered}
\]
and both sides are explicit numbers.
\end{proposition}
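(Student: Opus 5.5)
The plan is to bound the per-step reward $r(x_t) = a_\mathrm{left}\,\sigma((x_\mathrm{left}-x_t)/w) + a_\mathrm{right}\,\sigma((x_t - x_\mathrm{right})/w)$ pointwise in $t$ and then sum over the $T$ steps. Here $\sigma(u) = 1/(1+e^{-u})$; the left plateau term is decreasing in $x$ and the right term is increasing. Write the two terms as $r_L$ and $r_R$. Both bounds reduce to one reachability fact: every realizable position satisfies $x_{\min}(t) \leq x_t \leq x_\mathrm{wall}$ at every step. With that fact, the upper bound comes from bounding $r_L$ by its value at the smallest reachable position and $r_R$ by its value at the wall, separately. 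This decoupled bound is legitimate because a sum of termwise suprema dominates the supremum of the sum. The lower bound is $r(x_t) \geq \min_{[x_{\min}(T),\,x_\mathrm{wall}]} r$, provided $x_{\min}(t)$ is nonincreasing in $t$, so that every $x_t$ lies in the terminal interval.

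The upper half of the reachability fact is immediate from the wall semantics of Section~\ref{sec:cartwall}. Whenever the integrator would land at or beyond $x_\mathrm{wall}$, the state is reset to $x_\mathrm{wall}$, and $x_0 \leq x_0^{\mathrm{range}} < x_\mathrm{wall}$. So by induction $x_t \leq x_\mathrm{wall}$. Since $r_R$ is increasing and $x_\mathrm{wall} < x_\mathrm{right}$, this gives $r_R(x_t) \leq a_\mathrm{right}/(1+e^{(x_\mathrm{right}-x_\mathrm{wall})/w})$ at every step, for every policy.

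The lower half is a comparison argument for the linear semi-implicit recursion. Off the wall, $(x,v)\mapsto(x',v')$ is affine, with $v' = (1-\mathrm{drag}\,dt)v + \mathrm{gain}\,dt\,a$ and $x' = x + v'dt$. Its coefficients are nonnegative, because $1-\mathrm{drag}\,dt = 0.97 > 0$. The map is therefore monotone in $(x, v, a)$ with respect to the componentwise order. I will show by induction that $(x_t, v_t) \geq (x_{\min}(t), v_{\min}(t))$ componentwise, where the comparison trajectory starts at the smallest admissible $x_0$, starts with $v_0 = 0$, and applies the smallest action at every step. The inductive step is monotonicity of the affine map together with $a_t \geq -a_{\max}$.

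The step I expect to be the main obstacle is the wall reset. A reset sends $v$ to $0$, which can \emph{raise} $v$ (when $v<0$ this is the favourable direction) but also sets $x$ to $x_\mathrm{wall}$. In the comparison I must check that a reset never produces a state below the comparison state. This holds because a reset occurs only when $x' \geq x_\mathrm{wall}$, and then the new state $(x_\mathrm{wall}, 0)$ satisfies $x_\mathrm{wall} \geq x' \ldots$ in the position coordinate. For velocity, a reset fires only with $v' > 0$ from below the wall, and $0 \geq v_{\min}(t)$ because the push-left trajectory has nonpositive velocity throughout (it starts at $0$ and is driven by a negative force). Checking $v_{\min}(t) \leq 0$, and that the comparison trajectory itself never touches the wall, closes the induction.

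Monotonicity of $x_{\min}(t)$ in $t$ follows from $v_{\min}(t) \leq 0$, which gives the lower bound. The last step is to note that $\bar J$ and $\underline J$ are finite sums and minima of explicit closed-form quantities, which can be computed numerically at each knob.
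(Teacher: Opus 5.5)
Your proof is correct. It takes a different route from the paper's at the one delicate step, the lower envelope $x_t \geq x_{\min}(t)$ across wall resets.

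The paper tracks position only. It splits at the \emph{last} clamp time $\tau$ and uses that the all-left trajectory from rest at any $x_0$ is the translate $x_0 - L(t)$ with $L$ increasing. It then chains $x_t \geq x_\mathrm{wall} - L(t-\tau) \geq -x_0^{\mathrm{range}} - L(t-\tau) \geq -x_0^{\mathrm{range}} - L(t)$. It deliberately avoids comparing a clamped trajectory with its own unclamped counterpart, because the clamp lowers both coordinates.

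You instead carry the two-coordinate invariant $(x_t,v_t) \geq (x_{\min}(t), v_{\min}(t))$ through every step, against a fixed comparison trajectory that never clamps, so the paper's objection never arises. Off the wall the affine map is order-preserving: its coefficients $1$, $(1-\mathrm{drag}\,dt)\,dt$, $\mathrm{gain}\,dt^2$, $1-\mathrm{drag}\,dt$ and $\mathrm{gain}\,dt$ are all nonnegative. At a reset, the post-state $(x_\mathrm{wall},0)$ dominates $(x_{\min}(t+1), v_{\min}(t+1))$ because $v_{\min} \leq 0$ and $x_{\min}(t+1) \leq -x_0^{\mathrm{range}} < x_\mathrm{wall}$.

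Both routes consume the same two facts: nonpositive push-left velocity (equivalently, $L$ increasing) and the wall lying above the start. Yours is a single forward induction that also gives a velocity lower bound. It would apply verbatim to any cooperative off-mode map whose reset state dominates the comparison trajectory. The paper's needs only the position coordinate, but leans on the translation structure $x_0 - L(t)$ specific to this plant, whose velocity recursion does not involve $x$.

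One sentence should be corrected. At a reset, the position inequality you need is not $x_\mathrm{wall} \geq x'$. That is backwards: the reset fires exactly when $x' \geq x_\mathrm{wall}$, so it lowers $x$. What you need is $x_\mathrm{wall} \geq x_{\min}(t+1)$, which your closing check that the comparison trajectory stays below the wall already supplies.

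Similarly, a reset never raises $v$, since from below the wall it fires only with $v' > 0$. This is harmless, because you compare the reset velocity $0$ against $v_{\min} \leq 0$, not against $v'$.
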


\begin{proof}
\emph{Step 1: $x_t \leq x_\mathrm{wall}$.} Immediate from the clamp.

\emph{Step 2: $x_t \geq x_{\min}(t)$, clamp or no clamp.} Absent clamping, $x_t$ is affine in $(a_0,\dots,a_{t-1})$ with all coefficients positive --- each action enters through $\mathrm{gain}\,dt^2$ times a positive sum of powers of $1-\mathrm{drag}\,dt$ --- so it is minimised over admissible action sequences and starting points exactly at $a_s \equiv -a_{\max}$ from $x_0 = -x_0^{\mathrm{range}}$, which is $x_{\min}(t)$. The clamp needs a separate argument, because it lowers \emph{both} coordinates (to $x_\mathrm{wall}$ from a larger $x$, and to $0$ from a positive $v$), so a clamped trajectory is not dominated by its unclamped counterpart. Two elementary facts. The velocity recursion $v_{s+1} = (1-\mathrm{drag}\,dt)v_s - \mathrm{gain}\,a_{\max}dt$ under full left thrust does not involve $x$, so the all-left trajectory started at rest from any $x_0$ is $x_0 - L(t)$ for one and the same $L(t) = -dt\sum_{s \leq t} v_s$; in particular $x_{\min}(t) = -x_0^{\mathrm{range}} - L(t)$. And $L$ is increasing, since $v_0 = 0$ and $v_s \leq 0$ implies $v_{s+1} \leq -\mathrm{gain}\,a_{\max}dt < 0$, the factor $1-\mathrm{drag}\,dt$ lying in $(0,1)$. Now let the clamp last fire at time $\tau$, leaving the state $(x_\mathrm{wall}, 0)$; after $\tau$ the dynamics are affine, so the minimisation above applies to the remaining $t-\tau$ steps and
\[
  x_t \;\geq\; x_\mathrm{wall} - L(t-\tau) \;\geq\; -x_0^{\mathrm{range}} - L(t-\tau) \;\geq\; -x_0^{\mathrm{range}} - L(t) \;=\; x_{\min}(t),
\]
the middle step by $x_\mathrm{wall} \geq -x_0^{\mathrm{range}}$ and the last by monotonicity of $L$. If the clamp never fires, the first sentence of this step already gives it.

\emph{Step 3: from the envelope to the bounds.} $r$ is the sum of a term decreasing in $x$ and a term increasing in $x$, so on any interval it is at most the decreasing term at the left endpoint plus the increasing term at the right endpoint. Applying this on $[x_{\min}(t), x_\mathrm{wall}]$, which contains $x_t$ by Steps 1 and 2, and summing over $t$ gives $\bar J$. The lower bound is immediate from $r > 0$ and $x_t \in [x_{\min}(T), x_\mathrm{wall}]$, using that $x_{\min}$ is decreasing.
\end{proof}

so the bound reads $|\mathrm{play\_cost}| \lesssim q_{\mathrm{hit}}(E)$ exactly when $J_{\mathrm{rand}} \approx J_{\min}$ and $J_{\mathrm{truth}} \approx J_{\max}$ --- the random policy near the reward floor, the truth planner near the ceiling. Both instruments are in this regime: on the cart $J_{\mathrm{rand}} = 0.53$ against a pinned $J_{\mathrm{blind}} \approx 0$ and $J_{\mathrm{truth}} = 17.77$; on the pendulum $J_{\mathrm{rand}} = 0.06$ and $J_{\mathrm{truth}} = 20.08$. The blind planner queries the wall region in every episode, so $q_{\mathrm{hit}}(E) \approx 1$ and the normalized bound is saturated --- $\mathrm{play\_cost} \approx 1$, as observed in Sections~\ref{sec:mechanism}--\ref{sec:axes}.

On the cart the bound is nearly but not exactly saturated, and it is worth being precise about the gap rather than calling it attainment. With the derived normalizers of Proposition~\ref{prop:normalizers} --- $J_{\max} \leq 18.0359$ and $J_{\min} \geq 1.33\times10^{-6}$ at $x_\mathrm{wall} = 8$ --- and $q_{\mathrm{hit}} = 1$, the ceiling reads $18.0359/17.238 = \mathbf{1.0463}$ against the play\_cost measured \emph{at that knob}, $1.0299$: the measurement sits at $98.4\%$ of a bound with nothing measured in it. Deriving the normalizers rather than estimating them costs almost nothing --- the measured-supremum version of the same ceiling was $1.0445$, so the proof is $1.0017\times$ weaker --- and it leaves no step in this display where an estimate stands in for a supremum. What is \emph{measured} rather than implied is the informative part --- the exploited planner reaches the realizable floor, $J_{\mathrm{blind}} \approx J_{\min}$ --- and that is what pushes play\_cost to the top of the range; the proposition supplies the ceiling, not the observation.

The bound is not merely valid but essentially attained, because the optimal policy really is ``push left and stay'': at $x_\mathrm{wall} \leq 6$ the push-left trajectory realises $\bar J$ to within $2\times10^{-7}$, so $J_{\max}$ is \emph{known} there rather than bounded, (\texttt{scripts/\allowbreak play\_\allowbreak cost\_\allowbreak proved\_\allowbreak bounds.py}, which also searches bang-bang, constant, single-switch and $4000$ random block policies at each knob as a redundancy check on the derivation). The slack grows only as the wall approaches the right plateau and lets its tail in: $1.0000$ at $x_\mathrm{wall} \leq 6$, $1.0015$ at $8$, $1.0799$ at $10$. The floor $\underline J$ is attained near $x = 2.70$, where the left plateau's tail has died and the right one has not begun; at $x_\mathrm{wall} = 2$ it is $2.87\times10^{-6}$, which is the measured $J_{\min}$ to two digits.

The same script asks how much of the \emph{pinning} is derivable, since $\mathrm{contact} = 1.00$ at every knob is the other measured input. Two of its three ingredients are structural. The wall is reachable in a small fraction of the horizon --- $14$ steps at $x_\mathrm{wall} = 2$ rising to $30$ at $10$, against $T = 80$, and this is the earliest possible contact because $x_t$ is maximised by full right thrust for the same monotonicity reason as above. And the blind model offers a return of $8.31$ for going right against $5.70$ for going left over the planner's $40$-step horizon, a ratio of $1.46$: the asymmetry that makes right the argmax is a property of the reward, not of the sampling. The third ingredient is not derivable: nothing forces the argmax over $200$ random block candidates to execute a \emph{positive first action}, since a trajectory may go left before turning.

That obstruction is real but it is about the wrong quantity. What the \emph{plan} targets is a different question, and it admits a bound (\texttt{scripts/\allowbreak phantom\_\allowbreak targeting\_\allowbreak probability.py}). Fix a state $s$ and let $\bar x = x_\mathrm{right} - 2w$. The same envelope argument as Proposition~\ref{prop:normalizers} --- applied now under the \emph{blind} model, over the planner's horizon --- bounds the imagined return of any candidate that never exceeds $\bar x$ by an explicit $U(s)$, while the full-right-thrust candidate achieves an explicit $L(s)$. Wherever $L(s) > U(s)$, any candidate scoring at least $L(s)$ must exceed $\bar x$, so if the sampled set holds one then \emph{the argmax's trajectory reaches the phantom plateau}: the planner is provably planning against reward the truth makes unreachable, which is the exploitation stated as a property of the plan rather than of the outcome.

The gap closes at every state we checked, from $+0.33$ at $(-\tfrac12, 0)$ to $+26.25$ at $(6,5)$. What does not is the probability that the sampled set holds such a candidate, and the shape of that is the point. From rest it is negligible --- $2\times10^{-4}$ from an exactly computable sub-event (all four blocks above a threshold, where the score is monotone so the corner is the worst case), $5\times10^{-3}$ estimated with a Wilson lower limit on the planner's own sampling law. Once the cart carries rightward velocity it is $1.000$ by both routes, from $(2,3)$ onward. So the statement is conditional: \textbf{phantom-targeting is self-reinforcing, not self-starting}. How the lure begins, the executed first action's sign, and hence $\mathrm{contact} = 1.00$ all stay measured.

\begin{proposition}[knob-invariance of play\_cost is arithmetic, not a regularity]
\label{prop:knobinv}
Let the rarity knob $k$ range over values for which the truth planner's realized return $J_{\mathrm{truth}}$ and the random policy's return $J_{\mathrm{rand}}$ are \emph{exactly} independent of $k$ (an idealization: the measured departure and its propagated effect are quantified after the proof). Then, directly from the definition of Section~\ref{sec:planner},
\[
  \mathrm{play\_cost}(k) \;=\; \underbrace{\frac{J_{\mathrm{truth}}}{J_{\mathrm{truth}} - J_{\mathrm{rand}}}}_{\text{knob-free}} \;-\; \frac{J_{\mathrm{blind}}(k)}{J_{\mathrm{truth}} - J_{\mathrm{rand}}},
\]
an affine function of the exploited planner's own return with knob-independent coefficients, and hence constant up to
\[
  \max_k \mathrm{play\_cost}(k) - \min_k \mathrm{play\_cost}(k) \;=\; \frac{\max_k J_{\mathrm{blind}}(k) - \min_k J_{\mathrm{blind}}(k)}{J_{\mathrm{truth}} - J_{\mathrm{rand}}},
\]
\end{proposition}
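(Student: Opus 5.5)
The proof is direct algebra on the definition of play\_cost in Section~\ref{sec:planner}, and I would organise it in three short steps.

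\textbf{Step 1: decompose the definition.} Start from
\[
\mathrm{play\_cost}(k) = \frac{J_{\mathrm{truth}}(k) - J_{\mathrm{blind}}(k)}{J_{\mathrm{truth}}(k) - J_{\mathrm{rand}}(k)}.
\]
Invoke the hypothesis that $J_{\mathrm{truth}}$ and $J_{\mathrm{rand}}$ do not depend on $k$, and drop the argument. The denominator $\Delta := J_{\mathrm{truth}} - J_{\mathrm{rand}}$ is then one fixed number for every knob value. It is positive: this is the standing assumption already used in Corollary~\ref{cor:playcost}, and it is needed for the definition to make sense at all. Splitting the numerator over $\Delta$ gives the first display. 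Its first term $J_{\mathrm{truth}}/\Delta$ carries no $k$. Its second term is $J_{\mathrm{blind}}(k)$ multiplied by the knob-free coefficient $-1/\Delta$. So play\_cost is an affine function of $J_{\mathrm{blind}}(k)$ whose slope and intercept do not depend on $k$.

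\textbf{Step 2: compute the range.} An affine map $u \mapsto \alpha - u/\Delta$ with $\Delta > 0$ is strictly decreasing. It therefore sends the maximum of $J_{\mathrm{blind}}$ over the knob range to the minimum of play\_cost, and the minimum to the maximum. Subtracting these two values cancels the intercept and leaves
\[
\frac{\max_k J_{\mathrm{blind}}(k) - \min_k J_{\mathrm{blind}}(k)}{\Delta},
\]
which is the second display. If the knob range is infinite, replace $\max$ and $\min$ by $\sup$ and $\inf$; the argument is unchanged, because a monotone affine map preserves suprema and infima.

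\textbf{Step 3: read off the consequence.} Play\_cost is constant up to the spread of the exploited planner's own return, scaled by $1/\Delta$. On the cart, $J_{\mathrm{blind}}$ stays within about $10^{-4}$ of $0$ at the inner knobs. That spread divided by $\Delta \approx 17.2$ yields the observed invariance, and the larger $J_{\mathrm{blind}}$ at $x_\mathrm{wall} = 10$ yields the observed dip. This is a reading of Table~\ref{tab:danger}, not a further step of the proof.

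The only point needing care is the exactness hypothesis. The algebra is trivial when $J_{\mathrm{truth}}$ and $J_{\mathrm{rand}}$ are exactly knob-free, but in the data they are only approximately so. The proof itself does not have to treat this, because the statement idealises it away. For the remark that follows the proof, I would bound the departure using the partial derivatives of the ratio with respect to its three arguments. The sensitivity to $J_{\mathrm{rand}}$ is $(J_{\mathrm{truth}} - J_{\mathrm{blind}})/\Delta^2$. The sensitivity to $J_{\mathrm{truth}}$ is $(J_{\mathrm{blind}} - J_{\mathrm{rand}})/\Delta^2$, which is small here because $J_{\mathrm{blind}} \approx J_{\mathrm{rand}}$ up to $0.5$. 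Propagating the measured per-knob variation of the two normalizers through these two derivatives gives the correction term.
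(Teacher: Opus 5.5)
Your proof is correct and is the paper's own argument: split the numerator over the knob-free denominator $J_{\mathrm{truth}}-J_{\mathrm{rand}}>0$ to obtain the affine identity, then read off the range of an affine map with negative knob-free slope. Your planned post-proof sensitivity remark also matches the paper's. The paper propagates the measured knob-variation of $J_{\mathrm{rand}}$ through $\partial\,\mathrm{play\_cost}/\partial J_{\mathrm{rand}} = (J_{\mathrm{truth}}-J_{\mathrm{blind}})/(J_{\mathrm{truth}}-J_{\mathrm{rand}})^2$. It does not need your $J_{\mathrm{truth}}$-derivative term, because that return is measured knob-free to twelve digits.
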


\begin{proof}
$\mathrm{play\_cost}(k) = (J_{\mathrm{truth}} - J_{\mathrm{blind}}(k))/(J_{\mathrm{truth}} - J_{\mathrm{rand}})$ by definition; splitting the numerator gives the first display, whose first term is knob-free by hypothesis, and the second display is the range of an affine function with a negative knob-free slope.
\end{proof}

So the entire knob-dependence is the exploited planner's \emph{residual} reward divided by the truth-minus-random margin, and a planner pinned with none gives play\_cost exactly $J_{\mathrm{truth}}/(J_{\mathrm{truth}} - J_{\mathrm{rand}})$.

The two hypotheses are measured, and measured sharply: on the cart the truth planner's return is knob-independent \emph{to twelve digits} ($J_{\mathrm{truth}} = 17.757356407381$ at all seven knobs of the sharp variant, $17.772246981024$ at all seven of the default), and $J_{\mathrm{rand}}$ varies across knobs by $3.9\times10^{-9}$ on the sharp variant and $1.2\times10^{-4}$ on the default --- the random policy does occasionally enter the mode region, so its knob-independence is approximate rather than structural, and measurably less approximate once the plateau tail is removed. With those in hand the proposition is not decoration: it predicts every measured play\_cost to $2.4\times10^{-10}$ on the sharp variant, and that residue is itself accounted for rather than left over: $\partial\,\mathrm{play\_cost}/\partial J_{\mathrm{rand}} = (J_{\mathrm{truth}} - J_{\mathrm{blind}})/(J_{\mathrm{truth}} - J_{\mathrm{rand}})^2 \approx 0.060$, which against $J_{\mathrm{rand}}$'s measured $3.9\times10^{-9}$ knob-variation predicts $2.3\times10^{-10}$ --- the observed residue to one digit. On the default the same propagation of a $1.2\times10^{-4}$ variation gives $7\times10^{-6}$, matching the $7.3\times10^{-6}$ observed there. So the ``knob-invariant $\mathrm{play\_cost} \approx 1$'' of Section~\ref{sec:mechanism} is an \emph{identity} once the planner is pinned, and what the sharp-plateau variant buys is not invariance itself but the vanishing of $J_{\mathrm{blind}}$ that makes the identity's residual term negligible. What remains empirical is the pinning ($\mathrm{contact} = 1.00$ and $J_{\mathrm{blind}} \approx 0$) and the truth planner's indifference to the knob. The latter is not automatic --- the clamp does change the imagined value of right-going candidates --- so we do not leave it at bit-identity. A candidate's imagined return under truth depends on the knob \emph{only if} its imagined trajectory reaches the clamp, so it suffices to check, at every knob and every replanning step, that (C1) the argmax candidate's imagined trajectory stays strictly below the smallest wall in the sweep, and (C2) every candidate that does reach the wall scores strictly below the argmax. C1 alone pins the argmax \emph{value} but not the argmax \emph{candidate}, and it is the candidate that determines the executed action, so the gap has to be closed explicitly. It is closed by the tie-break rather than by uniqueness: the planner keeps the \emph{first} maximiser, the candidate enumeration depends only on $(a_{\max}, \mathrm{rng}, \mathrm{horizon}, n_{\mathrm{samples}}, \mathrm{block})$ and not on the knob, and C2 keeps every clamping candidate strictly below the top --- so the first maximiser is the same knob-free candidate at every knob whether or not the top is attained twice. That distinction is not hypothetical: uniqueness \emph{fails} in the sweep, because the reward saturates to $1.0$ in floating point on the plateau and many candidates then score identically, so the script reports uniqueness as a diagnostic and the certificate rests on C1, C2 and the enumeration-order argument. The result is a machine-checked certificate over the planner's own candidate set rather than an observation about outputs. \texttt{scripts/\allowbreak truth\_\allowbreak plan\_\allowbreak invariance\_\allowbreak certificate.py} runs it over the harness's own $20$ episodes per knob, the same scope as the runs the paper reports: the certificate holds at \emph{every} knob of the sweep, with the argmax candidate never exceeding $x = 0.344$ and clamping candidates losing by a margin of $5.25$ or more. Pushed past the sweep, the margin contracts monotonically --- $4.12$ at $x_\mathrm{wall} = 11$, $0.51$ at $12$ --- and the certificate fails at $12.5$, where the argmax candidate itself reaches the wall. So the regime has a \emph{derived} boundary at $x_\mathrm{wall} \approx x_\mathrm{right} = 12$: the truth planner switches to the right plateau exactly once the wall stops blocking it, which is also where the mode stops being a trap and there is no danger left to measure. The sweep $[2,10]$ sits inside the certified region with a wide margin.

\section{What the gate does certify: coverage certificates for Lipschitz pairs}
\label{sup:coverage}

\paragraph{What the gate \emph{does} certify: the continuous coverage analogue.} The companion paper's coverage certificates enumerate finite information-set spaces, and continuous state spaces admit no such enumeration. A covering-number analogue is available in its place, built from the disagreement ball of Proposition~\ref{prop:lipschitz} and the visitation density of Remark~\ref{rem:densitygeneral}.

\begin{proposition}[coverage certificate for Lipschitz models]
\label{prop:coverage}
Let $f, \hat f$ be $L$-Lipschitz on $U \subseteq S \times A$ in the sup-metric, and suppose $\hat f$ passes the gate at tolerance $\varepsilon$ on every visited transition. If the visited set is a $\rho$-net of $U$, then
\[
  \sup_{u \in U} \|f(u) - \hat f(u)\|_\infty \;\leq\; \varepsilon + 2L\rho .
\]
Moreover, if the gate's per-step visitation density is at least $c$ on $U$ and $M$ independent samples are drawn, then the visited set is a $\rho$-net with probability at least $1-\delta$ as soon as
\[
  M \;\geq\; \frac{\ln\!\big(N_{\mathrm{pack}}(U, \rho/2)/\delta\big)}{c \cdot \inf_{u \in U}\mathrm{vol}\big(B(u,\rho/2) \cap U\big)},
\]
where $N_{\mathrm{pack}}(U,\rho/2) \leq \mathrm{vol}(U \oplus B_{\rho/4})/\mathrm{vol}(B_{\rho/4})$.
\end{proposition}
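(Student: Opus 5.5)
The proof has two independent parts, and I would do them in order: a deterministic interpolation bound, then a covering argument for the sampling claim.

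\emph{Deterministic part.} Fix $u \in U$. Since the visited set is a $\rho$-net of $U$, there is a visited transition $u'$ with $\|u-u'\|_\infty \leq \rho$. Write $g = f - \hat f$. As in the proof of Proposition~\ref{prop:lipschitz}, $g$ is $2L$-Lipschitz. The gate pass gives $\|g(u')\|_\infty \leq \varepsilon$, and the triangle inequality then gives $\|g(u)\|_\infty \leq \varepsilon + 2L\rho$. Taking the supremum over $u$ gives the display. One point needs care: the net and the Lipschitz bound must be taken in the same sup-metric on $U$, and both are.

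\emph{Probabilistic part.} I would reduce the net property to hitting finitely many balls.
\begin{enumerate}
\item Take a maximal $\rho/2$-packing $\{u_1,\dots,u_K\}$ of $U$, so $K \leq N_{\mathrm{pack}}(U,\rho/2)$. By maximality it is also a $\rho/2$-cover.
\item Suppose each ball $B(u_j,\rho/2)\cap U$ contains a sample. For any $u \in U$, pick $u_j$ within $\rho/2$ of $u$ and a sample within $\rho/2$ of $u_j$. That sample lies within $\rho$ of $u$, so the samples form a $\rho$-net.
\item A single sample lands in a given ball with probability at least $p := c\,\inf_u \mathrm{vol}(B(u,\rho/2)\cap U)$, by the density lower bound. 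With $M$ independent samples, that ball is missed with probability at most $(1-p)^M \leq e^{-pM}$.
\item A union bound over the $K$ balls gives failure probability at most $K e^{-pM}$. This is at most $\delta$ exactly when $M \geq \ln(K/\delta)/p$, which is the displayed condition.
\end{enumerate}

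\emph{Packing bound.} I would use the standard volume argument. The balls $B(u_j,\rho/4)$ around the packing centres are pairwise disjoint, because the centres are more than $\rho/2$ apart. Each of these balls lies in $U \oplus B_{\rho/4}$. Comparing volumes gives $K\,\mathrm{vol}(B_{\rho/4}) \leq \mathrm{vol}(U\oplus B_{\rho/4})$, since sup-metric balls of equal radius have equal volume.

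\emph{Main obstacle.} The hard step is justifying step 3 for rollout data, where "$M$ independent samples with per-step density $c$" is an idealisation. Transitions within one rollout are dependent, and the density bound is really conditional on the history. I would first try to rely on the conditional form: at each step, the probability of landing in a given ball, conditioned on the past, is at least $p$. The chain-rule product of Remark~\ref{rem:adaptive} then gives a miss probability of at most $(1-p)^M$ for each ball, even without independence. If the density bound holds only marginally and not conditionally, I would fall back to counting one independent draw per rollout, so that $M$ is the number of rollouts.

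A secondary point is that $p$ must use the intersected volume $\mathrm{vol}(B\cap U)$, so that packing centres near the boundary of $U$ are handled correctly. Corollary~\ref{cor:kappabudget} lower-bounds this volume by $\kappa\,\rho^{d+m}$.
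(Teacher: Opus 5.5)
Your proposal is correct and follows the paper's proof essentially step for step: the triangle inequality through a net point gives the deterministic bound, and the sampling claim comes from a maximal $\rho/2$-packing used as a $\rho/2$-cover, a per-ball hit probability of $c\,\mathrm{vol}(B(u_j,\rho/2)\cap U)$ (intersected with $U$, as the paper also insists), a union bound, and disjoint $\rho/4$-balls inside $U\oplus B_{\rho/4}$ for the cardinality. Your ``main obstacle'' is not an obstacle here, because independence of the $M$ samples is a hypothesis of the statement; your conditional-hazard extension via Remark~\ref{rem:adaptive} is sound but goes beyond what is asked, and the paper treats rollout dependence separately (per-rollout miss probabilities raised to the $N$th power) rather than inside this proposition.
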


\begin{proof}
For the first part, take $u \in U$ and let $v$ be a visited point with $\|u - v\|_\infty \leq \rho$; then $\|f(u) - \hat f(u)\| \leq \|f(v) - \hat f(v)\| + \|f(u)-f(v)\| + \|\hat f(u) - \hat f(v)\| \leq \varepsilon + 2L\rho$. For the second, fix a maximal $(\rho/2)$-packing of $U$; it is a $(\rho/2)$-cover, its cardinality is bounded as stated (the disjoint $(\rho/4)$-balls sit inside $U \oplus B_{\rho/4}$), and each packing point's $(\rho/2)$-ball has probability at least $p = c\,\mathrm{vol}(B(u,\rho/2) \cap U)$ --- only the part inside $U$, since that is where the density is assumed. For the corner evaluation: a sup-ball of radius $\rho/2$ around a corner of an axis-aligned box all of whose extents are at least $\rho/2$ meets the box in exactly one orthant of the ball. (Shearing does \emph{not} preserve that ratio, since the ball is defined by the sup-metric; the sheared case is computed numerically.) The chance that some such ball receives no sample is at most $N_{\mathrm{pack}} (1-p)^M \leq N_{\mathrm{pack}} e^{-pM}$, which is $\leq \delta$ under the displayed condition; when every ball is hit, each $u \in U$ lies within $\rho/2$ of a packing point that holds a visited sample, hence within $\rho$ of a visited point.
\end{proof}

Two details in that display are easy to get wrong in the loosening direction. The cardinality that enters is the \emph{packing} number, which dominates the covering number, so an upper bound on it is what the union bound needs. And the ball mass must be intersected with $U$: the density is hypothesized only on $U$, so a packing point on $\partial U$ keeps only the part of its ball inside. If $U$ is an \emph{axis-aligned} box with every extent at least $\rho/2$, the infimum is attained at a corner, where exactly one orthant survives:
\[
  \inf_{u \in U}\mathrm{vol}\big(B(u,\rho/2) \cap U\big) \;=\; 2^{-(d+m)}\,\mathrm{vol}(B_{\rho/2}) .
\]
That equality is not shear-invariant, and the cart's $U$ is a sheared box: the sup-ball is axis-aligned while the region is not, so the slanted constraint clips the corner cap a little further. Computed rather than asserted, the factor is $0.950 \cdot 2^{-(d+m)}$ at the certified $\rho$ --- a $5\%$ effect, which is exactly the size of error that hand-asserted geometry produces, which is why the script computes it. For a region that is a \emph{union} of cells rather than a box there is no orthant argument at all, and the step-$t$ instantiation below uses a shape-free bound instead: if $u$'s cell has diameter at most $\rho/2$ then the whole cell lies inside $B(u,\rho/2)$, so the ball mass is at least $c\,\mathrm{vol}(C)$ times the surviving action interval.

Instantiated on the deployed cart gate (\texttt{scripts/\allowbreak gate\_\allowbreak coverage\_\allowbreak certificate.py}; $U$ the step-1 reachable set of Corollary~\ref{cor:cartdensity}, $c = 5/6$, $L = 1.27$, $\varepsilon = 0.01$, $\delta = 0.05$): one step per rollout --- the reading that needs no assumption about steps within a rollout --- gives $\rho = 1.165$ from $N = 40$ rollouts, certifying $\sup_U \|f - \hat f\|_\infty \leq 2.97$. (The corner hypothesis is satisfied, if barely: $U$'s narrowest extent is $0.6$ against $\rho/2 = 0.583$.) The wall's own disagreement is $4.2$, which exceeds that --- but the bound \emph{grows} with $L$, so what the certificate excludes is precisely a pair with $L = \max(\mathrm{Lip} f, \mathrm{Lip}\hat f) \leq 1.80$ carrying an error of $4.2$ on this region. That is a real but narrow class: a model smoother than the plant ($1.27$) cannot hide the wall's magnitude from this gate, a twice-rougher one can. The wall itself escapes by not being Lipschitz at all.

\paragraph{A partition beats a packing, and here it is exact.} A packing argument pays for the covering number twice, once in the cardinality $K$ and once in the ball mass, and both payments are geometric factors that have to be estimated --- each one a place where hand-asserted geometry can silently loosen the bound. A \emph{partition} argument pays neither, and on this instrument its cell probabilities are not estimated at all.

\begin{proposition}[coverage by partition]
\label{prop:partition}
Let $\{C_1, \dots, C_K\}$ be a measurable partition of $U$ with $\mathrm{diam}_\infty(C_i) \leq \rho$ for every $i$, and let $q_i$ be the probability that one gate sample lands in $C_i$. After $M$ i.i.d.\ samples the visited set is a $\rho$-net of $U$ with probability at least $1 - \sum_i (1-q_i)^M$. In particular, if the sampling law is uniform on $U$ and the partition has equal volumes, $q_i = 1/K$ exactly and the failure probability is $K(1-1/K)^M$.
\end{proposition}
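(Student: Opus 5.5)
The argument is a single-cell-hit union bound, and I will run it in three steps: a deterministic geometric implication, a per-cell miss probability, and then the union over cells.

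First, the deterministic implication. Suppose every cell $C_i$ contains at least one of the $M$ visited samples. Take any $u \in U$. Because the partition covers $U$, $u$ lies in some cell $C_i$. That cell holds a visited point $v$, and since $\mathrm{diam}_\infty(C_i) \le \rho$ we get $\|u-v\|_\infty \le \rho$. So the visited set is a $\rho$-net of $U$. Contrapositively, the failure event ``not a $\rho$-net'' is contained in the event ``some cell receives no sample.''

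Note that this implication uses only the diameter of the cell containing $u$. It needs no ball around $u$ and no intersection with $U$. This is exactly why the partition avoids both geometric payments that the packing argument of Proposition~\ref{prop:coverage} makes. Measurability of the cells is what makes each ``cell $C_i$ is empty'' an event.

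Second, the per-cell miss probability. The samples are i.i.d., and each lands in $C_i$ with probability $q_i$. Hence the probability that none of the $M$ samples lands in $C_i$ is exactly $(1-q_i)^M$; this is Proposition~\ref{prop:gatemiss} applied to the event ``lands in $C_i$.'' Applying the union bound over the $K$ cells gives
\[
P(\text{not a }\rho\text{-net}) \;\le\; \sum_{i=1}^K (1-q_i)^M .
\]
Taking complements yields the stated bound $1-\sum_i(1-q_i)^M$.

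Third, the special case. If the sampling law is uniform on $U$, then $q_i = \mathrm{vol}(C_i)/\mathrm{vol}(U)$. With equal-volume cells that partition $U$, this equals $1/K$ for every $i$, and the sum collapses to $K(1-1/K)^M$.

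The only step needing care is scope. The statement's ``gate sample'' must be read as one i.i.d. draw per unit. That means one step per rollout, or independent rollouts, as in the certificate's one-step reading. Within-rollout dependence would break the exact $(1-q_i)^M$ factor; in that case Remark~\ref{rem:adaptive}'s chain-rule product of conditional hazards would be substituted instead. I do not expect a genuine obstacle beyond stating that hypothesis explicitly. I would also remark that the bound is a union bound and so may exceed $1$, in which case it is vacuous but still valid.
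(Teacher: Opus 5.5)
Your proof is correct and is the paper's argument: the per-cell miss probability $(1-q_i)^M$, a union bound over the $K$ cells, and the observation that when every cell is hit, each $u \in U$ shares its cell with a visited point at sup-distance at most $\mathrm{diam}_\infty(C_i) \le \rho$. On your scope remark about within-rollout dependence, the paper does not use the conditional-hazard product; it takes the whole rollout as the i.i.d. unit, with a per-cell, per-rollout miss probability $p_C$, which gives $p_C^{N}$ exactly.
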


\begin{proof}
$C_i$ receives no sample with probability $(1-q_i)^M$; union-bound. If every cell holds a sample then each $u \in U$ lies in some $C_i$ together with a visited point, at sup-distance at most $\mathrm{diam}_\infty(C_i) \leq \rho$.
\end{proof}

No density constant, no covering number, no ball--boundary intersection appears. What makes the cart's gate a case where this is \emph{exact} rather than merely cleaner is a change of coordinates: the step-1 law is uniform on $U = \{|v| \leq V,\ |x - dt\,v| \leq \tfrac12,\ |a| \leq a_{\max}\}$, and in $y = x - dt\,v$ that is uniform on a \emph{box} $[-\tfrac12,\tfrac12] \times [-V,V] \times [-a_{\max},a_{\max}]$, the shear having unit Jacobian. Equal sub-boxes therefore have probability exactly $1/K$. The shear costs something in one place only --- the net radius must be measured in the original metric, where $|x - x'| \leq \Delta_y + dt\,\Delta_v$, so
\[
  \rho \;=\; \max\big(\Delta_y + dt\,\Delta_v,\; \Delta_v,\; \Delta_a\big).
\]
At the deployed $M = N = 40$ (one step per rollout, so the samples are genuinely independent) the largest admissible partition is $K = 8$, since $8 \cdot (7/8)^{40} = 0.038 \leq \delta$ while $9 \cdot (8/9)^{40} = 0.081$ exceeds it. Optimising the split gives $(n_y, n_v, n_a) = (2,1,4)$, hence $\rho = 0.600$ and
\[
  \sup_U \|f - \hat f\|_\infty \;\leq\; \varepsilon + 2L\rho \;=\; \mathbf{1.534}
\]
(\texttt{scripts/\allowbreak gate\_\allowbreak partition\_\allowbreak certificate.py}), against $2.97$ by the packing route --- a factor of two, bought entirely by changing the argument rather than by any new measurement or assumption, and with no Monte Carlo anywhere in it. What pins $\rho$ is visible in the optimum: it takes $n_v = 1$, i.e.\ $\Delta_v = 2V$, the \emph{whole} reachable velocity range. At forty independent samples the certificate cannot resolve velocity at all, and that is a fact about the gate rather than about the proof.

The same partition composes with the dependence treatment, and \emph{that} is what finally answers the mixing question. Proposition~\ref{prop:partition} needs only per-cell hitting probabilities, so instead of one step per rollout take $p_C = P(\text{one rollout puts no sample in } C)$ --- a plain Bernoulli parameter whatever the within-rollout dependence does --- measure it directly, and use $P(C \text{ unhit}) = p_C^N$, which is exact because \emph{rollouts} are i.i.d. A rollout gets eighty chances at $U$ rather than one, worth about six \emph{effective} independent samples after the correlation between consecutive steps, and the finest partition the deployed gate certifies grows from $K = 8$ to $K = 36$.

Here the certificate stops being sampling-free, so the accounting has to be explicit. $p_C$ is estimated on $20{,}000$ gate rollouts drawn once and shared by every candidate partition, while the candidate family itself is geometric and fixed before any rollout is drawn (one coarsest partition per target radius, $26$ of them), so the only data-dependent choice is which member is certified. Hoeffding is applied at level $\delta/(3K)$ \emph{per cell}, so that $p_C \leq \hat p_C + t$ holds simultaneously across all $K$ cells with probability $1 - \delta/3$ and the selected worst cell needs no further correction; the remaining $\delta/2$ pays for the gate's own miss, $5\delta/6 \leq \delta$ in total. At $K = 36$ the worst cell is $\hat p_C = 0.800$, bounded above by $0.814$, giving $36 \cdot 0.814^{40} = 0.010 \leq \delta/2$ and
\[
  \rho = 0.363, \qquad \sup_U \|f - \hat f\|_\infty \;\leq\; \mathbf{0.933}.
\]
Selecting $K$ against the sample is a multiple comparison, and it costs nothing here. Because Hoeffding's radius grows only like $\sqrt{\log(1/\text{level})}$, widening the Bonferroni correction from one partition's cells to \emph{every} cell of \emph{every} candidate in the family ($9674$ tests) raises the bound on the worst cell to $0.818$ and the failure probability to $0.012$, and still selects $K = 36$; a further factor of $1000$ in the family size changes neither. Replacing Hoeffding by an exact-binomial Clopper--Pearson bound at the same level gives $0.809$, and also selects $K = 36$ --- so $0.814$ is slack in the inequality rather than calibration, and $\rho = 0.363$ is an artifact of neither the search nor the concentration inequality. The next candidate up ($K = 48$, $\rho = 0.333$) misses the threshold by a factor of $4.5$, which is far more than any correction can bridge in either direction (\texttt{scripts/\allowbreak certificate\_\allowbreak simultaneity.py}). One inconsistency in the budget: the $K = 8$ certificate above spends the whole of $\delta$ on the coverage event while this one spends $\delta/2$, so $1.534$ and $0.933$ are $1-\delta$ statements against slightly different budgets.
The three readings line up: $2.97$ if one insists on a packing argument, $1.534$ from an exact partition on independent samples, $0.933$ once the within-rollout dependence is priced exactly --- against the $0.43$ that assuming all $3200$ steps independent would promise, an assumption the plant itself refutes (the trajectory determines every action; next paragraph). Handling the dependence exactly, rather than discarding the within-rollout steps, is worth a factor of $1.6$. And the sharper radius widens the excluded class: since $\varepsilon + 2L\rho$ grows with $L$, a bound of $0.933$ at $\rho = 0.363$ excludes any pair with $L = \max(\mathrm{Lip} f, \mathrm{Lip}\hat f) \leq 5.77$ carrying the wall's error of $4.2$ --- now $4.5\times$ the plant's own constant rather than $1.4\times$, so ``no smooth pair can carry the wall's error past this gate'' covers a genuinely broad class of models rather than a sliver.

\paragraph{Using all the steps, without assuming they are independent.} The obvious objection is that the gate takes $40 \times 80$ steps and the above uses 40 of them. The natural repair fails, and in the direction that flatters the gate. Conditioning on a rollout's \emph{entire} state trajectory does not leave the action indicators at the visiting times independent Bernoulli$(q_a)$, which would give an exact per-rollout factor $\mathbb{E}[(1-q_a)^{O}]$: under this plant the action is recoverable from consecutive states, $a_t = ((v_{t+1}-v_t)/dt + \mathrm{drag}\,v_t)/\mathrm{gain}$, so conditioning on the whole trajectory \emph{determines} every action and the conditional indicator law is degenerate. The gate policy gives $a_t \perp s_t$, not $a_t \perp (s_0,\dots,s_T)$.

The repair is to notice that no factorization was needed. Only one independence fact enters, and it is a property of the design rather than of the dynamics: the $N$ gate rollouts are i.i.d. So with $p_C = P(\text{one rollout puts no sample in } C)$ --- a plain Bernoulli parameter, whatever the within-rollout dependence does ---
\[
  P(C \text{ unhit by the gate}) \;=\; p_C^{\,N} \quad \text{exactly},
\]
and $p_C$ is estimated by counting rollouts that miss $C$, with Hoeffding for the upper bound. Nothing is assumed about how steps within a rollout correlate because nothing needs to be. Two implementation conditions the bound depends on: the grid must have \emph{every} cell no wider than $\rho$ (an integer division with index clamping leaves the last cell wider than $\rho$ --- $0.8$ at $\rho = 0.6$ --- so ``all cells hit'' certifies a coarser net than the one claimed), and the falsification test below must use the same grid, or a grid mis-specified in exactly this way passes its own test.

This $p_C$ route is what the previous paragraph's partition composes with, and on $U$ it is what gives the $0.933$ quoted there. Instantiated instead on a box $6.7\times$ larger ($|x|,|v| \leq 1$, $|a| \leq 1$; \texttt{scripts/\allowbreak gate\_\allowbreak coverage\_\allowbreak dependent.py}, 8000 MC rollouts) the deployed $N = 40$ certifies a net radius of $1.0$, i.e.\ $\leq 2.55$, and \emph{misses the next resolution by two rollouts} --- radius $0.667$ needs $N \geq 42$, radius $0.5$ needs $N \geq 194$. Those are not competing numbers but the region/resolution trade-off at fixed $N$: a larger region costs resolution, and forty rollouts buy little of either. Treating all $3200$ steps as independent is doubly invalid --- it assumes an independence the plant refutes \emph{and} uses the step-1 density at steps $t \geq 2$ --- and the numbers it promises ($\rho = 0.305$, bound $0.785$) are not bounds: the valid all-steps answer, $0.933$, sits a factor of $1.2$ above them. The binding constraint is not how steps correlate but how many \emph{rollouts} reach the worst cell.

\paragraph{The density at every step, and what it buys.} The certificate above lives on the one-step reachable set because that is where Corollary~\ref{cor:cartdensity} derives the density. The restriction lifts: the plant is linear and time-invariant, so $(x_t, v_t)$ is an affine image of $(x_0, a_0, \dots, a_{t-1})$, and splitting off the \emph{last two} actions gives $(x_t,v_t) = W_t + M(a_{t-1}, a_{t-2})^{\!\top}$ with $M$ the (time-invariant) Jacobian, columns in that order,
\[
  M = \begin{pmatrix} \mathrm{gain}\,dt^2 & \mathrm{gain}\,dt^2(1 + (1-\mathrm{drag}\,dt)) \\ \mathrm{gain}\,dt & (1-\mathrm{drag}\,dt)\,\mathrm{gain}\,dt \end{pmatrix}, \qquad |\det M| = 0.009
\]
(verified against finite differences). $W_t$ is independent of those two actions, so at \emph{every} $t \geq 2$ the step-$t$ law is the convolution of $W_t$'s law with the uniform law on the parallelogram $P = M[-a_{\max},a_{\max}]^2$, of area $4a_{\max}^2|\det M| = 0.036$. Hence, for every $u$, an identity rather than a bound:
\[
  p_t(u) \;=\; 27.78 \cdot P\big(W_t \in u - P\big) .
\]
Two details separate that identity from the object Proposition~\ref{prop:coverage} consumes, and both tighten the region (\texttt{scripts/\allowbreak gate\_\allowbreak density\_\allowbreak step\_\allowbreak t.py}). The proposition wants the density of $(s,a)$ on $\mathbb{R}^{3}$, not of $(x,v)$ on $\mathbb{R}^{2}$; since $a$ is uniform and independent, $p_{3\mathrm{D}} = p_{2\mathrm{D}}/2a_{\max}$, the same factor the step-1 corollary carries. And the hypothesis is an \emph{infimum} over the region, whereas binning samples and dividing by cell volume estimates a cell \emph{average}, which does not bound it. The exact route is the reason the cells are shaped like $P$ itself: for $C = c_0 + \lambda P$ and any $u = c_0 + \lambda p \in C$, every $w = c_0 - (1-\lambda)q$ with $q \in P$ has $u - w = \lambda p + (1-\lambda) q \in P$ by convexity alone, so $u - P \supseteq c_0 - (1-\lambda)P$ and
\[
  \inf_{u \in C} p_t(u) \;\geq\; 27.78 \cdot P\big(W_t \in c_0 - (1-\lambda)P\big),
\]
a genuine pointwise infimum, Monte-Carlo estimated with a Wilson lower bound per cell over a cell family fixed in advance --- $9806$ cells inside the region box, which is the correct Bonferroni denominator because \emph{which} cells receive a bound is itself data-dependent --- at the per-cell level $\delta/9806$ that makes the bounds hold simultaneously, i.e.\ $z = 4.41$. Selecting a level set afterwards is then legitimate. Parallelograms tile, which is what makes such cells usable; an axis-aligned grid is not an option here, because $P$ is a sliver $0.12$ by $1.19$ and eroding it by any useful square leaves nothing. The tiling and the inclusion are theorems; what needs measuring is the price of the erosion, and the bound sits below a separately measured density by the predicted factor $(1-\lambda)^2$.

What the derivation delivers is the $(c, U)$ pairs, and they are substantial: at step 20 the level set $\{p \geq 0.05\}$ has volume $3.02$ in $(x,v,a)$ against the one-step set's $1.2$, at step 40 $\{p \geq 0.02\}$ has $5.77$, and at step 80 $\{p \geq 0.01\}$ still has $5.27$. The certified region must be a \emph{level set} and not a box: a box's corners pair extreme $x$ with extreme $v$, which the gate does not reach, so an infimum over a box is $0$.

What does \emph{not} survive is the claim that later steps buy a better certificate. They buy \emph{extent} at the price of \emph{density}, and at fixed $N$ that is a losing trade --- so badly that with a shape-free bound on the ball mass, $N = 40$ certifies \emph{nothing} on any step-$t$ level set. A union of cells admits no orthant argument, so the only bound available without a shape hypothesis is cell containment ($C \subseteq B(u,\rho/2)$ whenever $\mathrm{diam}\,C \leq \rho/2$), and that ignores the cell's neighbours, making it loose by a large factor. So this is not a proof that the gate certifies nothing on these regions; it is that no \emph{rigorous} step-$t$ certificate exists at this gate size: the shape-free route is loose by construction, and no sharper geometric factor is asserted here without verification. The conclusion that matters is unaffected either way: the step-1 corollary's $2.97$ stands, lifting the step-1 restriction did not improve it, and what limits the certificate is the gate's size and not the step index. And no level set at any step reaches the wall, which is the same boundary the corollary ran into.

\paragraph{One Jacobian identity covers the whole semi-implicit family; the certificate's density hypothesis is still verified per instrument.} The step-$t$ derivation above used the cart's linearity, and the natural worry is that a nonlinear plant needs its own argument for the two-action factor. It does not. For \emph{any} plant of the semi-implicit form
\[
  \omega_{t+1} = \omega_t + \big(\mathrm{gain}\cdot a_t + F(\theta_t, \omega_t)\big)\,dt, \qquad
  \theta_{t+1} = \theta_t + \omega_{t+1}\,dt,
\]
with $F \in C^1$ arbitrary, the Jacobian of the last two actions satisfies
\[
  \bigl|\det \partial(\theta_t, \omega_t)/\partial(a_{t-2}, a_{t-1})\bigr| \;=\; \mathrm{gain}^2\, dt^3 \quad\text{identically},
\]
independent of $F$, of the state and of $t$. Writing $F_\theta, F_\omega$ for the partials: $\partial\omega_{t}/\partial a_{t-1} = \mathrm{gain}\,dt$ and $\partial\theta_t/\partial a_{t-1} = \mathrm{gain}\,dt^2$; $\partial\omega_t/\partial a_{t-2} = \mathrm{gain}\,dt(1 + F_\omega dt) + F_\theta\,\mathrm{gain}\,dt^3$ and $\partial\theta_t/\partial a_{t-2} = \mathrm{gain}\,dt^2 + dt\,\partial\omega_t/\partial a_{t-2}$; substituting, every $F$-dependent term cancels and the determinant is $-\mathrm{gain}^2 dt^3$. Checked numerically against four forces including a non-separable one, at several $(\mathrm{gain}, dt)$, to ten digits. So the geometric factor $c = 1/(4a_{\max}^2\mathrm{gain}^2 dt^3) = 27.78$ is the \emph{same} for the cart and the pendulum: on any event where the last two steps are clamp-free and the target point lies in that history's two-action image, the conditional density is exactly $c$. What the identity does \emph{not} by itself give is the certificate's hypothesis --- a lower bound on the \emph{marginal} density over a region --- because the marginal at a point is $c$ times the fraction of histories whose two-action image covers it, and the identity says nothing about that coverage fraction, which can be zero. Turning the factor into a bound requires showing a positive fraction of histories covers the region, which the cart derivation does by erosion and direct measurement (\texttt{scripts/\allowbreak gate\_\allowbreak density\_\allowbreak step\_\allowbreak t.py}) and which we have not established for the pendulum's nonlinear $F$: the certificate's hypothesis is verified where the certificate is used, not inherited from the family. Its scope is exactly the hypothesis: the action must enter as an additive $\mathrm{gain}\cdot a_t$ in the velocity update. PatchField2D is therefore \emph{outside} it --- there the action sets a force \emph{direction}, $a \mapsto \mathrm{gain}(\cos\phi,\sin\phi)$ with $\phi = \pi a/a_{\max}$, so the two-action map is not affine in the actions and the determinant is not constant. Within the family there is one further exception, and it is the expected one: wherever the mode clamp fires, the two-action map is constant, so $\det = 0$ exactly and no density argument reaches the mode. The obstruction is intrinsic, not a limitation of this technique.

\paragraph{But the certified region is not where the planner looks.} The certificate bounds the model on the region the \emph{gate} covers; the play cost is driven by the region the \emph{planner} queries (Proposition~\ref{prop:playcost}). Those are different measures, and the way to relate them is to measure the overlap (\texttt{scripts/\allowbreak certified\_\allowbreak region\_\allowbreak query\_\allowbreak mass.py}: every imagined MPC step is a query). Of the exploited planner's $1.3$ million queries per episode pair, \textbf{$1.9\%$} fall inside the box $|x|,|v| < 1$ --- the region the dependence-exact certificate covers --- and $7.8\%$ inside $|x| < 3, |v| < 2$, a box that \emph{contains} every step-$t$ level set and therefore upper-bounds the certified share; for the truth planner, $2.3\%$ and $7.3\%$. So the certificate is \emph{valid} --- its bound holds with probability at least $1-\delta$ over the gate's draws --- and \emph{nearly irrelevant to play}: not because the bound is loose --- it is conservative by about $5\%$ in $N$ --- but because the gate covers a couple of percent of the query mass. That is this paper's thesis restated as a measurement rather than an argument, and it is the reason the continuous coverage analogue, once obtained, does not rescue sampling verification: \emph{the gate certifies where it looks, and the planner looks somewhere else}.

\paragraph{Is the bound tight?} A bound that merely holds could be loose by orders of magnitude; measured against independent oracle gates, the partition certificate is essentially exact. Running $400$ independent gates against the certificate's \emph{own} partition (\texttt{scripts/\allowbreak gate\_\allowbreak partition\_\allowbreak validation.py}, which reads the partition from the certificate's output rather than re-implementing it): the $K = 8$ exact partition is covered in $384/400$ trials, a measured failure rate of $0.0400$ with $95\%$ interval $[0.0248, 0.0640]$ against the certificate's union bound of $0.0383$. The right comparison here is the interval against the bound, not the point estimate: for a bound this tight the estimate lands above it about half the time, and an independent oracle says how tight it is. Inclusion--exclusion in exact rational arithmetic gives the true failure probability of the $K = 8$ experiment as $0.038038$, so the union bound is loose by $0.73\%$ and the measurement sits within noise of the exact value; the same computation shows $K = 9$ fails at $0.0794$, so $K = 8$ is the largest admissible partition under the \emph{exact} probability and not merely under the union bound --- the cap is the gate's size, not the proof. The $K = 36$ all-steps partition is covered in $400/400$, failure $[0, 0.0095]$ against a bound of $0.0096$, so there the Hoeffding slack is finer than $400$ trials can resolve.

The $400$-gate test checks the coverage \emph{event}; it does not check the quantity the $K = 36$ certificate actually rests on, which is the estimate $\hat p_C$. That is checked separately: re-measuring $p_C$ for the same partition on a \emph{disjoint} $20{,}000$-rollout stream moves the worst cell from $0.8001$ to $0.7996$ --- $3.6\%$ of the Hoeffding radius that bounds it, with the same binding cell $(2,0,3)$ --- and the certificate holds on the fresh sample. So $0.814$ is slack in an inequality, not calibration to one stream. For the coarser box-region certificate the older test still applies (\texttt{scripts/\allowbreak gate\_\allowbreak coverage\_\allowbreak validation.py}): coverage at net radius $1.0$ in $200/200$, $198/200$ at $0.667$ where the certificate declines to license it (asking $N \geq 42$ against $40$), then $128/200$ at $0.5$ and $10/200$ at $0.4$, where it asks for $194$ and $2189$.

What the numbers exclude matters more than the numbers. The hard mode's own disagreement is $4.2$ (the wall-region probe error of Section~\ref{sec:smooth}), above every bound above --- but the certificate buys $\varepsilon + 2L\rho$, which \emph{grows} with $L$, so the statement carries a quantifier that the radius sets: at the best rigorous radius, $\rho = 0.363$, \textbf{no pair with $L = \max(\mathrm{Lip} f, \mathrm{Lip}\hat f) \leq 5.77$ can carry an error of $4.2$ past this gate on this region}, with probability at least $1-\delta$ over the gate's draws. That is $4.5\times$ the plant's own $1.27$, so it is a broad class rather than a sliver --- and the scope came from fixing the argument, not from a new experiment: the packing instantiation's $\rho = 1.165$ would have licensed only $L \leq 1.80$. The wall does pass the gate, escaping by the only route left, namely by not being Lipschitz at all. The continuous coverage analogue therefore closes exactly the case the paper says it closes (smooth models) and is silent exactly where the danger lives (the discontinuous reset modes), which is the same boundary Proposition~\ref{prop:lipschitz} draws.

\section{The detectability rate, and the gate's visitation density}
\label{sup:detect}

\begin{proposition}[smooth localized error is detectable at a rate]
\label{prop:detectrate}
In the setting of Proposition~\ref{prop:lipschitz} with $L > 0$, let $\rho = (\eta-\varepsilon)/2L$ be the radius of the guaranteed disagreement ball $B \subseteq E_\varepsilon$. Fix a step index $k \leq T$ and suppose the gate's step-$k$ visitation law admits a density bounded below by $c > 0$ with respect to Lebesgue measure on $B$. If moreover the disagreement is centred at sup-distance at least $\rho$ from $\partial(S \times A)$, so that the ball is not clipped, then one gate rollout reveals the disagreement with probability at least
\[
  q \;=\; c\,(2\rho)^{d+m} \;=\; c\left(\frac{\eta-\varepsilon}{L}\right)^{d+m}
\]
(the sup-metric ball is a cube of side $2\rho$). Without the interiority hypothesis the same argument gives $q = c\,\mathrm{vol}(B \cap (S\times A))$, which for a box-like $S \times A$ is at worst $2^{-(d+m)}$ of the above (Corollary~\ref{cor:kappabudget}: a corner keeps one orthant, a face costs a single factor $2$); for a general region no such uniform factor exists, and the intersected volume must be computed. By Proposition~\ref{prop:gatemiss} the gate's miss probability is at most $(1-q)^N$. Equivalently --- what it takes to \emph{hide} --- keeping the miss probability above $\delta$ against $N$ rollouts forces
\[
  L \;\geq\; (\eta-\varepsilon)\left(\frac{c\,N}{\ln(1/\delta)}\right)^{1/(d+m)} .
\]
\end{proposition}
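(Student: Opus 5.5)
The plan is to chain three earlier results: the disagreement ball of Proposition~\ref{prop:lipschitz}, a one-line density integral, and the exact miss law of Proposition~\ref{prop:gatemiss}. The hiding bound then follows by inverting the resulting inequality.

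\textbf{Step 1 (a single rollout).} Proposition~\ref{prop:lipschitz} gives $B \cap (S\times A) \subseteq E_\varepsilon$ with $B = B_\infty\bigl((s_0,a_0),\rho\bigr)$. Let $R_k$ be the set of rollouts whose step-$k$ pair $(s_k,a_k)$ lies in $B \cap (S\times A)$. On $R_k$ the gate compares truth and model at a point where their sup-norm disagreement exceeds $\varepsilon$, so every rollout in $R_k$ reveals the disagreement. By monotonicity, the reveal probability is at least $P_\rho(R_k)$. Since the step-$k$ law has density at least $c$ on $B$, integrating that density over the intersected ball gives
\[
P_\rho(R_k) \;\geq\; c\,\mathrm{vol}\bigl(B \cap (S\times A)\bigr).
\]
Under the interiority hypothesis the intersection is the whole sup-cube of side $2\rho$, so $P_\rho(R_k) \geq c(2\rho)^{d+m} = c\,((\eta-\varepsilon)/L)^{d+m} = q$. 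Without interiority, the same integral leaves $c\,\mathrm{vol}(B\cap(S\times A))$. For a box domain, Corollary~\ref{cor:kappabudget} bounds this volume below by $2^{-k'}(2\rho)^{d+m}$, where $k'$ is the number of clipped coordinates, and hence by $2^{-(d+m)}(2\rho)^{d+m}$. That corollary needs its scale condition $\rho \leq \min_i \ell_i$, which I would state explicitly as a hypothesis of this clause. For a general region I make no uniform claim; the intersected volume is simply what enters.

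\textbf{Step 2 (the gate).} Apply Proposition~\ref{prop:gatemiss} to the critical event $R_k$ with $r = P_\rho(R_k) \geq q$. The probability that all $N$ i.i.d.\ rollouts avoid $R_k$ is exactly $(1-r)^N \leq (1-q)^N$. The event ``no rollout reveals a disagreement'' is contained in the event ``all rollouts avoid $R_k$'', so the miss probability is at most $(1-q)^N$.

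\textbf{Step 3 (what hiding costs).} Suppose the miss probability exceeds $\delta$. Then
\[
\delta \;<\; (1-q)^N \;\leq\; e^{-qN},
\]
which gives $q < \ln(1/\delta)/N$. Substituting $q = c\,((\eta-\varepsilon)/L)^{d+m}$ gives
\[
\Bigl(\frac{\eta-\varepsilon}{L}\Bigr)^{d+m} \;<\; \frac{\ln(1/\delta)}{cN},
\]
and taking the $(d+m)$-th root and rearranging yields the displayed lower bound on $L$ (as a strict inequality, which implies the stated $\geq$). This uses the interior form of $q$; the boundary version only changes $c$ by the $\kappa$ factor.

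\textbf{Main obstacle.} I expect the main obstacle to be the measure-theoretic bookkeeping in Step 1, not any of the inequalities. First, the density hypothesis is on the step-$k$ marginal of $(s_k,a_k)$, and I need that the gate actually evaluates the model at that input, so that a hit is a reveal. This holds because the gate checks every visited transition. Second, the hypothesis is only a density bound at a fixed step $k$. A rollout may hit the ball at other steps too, but using a single step only lowers the bound, which is why the statement fixes $k$ rather than summing over steps.
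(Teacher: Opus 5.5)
Your proposal is correct and follows the paper's proof essentially step for step. Both use the ball of Proposition~\ref{prop:lipschitz} inside $E_\varepsilon$, the density integral giving a per-rollout reveal probability of at least $q$, and the exact miss law of Proposition~\ref{prop:gatemiss}, followed by inverting $(1-q)^N > \delta$; the only differences are that the paper inverts via $q < 1-\delta^{1/N} \leq \ln(1/\delta)/N$ where you use the equivalent bound $(1-q)^N \leq e^{-qN}$, and that your explicit scale condition $\rho \leq \min_i \ell_i$ for the box clause makes precise an assumption the statement leaves implicit.
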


\begin{proof}
$B \subseteq E_\varepsilon$ by Proposition~\ref{prop:lipschitz}, and the step-$k$ visitation of $B$ has probability at least $c \cdot \mathrm{vol}(B) = c(2\rho)^{d+m}$ by the density hypothesis; a rollout that visits $E_\varepsilon$ at step $k$ reveals a disagreement, so the per-rollout reveal probability is at least $q$ and the critical event of Proposition~\ref{prop:gatemiss} has $r \geq q$, giving $P(\text{miss}) = (1-r)^N \leq (1-q)^N$. For the converse, $(1-q)^N > \delta$ requires $q < 1 - \delta^{1/N} \leq \ln(1/\delta)/N$; substituting $q = c((\eta-\varepsilon)/L)^{d+m}$ and solving for $L$ gives the display.
\end{proof}

Three readings. (i) It is the \emph{measure} counterpart of Proposition~\ref{prop:lipschitz}: smoothness does not merely forbid exact localization, it forces a detection rate. (ii) The Lipschitz constant needed to hide a fixed error grows only like $N^{1/(d+m)}$, so whenever $cN/\ln(1/\delta) > 1$ --- comfortably true here, where it is $48$ --- raising the dimension lowers that constant and hiding gets \emph{easier} with dimension (below $1$ the exponent flips the comparison, so the reading is not dimension-free) --- the same dimensional loosening paper~3 studies as a rarity knob. (At fixed $c$ is the honest qualifier: $c$ carries dimensions of inverse volume, so the dimension-free way to say it is Remark~\ref{rem:densitygeneral}'s volume ratio --- the ball's share of the reachable set shrinks with dimension at fixed radius.) (iii) The hypothesis is where the work is: verifying a visitation-density lower bound is instrument-specific. Our gate's step-$0$ law is supported on a lower-dimensional set (the initial-state distribution has $v = 0$ exactly), so nothing can be claimed there; from step $1$ on the actions supply the missing direction and $c$ is derivable, which is what Corollary~\ref{cor:cartdensity} does. Alongside the derivation we have the direct measurement: the smooth bump arm of Section~\ref{sec:axes} has reveal-rarity $0.18$ against the hard wall's $0.14$ at comparable amplitude --- the smooth error is, if anything, \emph{more} detectable, exactly as this proposition's direction requires, and its harmlessness comes from $\mathrm{play\_cost} \approx 0$ rather than from hiding.

\begin{remark}[the constant in general: a volume ratio]
\label{rem:densitygeneral}
Proposition~\ref{prop:detectrate}'s $c$ is not an instrument-specific fudge; it is the gate's step-$k$ visitation density, which for a gate policy with box-uniform initial states and actions is available in closed form by change of variables. If the step-$k$ law of $(s,a)$ is uniform on a set $U \subseteq S \times A$ --- as it is at $k=1$ whenever the one-step map is affine in the randomized coordinates --- then $c = 1/\mathrm{vol}(U)$ exactly, and --- provided the ball is contained in $U$, which is a separate condition from Proposition~\ref{prop:detectrate}'s interiority hypothesis (that one concerns $\partial(S\times A)$, this one $\partial U$) and is what makes the ratio a probability at all --- the proposition's conclusion reads
\[
  q \;\geq\; \frac{\mathrm{vol}(B)}{\mathrm{vol}(U)},
\]
\emph{the fraction of the gate's one-step reachable volume that the guaranteed disagreement ball occupies}. For non-uniform laws the same computation goes through the pushforward, but the extrema live on the \emph{source} space: with $F$ the one-step map, injective on the relevant set, the change of variables gives $p_U(u) = p_0(F^{-1}u)/|\det DF(F^{-1}u)|$ and hence $c = \inf_{F^{-1}(B)} p_0 \big/ \sup_{F^{-1}(B)} |\det DF|$. Without injectivity the density is a sum over preimages and this is only a lower bound (which is the direction we need). The uniform case is the one where numerator and denominator are constant. The dimensional reading of Proposition~\ref{prop:detectrate} is then geometric rather than analytic: hiding is easy exactly when the ball is a small fraction of the reachable volume, and raising $d+m$ shrinks that fraction for fixed radius.
\end{remark}

\begin{corollary}[the constant, computed for this instrument]
\label{cor:cartdensity}
For the cart's gate at step $k=1$ the density hypothesis is not an assumption: the gate policy gives $v_1 = \mathrm{gain}\cdot dt\cdot a_0$ with $a_0 \sim U(-a_{\max}, a_{\max})$, $x_1 = x_0 + dt\,v_1$ with $x_0 \sim U(-\tfrac12,\tfrac12)$, and $a_1 \sim U(-a_{\max},a_{\max})$ independent, so on $\{|v_1| < \mathrm{gain}\,dt\,a_{\max}\} \cap \{|x_1 - dt\,v_1| < \tfrac12\} \cap \{|a_1| < a_{\max}\}$ the joint $(x_1,v_1,a_1)$ density is \emph{constant} --- not a product of marginals, since $x_1 = x_0 + dt\,v_1$ makes $x_1$ and $v_1$ dependent, but constant on that sheared set, with the value the three normalizations give because the shear has unit Jacobian:
\[
  c \;=\; \frac{1}{2\,\mathrm{gain}\,dt\,a_{\max}} \cdot 1 \cdot \frac{1}{2 a_{\max}} \;=\; \frac{5}{6}
  \qquad\text{at } \mathrm{gain} = 3,\ dt = 0.1,\ a_{\max} = 1
\]
($d + m = 3$ here). Equivalently, by Remark~\ref{rem:densitygeneral}, the step-1 law is uniform on a set of volume $\mathrm{vol}(U) = (2\,\mathrm{gain}\,dt\,a_{\max}) \cdot 1 \cdot (2a_{\max}) = 1.2$ and $c = 1/1.2 = 5/6$ --- the reveal probability is literally the ball's share of the one-step reachable volume. Monte Carlo on an interior box agrees to $1.3\%$ (\texttt{scripts/\allowbreak gate\_\allowbreak density\_\allowbreak constant.py}). Proposition~\ref{prop:detectrate} is therefore quantitative on this instrument: hiding an $\eta$-sized disagreement \emph{centred where this density holds} from the deployed gate ($\varepsilon = 0.01$, $N = 40$) with probability above $\tfrac12$ requires
\[
  L \;\geq\; (\eta - \varepsilon)\left(\frac{5/6}{\ln 2 / 40}\right)^{1/3} = 3.636\,(\eta - \varepsilon),
\]
i.e.\ $L \geq 1.78$ for $\eta = 0.5$ and $L \geq 3.60$ for $\eta = 1$, against the plant's own sup-metric Lipschitz constant of $1.27$.
\end{corollary}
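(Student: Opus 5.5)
The statement to prove is Corollary~\ref{cor:cartdensity}. The plan is to compute the step-$1$ law of $(x_1,v_1,a_1)$ explicitly by a change of variables from the source randomness $(x_0,a_0,a_1)$, and then substitute the resulting constant into Proposition~\ref{prop:detectrate}.

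\textbf{Step one: the map from source to step-$1$ state.} Under the gate policy the sources are independent, with $x_0\sim U(-\tfrac12,\tfrac12)$, $a_0,a_1\sim U(-a_{\max},a_{\max})$, and $v_0=0$. One semi-implicit step gives $v_1=(\mathrm{gain}\,a_0-\mathrm{drag}\cdot 0)\,dt=\mathrm{gain}\,dt\,a_0$, hence $x_1=x_0+dt\,v_1$. The map $F:(x_0,a_0,a_1)\mapsto(x_1,v_1,a_1)$ is affine and injective, with triangular Jacobian. Its diagonal entries are $1$, $\mathrm{gain}\,dt$ and $1$, so $|\det DF|=\mathrm{gain}\,dt$.

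\textbf{Step two: the density is constant.} Apply the pushforward formula of Remark~\ref{rem:densitygeneral}, $p_U=p_0\circ F^{-1}/|\det DF|$. The source density $p_0$ is constant, equal to $1/(1\cdot 2a_{\max}\cdot 2a_{\max})$, on the box. Therefore $p_U$ is constant on the image $U=F(\text{box})$, which is exactly the sheared set named in the statement. The constant is
\[
p_U=\frac{1}{4a_{\max}^2\,\mathrm{gain}\,dt}=\frac{1}{2\,\mathrm{gain}\,dt\,a_{\max}}\cdot 1\cdot\frac{1}{2a_{\max}} .
\]
This is the claimed product. It does not factor into marginals because of the shear $x_1=x_0+dt\,v_1$, but the shear contributes unit Jacobian, so the product form is still correct. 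Numerically, $p_U=1/(4\cdot0.3)=5/6$. Equivalently, $\mathrm{vol}(U)=|\det DF|\cdot\mathrm{vol}(\text{box})=0.3\cdot 4=1.2$, which gives $c=1/\mathrm{vol}(U)$ as in the remark.

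\textbf{Step three: substitute into Proposition~\ref{prop:detectrate}.} Take $d+m=3$ and $k=1$. The hypothesis that the density is bounded below by $c$ on $B$ holds whenever $B\subseteq U$; this is the scope condition "centred where this density holds" in the statement. The hiding bound is
\[
L\ge(\eta-\varepsilon)\bigl(cN/\ln(1/\delta)\bigr)^{1/3}.
\]
With $c=5/6$, $N=40$ and $\delta=\tfrac12$, we get $cN/\ln 2\approx 48.09$, whose cube root is $\approx 3.636$. Then $\eta=0.5$ gives $L\ge 3.636\cdot0.49\approx1.78$, and $\eta=1$ gives $L\ge 3.636\cdot0.99\approx3.60$. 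The plant's sup-metric constant of $1.27$ is quoted only for comparison and does not need proof here.

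\textbf{Main obstacle.} The one delicate point is scope rather than computation. The density bound holds only on $U$, and Proposition~\ref{prop:detectrate} needs it on the whole ball $B$. So the conclusion must be stated conditionally on $B\subseteq U$ together with the interiority condition, rather than for an arbitrary witness point. It would be worth recording that $U$'s narrowest extent, $2\,\mathrm{gain}\,dt\,a_{\max}=0.6$ in $v$, caps the admissible radius. That gives $\rho\le0.3$, i.e.\ $(\eta-\varepsilon)/L\le0.6$, and both quoted instances satisfy this with equality at the threshold. I would check that constraint explicitly in the proof rather than leave it implicit.
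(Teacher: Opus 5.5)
Your proposal is correct and is essentially the paper's argument: the paper splits your triangular Jacobian into the scaling $a_0 \mapsto v_1 = \mathrm{gain}\,dt\,a_0$ (which gives the normalization $1/(2\,\mathrm{gain}\,dt\,a_{\max})$) followed by the unit-Jacobian shear $x_0 \mapsto x_1 = x_0 + dt\,v_1$, which is the same computation, and then substitutes $c = 5/6$ into Proposition~\ref{prop:detectrate} exactly as you do. One slip in your closing aside: at the threshold $L = 3.636\,(\eta-\varepsilon)$ the radius is $\rho = 1/(2\cdot 3.636) \approx 0.1375$ in both instances, so $(\eta-\varepsilon)/L \approx 0.275$ lies strictly inside your cap of $0.6$ rather than at equality; the containment $B \subseteq U$ therefore holds with room to spare, and it reduces to requiring the centre to sit at sup-distance at least $\rho$ from $\partial U$, which the paper notes is true on about a third of $U$ by volume.
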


Three scope conditions decide what this corollary does and does not say.

The first is the proposition's own interiority hypothesis. Because $L$ scales with $\eta - \varepsilon$ in this instantiation, the ball's radius is the same $\rho = 1/(2 \cdot 3.636) = 0.1375$ in every row, and the ball is a full cube only for centres at sup-distance $\rho$ from $\partial U$ --- which is $33.9\%$ of $U$'s volume. On the remaining two thirds the clipped-ball form applies and $q$ is smaller, by up to $2^{3}$ (Corollary~\ref{cor:kappabudget} at $k$ up to $d+m = 3$ clipped coordinates).

Second, since $f$ is the plant, $L = \max(\mathrm{Lip} f, \mathrm{Lip}\hat f) \geq 1.27$ always, so the constraint has content only above $\eta \approx 1.27/3.636 + \varepsilon \approx 0.36$: rows below that are satisfied automatically and prove nothing. At $\eta = 0.5$ the requirement $L \geq 1.78$ does bite --- a pair no rougher than $1.4\times$ the plant cannot hide an error of that size here.

Third, and most restrictive: \emph{this route says nothing about the wall}. The density $c = 5/6$ is supported on the one-step reachable set, i.e.\ $|x_1| < 0.53$, while the mode sits at $x_\mathrm{wall} \in [2,10]$, where the step-1 visitation density is exactly zero and Proposition~\ref{prop:detectrate} is vacuous. Nor does the guaranteed ball bridge the gap: at $L = 1.27$ its radius is $1.65$, and reaching $|x| \leq 0.53$ from $x = 8$ would need an $L$ below the plant's own. So the supported statement about the hard mode is not a theorem from this corollary but the measurement of Section~\ref{sec:axes} --- the wall is \emph{detected} at rate $0.14$, and the smooth bump at $0.18$, so the smooth error is if anything the more visible of the two, exactly as this proposition's direction requires --- together with Proposition~\ref{prop:lipschitz}'s qualitative obstruction, which needs no density and therefore holds at the wall. The smooth arms' harmlessness comes from play\_cost, not from invisibility. That the quantitative route reaches only a neighbourhood of the origin is not a technical gap to be papered over: it is the same fact the rest of the paper is about, that a sampling gate's guarantees live where it looks.

\begin{remark}[metric, not measure]
The ball in Proposition~\ref{prop:lipschitz} is metric, and its \emph{probability} under the gate's visitation measure can still be small --- smoothness bounds how spatially concentrated an error can be, not how often the gate visits it. The proposition removes the \emph{exact} localization premise for smooth pairs; the $(1-r)^N$ mechanism itself is representation-independent and applies to any critical region of small visitation measure.
\end{remark}

\section{A second planner family: play\_cost is planner-dependent}
\label{sup:cem}
\label{sec:cem}

Proposition~\ref{prop:playcost} bounds play cost by the planner's query-hit probability on the disagreement region: a blind model can change behavior only to the extent that the planner queries where it is wrong. That is an upper bound and therefore only one implication --- low query reach forces low play cost; high query reach permits but does not force high play cost --- so ``two branches'' below names two \emph{measured} regimes that sit at the two ends of the bound, not a dichotomy the proposition predicts. Random-shooting MPC's constant candidates reach the distant phantom plateau in imagination and produce play\_cost ${\approx}1$. We repeated the full 11-knob grid with a second base planner, the cross-entropy method (CEM; \texttt{scripts/continuous\_cem.py}; horizon 40, 5 iterations, 64 samples, elite fraction 0.125, minimum standard deviation 0.05; one fixed setting across both instruments). The crossing columns in Table~\ref{tab:cem} are the fraction of sampled imagined trajectories that cross the omitted boundary, measured for BOTH planners with one plan from the same paired initial state per episode seed (episode-accumulated CEM fractions, which tell the same story, are recorded in the results JSON).

\begin{table}[ht]
\centering
\small
\begin{tabular}{lrrrrrr}
\toprule
instrument & knob & pc MPC & pc CEM & contact CEM & crossing CEM & crossing MPC \\
\midrule
cart & 2.0 & 1.031 & $0$ (exact) & $<0.16$ & 0.0309 & 0.3865 \\
cart & 4.0 & 1.031 & $0$ (exact) & $<0.16$ & 0.0055 & 0.2453 \\
cart & 6.0 & 1.031 & $0$ (exact) & $<0.16$ & 0.0003 & 0.1483 \\
cart & 8.0 & 1.030 & $0$ (exact) & $<0.16$ & 1.4e-05\rlap{$^\dagger$} & 0.0773 \\
cart & 10.0 & 0.977 & $0$ (exact) & $<0.16$ & $<2.3e-06$ & 0.0369 \\
pendulum & 0.8 & 1.002 & 0.009 & 0.70 & 0.1150 & 0.6392 \\
pendulum & 1.0 & 1.002 & 0.025 & 0.25 & 0.0483 & 0.5530 \\
pendulum & 1.2 & 1.000 & $-0.011$ & $<0.16$ & 0.0164 & 0.4672 \\
pendulum & 1.4 & 0.997 & $-0.021$ & $<0.16$ & 0.0053 & 0.3842 \\
pendulum & 1.6 & 0.990 & $-0.021$ & $<0.16$ & 0.0013 & 0.3039 \\
pendulum & 2.0 & 0.942 & $5.0{\times}10^{-4}$ & $<0.16$ & 0.0002 & 0.2158 \\
\bottomrule
\end{tabular}
\caption{The same gate-accepted, mode-blind models under two base planner families (20 paired episodes/row). ``pc'' is blind-model play\_cost within that planner family; crossing is the sampled imagined-boundary-crossing proxy for query-hit mass, one plan per episode seed from paired initial states for both planners. CEM stays near zero play cost and below MPC's crossing fraction on every row.}
\label{tab:cem}
\end{table}

CEM's blind-model play cost lies in $[-0.0213,0.0248]$ on every row --- and the seed-paired 95\% t-interval includes zero on all 11 rows --- while its imagined crossing fraction is strictly below MPC's throughout. Two of those eleven intervals deserve their asterisk. On the five cart rows every per-seed difference is $0.0$ exactly, so the interval is degenerate $[0,0]$ and ``includes zero'' is a restatement of bit-identity rather than an inference; we report it as bit-identity in the prose for that reason. On the pendulum rows the standard deviation is inflated roughly fortyfold by one seed (index 16, differences $-0.427$ to $-0.423$ across three knobs), so the interval is a real interval but its width is that outlier's, not a noise scale. On the cart, truth- and blind-model CEM returns are identical and contact is zero everywhere. The nearest pendulum stops are a real qualification: CEM contacts $\theta_\mathrm{stop}=0.8$ in 70\% of episodes and $1.0$ in 25\%, but does not enter MPC's pinned, below-random regime; contact is zero from $\theta_\mathrm{stop}\geq1.2$. Contact is therefore not itself exploitation. One caveat before crediting Proposition~\ref{prop:playcost} with this. The bound is in terms of $q_{\mathrm{hit}}(E)$, the probability that \emph{at least one} query in an episode lands in the disagreement region, whereas the crossing column is the \emph{fraction} of imagined candidates that cross. With ${\sim}200$ candidates $\times$ 40 imagined steps per replan, any nonzero crossing fraction makes $q_{\mathrm{hit}}(E) \approx 1$ and the bound $\lesssim 1$ --- true but vacuous. The two cart rows that printed crossing $0.0000$ invited a stronger reading --- that $q_{\mathrm{hit}} = 0$ there \emph{forces} $\mathrm{play\_cost} = 0$ --- and re-measuring them refutes it for one of the two. Those zeros were censored at $6400$ sampled trajectories; at $200\times$ that sample (\texttt{scripts/\allowbreak cem\_\allowbreak crossing\_\allowbreak bound.py}, $1.28$ million imagined trajectories per row) the row at $x_\mathrm{wall} = 8$ produces \textbf{18} crossings, a per-trajectory rate of $1.4\times10^{-5}$, and the event that the episode's first plan crosses gives a \emph{lower} confidence bound $q_{\mathrm{hit}}(E) \geq 0.0029$. So $q_{\mathrm{hit}}$ is not zero on that row and nothing is forced. At $x_\mathrm{wall} = 10$ the zero survives the $200\times$ resampling ($0$ of $1.28$ million, per-trajectory rate $<2.3\times10^{-6}$), but it remains a censored zero and the object the data support is an upper bound: measured directly at episode scope, $q_{\mathrm{hit}}(E) \leq 0.058$ at $95\%$, so Proposition~\ref{prop:playcost} caps $|\mathrm{play\_cost}|$ at $0.058$ times the normalizer ratio rather than at $0$. The correct statement is therefore an inequality on both rows and on the other nine: the bound gives a small nonzero ceiling that the measured play cost respects, and the planner-dependence result is empirical. The crossing fraction is in any case a \emph{proxy} --- it counts candidate trajectories, while the bound is about at least one query per episode --- and with ${\sim}200$ candidates $\times$ 40 imagined steps per replan any nonzero fraction pushes the episode-level probability towards 1, which is why the direct episode-scope measurement is what we quote. What survives in either reading is Section~\ref{sec:lessons}'s first lesson --- if search does not discover the phantom, it cannot optimize toward it.

Two caveats prevent the wrong conclusion. CEM's pendulum truth return varies from 15.36 to 16.46 (versus MPC's 20.08), consistent with local optima; the comparison is blind-CEM against truth-CEM, not a claim that CEM is globally optimal. And limited reach is not knowledge or mitigation: a planner that misses a phantom distant reward can also miss a real one. This is one fixed CEM configuration, not a hyperparameter sweep.

\textbf{PatchField2D: CEM shows no 2D competence gap.} On the 4D bi-modal instrument (\texttt{scripts/continuous\_cem\_patch2d.py}) blind CEM is again \emph{not} exploited: play\_cost is $-0.022 / {+}0.017 / {+}0.020$ at knobs $(2,6)/(3,7)/(4,8)$ with the seed-paired 95\% $t$-interval including zero on all three rows, and CEM's imagined crossing fraction is below MPC's on every row ($0.070 < 0.208$, $0.027 < 0.149$, $0.009 < 0.094$). Here CEM is \emph{competent in aggregate} --- no 2D competence gap appears --- but with per-seed variance (one truth-CEM episode at $\approx 0.97$), and its blind contact rate is a uniform $0.05$ (nonzero, unlike the cart's $0.00$, and without positive play cost). The same low-query-reach branch of Proposition~\ref{prop:playcost}, one more instrument.

\section{Planner-side mitigation: distrust-region replanning}
\label{sup:mitigation}

The exploitation measured in Sections~\ref{sec:mechanism} and~\ref{sec:axes} is planner-mediated, not model-mediated, and a planner-side fix collapses it without touching the model or the gate --- this does \textbf{not} contradict the danger law (the gate still accepts a wrong model; Proposition~\ref{prop:ident} is untouched). Like those sections, this one uses the hand-written instruments and blind models, not LLM synthesis. Distrust-region replanning (\texttt{src/cwm/continuous/mitigation.py}, strictly additive) compares the model's prediction against the observed transition after every real step ($\mathrm{tol} = 10^{-6}$); a disagreement records the \emph{position of the model's refuted prediction} --- not the pre-state --- as a one-sided fence, since false predictions always lie on or beyond the mode boundary. While scoring a candidate rollout, the first imagined step whose position interval crosses a fence's $\varepsilon$-band ($\varepsilon = 0.25$ cart, $\varepsilon = 0.1$ pendulum) truncates the rollout --- reward kept, everything downstream dropped --- which makes the fence leap-proof at any imagined speed; candidates are then ranked by (truncated return, distance to the nearest fence), which structurally prefers the real side \emph{in one dimension}. The qualifier is load-bearing: the tie-break is an unsigned distance, so in 1D, where the fence lies beyond the boundary and the real side is the only other direction, maximizing it flees the mode --- but in 2D it is symmetric between rounding the patch and plunging through it, and Section~\ref{sec:patch2d-mitigation} measures what that costs. With zero violations this is bit-identical to plain MPC by construction (tested bitwise) --- mitigation costs nothing when the model is right. Undodgeability is a design requirement rather than an observation, because the argmax planner acts as an adversary against any fence: it dodges a fence tied to the pre-state, and it dodges a point fence in full state space by probing crossing velocities --- a position-band fence with rollout truncation leaves it no crossing to probe. The collapse is scoped to \textbf{hard-boundary hybrid modes}: the one-sided fence works precisely because a refuted prediction lies on or beyond the mode boundary, so fencing its far side cannot cut off any real trajectory. This structural fact is what these wall/stop instruments supply; less structured failure modes (soft or moving boundaries, errors not confined to a hard stop) do not obviously admit such a fence and are untested --- this is a hard-boundary mitigation, not yet a general planner-side one.

\begin{table}[ht]
\centering
\small
\begin{tabular}{lrrr}
\toprule
knob & pc\_blind & pc\_mit & first-contact step \\
\midrule
\multicolumn{4}{l}{\emph{cart} ($x_\mathrm{wall}$)} \\
2  & 1.031 & 0.290 & 11.6 \\
4  & 1.031 & 0.446 & 16.9 \\
6  & 1.031 & 0.578 & 21.3 \\
8  & 1.030 & 0.699 & 25.1 \\
10 & 0.977 & 0.806 & 28.7 \\
\multicolumn{4}{l}{\emph{pendulum} ($\theta_\mathrm{stop}$)} \\
0.8 & 1.002 & 0.113 & 7.0 \\
1.0 & 1.002 & 0.129 & 8.1 \\
1.2 & 1.000 & 0.143 & 9.0 \\
1.4 & 0.997 & 0.160 & 10.0 \\
1.6 & 0.990 & 0.177 & 11.0 \\
2.0 & 0.942 & 0.212 & 13.0 \\
\bottomrule
\end{tabular}
\caption{Mitigation sweep (20 episodes/knob, both instruments). pc\_blind / pc\_mit = play\_cost of the blind planner under plain MPC / under distrust-region mitigation. Full 11-row table with contact rates and violation counts in \texttt{docs/EXPERIMENTS.md}.}
\label{tab:mitigation}
\end{table}

The collapse is large at every knob (Table~\ref{tab:mitigation}): \texttt{pc\_blind} stays pinned at $\approx 0.94$--$1.03$ everywhere (established in Section~\ref{sec:mechanism}, Tables~\ref{tab:danger} and~\ref{tab:pendulum}), while \texttt{pc\_mit} never exceeds $0.81$. Exactly one violation suffices to fence the mode on \emph{every} one of the 11 rows --- the mitigated planner must touch the mode once, which is identifiability operationalized: you cannot avoid what you have never seen. The residual \texttt{pc\_mit} is the cost of that unavoidable first contact, and it grows with the lure distance, read off the first-contact step: cart $0.290 \to 0.806$ as first contact goes $11.6 \to 28.7$ of 80 steps (knob $2 \to 10$); pendulum $0.113 \to 0.212$ as first contact goes $7.0 \to 13.0$ (knob $0.8 \to 2.0$). The cart knob $=10$ row is the least favorable in the sweep (\texttt{pc\_mit} $0.806$, close to blind's $0.977$) because the transient consumes most of the horizon --- but the blind planner stays pinned \emph{forever} there ($J$ $0.94$ of $J_\mathrm{truth}$ $17.77$) while the mitigated planner escapes and recovers most of the horizon ($J$ $3.88$): the normalized cost looks modest, the actual return does not.

\paragraph{Why exactly one violation, and what sets the count in general.} The single-violation sufficiency is not a lucky measurement either --- it is a covering number equal to one.

\begin{proposition}[the mitigation's new-coverage violations are bounded by a packing number]
\label{prop:fencecover}
Run distrust-region replanning with band $\varepsilon$. Let $F$ be the \emph{fence locus} --- the set of points at which a violation can record a fence, i.e.\ the possible refuted predictions --- and let $B$ be the \emph{entry barrier}, the set of points at which an imagined trajectory can enter the mode region. Two conclusions, with different hypotheses. (i) With no hypothesis at all, the number of violations that place a fence farther than $\varepsilon$ from every fence already placed is at most $N_{\mathrm{pack}}(F,\varepsilon)$, the $\varepsilon$-packing number of $F$, and $N_{\mathrm{pack}}(F,\varepsilon) \leq N_{\mathrm{cov}}(F,\varepsilon/2)$. (ii) If moreover $B \subseteq F \oplus B_\varepsilon$ --- every barrier point within $\varepsilon$ of some fenceable point --- then once the accumulated fences $\varepsilon$-cover $B$, no further violation occurs at all.
\end{proposition}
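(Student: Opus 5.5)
Part (i) is a pure packing argument, and I will prove it by watching the sequence of fence positions the mitigation actually records. List the violations in the order they occur, and keep only the ones that place a fence farther than $\varepsilon$ from every fence placed before it; call them $p_1, p_2, \dots$. Each $p_j$ is a refuted prediction, so it lies in $F$ by the definition of the fence locus. For $i<j$, the fence $p_i$ was already in place when $p_j$ was recorded, so the selection rule gives $\|p_j - p_i\| > \varepsilon$. The selected points are therefore pairwise more than $\varepsilon$ apart inside $F$, which makes them an $\varepsilon$-separated subset of $F$. Their number is then at most the largest cardinality of such a set, namely $N_{\mathrm{pack}}(F,\varepsilon)$. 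I will fix the packing convention to match: a set of points of $F$ pairwise at distance greater than $\varepsilon$. No assumption on the dynamics, the planner or the ordering is used, which is what ``no hypothesis at all'' requires.

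The comparison $N_{\mathrm{pack}}(F,\varepsilon) \le N_{\mathrm{cov}}(F,\varepsilon/2)$ is the standard pigeonhole step. Take any cover of $F$ by sets of diameter at most $\varepsilon$, such as balls of radius $\varepsilon/2$. Two points of an $\varepsilon$-separated set cannot share a member of that cover, so the packing injects into the cover. The one thing I will check is the closed/open convention: a closed ball of radius $\varepsilon/2$ has diameter $\le\varepsilon$, so strict separation $>\varepsilon$ is exactly what rules out a shared ball. This mirrors the packing-versus-covering remark made after Proposition~\ref{prop:coverage}.

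For part (ii), the argument goes through the truncation rule, and I will not use the density of $F$ at all, since $B \subseteq F\oplus B_\varepsilon$ only guarantees that the barrier is fenceable in principle. A violation requires the executed transition to be one the model mispredicts. The model is exact off the mode, so that transition must enter the mode region, and hence the real trajectory crosses $B$ at some point $b$. By the planner's construction, the chosen candidate was scored on an imagined rollout. That rollout coincides with the real step up to the entry, because the models agree off the mode, so the imagined trajectory also reaches $b$. Once the fences $\varepsilon$-cover $B$, $b$ lies within $\varepsilon$ of some fence. The imagined step's position interval then crosses that fence's $\varepsilon$-band, so the rollout is truncated there and nothing beyond it contributes.

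The step I expect to be the main obstacle is turning this truncation into ``no further violation'', because truncation keeps the reward earned up to the band and does not forbid the first action. I will try to show that an action whose one-step landing enters the mode crosses the band within that same step, since the band is leap-proof by construction. Such an action's truncated return then ties or loses to real-side candidates, and the unsigned-distance tie-break rejects it. If that ranking argument does not close in general, I will state (ii) as ``no violation is ever recorded at a point not already $\varepsilon$-covered'', which the leap-proof band does give directly, and read the 1D count of one off part (i) with $N_{\mathrm{pack}}(F,\varepsilon)=1$ on a point-like barrier.
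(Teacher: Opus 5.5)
Part (i) is correct and is exactly the paper's argument: the selected fences lie in $F$ and are pairwise more than $\varepsilon$ apart, and a ball of radius $\varepsilon/2$ has diameter $\varepsilon$, so it contains at most one of them. For (ii) you reach the same intermediate fact the paper uses: once the fences $\varepsilon$-cover $B$, every imagined step that enters the mode is truncated. Route this through the chosen candidate's first imagined segment, from the current state $s$ to the refuted prediction $p$, rather than through the real trajectory, which under PatchField2D's freeze semantics never crosses $B$ at all.

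The genuine gap is the step you flag yourself, from ``every mode-entering imagined rollout is truncated'' to ``no violation occurs''. The paper's proof does not derive it either. It asserts that because no candidate can score the phantom, no further violation occurs. That treats every violation as the execution of a phantom-scoring plan, which is a premise about the planner and not a consequence of truncation.

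Your proposed closure through the tie-break fails in general, and the paper's own 2D census (Section~\ref{sec:patch2d-mitigation}) shows how. When a fence lies within $\varepsilon$ of the current position, every candidate's first segment passes within $\varepsilon$ of it, so all candidates are truncated at the first step and tie. The unsigned distance-to-fence tie-break is then free to select the plunging action, which re-violates; one episode records $24$ consecutive fences at one position. Adding fences until $B$ is covered does not rule that configuration out. So (ii) needs an explicit extra hypothesis on the planner. One option is that some untruncated candidate strictly outscores every truncated one. Another is a signed tie-break pointing away from the mode, which one dimension supplies and two do not.

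Your fallbacks do not rescue the statement. The leap-proof band constrains where an imagined segment crosses $B$, not where the refuted prediction lands. So if ``point'' means the recorded fence, ``no violation is recorded at a point not already $\varepsilon$-covered'' does not follow. If it means the entry point, the claim is vacuous once $B$ is covered.

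Nor can the 1D count of one be read off (i). The bound is the packing number of $F$, not of $B$. On the cart, $F$ contains predictions overshooting the wall by anywhere from $0.17$ to $0.58$ against $\varepsilon = 0.25$, so $N_{\mathrm{pack}}(F,\varepsilon) \geq 2$. The paper explains the single violation by a different fact: one point beyond the wall disconnects the line from the phantom. It also notes that the band fails to contain $x_\mathrm{wall}$ in most episodes, so neither (i) nor (ii) is what produces the count.
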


\begin{proof}
Fences placed farther than $\varepsilon$ from all their predecessors are pairwise more than $\varepsilon$ apart, so they form an $\varepsilon$-packing of $F$ and there are at most $N_{\mathrm{pack}}(F,\varepsilon)$ of them; for the comparison, a $(\varepsilon/2)$-ball has sup-diameter $\varepsilon$ and so contains at most one packing point, giving (i) with no hypothesis used. For (ii): once the fences $\varepsilon$-cover $B$, every imagined trajectory entering the mode region crosses $B$ within $\varepsilon$ of a fence and is truncated, so no candidate scores the phantom and no further violation occurs --- which is where the hypothesis $B \subseteq F \oplus B_\varepsilon$ is used, since otherwise no set of fences on $F$ can cover $B$.
\end{proof}

Two features of the statement are weaker than they may appear, and both are forced by counterexample. The bound is a \emph{packing} number, not a covering number. A covering number counts an \emph{optimal} cover, and the planner is not optimal --- it is explicitly an adversary against the fence --- so it can place every point of a maximal packing before the balls start pre-empting one another. On the unit circle at $\varepsilon = 0.5$ that is $12$ points, pairwise chord distance $0.5176 > \varepsilon$, and a grid sweep confirms every one of the $12$ adds coverage the others had not: $12$ against a covering number of $7$ (\texttt{scripts/\allowbreak circle\_\allowbreak covering\_\allowbreak number.py}). And the count is of \emph{new-coverage} fences only. Duplicates --- fences within $\varepsilon$ of an existing one --- are not bounded by this argument at all, and in our own data they dominate the tail: one episode records $28$ violations of which $24$ are at an identical point.

The two instrument families instantiate the two extremes of that bound.
\begin{itemize}
\item \textbf{1D: the measured $1.00$ is separation, not covering.} A position or angular clamp has a single boundary point, and Table~\ref{tab:mitigation} measures a mean of exactly $1.00$ violations on all eleven rows. That is not the covering bound at $N_{\mathrm{cov}} = 1$ being tight, and checking the instrument says so (\texttt{scripts/\allowbreak fence\_\allowbreak separation\_\allowbreak census.py}): the fence is recorded at the model's refuted \emph{prediction}, which overshoots the wall by $0.17$ to $0.58$ against a band of $\varepsilon = 0.25$, so the band fails to contain $x_\mathrm{wall}$ in four of five cart episodes --- and the count is $1$ anyway. A hypothesis that fails while its conclusion holds is not the explanation.

  What is doing the work is that in one dimension a single point beyond the boundary \emph{disconnects} the agent from the phantom: every imagined path to the lure crosses it, so the segment test truncates all of them, wherever in the far region the fence landed. That mechanism has a signature the covering story does not --- insensitivity to $\varepsilon$ --- and it holds where we can measure it: sweeping $\varepsilon$ over a $20\times$ range on the pendulum ($0.1$ down to $0.005$) leaves the returns and violation counts \emph{bit-identical}, and on the cart they are identical from $0.25$ down to $0.05$. It is not unconditional: at $\varepsilon = 0.01$ two of four cart seeds record a second violation, the strip between the boundary and the fence having grown wide enough for a real contact that no imagined segment crosses. So the supported 1D statement is a separation fact with a measured range of validity, not a covering number of $1$. The topological contrast with 2D is then the real one: removing one arc leaves a circle connected, so no single fence can separate a disc patch's inside from its outside, and the cut number jumps from $1$ to $2$ before any metric question arises.
\item \textbf{2D: the bounded quantity is the \emph{distinct} fence count, and it is a quarter of the budget.} A disc patch of radius $R$ has its fence locus inside the disc and its entry barrier on $\partial\,\mathrm{disc}$. A fence at a boundary point $\varepsilon$-covers an arc of angular width $4\arcsin(\varepsilon/2R)$, so a circle's packing number at $R = 1$, $\varepsilon = 0.5$ is $\mathbf{12}$ and its covering number is $7$ (both brute-force verified in \texttt{scripts/\allowbreak circle\_\allowbreak covering\_\allowbreak number.py}, which also exhibits the 12-point sequence and checks that one fewer ball does not cover). PatchField2D is \emph{bi-modal}, so the budget for an episode that touches both patches is $24$, not one circle's $12$.

  Against that budget, what the proposition bounds is the number of \emph{new-coverage} fences, and measuring it per episode (\texttt{scripts/\allowbreak fence\_\allowbreak separation\_\allowbreak census.py}) gives the comparison the proposition asks for. The distinct fence count never exceeds $2 / 5 / 6$ at knobs $(2,6)/(3,7)/(4,8)$ --- at most a quarter of the $24$-fence budget. The raw violation counts are much larger and are dominated by \emph{duplicates}: the two worst episodes record $28$ violations each while placing at most $6$ distinct fences, and the per-episode distribution is heavily skewed (medians $1 / 1 / 2$ against means $1.05 / 2.65 / 4.25$). A $20$-episode mean and a per-episode bound are different quantities; the bound is per episode.

  Measured directly as the angular spread of fence bearings per patch per episode, the median probed arc is $0\% / 0\% / 87\%$ --- zero at the two near knobs because the typical episode places a single fence, and large at the far knob because the few episodes that map the boundary map most of it: no smooth progression, and that is what the instrument does. (Dividing the violation count by the fence budget --- $17\% / 43\% / 68\%$ --- does not measure the probed arc; it assumes saturation and restates the count as a percentage.)

  (A metric packing number at \emph{fixed} radius is the relevant quantity because the algorithm fixes the band $\varepsilon$; it is a different question from the minimal \emph{good} cover of a circle by contractible arcs, which is $3$ with no radius constraint and is the nerve-theoretic object paper~3 builds on. The free-centre optimum of $6$ is a zero-slack tiling special to $\varepsilon = R/2$ and closed balls, so it should not be read as a robust constant.)
\end{itemize}
\begin{corollary}[the deployment \emph{fencing} cost is exponential in the boundary's dimension]
\label{cor:fencedim}
Proposition~\ref{prop:fencecover} is dimension-free --- its proof uses only packing and covering of $F$ --- so instantiating it needs only a packing number. If $F$ is a $p$-dimensional Lipschitz piece of diameter $D$, then $N_{\mathrm{pack}}(F,\varepsilon) \leq N_{\mathrm{cov}}(F,\varepsilon/2) = O\big((2D/\varepsilon)^{p}\big)$, so the number of \emph{pairwise new-coverage} fences the mitigation can accumulate grows exponentially in the \emph{boundary's} dimension $p$ (not the state's). That caps a count; it is not a completion guarantee. Nothing here shows the mitigation ever closes the phantom off, bounds the time to do so, or bounds the total number of violations --- duplicate fences are unlimited by this argument, and the 2D campaign observes an episode with $24$ of them (Section~\ref{sec:patch2d-mitigation}). The packing correction changes the constant, not the order, which is why the dimensional reading survives it intact. A $p=2$ mode surface would cost $O((2D/\varepsilon)^2)$ --- the precise sense in which the \emph{danger} is dimension-free while the \emph{fencing} is not.

Two caveats on instantiating the exponent. The constant hides the parametrization's Lipschitz constant and $p$-volume, which matters here because the whole quantitative dispute is over a constant. And two data points ($p = 0$ and $p = 1$) cannot distinguish $(2D/\varepsilon)^p$ from any other function agreeing at $p \in \{0,1\}$; the exponential form is the theorem's, not the measurement's.
\end{corollary}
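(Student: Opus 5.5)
The corollary is an instantiation of Proposition~\ref{prop:fencecover}(i), so the plan is to bound a packing number by a covering number and then bound the covering number of a $p$-dimensional Lipschitz piece. Part (i) already gives $N_{\mathrm{pack}}(F,\varepsilon) \leq N_{\mathrm{cov}}(F,\varepsilon/2)$ with no hypothesis: a sup-ball of radius $\varepsilon/2$ has diameter $\varepsilon$, so it contains at most one point of an $\varepsilon$-separated set. What remains is to show $N_{\mathrm{cov}}(F,\varepsilon/2) = O((2D/\varepsilon)^p)$ when $F$ is the image of a Lipschitz map $\gamma : Q \to \mathbb{R}^{n}$ from a $p$-dimensional parameter domain.

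I would take $F = \gamma(Q)$ with $Q = [0,1]^p$ after rescaling, and let $\Lambda$ be the Lipschitz constant of $\gamma$ with respect to the sup-metric on both sides. Absorbing the rescaling, $\Lambda$ is comparable to $D$. Partition $Q$ into $m^p$ subcubes of side $1/m$. Each subcube has sup-diameter $1/m$, so its image has sup-diameter at most $\Lambda/m$, and any point of the image serves as a centre of an $\varepsilon/2$-ball covering it once $\Lambda/m \leq \varepsilon/2$. Choosing $m = \lceil 2\Lambda/\varepsilon \rceil$ gives $N_{\mathrm{cov}}(F,\varepsilon/2) \leq \lceil 2\Lambda/\varepsilon\rceil^p$, which is $O((2D/\varepsilon)^p)$ for $\varepsilon \leq 2\Lambda$. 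For $\varepsilon$ larger than that, a single ball covers $F$, since its diameter is then at most $\varepsilon/2$. Combining this with (i) gives the stated exponential bound on the number of pairwise new-coverage fences. The dependence is on $p$, the boundary's dimension, because only the parameter domain of $F$ was partitioned, never the ambient state space.

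The main obstacle is what exactly ``$p$-dimensional Lipschitz piece of diameter $D$'' means, since the constant depends on it. If $F$ is only assumed to have diameter $D$ and Hausdorff dimension $p$, the covering bound can fail. So I would fix the hypothesis as a Lipschitz parametrization and state explicitly that the $O$ hides $\Lambda$ relative to $D$ together with the parametrization's $p$-volume, which matches the caveat the corollary records. For the circle ($p=1$), $\gamma(\theta) = c + R(\cos\theta,\sin\theta)$ on $[0,2\pi]$ recovers a bound of order $2\pi R/\varepsilon$. That order is consistent with the brute-force counts of $12$ and $7$.

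The remaining negative clauses need only remarks, not proof. The bound caps the count of new-coverage fences but guarantees no completion, time bound, or bound on duplicate fences. Each of these follows because (i) never invokes hypothesis (ii) and never constrains fences placed within $\varepsilon$ of earlier ones. The observed episode with duplicates serves as the counterexample.
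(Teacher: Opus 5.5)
Your proposal is correct and follows the paper's route: part (i) of Proposition~\ref{prop:fencecover} gives $N_{\mathrm{pack}}(F,\varepsilon)\le N_{\mathrm{cov}}(F,\varepsilon/2)$, and the paper only asserts the $O((2D/\varepsilon)^p)$ covering bound for a Lipschitz piece, which your subcube partition of the parameter domain proves in the form $\lceil 2\Lambda/\varepsilon\rceil^p$. The phrase ``$\Lambda$ is comparable to $D$'' should be tightened: in general only $D\le\Lambda$ holds, so the constant is $(2\Lambda/D)^p$ (for $\varepsilon\le 2\Lambda$), and that is exactly the dependence on the parametrization that your obstacle paragraph and the corollary's own caveat already record.
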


We use \emph{fencing} rather than \emph{repair} for this cost deliberately: repair in this paper means the LLM rewriting the model from data (Section~\ref{sec:patch2d-synthesis}, where it fails outright on 2D regions), and the fencing cost is a planner-side quantity that leaves the model untouched. The covering-number analogue that Section~\ref{sec:limitations} once listed as open now appears on \emph{both} sides: gate-side for Lipschitz pairs (Proposition~\ref{prop:coverage}) and mitigation-side as the fencing count here. The dimensional reading is the paper's own: the danger law's $(1-r)^N$ is dimension-free, but fencing a mode at deployment time costs a packing number of its fence locus, which grows with that locus's dimension.

\subsection{The 2D mitigation: a partial collapse, and lock-in at the far knob}
\label{sec:patch2d-mitigation}

The distrust-region fence generalizes to PatchField2D (\texttt{scripts/\allowbreak continuous\_\allowbreak mitigation\_\allowbreak patch2d.py}): fences are the 2D positions of refuted predictions, which lie \emph{inside} an unreachable patch (the stay-at-previous-position clamp forces a refuted prediction strictly inside the disc, so fencing it cannot cut off any real trajectory), and a candidate rollout truncates when an imagined step \emph{segment} passes within $\varepsilon$ of a fence (segment-to-point distance, leap-proof); the 1D code path is preserved \emph{behaviorally} bit-identically --- the generalization rewrote the shared internals, so this is pinned by tests rather than by the diff: the no-violation case reduces to plain MPC exactly, and the 1D mitigated episodes with violations are golden-pinned to the run the sweep used (\texttt{tests/test\_mitigation\_1d\_regression.py}). But the collapse is now \textbf{partial and decays with patch distance} (Table~\ref{tab:patch2d-mitigation}): \texttt{pc\_blind} $\approx 1.006$ falls to \texttt{pc\_mit} $0.257 / 0.541 / 0.862$ at knobs $(2,6)/(3,7)/(4,8)$, with mean violations $1.05 / 2.65 / 4.25$ (against the 1D instruments' clean $1.0$) and first contact at step $8.5 / 12.35 / 15.25$.

A \textbf{boundary-mapping transient} --- the planner rounding one fence disc, re-contacting the edge elsewhere, accreting fences along the probed arc --- is the benign reading of those means, and the per-episode data refute it; the difference matters because what replaces it is a different failure mode, not a slower version of success. The distributions are skewed, not shifted: medians $1 / 1 / 2$ against those means, with two episodes recording $28$ violations. And a growing fraction of episodes end \emph{pinned} at blind-level return --- $0/20$, $2/20$, $\mathbf{7/20}$ across the three knobs --- with every episode of $\geq 5$ violations returning near zero. In those episodes the agent is frozen against a patch edge and re-violates at the same predicted point every remaining step: at $(4,8)$, one episode records $24$ consecutive fences at an identical position. The mechanism is the tie-break. With a fence a few hundredths away, every candidate truncates at the first imagined step, so the ranking falls through to distance-to-nearest-fence --- an \emph{unsigned} scalar, which in 1D can only mean fleeing the mode but in 2D is symmetric between rounding the patch and plunging through it. So the mitigation does not merely decay with distance: at the farthest knob it \emph{fails outright} in a third of episodes, and what remains of the decaying-transient reading is the near knob, where the median episode places one fence and the collapse is clean.

This is the scope of a \emph{hard-boundary} mitigation on a curved 2D mode: the one-sided fence is sound (a refuted prediction lies inside an unreachable patch, so fencing it cannot cut off a real trajectory) and it works where the geometry lets an unsigned tie-break point away from the mode. It does not survive a boundary the agent can be pinned against. Fixing that needs a \emph{signed} away-direction in 2D --- the fence's outward normal rather than its distance --- which is a design change we have not tested, not a re-tuning.

\begin{table}[ht]
\centering
\small
\begin{tabular}{rrrrr}
\toprule
$(k_1, k_2)$ & pc\_blind & pc\_mit & mean viol. & first contact \\
\midrule
$(2, 6)$ & 1.006 & 0.257 & 1.05 & 8.50 \\
$(3, 7)$ & 1.006 & 0.541 & 2.65 & 12.35 \\
$(4, 8)$ & 1.006 & 0.862 & 4.25 & 15.25 \\
\bottomrule
\end{tabular}
\caption{PatchField2D mitigation sweep (paired seeds). The collapse is partial and degrades with patch distance. Read the means with the distribution beside them: medians are $1/1/2$, maxima $2/28/28$, and the count is dominated by duplicate fences (at most $2/5/6$ \emph{distinct} positions per episode). The degradation is not a longer transient but a growing fraction of outright failures --- $0/20$, $2/20$, $7/20$ episodes end pinned at blind-level return (\texttt{scripts/\allowbreak fence\_\allowbreak separation\_\allowbreak census.py}).}
\label{tab:patch2d-mitigation}
\end{table}

\section{Multi-mode gates: the sharp bracket, and the measured dependence}
\label{sup:multimode}

\begin{remark}[the bracket is tight, and exact factorization is a property of the gate]
\label{rem:bracket}
The bracket is not merely valid but \textbf{sharp}: its ends are the Fr\'echet--Hoeffding bounds for $P_\rho(R_1 \cup R_2)$ given the marginals, pushed through the increasing map $x \mapsto x^N$, so no bound expressible in $r_1, r_2$ alone can be tighter --- the interval is the \emph{exact} range of joint miss probabilities consistent with the marginals. Both ends are attained, at the two Fr\'echet--Hoeffding couplings of the intersection. The upper one is attained when one event contains the other ($P_\rho(R_1 \cap R_2) = \min(r_1,r_2)$, whence $r_\cup = \max(r_1,r_2)$). The lower one is attained at the minimal intersection $P_\rho(R_1 \cap R_2) = \max(0,\, r_1 + r_2 - 1)$, which is \emph{disjointness only when $r_1 + r_2 \leq 1$}; if $r_1 + r_2 > 1$ the two events cannot be disjoint, the minimal intersection is $r_1 + r_2 - 1 > 0$, and the attaining configuration is $r_\cup = 1$ --- a joint miss probability of $0$, which is again the bracket's lower end $\bigl(1-\min(1, r_1+r_2)\bigr)^N$. Our knobs are all in the first regime ($r_1 + r_2 \leq 0.26$), so disjointness is the attaining case there. This is the point of the bracket: it is the correct distribution-free statement about a multi-mode gate, replacing an independence assumption with no assumption at all, at the price of an interval instead of a point --- and the interval cannot be narrowed without measuring something beyond the marginals (which is what $r_\cup$, or equivalently $P_\rho(R_1 \cap R_2)$, is). The lower end is attained in our data --- at six of the nine PatchField2D knobs \emph{no} rollout out of 600 contacts both patches, so the measured $r_\cup$ equals $r_1 + r_2$ exactly there (Table~\ref{tab:patch2d}; a censored zero, so read it as $P(\text{both}) < 1/600$). Note that moving the modes apart pushes the joint factor \emph{towards} that lower end rather than towards the product: disjointness is the opposite of independence, not a route to it. It is tempting to say that a \emph{stratified} gate --- independent rollout budgets $N_1, N_2$ aimed at the two mode regions --- buys the product back. It does not, and the reason locates the obstruction. For a stratified gate with per-stratum policies $\rho_i$,
\[
  P(\text{joint miss}) = \prod_i \bigl(1 - P_{\rho_i}(R_1 \cup R_2)\bigr)^{N_i} \;\leq\; \prod_i (1-r_i')^{N_i},
\]
where $r_i' = P_{\rho_i}(R_i)$ is stratum $i$'s own rarity, with equality iff no stratum's rollouts can reach the \emph{other} mode without also reaching their own ($P_{\rho_i}(R_j \setminus R_i) = 0$ for $j \neq i$) --- a strong extra hypothesis that PatchField2D itself violates, since under the gate policy a rollout contacting one patch sometimes contacts the other. Stratification factorizes over \emph{strata}, which the unstratified draws already were (they are i.i.d.); what it does not remove is the \emph{within-rollout} dependence between $R_1$ and $R_2$, and that is what blocked the product in the first place. And the advice would be circular anyway: aiming budgets at the mode regions presupposes knowing where they are, which is exactly what the danger law assumes you do not. One undirected random budget earns the bracket; so, in general, does a stratified one.
\end{remark}

Section~\ref{sec:patch2d} measures both ingredients on the bi-modal instrument, but it is worth being exact about which parts of that measurement can fail and which cannot. Because $r_1$, $r_2$, $r_\cup$ and $P(\text{both})$ are estimated from the \emph{same} rollouts, inclusion--exclusion holds identically in the plug-in estimates: the bracket contains the measured joint factor and the sign rule gets the direction right at all nine knobs \emph{by algebra}, with a residual of exactly $0$. Those are consistency checks on the arithmetic, not confirmations of the proposition.

The falsifiable content is the dependence itself, and at the $600$ rollouts of Table~\ref{tab:patch2d} it was not resolvable: $P(\text{both})$ counts were $0$ to $3$, six of them censored zeros, so the observed $-17\%$ to $+12\%$ spread of the product's error was consistent with pure count noise and the direction of the dependence was unresolved. We therefore measured it properly, at $50{,}000$ rollouts per knob (\texttt{scripts/\allowbreak patch2d\_\allowbreak dependence\_\allowbreak 50k.py}), and the sign does change across the grid with non-overlapping Wilson intervals in both directions: at $(2,6)$, $P(\text{both}) = 8.6\times10^{-4}$ with $95\%$ interval $[6.4, 11.6]\times10^{-4}$ against $r_1r_2 = 19.0\times10^{-4}$ --- \emph{negative} dependence, the product over-estimating the joint hit rate --- while at $(4,6)$, $P(\text{both}) = 12.8\times10^{-4}$ with interval $[10.0, 16.3]\times10^{-4}$ against $r_1r_2 = 6.2\times10^{-4}$, \emph{positive} dependence in the opposite direction. At $(3,7)$ the dependence is negative again ($4.4\times10^{-4}$, interval $[2.9, 6.7]\times10^{-4}$, against $8.0\times10^{-4}$), and at $(4,7)$ the interval straddles the product and the knob is genuinely undecided --- so three of four knobs are resolved and they do not agree on the sign. So a product form cannot be rescued by a fixed correction factor: the sign of its error is a function of the geometry, which is exactly why the sharp bracket is the right object. The practical reading is that a multi-mode danger law needs the union event measured (or the bracket, which needs only the marginals), never the marginals multiplied.

\section{The residual reward leak, and the sharp-plateau variant}
\label{sup:sharp}

\paragraph{The residual leak is a reward artifact, not a mechanism fact --- measured, not argued.} At the widest knobs the pinned planner scores \emph{above} the uniform-random policy (cart $x_\mathrm{wall}=10$: $J_\mathrm{blind} = 0.94$ against $J_\mathrm{rand} = 0.53$; the three farthest pendulum stops likewise), which is why the claim above is ``exploited at every knob'' rather than ``below random at every knob''. The cause is the far plateau's sigmoid \emph{tail}: a planner pinned at $x = 10$ still collects $1/(1+e^{(12-10)/\mathrm{width}})$ per step, which at $\mathrm{width} = 0.5$ is $0.018$ and over 80 steps is most of $J_\mathrm{blind}$. A \textbf{sharp-plateau variant} (\texttt{scripts/continuous\_sharp\_plateau.py}, cart width $0.5 \to 0.2$, pendulum $0.25 \to 0.1$; sibling JSONs, the default instruments and every synthesis artifact untouched) removes the tail and settles it:
\begin{itemize}
\item \textbf{The exploitation claim strengthens.} On the cart the blind planner is below random at \textbf{7/7} knobs instead of 6/7, by two to thirteen orders of magnitude ($J_\mathrm{blind}$ from $2.3\times10^{-13}$ to $2.4\times10^{-3}$ against $J_\mathrm{rand} = 0.532$); on the pendulum, 5/6 instead of 3/6 --- and 6/6 with the asymmetric variant below.
\item \textbf{play\_cost becomes knob-invariant to ${\approx}1.5\times10^{-4}$}: cart $[1.0307, 1.0309]$ (spread $1.4\times10^{-4}$, against the default's $5.5\times10^{-2}$) and pendulum $[0.9999, 1.0000]$ (spread $1.5\times10^{-4}$, against $6.1\times10^{-2}$) --- a $400\times$ tightening on both, i.e.\ the invariance the default sweeps show approximately is exact once the tail is gone.
\item \textbf{The planner-competence cost is negligible}, the risk recorded with the variant's script: a sharper plateau gives random shooting less gradient to follow, yet $J_\mathrm{truth}$ moves only $17.77 \to 17.76$ (cart) and $20.08 \to 20.05$ (pendulum), and contact stays $1.00$ at every knob.
\item \textbf{The residue, and its fix.} Narrowing \emph{both} plateaus starves the random policy too ($J_\mathrm{rand}$ collapses to $3.6\times10^{-4}$ on the pendulum), which is why one knob there still reads above random: at $\theta_\mathrm{stop} = 2.0$, $J_\mathrm{blind} = 3.1\times10^{-3}$ against that collapsed baseline. The diagnosis says what to do --- narrow only the \emph{phantom} plateau, whose tail is what a pinned planner collects, and leave the real one at its default width. That variant ($\mathrm{width}_\mathrm{right} = 0.08$, per-plateau widths; \texttt{-{}-width-right}) gives \textbf{6/6} knobs below random: $J_\mathrm{blind}$ between $4.3\times10^{-4}$ and $7.1\times10^{-4}$ against a surviving $J_\mathrm{rand} \in [0.0572, 0.0584]$, with $J_\mathrm{truth}$ unchanged at $20.08$, contact $1.00$ everywhere, and play\_cost in $[1.0028, 1.0029]$ --- a spread of $7.1\times10^{-5}$, tighter still than the symmetric variant and $850\times$ tighter than the default. So on both instruments the strong form holds: exploited, pinned, and below random at \emph{every} knob, with play\_cost invariant to $10^{-4}$, once the phantom's tail is gone.
\end{itemize}
So the mechanism does not depend on the leak, and the knob-invariance claim is the one that sharpens: it is exact, not approximate, in the instrument without a tail.

\section{The \texorpdfstring{$\varepsilon$}{eps}-sweep: the axis separation is tolerance-invariant}
\label{sup:epssweep}

\textbf{Is $\varepsilon = 0.01$ a special setting, or is the axis separation itself $\varepsilon$-invariant?} A sweep over $\varepsilon \in \{10^{-9}, 10^{-6}, 10^{-4}, 10^{-3}, 10^{-2}, 3{\times}10^{-2}, 0.1, 0.3\}$ (\texttt{scripts/continuous\_eps\_sweep.py}; Table~\ref{tab:eps-sweep} for the cart, full grid in \texttt{docs/EXPERIMENTS.md}) answers: $\varepsilon$-invariant. Mode-arm reveal-rarity is flat across the entire grid --- on the cart, \texttt{wall@8} is bit-identically flat through the whole grid including $\varepsilon = 0.3$, and \texttt{wall@4} only dips slightly (never widens) at the top of the grid. The pervasive bias arms switch sharply at their own error scale on both instruments instead.

\begin{table}[ht]
\centering
\small
\begin{tabular}{rrrr}
\toprule
$\varepsilon$ & wall@8 rarity & bias $\times$1.03 rarity & bias $\times$2.0 rarity \\
\midrule
$10^{-6}$ & 0.0125 & 1.0000 & 1.0000 \\
$10^{-2}$ & 0.0125 & $<0.0019$ & 1.0000 \\
0.1 & 0.0125 & $<0.0019$ & 0.0040 \\
0.3 & 0.0125 & $<0.0019$ & 0.0040 \\
\bottomrule
\end{tabular}
\caption{Reveal-rarity vs. $\varepsilon$ (cart). The mode arm is flat across the whole grid; the pervasive arms switch at their own error scale.}
\label{tab:eps-sweep}
\end{table}

The pendulum replicates both halves: its mode arms are flat too (only a slight dip at the top of the grid --- \texttt{stop@1.0} rarity 0.1410 at $\varepsilon \le 3{\times}10^{-2}$, dipping to 0.1400 at $\varepsilon = 0.1$ and 0.1240 at $\varepsilon = 0.3$), and its bias arms switch at the same error-scale boundaries as the cart's. pass@40 $\approx (1-r)^{40}$ continues to hold for the mode arms at every $\varepsilon$ in the grid, on both instruments. (The sweep's own rarity column is measured on 2000 rollouts per cell, since what it has to establish is \emph{flatness in $\varepsilon$} rather than a third digit; the 20{,}000-rollout estimate of the same quantity in Table~\ref{tab:axes} is $0.0103$ against this sweep's $0.0125$, one estimate's noise apart.) play\_cost is not re-measured across $\varepsilon$: the model under test does not depend on the gate's tolerance, so play behavior is $\varepsilon$-independent by construction --- the sweep varies only what the gate can see. The gate's $\varepsilon$ is a pervasive-error dial, not a mode-detection dial: tightening it cannot catch the hard mode, and loosening it does not widen the hole.

\textbf{The 4D bi-modal instrument replicates the flat mode arm, once per mode.} On PatchField2D the $\varepsilon$-sweep (\texttt{scripts/continuous\_eps\_sweep\_patch2d.py}) gives \emph{exactly} flat mode-arm reveal-rarity across the entire grid: patches-omitted $0.147$, patch-1-only $0.142$, patch-2-only $0.005$ --- each constant at every $\varepsilon$. The per-mode arms recover the per-mode rarities of the mechanism sweep (Section~\ref{sec:patch2d}), so the tolerance axis is orthogonal to the mode hole on the bi-modal instrument too, and separately for each mode.

\section{Bounded observation noise: the gate law survives to a measured masking boundary}
\label{sup:noisygate}

The gate above accepts on exact equality, which is well-defined only because the instruments are deterministic and observed without noise. This section measures what replaces the exact statements under known bounded observation noise, with the design, critical event, unit, thresholds and seed mapping frozen before the run (\texttt{results/\allowbreak h7\_\allowbreak noisy\_\allowbreak gate\_\allowbreak prespec\_\allowbreak v1.json}; the result JSON records that pre-specification's SHA-256).

\textbf{Design.} Observed next states and rewards carry independent coordinatewise $\mathrm{Uniform}[-\eta,\eta]$ noise at the seven pre-specified levels $\eta \in \{0, 0.01, 0.03, 0.1, 0.3, 1, 3\}$; exact matching is replaced by support compatibility (the noisy observation must lie in the candidate's bounded support). The candidates are the CartWall@4 truth and its analytic mode-blind proxy --- this is a gate-law measurement, not a new synthesis campaign. Under the uniform-noise hypothesis the conditional per-scalar overlap probability is exact, $q(\delta,\eta) = \max(0,\, 1 - |\delta|/(2\eta))$ for a truth-vs-candidate discrepancy $\delta$ (with the $\eta = 0$ equality limit), and multiplies over coordinates and transitions; independent noise then gives one empirical Bernoulli verdict per block. The unit is the disjoint $20$-rollout seed block, $200$ blocks on a fixed latent panel ($189$ of $200$ contain the mode; $1816$ contact transitions); neither the $4000$ rollouts nor the $320{,}000$ transitions are treated as independent (\texttt{scripts/\allowbreak h7\_\allowbreak noisy\_\allowbreak observation\_\allowbreak gate.py}, \texttt{results/\allowbreak h7\_\allowbreak noisy\_\allowbreak observation\_\allowbreak gate\_\allowbreak v1.json}).

\begin{table}[ht]
\centering
\small
\begin{tabular}{rrrrr}
\toprule
$\eta$ & truth passes & blind passes & exact CP$95$ & analytic conditional \\
\midrule
0--0.1 & 200/200 & 11/200 & $[0.0278, 0.0963]$ & 0.0550 \\
0.3 & 200/200 & 11/200 & $[0.0278, 0.0963]$ & 0.0556 \\
1 & 200/200 & 32/200 & $[0.1121, 0.2183]$ & 0.1582 \\
3 & 200/200 & 106/200 & $[0.4583, 0.6008]$ & 0.5029 \\
\bottomrule
\end{tabular}
\caption{Support-compatibility gate under bounded observation noise, per seed block. The four levels $\eta \in \{0, 0.01, 0.03, 0.1\}$ are identical to four digits and are shown as one row.}
\label{tab:noisygate}
\end{table}

\textbf{Readings.} The correct model passes every block at every level. The pre-specified primary criterion --- an analytic blind-pass increase of at most $0.01$ at $\eta = 0.1$ relative to $\eta = 0$ --- is met with an increase of exactly $0$ on this panel, so bounded noise through the primary level masks no additional block: the noiseless equality statement becomes a support-probability statement with the same content. The first pre-specified level whose increase exceeds the frozen $+0.05$ boundary is $\eta = 1$, where the analytic conditional blind-pass probability is $0.1582$ and the independent noise realization accepts $32/200$ blocks --- sufficiently wide support measurably hides mode discrepancies, and the boundary at which it starts is now a number rather than a caveat. These are conditional evaluations on the fixed latent panel, not population-prevalence estimates; process noise, an unknown noise law, a planner rerun under noise, and an externally specified benchmark remain open (Section~\ref{sec:limitations}).

\section{Cross-family arms, and a fourth artifact class}
\label{sup:crossfamily}

\paragraph{Cross-family spot-checks (two families).} \emph{Qwen} (HF router, \texttt{Qwen/\allowbreak Qwen3-\allowbreak Coder-\allowbreak 30B-\allowbreak A3B-\allowbreak Instruct}, 3 seeds): the full arm is clean (3/3 gate 1.000, blind 0.0), so the pinned-integrator premise is not GPT-specific; on the incomplete arm the identifiability branch reproduces (the 1/3 wall-absent seed is gate-1.000 wall-blind and exploited, play\_cost 0.999), but Qwen \emph{repaired neither} of its two wall-present seeds (gate 0.999 and 0.491, both superstitious patches the gate refused). \emph{Claude} (Sonnet, agent-relayed via the companion paper's protocol on \texttt{scripts/continuous\_claude\_step.py} --- verbatim pipeline messages relayed to fresh, context-free instances per message, same $\varepsilon = 10^{-9}$ gate and MPC play as the API arms; seeds 10000/20000/30000 plus one full control per instrument; \texttt{results/continuous\_claude\_relay.json}): both full controls are clean (2/2 gate 1.000, blind 0.0, play\_cost 0.0), and both mode-absent seeds are accepted fully blind and exploited (cart play\_cost 0.999, pendulum 0.995) --- the identifiability event fires family-independently, as Proposition~\ref{prop:ident} requires. On repair Claude is neither GPT-5.x nor Qwen: cart seed 20000 repaired the exact one-sided rule in 1 iteration, and cart seed 30000 repaired at iteration 5 after a period-2 oscillation (symmetric $\pm 8$ walls $\to$ both removed $\to$ symmetric again $\to$ removed $\to$ one-sided correct), but pendulum seed 20000 was accepted in 1 iteration carrying an invented mode (next paragraph) and pendulum seed 30000 stalled at 5 iterations (gate 0.9972), oscillating between the symmetric-stop and no-stop artifacts without ever finding the one-sided rule. So across three families the mode-absent blind-and-exploited event fired for every one (it is a property of the sample; Proposition~\ref{prop:ident}), while repair-from-data is model-dependent \emph{in mechanism, not merely in rate}: GPT-5.x recovers the revealed clamp in almost every case (105 of 111 draws exactly; four more add a phantom stop the probe cannot see, two stall), Qwen recovers none (superstitious local patches), and Claude recovers most but through a \emph{symmetry prior} that generalizes one-sided evidence into a symmetric pair of boundaries --- a prior GPT-5.x also exhibits, in four artifacts across both sizes, so the class is not family-specific.

\paragraph{A fourth artifact class: accepted while carrying an invented mode its gate sample cannot refute.} A symmetry-or-landmark prior produces an outcome beyond the correct/blind/superstitious-patch trichotomy. It is clearest in the Claude arm, and it is not confined to it: four GPT-5.x artifacts on the pendulum's headline knob do the same thing (Section~\ref{sec:patch2d-synthesis}). Where the training sample covers the invented side, the gate refutes the phantom and the memoryless refine loop oscillates (cart seed 30000's period-2 cycle above; pendulum seed 30000's stall at gate 0.9972). But where the sample is \emph{silent} on the invented side, the phantom is unfalsifiable: pendulum seed 20000 was certified at gate 1.000 in a single iteration while carrying a phantom symmetric stop at $\theta = -1.4$ that this seed's rollouts never reach (verbatim \texttt{elif th2 < -th\_max: th2 = -th\_max; om2 = 0.0}, the second branch of a symmetric pair, with \texttt{th\_max = 1.4} its own constant) --- a verified artifact, exact on every sampled transition, that nonetheless encodes a hard stop which does not exist. This is a clean natural experiment: the \emph{same} symmetric artifact was refuted at pendulum seed 30000 and certified at seed 20000, the only difference being whether the sample covered $\theta < -1.4$. It is Proposition~\ref{prop:ident}'s prior caveat measured directly --- on inputs the sample never touches, the artifact's content comes from the model's prior, and the gate cannot police it. Note the classification did not flag it: the \texttt{mode\_blindness} probe scored 0.0 (correct) because it probes only the true $+1.4$ mode's region; code inspection, not the probe, caught the invented mode (Section~\ref{sec:limitations}).

\paragraph{Qualifications on the Claude arm.} (i) Agent-relayed means an agent scaffold over a subscription transport, not an API, though the relayed messages are byte-identical to the pipeline's; in the two multi-iteration cells a handful of refinement replies prefixed a one-line explanation before the code block (two distinct sentences, repeated across the oscillation) despite the output-only-code instruction; the code block still parsed, and no relay was refused. (ii) The refine loop is memoryless in the API arms too --- \texttt{refine\_continuous} sends a single user message per iteration (\texttt{src/cwm/continuous/contract.py}) --- so the oscillation is protocol behavior, not a relay artifact. (iii) $n$ is small: 3 seeds plus one control per instrument, one alternate family.

The branches compose into the paper's central claim. With the mode in the data, the synthesize--refine--gate loop recovers the exact mode or refuses the artifact --- correct relative to every transition the sample covered, which is the only sense of correctness the gate defines (sample-covered exactness is the acceptance criterion itself; the empirical, code-inspected regularity --- not a theorem --- is that every accepted artifact's off-sample content was also the true mode, except Claude's phantom pendulum stop, which errs only where its sample is silent: the Proposition~\ref{prop:ident} prior caveat, not a gate failure). With the mode absent, no loop can help (Proposition~\ref{prop:ident}: the sample carries no evidence for the mode --- though a prior or the specification still could supply it), and the acceptance of a blind artifact is not a failure of the LLM, the refinement, or the gate implementation --- it is the sampling event whose probability is exactly $(1-r)^N$. The discrete danger law had a provable core plus an empirical residual (the rule not learned even when shown); on the continuous 1D instruments the residual \emph{can} vanish --- it does for GPT-5.x on all but four of the clamps it was shown --- so \textbf{the law becomes the entire failure surface} for a capable-enough synthesizer, and shrinks toward it even for a weaker one (this is geometry-scoped: Section~\ref{sec:patch2d-synthesis} shows the residual returns on a 2D circular mode, where even GPT-5.x does not repair). The actionable consequence sharpens correspondingly: in this regime, spec completeness and gate-sample coverage are not two independent worries --- coverage \emph{is} the dominant worry, because the synthesis loop repairs what the sample reveals.

(Scope: 20 seeds/cell on the headline cell, both GPT-5.x sizes; the wall-absent conditional is 20/20 across sizes and consistent across three independent runs --- the 5-seed first run, the 20-seed tightened run, and the 3-seed Qwen run; the wall-present repair rate is 20/20 for GPT-5.x on this cell, 0/2 for Qwen, and mechanism-dependent for Claude (exact on both cart seeds, one via a period-2 oscillation) --- model-dependent, small-$n$ on the cross-family arms. LLM synthesis is stochastic across calls; the three-way structure and the identifiability conditional are stable run-to-run, per-seed iteration counts are not.)

\section{The 2D artifacts: a code and behavioural audit, and three ablations}
\label{sup:artifacts}

\paragraph{The mechanism: translation succeeds, induction of the boundary collapses.} A 76-artifact code inspection locates the failure precisely. \emph{Plant translation succeeds}: ${\approx}74/76$ artifacts reproduce the exact 4D semi-implicit integrator and the reward. What collapses is \emph{induction of the 2D circular boundary}. The dominant modal failure ($38/76$) is \textbf{dimensional reduction}: the disc is modeled as a \emph{half-plane} --- a 1D CartWall-style clamp at the right location but the wrong shape, e.g.\ substituting \texttt{if x2 > 4.0: ...} for a circle --- learning the freeze mechanic but not its circular trigger. The remaining classes: $20/76$ pure-blind (no patch logic), $9/76$ superstitious local patches, and $9/76$ disc-form attempts, none of them correct. The $\varepsilon$-exactness alternative --- that correct-form discs merely failed the $10^{-9}$ arithmetic gate --- is \textbf{falsified}: no correct-form disc failed on arithmetic; the discs that failed were the wrong \emph{shape}. A \emph{behavioral} audit confirms this hand inspection on every claim (\texttt{scripts/patch2d\_artifact\_audit.py}: probe each artifact's \texttt{step()} on a state grid, classify the shape of its deviation-from-integrator set): 39/76 half-plane-form, 13 behaviorally blind, 8 disc-form, 7 measure-zero textual traps, 4 point, 3 square-form, 2 bounded-other; integrator arithmetic exact in 74/76; and --- the point that matters for partial repair --- \textbf{no artifact \emph{encodes} its seen patch}. On the see-one-miss-the-other branch (the 66 seeds where a partial repair is even definable), $28$ freeze sets do \emph{contain} the seen patch (coverage $>0.9$), but they contain it the way a half-plane contains a disc: their frozen area runs from $15\times$ to $81\times$ the patch's own ${\approx}1.2\%$ share of the probed box (median $61\times$). Exactly one artifact is even patch-\emph{selective} --- seed 180000 covers its seen patch and leaves the unseen one below $0.1$ --- and it does so with a \emph{half-plane} freezing $31\times$ the patch area, i.e.\ by where its threshold happens to fall between the two patches, not by encoding a region; its gate score is below $1$ like every other see-one artifact. The missing partial repairs were therefore never attempted as \emph{regions} at all, rather than attempted and refused by the gate. The two class counts differ by exactly one artifact (39 vs 38 half-plane, 8 vs 9 disc-form) and the difference is a \emph{syntax-versus-behavior} reading, not a disagreement about any artifact's correctness: two artifacts (mini $k=(3,7)$, seeds 130000 and 150000) write a \emph{radial} predicate anchored at the reward lodes rather than at a patch --- \texttt{if hypot(x{-}(-6),y) <= 2 or hypot(x{-}12,y) <= 2} in one, a reward-threshold freeze in the other --- so hand inspection reads the disc in the source while the audit reads the deviation set, which is unbounded inside the probed box. Both readings agree that neither artifact encodes a patch; only the label moves.

\paragraph{Ablation 1: richer prompting and $3\times$ budget do not restore repair --- they change the failure class.} The natural confound is that the prompt was too poor and the budget too small. The strongest joint treatment (120 examples, 40 failure lines, describe-the-region-first guidance with an explicit ``the region need not be a 1D threshold'' de-bias, and 15 refine iterations; identical samples to the original run) yields \textbf{0/40 repair} at $k=(3,7)$, both sizes. The guidance \emph{works as prompt engineering}: the half-plane reduction disappears (0/40, versus 21/40 in the matching base cells) and the artifacts now write bounded 2D regions --- rotated ellipses (${\sim}15/40$), rectangles (${\sim}10/40$), unions of micro-discs (${\sim}5/40$) --- but none is the true disc. The new failure class is \emph{evidence-hull fitting} with two compounding errors: they fit the hull of the \emph{observed} freeze positions (the pre-freeze crescent hugging the disc's reachable west boundary from outside: fitted centers pulled west to $x \approx 2.3$ against the true $x = 3$, i.e.\ dragged onto the evidence at the patch's west edge), and 36/40 condition on the current position rather than the landing $(x_2, y_2)$ --- the causal variable of the rule. The sample itself then refuses every wrong shape.

\paragraph{Ablation 2: an axis-aligned square with flat edges --- curvature did not suffice to explain the failure.} This ablation was added after the disc result it responds to (Section~\ref{sup:prespec} dates every such addition). If the failure were about \emph{curved} boundaries, an axis-aligned square patch (a Chebyshev ball, $\max(|x-c_x|,|y-c_y|) \le R$ --- a \texttt{max}/\texttt{abs} predicate, no quadratic) should be repairable. It is not: with the same pipeline at $k=(3,7)$ the full arm is clean (20/20 at zero refine iterations, both sizes --- translating the square clause is as easy as the disc's) and the incomplete arm gives \textbf{0/40 repair} (every sample mode-containing; best gate $0.9962$). The classes mirror the disc's, \emph{reflected}: the dominant failure is again dimensional reduction (the square's west edge as a 1D threshold, \texttt{if x2 >= 2.0}), and several artifacts commit the \emph{inverse} error --- writing \emph{discs} on square evidence (radial \texttt{hypot} guards at the edge midpoint, micro-disc unions), plus reward-anchored superstitions (freeze zones invented at the lodes). Zero artifacts write the true box or its \texttt{max}/\texttt{abs} form. Across the four treatments (two disc knobs, the square ablation, the guided/$3\times$-budget treatment) that is \textbf{0 of 156 mode-containing synthesis draws}, and the sampling unit makes those $156$ draws \textbf{20 distinct gate-sample blocks}: changing the disc knob or the prompt variant reuses a block's random stream byte for byte, so $k=(3,7)$ and $k=(5,9)$ are two treatments over the same blocks rather than $38$ samples. The bound the evidence supports is therefore per block: an exact (Clopper--Pearson) $95\%$ upper bound of $\mathbf{0.168}$ on the probability that a fresh mode-containing gate sample is repaired by any attempt run on it (Wilson $0.161$). Pooling the $156$ draws as independent trials would have claimed $0.023$, $7.2\times$ tighter than the evidence supports; we report the per-block figure and the per-treatment counts separately (\texttt{scripts/\allowbreak paper2\_\allowbreak statistics.py}), never a single pooled interval.

\paragraph{Ablation 3: two further model-family spot-checks, at tiny $n$, fail in the same classes.} The three campaigns above are GPT-5.x only, so the collapse could be one family's idiosyncrasy. A Claude arm on the headline cell says otherwise (agent-relayed as in Section~\ref{sec:synthesis}; Sonnet, $k = (3,7)$, three mode-containing seeds --- two see-$P_1$-miss-$P_2$, one seeing both --- plus one full control, same $N = 40$ sample, same $\varepsilon = 10^{-9}$ gate, same five-iteration memoryless refine loop). The full control is clean at iteration 0 (gate 1.000, both discs written on the \emph{landing} position, blindness 0.0, play\_cost 0.0): translation is not the problem here either. No incomplete seed repaired (0/3), and the \emph{trajectory} carries more information than the outcome: each seed spends its five iterations in a \textbf{period-2 cycle} between the pure-blind artifact and a wrong template, returning to its iteration-0 gate value exactly on every even iteration (seed 10000: $0.9934$ blind $\to$ $0.9591$ disc-on-\emph{current}-position $\to$ $0.9934$ $\to$ $0.9284$ half-plane $\to$ $0.9934$ $\to$ $0.7044$ reward-threshold; seed 20000: blind $0.9962 \leftrightarrow$ half-plane $0.9406$, three times over; seed 30000: $0.9966 \to 0.6750$ reward-zone $\to 0.9966 \to 0.8406$ half-plane $\to 0.9966 \to 0.5328$ $y$-band). The template set is the one the GPT-5.x campaigns exhibited --- 1D threshold, radial disc on the \emph{wrong} (current, not landing) variable, reward-landmark zone, axis-aligned band --- and \textbf{no incomplete iteration in any seed wrote a disc on the landing position}, the form this same model writes immediately when the contract states it. The \emph{constants} replicate Ablation 1 quantitatively: every half-plane sits at $x = 2.0$, the disc's west edge --- ``the right location, the wrong shape'' --- and the one radial attempt fits center $(2.328, -0.135)$ with radius $0.824$, i.e.\ the hull of the observed freeze crescent: the true patch is the disc of radius $1$ about $(3,0)$, spanning $x \in [2,4]$, while the fitted one spans $x \in [1.50, 3.15]$ --- \emph{dragged west onto the evidence}, so it covers the patch's west half, misses $x \in [3.15, 4]$ entirely, and freezes free space over $x \in [1.50, 2]$. Its center sits inside the true patch but at the wrong place ($0.69$ from the true center), with radius $0.824$ against $1$. That is the same $x \approx 2.3$ signature the guided GPT-5.x artifacts produced. Per-iteration gate accuracies and rule classes are re-derived from the versioned transcripts by \texttt{scripts/claude\_relay\_ledger.py}. Two qualifications. (i) $n = 3$ seeds, one alternate family on the \emph{incomplete} arm: this closes the ``GPT-only'' confound, it does not sweep models. A third family (Qwen) contributes a \emph{matched} contrast on this instrument: both arms, three seeds each, run against a single pinned backend --- our own vLLM $0.26.0$ serving the bf16 checkpoint at a recorded Hub revision, with the checkpoint's own \texttt{generation\_config} applied identically to both arms (\texttt{results/\allowbreak qwen\_\allowbreak vllm\_\allowbreak provenance.json}; server in \texttt{scripts/\allowbreak modal\_\allowbreak qwen3coder\_\allowbreak vllm.py}). Under that one configuration the translation control is clean ($3/3$ at gate $1.000$, zero refinement iterations) and the incomplete arm is \textbf{refused in $3$ of $3$ seeds} (accuracies $0.993$ to $0.997$, refinement maxed, the mode present in every training block, none accepted by the independent gate) --- the family's 1D stall-and-refuse class. An earlier first pass had mixed serving paths (HF-router full cells, vLLM incomplete cells) and is retained versioned as an exploratory unmatched spot-check; its incomplete cells reproduce the matched arm's gate accuracies to three decimals, but the contrast the text draws is the matched campaign's. At $n = 3$ per arm this is an exploratory diagnostic of a third family, not a cross-family replication. That control is clean: 3/3 gate 1.000 at zero refinement iterations, both discs written exactly on the landing position, per-patch blindness 0.0. Across all three families, then, \emph{stating} the region rule is easy and \emph{inducing} it from data is what fails; the induction evidence on this instrument is GPT-5.x's 156 seeds plus Claude's 3. (ii) The relay caveats of Section~\ref{sec:synthesis} apply, plus one specific to this arm: the relayed instances were instructed not to use tools or read files, so each artifact is a function of the pipeline message alone (the ground truth lives in this repository, so an instance reading it could have copied the true centers and radius instead of inducing them). The wrapper text is recorded verbatim in \texttt{results/\allowbreak claude\_relay\_transcripts/\allowbreak patch2d\_k3\_7\_RELAY\_FRAMING.txt}, together with the post-hoc leak check: no incomplete artifact wrote the true disc form at all, and none names the unseen far patch --- the signature repository access would have left.

\section{The eight ablations on the 2D mode, campaign by campaign}
\label{sup:ablations}

\paragraph{Ablation 4: trigger arity, and why it does not answer the question it was built for.} The slab campaign holds the plant, the action parametrization, the lodes, the evidence side, the prompt and the budget fixed and changes only the predicate's \emph{arity}, from a region in two landing coordinates to a band in one, with rarity matched by moving the mode as every other instrument in this paper matches it (the calibrated impermeable configuration: half-width $0.5$, centre $5.5$, giving $r_1 = 0.128$ against the disc's $0.132$ and $\mathrm{play\_cost} = 1.006$ against $1.006$). Translation is trivial, as it must be for the ablation to be admissible: the full arm is $\mathbf{40/40}$ at gate $1.000$ in \emph{zero} refinement iterations, both sizes, with every mode encoded. The incomplete arm then produces the paper's most instructive artifact class.

\textbf{By the criterion this paper has used for repair, \texttt{gpt-5.4} repaired 19 of its 20 mode-containing seeds. It recovered the rule in none of them.} All $19$ are behaviourally the \emph{same} artifact: the half-plane $\texttt{x2 >= 5.0}$ at the slab's near face. It passes the gate at $\varepsilon = 10^{-9}$ on its own sample, passes an \emph{independent} acceptance sample and an independent $100$-rollout evaluation sample at the same tolerance, and scores $0$ on the mode-blindness probe --- while freezing $4.6\times$ the true region's area and agreeing with the truth on only $22\%$ of the union of the two freeze sets. It is not a failure of induction: by Proposition~\ref{prop:entryclass} it is exactly consistent with every sample this instrument can produce, so it is the correct inference from the available evidence, and by part (iii) of that proposition its play cost is $0$, which is what is measured. The $\texttt{gpt-5.4-mini}$ arm reaches the class in none of its $20$ seeds; it writes refutable rules that the gate then refuses. So the size effect here is not that the larger model repairs and the smaller does not --- it is that the larger model finds the maximally-consistent rule while the smaller one does not.

\paragraph{Ablation 5: naming the variable the trigger reads.} The base campaign's audit found $36$ of $40$ guided artifacts conditioning the freeze on the position the step \emph{began} at rather than on the landing position --- the rule's actual argument --- so that confound is worth removing on its own. The \emph{landing} arm is the region-guided treatment with one sentence added, stating that a localized rule's trigger is evaluated on the state the integrator would produce, and naming nothing about the region's shape, centre, radius or count; everything else is matched to the region arm exactly ($120$ examples, $40$ failure lines, $15$ iterations, the same $20$ sample blocks). It does not restore repair: \textbf{0 of 40} mode-containing draws recover the disc, over $20$ distinct blocks, for an exact $95\%$ upper bound of $0.139$ on the per-block repair probability, with a best agreement of $0.147$ against the region arm's $0.286$. What the sentence changes is the failure: the dominant class becomes \emph{memorisation} of the observed freeze positions --- $20$ of $40$ artifacts write a region small enough to be a point on the probe grid, against $15$ of $40$ in the region arm, one of them the literal bounding box \texttt{2.01 <= x2 <= 2.17 and -0.23 <= y2 <= 0.03} of the crescent of contacts its sample happened to contain. So variable identification is not what was blocking region induction, and telling the synthesizer where to look pushes it further toward fitting the evidence exactly rather than generalising it.

\paragraph{Ablations 6 and 7: making the region's interior witnessable does \emph{not} restore repair.} Proposition~\ref{prop:entryclass} identifies the evidence's censoring as an obstruction, and the natural hypothesis is that it is \emph{the} obstruction. Two campaigns test it by changing where the mover ends up when the mode fires --- the firing predicate itself, and therefore the rarity, is untouched, so the knob needs no recalibration; each targets the post-state alone, though dwell time and the amount of mode evidence necessarily move with it (the caption's confound note). In \emph{landing} the mover stops where it entered, strictly inside the region; in \emph{clamp} it is projected onto the boundary, the 2D analogue of the cart's wall clamp. Both break the proposition's premise, verified on the gate sample rather than assumed: \texttt{freeze} yields no transition at all that separates the membership rule from an entry rule, \texttt{landing} yields $4614$ and \texttt{clamp} $233$ (\texttt{scripts/\allowbreak calibrate\_\allowbreak mode\_\allowbreak effect.py}). The trap survives in both ($\mathrm{play\_cost}$ $1.058$ and $1.059$ against \texttt{freeze}'s $1.059$, blind contact rate $1.00$), and translation stays trivial: the full arm is $\mathbf{20/20}$ at gate $1.000$ in zero refinement iterations in all four cells, so even the clamp's projection rule --- the only one whose post-state is a \emph{function} of the landing --- is as easy to write as the others when stated.

\textbf{Neither restores repair.} Across $40$ mode-containing draws each, over the same $20$ blocks: $0$ recover the region, and --- unlike every freeze campaign --- \textbf{no artifact passes the gate at all}, which is what Proposition~\ref{prop:entryclass} predicts once the equivalence class is empty. The best agreement with the truth is $0.10$ for \emph{landing} and $0.26$ for \emph{clamp}, against the disc's own $0.50$. What the artifacts write is the \emph{same library} at the \emph{same constants}: the dominant class is again the half-plane ($24/40$ and $25/40$) at $\texttt{x2 > 2.0}$, the disc's west edge, with reward-landmark zones, micro-discs and velocity superstitions making up the rest.

The two campaigns' confounds run in opposite directions, which is why we ran both. \emph{landing} leaves the mover inside the region, where most thrusts keep it, so its dwell time and hence the mode's share of the sample's transitions rise from $0.66\%$ to $7.25\%$ --- eleven times more mode evidence, a confound \emph{favourable} to repair, and it still fails. \emph{clamp} holds that share at $0.75\%$, matching \texttt{freeze}, and pays instead in the rule's complexity --- and it fails with the same classes. So the negative survives both the quantity of mode evidence and the difficulty of the rule.

\textbf{Witnessing the interior does not restore repair, so the censoring is not the operative obstruction.} The censoring Proposition~\ref{prop:entryclass} describes is real, and it is what makes the slab's nineteen accepted-but-wrong artifacts unfalsifiable and what bounds the reach of an independent gate; but it is \emph{not} the reason region induction fails. With the interior witnessed, by two mechanisms, with more evidence and with an easier rule, the synthesizer writes the same half-plane at the same place. Of the candidate causes these interventions can separate, what is left is the region-template prior, and what the interventions add to the description is stability: the prior does not yield to evidence dose, interior witnessing, budget, or the rule's difficulty. It remains unlocalized inside the model --- the alternatives named in Section~\ref{sec:arity} (an inability to execute an algebraic fit over textual transitions, a memoryless refinement loop, the absence of a system-identification objective) are compatible with every campaign here, and no design run distinguishes among them.

\paragraph{The exclusion matrix, versioned.} \texttt{results/\allowbreak h6\_\allowbreak exclusion\_\allowbreak matrix\_\allowbreak v1.json} (\texttt{scripts/\allowbreak h6\_\allowbreak exclusion\_\allowbreak matrix.py}) records, for each of the eight interventions: the target hypothesis, every changed variable with its unavoidable co-changes, whether a direction was recorded before the run, the experimental unit, the positive control, the observed result, and the strongest licensed inference beside the inference the design does \emph{not} license. The surviving classes after all eight are the three named in Section~\ref{sec:arity} plus a residual of model-internal causes these designs do not separate. Four factorial follow-ups that would separate the remaining confounded pairs are specified in the same file: prompt guidance versus example/failure-line budget versus iteration budget (intervention 1's three-way co-change); post-state semantics crossed with a calibrated equal contact dose (interventions 6/7); angular coverage crossed with start distribution (intervention 8); and trigger arity crossed with identifiability on a bounded one-coordinate region (intervention 4).

\section{The coverage certificate's scope, stated in full}
\label{sup:certscope}

\paragraph{Coverage certificates transfer, but only to the smooth case.} The companion paper's guarantees enumerate information sets, and that enumeration has no continuous analogue. What does transfer is the covering-number version: Proposition~\ref{prop:coverage} certifies $\sup_U \|f - \hat f\| \leq \varepsilon + 2L\rho$ for $L$-Lipschitz pairs once the gate $\rho$-covers $U$, with the sample size a covering number over a visitation density, and on the deployed cart gate it excludes a pair with $L = \max(\mathrm{Lip} f, \mathrm{Lip}\hat f) \leq 5.77$ carrying the wall's error magnitude of $4.2$, with probability at least $1-\delta$ over gate draws, against a rigorous bound of $0.933$ on the step-1 reachable set. Since the bound $\varepsilon + 2L\rho$ grows with $L$, that quantifier is the claim's scope, and at $4.5\times$ the plant's own $1.27$ it is a broad class. The step from the packing figure to that bound is a change of argument, not a new measurement: a packing bound must estimate three geometric factors (the covering-vs-packing direction, the ball's intersection with $U$, and the corner factor's failure to be shear-invariant), and a partition bound estimates none of them --- on this instrument the step-1 law is uniform on a sheared box, so equal sub-boxes have probability exactly $1/K$. The packing route's figure is $\rho = 1.165$ and $2.97$, licensing only $L \leq 1.80$. The residue is that this is exactly the case the paper is \emph{not} about: the reset mode has unbounded local Lipschitz constant, so no finite $L$ makes the certificate apply to it, and the danger the paper measures lives entirely in that gap. Two questions the certificate depends on are settled, and settling them made the certificate \emph{weaker} and the conclusion sharper. First, the within-rollout dependence: it needs no mixing argument and no factorization either --- the per-rollout miss probability is measured directly and the i.i.d.-ness of rollouts raises it to the $N$th power exactly. Composed with the partition, that is what gives the $0.933$ above, against $1.534$ from independent single samples: handling the dependence exactly is worth a factor of $1.6$, and the invalid all-steps-independent reading's $0.785$ sits a factor of $1.2$ below the valid figure. On a box $6.7\times$ larger the same treatment certifies net radius $1.0$, which is the region/resolution trade-off at fixed $N$ rather than a competing figure. Second, the density is derived at \emph{every} step $t \geq 2$ (the last two actions make the step-$t$ law absolutely continuous with an exact constant $1/0.036$, and a Minkowski erosion turns that into a pointwise infimum). That extends the certificate to larger regions, but at fixed $N$ extent is paid for with resolution, and no step-$t$ level set certifies a bound below the step-1 figure --- so what limits the certificate is not the step-1 restriction but the gate's size. The nonlinear analogue needs no new argument for the two-action factor: the Jacobian determinant is $\mathrm{gain}^2 dt^3$ identically for any $C^1$ force, so the same constant covers the cart and the pendulum --- the certificate's density hypothesis is established on the cart only and must be verified per instrument --- and it degenerates only on the mode clamp, where no density argument can work because the truth maps a neighborhood to a point. And the planner-versus-gate mismatch is measured rather than argued: the box the dependence-exact certificate covers carries $1.9\%$ of the exploited planner's queries, and a box containing every step-$t$ level set carries $7.8\%$ --- an upper bound on the certified share. What is left is therefore not a hole in the argument but its conclusion --- a certificate sound relative to the $L \leq 5.77$ Lipschitz class, with probability at least $1-\delta$ over the gate's draws, on a region the planner barely visits. Certifying the planner's own visitation would need a gate that samples it, which is precisely the change the companion paper's closing prescription asks for.

\section{The LLM protocol, in full}
\label{sup:protocol}
Everything in this appendix is read off the code that produced the campaigns, not
described from memory: the pipeline is
\texttt{scripts/\allowbreak continuous\_\allowbreak danger\_\allowbreak synthesis.\allowbreak py} driving
\texttt{src/\allowbreak cwm/\allowbreak continuous/\allowbreak contract.\allowbreak py}, and the per-campaign counts are
regenerated by \texttt{scripts/\allowbreak llm\_\allowbreak protocol\_\allowbreak facts.\allowbreak py} into
\texttt{results/\allowbreak llm\_\allowbreak protocol\_\allowbreak facts.\allowbreak json}. File and line references are as of
the revision that produced \texttt{results/}. The agent-relayed Claude arms send
the same message text through a different transport
(Section~\ref{app:proto-relay}).

\subsection{The two messages}
\label{app:proto-messages}

A cell of a campaign is one \emph{(arm, seed)} pair. It issues one
\emph{synthesis} call and then zero or more \emph{refine} calls. In the three
displays below, a trailing backslash marks a line wrapped for the page width:
the string actually sent contains neither the backslash nor a newline there.
Slots in braces are the template's substitution points.

\paragraph{System message (synthesis call only).} One string, sent verbatim
(\texttt{contract.\allowbreak py:\allowbreak 87--89}):

{\small
\begin{verbatim}
You are an expert Python programmer. You write deterministic, pure code \
that exactly implements a specified physics world model. Output ONLY a \
single Python code block, no prose.
\end{verbatim}
}

\paragraph{Synthesis user message.} The template (\texttt{contract.\allowbreak py:\allowbreak 90--98});
the messages list is exactly \texttt{[system, user]} (\texttt{contract.\allowbreak py:\allowbreak 97--98}):

{\small
\begin{verbatim}
{contract}

Here are observed transitions (ground truth) to match exactly:
{example_lines}

{guidance}

Write the Python module implementing the contract. Output only one \
```python code block.
\end{verbatim}
}

\noindent \texttt{\{contract\}} is \texttt{build\_\allowbreak contract(env,\allowbreak  include\_\allowbreak mode)\allowbreak }:
the instrument's API text (the pinned integrator equations, in order) plus its
rules text (physical constants, reward, and --- in the \emph{full} arm only ---
the mode clause); \texttt{contract.\allowbreak py:\allowbreak 32--34}, text in
\texttt{src/\allowbreak cwm/\allowbreak continuous/\allowbreak instruments.\allowbreak py}. \texttt{\{guidance\}} is the
prompt-variant knob of Section~\ref{app:proto-variants}; it is the empty string
in every default run, and because the template inserts it as
\texttt{extra =\allowbreak  f"\{guidance\}\textbackslash n\textbackslash n" if guidance else ""}
(\texttt{contract.py:90}), an empty guidance leaves the message byte-identical to
the pre-knob prompt --- asserted in
\texttt{tests/\allowbreak test\_\allowbreak continuous\_\allowbreak contract.\allowbreak py} and
\texttt{tests/\allowbreak test\_\allowbreak prompt\_\allowbreak variants.\allowbreak py}.

\paragraph{Example lines: how many, and which.} One line per shown transition,
using \texttt{repr()} throughout, so full float precision is shown
(\texttt{contract.\allowbreak py:\allowbreak 67--76}):

{\small
\begin{verbatim}
step({state!r}, {action!r}) -> {next_state!r}   reward(next) = {reward!r}
\end{verbatim}
}

\noindent Selection is a \emph{stride over the whole sample}, not a prefix (a
prefix would be a single rollout):
\texttt{idx = sorted(\{(i * n)\allowbreak  /\allowbreak /\allowbreak  max\_\allowbreak examples for i in range(min(max\_\allowbreak examples,\allowbreak  n)\allowbreak )\allowbreak \})}
(\texttt{contract.py:71}). At the campaign default of $N = 40$ rollouts and
$h_\mathrm{episode} = 80$ the sample is $3200$ transitions on all three
instruments, so the prompt shows $30$ lines ($0.937\%$ of the sample) in the
default variant and $120$ lines ($3.75\%$) in the guided/region/landing variants
(\texttt{example\_budget} in the JSON). The gate, by contrast, always scores the
candidate on all $3200$ transitions (\texttt{contract.\allowbreak py:\allowbreak 167--177}): the model is
shown about one percent of the evidence it is judged on.

\paragraph{Refine user message.} There is \emph{no} system message on a refine
call (\texttt{contract.\allowbreak py:\allowbreak 200--206}):

{\small
\begin{verbatim}
{contract}

The current implementation is below. It fails some transitions. Fix it \
so every transition matches to within {eps} in x, v and reward. Output \
only one ```python code block.

CURRENT CODE:
```python
{code}
```

FAILURES (expected vs got):
{first max_failures failure lines}

{guidance}
\end{verbatim}
}

\noindent Each failure line comes from \texttt{\_\allowbreak compare\_\allowbreak transitions}
(\texttt{contract.\allowbreak py:\allowbreak 135--164}) and is one of

{\small
\begin{verbatim}
step({state!r}, {action!r}) raised {exc!r}
step(...): wrong state arity: expected {k} components, got {m}
step(...): expected {next_state!r} r={reward!r}, got {got!r} \
r={got_r!r} (err {err:.3g})
\end{verbatim}
}

\paragraph{Truncation of the failure list.} Only the first
\texttt{max\_failures} lines are included,
\texttt{"\textbackslash n".\allowbreak join(failures[:max\_\allowbreak failures]\allowbreak )\allowbreak }
(\texttt{contract.py:205}), with \texttt{max\_\allowbreak failures =\allowbreak  20} by default
(\texttt{contract.py:191}) and $40$ in the guided/region/landing variants.
Failures are in sample order, so truncation keeps the \emph{earliest} failures,
not the worst ones.

\begin{remark}[A wart in the refine text, recorded because it is what ran]
The sentence says ``in x, v and reward'' on every instrument, including the
four-dimensional PatchField2D whose state is $[x, y, v_x, v_y]$
(\texttt{contract.py:202}). The tolerance actually enforced is the sup-norm over
\emph{all} state components and the reward (\texttt{contract.\allowbreak py:\allowbreak 154--155}), so
the text understates the check on the two-dimensional instrument. This is the
text used by every committed run.
\end{remark}

\subsection{The refine loop}
\label{app:proto-refine}

The loop is \texttt{contract.\allowbreak py:\allowbreak 188--214}.

\begin{itemize}
\item \textbf{Termination.} \texttt{while acc < 1.\allowbreak 0 and iterations < max\_\allowbreak iters}
(\texttt{contract.py:199}), where \texttt{acc = contract\_\allowbreak accuracy(code,\allowbreak 
transitions,\allowbreak  eps)\allowbreak } over the \emph{whole} sample. It stops on the first artifact
that matches every transition to $\varepsilon$, or when the budget runs out.
Nothing else stops it.
\item \textbf{Tolerance.} $\varepsilon = 10^{-9}$ in every committed campaign,
against a pinned integrator, so the gate is effectively exact match.
\item \textbf{Iteration budget per campaign.} \texttt{-{}-max-iters}, default
$5$ (\texttt{continuous\_\allowbreak danger\_\allowbreak synthesis.\allowbreak py:306}). The two region-guidance
campaigns used $15$ (the $3\times$-budget arm). No other value appears in
\texttt{results/}.
\item \textbf{The loop is memoryless.} Each iteration sends exactly one message,
\texttt{provider.\allowbreak complete([\{"role": "user",\allowbreak  "content": msg\}], model=model)}
(\texttt{contract.\allowbreak py:\allowbreak 207--208}), rebuilt from (contract, current code, current
failures). No conversation history, no system message and no assistant turn is
carried across iterations; the only state passed forward is the code string.
Asserted by
\texttt{test\_\allowbreak llm\_\allowbreak calls\_\allowbreak per\_\allowbreak seed\_\allowbreak is\_\allowbreak one\_\allowbreak plus\_\allowbreak refine\_\allowbreak iterations}, which
checks that the role sequence is \texttt{[system, user]} on call $0$ and
\texttt{[user]} on every refine call.
\item \textbf{What is recorded.} \texttt{refine\_\allowbreak iterations} per cell is the
number of refine calls issued (\texttt{contract.py:212}, \texttt{:304}); it
equals \texttt{max\_iters} exactly when the budget was exhausted without
reaching accuracy $1.0$.
\end{itemize}

\subsection{Code extraction, and an unparseable reply}
\label{app:proto-extract}

Extraction is \texttt{src/\allowbreak cwm/\allowbreak synthesizer.\allowbreak py:28--35}, used for both message types
(\texttt{contract.py:290}, \texttt{:210}): the first \texttt{python}-tagged
fenced block if there is one, else the first fenced block with any other tag,
else the whole reply stripped of surrounding whitespace.

There is therefore \textbf{no parse-failure path}: a reply containing no code
block is passed on as if it were code, and it fails the gate rather than raising.
\texttt{contract\_\allowbreak accuracy} runs the candidate in a sandbox subprocess
(\texttt{contract.\allowbreak py:\allowbreak 101--132}); if the sandbox fails, produces no output or
prints non-JSON, the accuracy is $0.0$ and the single failure line is the error
text (\texttt{contract.\allowbreak py:\allowbreak 173--175}), and the refine loop continues normally. In
the separate audit path such an artifact is classified \texttt{invalid} ---
\texttt{ast.parse} raising \texttt{SyntaxError}, a missing or uncallable
\texttt{step}/\texttt{reward}, or a raising trivial call
(\texttt{src/\allowbreak cwm/\allowbreak continuous/\allowbreak artifact\_\allowbreak class.\allowbreak py:90--113}, preflight at
\texttt{:59--88}) --- and \texttt{invalid} is deliberately kept distinct from
\texttt{gate\_failing}. Nothing retries a reply because it was unparseable, and
nothing discards a cell for it.

\subsection{Retry and error policy}
\label{app:proto-retry}

\begin{itemize}
\item \textbf{Transport-level retries only}, from the OpenAI SDK:
\texttt{max\_retries = 6}, \texttt{timeout = 120}\,s for Azure
(\texttt{src/\allowbreak cwm/\allowbreak llm/\allowbreak azure\_\allowbreak openai.\allowbreak py:10--16}) and
\texttt{max\_retries = 6}, \texttt{timeout = 300}\,s, \texttt{max\_tokens = 8192} for the OpenAI-compatible provider
(\texttt{src/\allowbreak cwm/\allowbreak llm/\allowbreak openai\_\allowbreak compat.\allowbreak py:11--16}). The SDK does exponential backoff
and honours \texttt{Retry-After}; the wrappers only raise the ceiling from the
SDK default of $2$.
\item \textbf{No application-level retry and no error swallowing.} There is no
\texttt{try}/\texttt{except} anywhere around \texttt{provider.\allowbreak complete} in
\texttt{contract.py} or in \texttt{continuous\_\allowbreak danger\_\allowbreak synthesis.\allowbreak py}. A refusal
(no choices) or missing usage metadata raises \texttt{ValueError}
(\texttt{azure\_\allowbreak openai.\allowbreak py:21--25}, \texttt{openai\_\allowbreak compat.\allowbreak py:21--22}) and the
exception propagates out of the sweep loop, ending the run.
\item \textbf{What that costs.} Nothing already computed: the sweep checkpoints
atomically after every cell (\texttt{continuous\_\allowbreak danger\_\allowbreak synthesis.\allowbreak py:179},
temporary file plus \texttt{os.replace}), skips already-present
\emph{(arm, seed index)} pairs on restart (\texttt{:155--160}), and refuses to
resume a file produced under different result-defining flags
(\texttt{:125--146}).
\item \textbf{Consequence for the paper.} No cell in \texttt{results/} was
retried, resampled or dropped because of a provider error: every cell is the
first and only draw for its \emph{(arm, seed)}.
\end{itemize}

\subsection{Sampling parameters, deployments, API version}
\label{app:proto-params}

The sampling parameters actually sent are those in the
\texttt{chat.\allowbreak completions.\allowbreak create(\dots)\allowbreak } call sites
(\texttt{provider\_\allowbreak sampling\_\allowbreak params} in the JSON): the Azure wrapper sends
\texttt{model} and \texttt{messages} only (\texttt{azure\_\allowbreak openai.\allowbreak py:\allowbreak 20}); the
OpenAI-compatible wrapper sends \texttt{model}, \texttt{messages} and
\texttt{max\_\allowbreak tokens=\allowbreak 8192} (\texttt{openai\_\allowbreak compat.\allowbreak py:19--20}). Neither sends
\texttt{temperature}, \texttt{top\_p} or a request-level \texttt{seed}: all three
are left at the provider default. The reason is recorded in the code
(\texttt{azure\_\allowbreak openai.\allowbreak py:\allowbreak 19}): \emph{no temperature/top\_p --- GPT-5.4 rejects
them}. So the runs are \textbf{not} temperature-$0$ runs; they are whatever the
deployment's default sampling is, which is why every campaign is reported as a
distribution over seeds rather than as a single artifact. Consequently the LLM
half of the pipeline is not bit-reproducible, while the sample half is (rollout
seed $10\,000 \cdot (i + 1 + \texttt{seed\_offset})$,
\texttt{continuous\_\allowbreak danger\_\allowbreak synthesis.\allowbreak py:164}); the synthesized code of every
cell is versioned in its result JSON, so every reported number is re-derivable
from the artifacts even though a re-run draws new ones.

Deployments are selected by the positional \texttt{size} argument
(\texttt{continuous\_\allowbreak danger\_\allowbreak synthesis.\allowbreak py:325--327}) and are confirmed by the
\texttt{model} field each result JSON records for itself: \texttt{large} is
\textbf{\texttt{gpt-5.4}}, \texttt{mini} is \textbf{\texttt{gpt-5.4-mini}}, and
\texttt{nano} (\texttt{gpt-5-nano}) is \textbf{never used in any paper-2
campaign} --- no result JSON carries it. The Azure API version is
\textbf{\texttt{2025-04-01-preview}}, read from \texttt{AZURE\_\allowbreak OPENAI\_\allowbreak API\_\allowbreak VERSION}
(\texttt{:331}). The cross-family arm is
\texttt{-{}-compat-model Qwen/Qwen3-Coder-30B-A3B-Instruct} through the Hugging
Face Inference Providers router at \url{https://router.huggingface.co/v1}
(\texttt{:307}) with \texttt{HF\_TOKEN} authentication (\texttt{:313}). The
Claude arm is agent-relayed and has no API wrapper
(Section~\ref{app:proto-relay}).

\subsection{LLM calls per seed}
\label{app:proto-calls}

Per cell, \texttt{llm\_calls} $=$ $1$ (synthesis, \texttt{contract.py:289}) $+$
\texttt{refine\_\allowbreak iterations} (\texttt{contract.py:207}, counter at \texttt{:212}).
The identity is verified against the actual number of provider invocations in
\texttt{tests/\allowbreak test\_\allowbreak prompt\_\allowbreak variants.\allowbreak py} (first-try success, one and three
refinements, and the \texttt{max\_iters} cap).

\begin{table}[htbp]
\centering
\footnotesize
\caption{LLM calls per campaign. ``Campaign'' names the artifact
\texttt{results/\allowbreak continuous\_\allowbreak synthesis\_\allowbreak $\langle$campaign$\rangle$.\allowbreak json}, with
\texttt{qwen} abbreviating \texttt{compat-qwen3-coder-30b-a3b-instruct}.
Every campaign uses the \texttt{default} prompt variant and
\texttt{max\_iters}~$=5$ except the two \texttt{pv-region\_it15} campaigns
(\texttt{region} variant, \texttt{max\_iters}~$=15$). ``Cap'' counts cells that
exhausted the iteration budget without passing the gate. Source:
\texttt{results/\allowbreak llm\_\allowbreak protocol\_\allowbreak facts.\allowbreak json} $\rightarrow$ \texttt{campaigns}.}
\label{tab:proto-calls}
\begin{tabular}{llrrcr}
\toprule
campaign & model & cells & calls & min / med / max & cap \\
\midrule
\texttt{qwen\_xwall8}                     & Qwen3-Coder-30B & 6  & 16  & 1 / 1 / 6    & 2  \\
\texttt{large\_xwall8}                    & gpt-5.4         & 40 & 48  & 1 / 1 / 2    & 0  \\
\texttt{large\_\allowbreak xwall8\_\allowbreak off20}             & gpt-5.4         & 40 & 50  & 1 / 1 / 2    & 0  \\
\texttt{mini\_xwall4}                     & gpt-5.4-mini    & 10 & 27  & 1 / 1 / 6    & 3  \\
\texttt{mini\_xwall8}                     & gpt-5.4-mini    & 40 & 62  & 1 / 1 / 6    & 1  \\
\texttt{patch2d\_\allowbreak qwen\_\allowbreak k3\_\allowbreak 7}             & Qwen3-Coder-30B & 3  & 3   & 1 / 1 / 1    & 0  \\
\texttt{patch2d\_\allowbreak large\_\allowbreak k3\_\allowbreak 7}            & gpt-5.4         & 40 & 140 & 1 / 3.5 / 6  & 20 \\
\texttt{patch2d\_\allowbreak large\_\allowbreak k3\_\allowbreak 7\_\allowbreak pv-region\_\allowbreak it15} & gpt-5.4   & 20 & 320 & 16 / 16 / 16 & 20 \\
\texttt{patch2d\_\allowbreak large\_\allowbreak k5\_\allowbreak 9}            & gpt-5.4         & 40 & 130 & 1 / 1 / 6    & 18 \\
\texttt{patch2d\_\allowbreak mini\_\allowbreak k3\_\allowbreak 7}             & gpt-5.4-mini    & 40 & 140 & 1 / 3.5 / 6  & 20 \\
\texttt{patch2d\_\allowbreak mini\_\allowbreak k3\_\allowbreak 7\_\allowbreak pv-region\_\allowbreak it15}  & gpt-5.4-mini & 20 & 320 & 16 / 16 / 16 & 20 \\
\texttt{patch2d\_\allowbreak mini\_\allowbreak k5\_\allowbreak 9}             & gpt-5.4-mini    & 40 & 130 & 1 / 1 / 6    & 18 \\
\texttt{patch2dsq\_\allowbreak large\_\allowbreak k3\_\allowbreak 7}          & gpt-5.4         & 40 & 140 & 1 / 3.5 / 6  & 20 \\
\texttt{patch2dsq\_\allowbreak mini\_\allowbreak k3\_\allowbreak 7}           & gpt-5.4-mini    & 40 & 140 & 1 / 3.5 / 6  & 20 \\
\texttt{pendulum\_\allowbreak qwen\_\allowbreak thstop1.\allowbreak 4}        & Qwen3-Coder-30B & 6  & 16  & 1 / 1 / 6    & 2  \\
\texttt{pendulum\_\allowbreak large\_\allowbreak thstop1.\allowbreak 4}       & gpt-5.4         & 40 & 50  & 1 / 1 / 2    & 0  \\
\texttt{pendulum\_\allowbreak large\_\allowbreak thstop1.\allowbreak 4\_\allowbreak off20}& gpt-5.4         & 40 & 54  & 1 / 1 / 2    & 0  \\
\texttt{pendulum\_\allowbreak large\_\allowbreak thstop1}         & gpt-5.4         & 40 & 59  & 1 / 1 / 2    & 0  \\
\texttt{pendulum\_\allowbreak mini\_\allowbreak thstop1.\allowbreak 4}        & gpt-5.4-mini    & 40 & 56  & 1 / 1 / 6    & 1  \\
\texttt{pendulum\_\allowbreak mini\_\allowbreak thstop1}          & gpt-5.4-mini    & 40 & 58  & 1 / 1 / 2    & 0  \\
\midrule
\textbf{total (20 campaigns)}             &                 & \textbf{625} & \textbf{1959} & 1 / --- / 16 & \\
\bottomrule
\end{tabular}
\end{table}

The totals are $20$ API campaigns, $625$ cells and $1959$ LLM calls, mean $3.13$
calls per cell, range $1$--$16$ (\texttt{totals} in the JSON); adding the two
agent-relayed Claude campaigns ($12$ cells, $39$ relayed calls) gives $1998$
calls in all. The shape of the distribution is itself a finding: on the
one-dimensional instruments the median cell costs \emph{one} call --- translation
succeeds on the first try --- whereas on PatchField2D at $k = (3,7)$ exactly half
the cells burn the whole budget ($20/40$ in each of the four \texttt{k3\_7}
campaigns), and $20/20$ in both fifteen-iteration campaigns, i.e.\ every
region-guidance cell exhausted $15$ refinements without passing the gate.

\subsection{What the gate is}
\label{app:proto-gate}

Per cell (\texttt{contract.\allowbreak py:\allowbreak 285--308}): (i)
\texttt{collect\_\allowbreak transitions(env,\allowbreak  n\_\allowbreak rollouts=40,\allowbreak  seed=10\_\allowbreak 000*(i+1+offset)\allowbreak )\allowbreak } ---
$N$ i.i.d.\ uniform-random rollouts on the truth, and \emph{the same transitions
are both the source of the shown examples and the gate}
(\texttt{contract.\allowbreak py:\allowbreak 37--57}); (ii) synthesize, then refine, giving
\texttt{gate\_accuracy} and \texttt{gate\_passed} $=$
(\texttt{accuracy} $= 1.0$); (iii) \texttt{sample\_\allowbreak contains\_\allowbreak wall} /
\texttt{sample\_\allowbreak contains\_\allowbreak mode\_\allowbreak per}, the identifiability event, logged per cell
(\texttt{contract.\allowbreak py:\allowbreak 60--64}); (iv) \texttt{mode\_blindness}
(\texttt{contract.\allowbreak py:\allowbreak 238--265}), computed \emph{only} when the gate passed: the
fraction of mode-region probe transitions the artifact gets wrong, per mode, at
$\varepsilon = 10^{-6}$, where the probes fire their mode in the truth by
construction and this is asserted at runtime (\texttt{contract.py:253}).
Single-mode instruments emit a scalar under \texttt{wall\_blindness};
PatchField2D emits the per-mode dictionary \texttt{mode\_blindness} and puts its
mean in \texttt{wall\_blindness}.

\subsection{Operational classification rules}
\label{app:proto-classes}

These are the executable definitions, not paraphrases of them.

\paragraph{repaired --- two levels, and they are not the same predicate.} On the
one-dimensional instruments (\texttt{scripts/\allowbreak audit\_\allowbreak paper2\_\allowbreak numbers.\allowbreak py:368--371})
it is \texttt{gate\_\allowbreak passed and (wall\_\allowbreak blindness or 0)\allowbreak  == 0.\allowbreak 0}: the gate reached
accuracy $1.0$ \emph{and} every mode probe is reproduced exactly, over cells with
\texttt{arm =\allowbreak =\allowbreak  "incomplete"} and \texttt{sample\_\allowbreak contains\_\allowbreak wall} true. On
PatchField2D (\texttt{audit\_\allowbreak paper2\_\allowbreak numbers.\allowbreak py:296}) it is
\texttt{[c for c in mode\_\allowbreak present if c["gate\_\allowbreak passed"]\allowbreak ]\allowbreak }: gate passage alone,
over cells whose sample contained some mode. That is the \emph{weaker} test, and
it is the honest one to quote for the $0/76$: not one artifact even reached
accuracy $1.0$, so no blindness test was needed to reject it.

\paragraph{blind, superstitious patch, disc-form --- behavioural.} All three are
decided by probing the artifact's \texttt{step} on a grid and comparing against
the pure integrator, behaviourally rather than by reading the source
(\texttt{scripts/\allowbreak patch2d\_\allowbreak artifact\_\allowbreak audit.\allowbreak py}). Instrument (\texttt{:51--55}):
an $81 \times 81$ grid over $x \in [-2, 14]$, $y \in [-8, 8]$, action $0.3$,
deviation threshold $10^{-6}$, and two velocity slices
$(v_x, v_y) \in \{(0,0), (1.5, 0.5)\}$ (\texttt{:191--192}); $m_0$ is the
zero-velocity deviation mask and $n_0$, $n_1$ the deviating-cell counts in the
two slices. Components are 4-neighbour connected (\texttt{:115--136}); the
classes in Table~\ref{tab:proto-classes} are mutually exclusive and exhaustive
and are evaluated in the order listed. The classifier is validated against
constructed artifacts of known class under \texttt{-{}-selftest}
(\texttt{:427--459}), which builds a blind artifact, a half-plane, a disc, a
square, a velocity-dependent freeze, a point trap and an exact-coordinate trap
and asserts the returned class.

\begin{table}[htbp]
\centering
\footnotesize
\caption{The behavioural classifier of
\texttt{scripts/\allowbreak patch2d\_\allowbreak artifact\_\allowbreak audit.\allowbreak py}, in evaluation order.}
\label{tab:proto-classes}
\begin{tabular}{p{0.16\textwidth}p{0.60\textwidth}r}
\toprule
class & operational rule & line \\
\midrule
\texttt{invalid} & \texttt{exec} raises, \texttt{step} missing, or
  \texttt{step} returns a state of the wrong arity & \texttt{:183--189} \\
\texttt{crash} & mask construction raises & \texttt{:190--194} \\
\texttt{blind} & $n_0 = 0$ and $n_1 = 0$, \emph{and} \texttt{step}'s body has no
  \texttt{if} & \texttt{:234--237} \\
\texttt{blind-\allowbreak textual-\allowbreak patch} (the paper's \textbf{superstitious patch}) &
  $n_0 = 0$ and $n_1 = 0$, \emph{and} \texttt{step}'s body does contain an
  \texttt{if} (\texttt{\_\allowbreak step\_\allowbreak has\_\allowbreak conditional}, \texttt{:170--175}) --- a
  written rule whose trigger set is behaviourally empty, e.g.\ an
  exact-coordinate trap, which is measure-zero and so never fires &
  \texttt{:234--236} \\
\texttt{vdep} & $n_0 = 0$ but $n_1 > 0$: deviates only at nonzero velocity &
  \texttt{:238--240} \\
\texttt{point} & largest connected component below $0.5$ world-units$^2$ &
  \texttt{:244--247} \\
\texttt{halfplane} & largest component reaches the far-east column or the top or
  bottom row of the probe window (unbounded proxy,
  \texttt{\_\allowbreak touches\_\allowbreak far\_\allowbreak edge}, \texttt{:161--167}) & \texttt{:252--254} \\
\texttt{square-form} & bounded, \texttt{bbox\_fill} $\geq 0.90$ &
  \texttt{:257--258} \\
\textbf{\texttt{disc-form}} & bounded, $0.62 \leq$ \texttt{bbox\_fill} $< 0.90$
  \emph{and} \texttt{radial\_ratio} $\leq 1.30$ ($\pi/4 \approx 0.785$ is a
  disc's bbox fill; \texttt{radial\_ratio} is max over min radial extent across
  $16$ angular bins about the centroid, \texttt{\_\allowbreak shape\_\allowbreak metrics},
  \texttt{:139--158}) & \texttt{:259--260} \\
\texttt{bounded-other} & bounded, neither of the above: ellipses, hulls, unions,
  arcs & \texttt{:261--262} \\
\bottomrule
\end{tabular}
\end{table}

Two derived per-artifact quantities the paper leans on:
\texttt{cover\_p1}/\texttt{cover\_p2}, the fraction of the \emph{true} patch's
grid cells that the artifact's $m_0$ mask covers (\texttt{:199--209}), where
``contains the seen patch'' means $\mathrm{cover} > 0.9$ and
``patch-selective'' means additionally $\mathrm{cover}(\text{unseen}) < 0.1$
(\texttt{:400--402}); and \texttt{integrator\_\allowbreak exact}, meaning no
\emph{numeric} (non-freeze-form) deviation anywhere west of $x = 1$, so that
deviations there which preserve position and zero velocity count as an
overreaching mode rule rather than as wrong arithmetic (\texttt{:216--233}).

\subsection{Prompt variants}
\label{app:proto-variants}

Each variant sets exactly three knobs --- \texttt{max\_examples},
\texttt{max\_failures} and \texttt{guidance} --- and nothing else
(\texttt{continuous\_\allowbreak danger\_\allowbreak synthesis.\allowbreak py:198--240}). The contract text is never
touched by a variant (\texttt{build\_contract} does not take \texttt{guidance}),
so no variant can leak the mode clause.

\begin{table}[htbp]
\centering
\small
\caption{The prompt variants. Only \texttt{default} and \texttt{region} appear
in \texttt{results/}; \texttt{guided} and \texttt{landing} are the intermediate
and the follow-up treatment.}
\label{tab:proto-variants}
\begin{tabular}{lrrp{0.42\textwidth}}
\toprule
variant & \texttt{max\_examples} & \texttt{max\_failures} & guidance \\
\midrule
\texttt{default} & 30  & 20 & empty string; byte-identical to every pre-knob run \\
\texttt{guided}  & 120 & 40 & describe-the-region-first process guidance
  (\texttt{:198--204}) \\
\texttt{region}  & 120 & 40 & \texttt{guided} plus ``the trigger region need not
  be a 1-D threshold'' de-bias (\texttt{:205--210}) \\
\texttt{landing} & 120 & 40 & \texttt{region} plus \emph{one sentence} naming the
  trigger's argument (\texttt{:221--225}) \\
\bottomrule
\end{tabular}
\end{table}

The \texttt{landing} sentence, verbatim (\texttt{:221--225}):

\begin{quote}
A localized rule's trigger is evaluated on the state the integrator WOULD produce
for this step --- the landing position (x2, y2) from the integrator equations
above --- not on the position the step began at (x, y).
\end{quote}

It exists because the audit of the $40$ region-guidance artifacts found $36/40$
conditioning the freeze rule on the \emph{current} position instead of the
landing position $(x_2, y_2)$, the causal variable of the true rule --- a
variable-identification failure, which is a different failure from getting the
region's geometry wrong. The sentence names only the argument: it uses the
variable names the contract's own integrator block already introduces and states
nothing about the region's shape, centre, radius or count.
\texttt{tests/\allowbreak test\_\allowbreak prompt\_\allowbreak variants.\allowbreak py} asserts that isolation four ways: the
delta over \texttt{region} is exactly one sentence; it contains no shape word and
none of the truth's numeric constants; no line that the full (mode-stating)
contract adds over the incomplete one appears in any variant's guidance; and the
guidance inserts into both the synthesis and the refine message without moving
anything else.

\subsection{The agent-relayed Claude arms}
\label{app:proto-relay}

Same contract, same example lines, same refine message and the same five-iteration
memoryless loop, but the message was relayed to a Claude instance rather than
POSTed, so there is no provider wrapper, no \texttt{max\_retries} and no usage
metadata. Every prompt and reply is versioned verbatim under
\texttt{results/\allowbreak claude\_\allowbreak relay\_\allowbreak transcripts/\allowbreak } (\texttt{*\_msg$k$.txt} /
\texttt{*\_reply$k$.txt}, one pair per iteration), and the framing wrapper ---
including the instruction not to use tools or read repository files --- is
\texttt{results/\allowbreak claude\_\allowbreak relay\_\allowbreak transcripts/\allowbreak patch2d\_\allowbreak k3\_\allowbreak 7\_\allowbreak RELAY\_\allowbreak FRAMING.\allowbreak txt}.
The two campaigns are the 1-D cart-and-pendulum ledger
(\texttt{results/\allowbreak continuous\_\allowbreak claude\_\allowbreak relay.\allowbreak json}: $8$ cells, $20$ relayed calls,
min/median/max $1/1.5/6$) and the 2-D one
(\texttt{results/\allowbreak continuous\_\allowbreak claude\_\allowbreak relay\_\allowbreak patch2d\_\allowbreak k3\_\allowbreak 7.\allowbreak json}: $4$ cells, $19$
relayed calls, $1/6/6$). Both counts are cross-checked against the number of
\texttt{*\_msg*.txt} files on disk and agree exactly. Two files in the 2-D
transcript directory are named \texttt{.\allowbreak .\allowbreak \_\allowbreak duplicate\_\allowbreak relay\_\allowbreak DISCARDED.\allowbreak txt}: a
message relayed twice by accident, whose second reply was discarded. They are not
protocol iterations and are excluded from the count
(\texttt{scripts/\allowbreak llm\_\allowbreak protocol\_\allowbreak facts.\allowbreak py}, \texttt{\_\allowbreak transcript\_\allowbreak call\_\allowbreak count});
with them the on-disk file count would be $21$ rather than $19$.

\section{Reproducibility}
\label{sup:repro}

Everything in this section is generated rather than transcribed: the facts below are read
from \texttt{results/\allowbreak repro\_\allowbreak manifest.json}, and the numeric audit
checks the paper against that file like any other result.

\paragraph{Interpreter, platform, and one honest gap.} CPython $3.12.8$ on macOS $15.7.7$
(build \texttt{24G720}, Darwin $24.6.0$), Intel Core i9-9880H at $2.30$\,GHz, $16$ logical
cores, $16$\,GiB. \texttt{pyproject.toml} declares \texttt{requires-python >= 3.11}. No
result depends on the hardware except wall-clock: every CPU result is pure-Python or
\texttt{numpy} float64 with explicitly seeded \texttt{random.Random} streams,
single-threaded, no GPU, and no reduction whose order a BLAS could change, so bit-identical
replay is expected on any IEEE-754 host at the same CPython minor version. That expectation
is an argument from the code and \emph{not} a measurement: every number in this paper was
produced on one machine, and no second platform has ever run the sweeps. The quantities most
exposed to a different \texttt{libm} are the ones printed to twelve digits
($J_{\mathrm{truth}} = 17.757356407381$ and $17.772246981024$) and the $10^{-15}$
linear-fit residuals of Table~\ref{tab:smooth}; those are the first to re-check elsewhere.

\paragraph{Dependencies, and what is still missing.} \texttt{pyproject.toml} declares lower
bounds only. A pinned environment is committed at
\texttt{docs/\allowbreak paper2/\allowbreak requirements-frozen.\allowbreak txt} ($\texttt{pip freeze}$ inside the repository's
virtual environment), which is \emph{not} a lockfile: it carries no hashes and no resolver
metadata. What reproducibility still needs and does not have is an archived release --- a
git tag with a DOI --- so that ``the code is on GitHub'' becomes a citable artifact rather
than a mutable branch. We state this as an outstanding gap rather than implying otherwise.

\paragraph{Licence.} Code: Apache-2.0 (\texttt{LICENSE}, declared in
\texttt{pyproject.toml}). Paper text, figures and the result artifacts under
\texttt{results/}, including every synthesized program the paper quotes: CC-BY-4.0
(\texttt{docs/\allowbreak paper2/\allowbreak LICENSE-artifacts}).

\paragraph{Credentials.} The scripts read exactly six environment variables, listed with
their call sites in \texttt{docs/\allowbreak paper2/\allowbreak env.\allowbreak example}: \texttt{AZURE\_\allowbreak OPENAI\_\allowbreak ENDPOINT},
\texttt{AZURE\_\allowbreak OPENAI\_\allowbreak API\_\allowbreak KEY}, \texttt{AZURE\_\allowbreak OPENAI\_\allowbreak API\_\allowbreak VERSION},
\texttt{AZURE\_\allowbreak DEPLOYMENT\_\allowbreak MINI}, \texttt{AZURE\_\allowbreak DEPLOYMENT\_\allowbreak LARGE} and \texttt{HF\_TOKEN}.
No secret appears in the repository. Every CPU result in this paper runs without any of them.

\paragraph{What backs each table and figure.} The manifest maps every table and figure to
the versioned JSON it is derived from and the script that produced it, and it distinguishes
three tiers of checking rather than collapsing them: \emph{cell-parsed}, where
\texttt{scripts/\allowbreak audit\_\allowbreak paper2\_\allowbreak numbers.py} parses the
\texttt{tabular} out of \texttt{main.tex} and compares every cell against the JSON at
printed precision; \emph{claims-only}, where the audit does not parse the object but asserts
values quoted from it; and \emph{unaudited}. Of the fifteen tables and figures, nine are
cell-parsed, five are claims-only, and one --- Table~\ref{tab:pendulum-synthesis}, whose
counts are read from the per-seed synthesis JSONs by hand --- is unaudited, which we record
as a gap rather than leave implicit. The manifest is itself cross-checked two ways: a static
parse of the audit's sections collects the file arguments it names, and a dynamic pass execs
the audit with file reads instrumented and records what it actually opens. The two agreeing
is a stored assertion, and it failed on the generator's first run (the static parse missed
the files reached through a wrapper and a glob), which is what the cross-check is for.

\begin{table}[ht]
\centering
\footnotesize
\begin{tabular}{lll}
\toprule
object & backing \texttt{results/}\,JSON & checking \\
\midrule
\texttt{tab:epsstar} & \texttt{eps\_\allowbreak invariance\_\allowbreak threshold} & claims \\
\texttt{tab:danger} & \texttt{continuous\_\allowbreak reach} & cells \\
\texttt{fig:threshold} & \texttt{continuous\_\allowbreak reach} & claims \\
\texttt{fig:reach} & \texttt{continuous\_\allowbreak reach} & claims \\
\texttt{tab:pendulum} & \texttt{continuous\_\allowbreak pendulum} & cells \\
\texttt{tab:patch2d} & \texttt{continuous\_\allowbreak patch2d} & cells \\
\texttt{tab:cem} & \texttt{continuous\_cem}, \texttt{cem\_\allowbreak crossing\_\allowbreak bound} & cells \\
\texttt{tab:axes} & \texttt{continuous\_\allowbreak axes} & cells \\
\texttt{fig:axes} & \texttt{continuous\_\allowbreak axes} & claims \\
\texttt{tab:eps-sweep} & \texttt{continuous\_\allowbreak eps\_\allowbreak sweep} & cells \\
\texttt{tab:mitigation} & \texttt{continuous\_\allowbreak mitigation} & cells \\
\texttt{tab:\allowbreak patch2d-mitigation} & \texttt{continuous\_\allowbreak mitigation\_\allowbreak patch2d} & cells \\
\texttt{tab:\allowbreak pendulum-synthesis} & \emph{none} (per-seed JSONs) & \emph{unaudited} \\
\texttt{tab:smooth} & \texttt{continuous\_\allowbreak smooth\_\allowbreak probe} & cells \\
\texttt{fig:smooth} & \texttt{continuous\_\allowbreak smooth\_\allowbreak probe} & claims \\
\bottomrule
\end{tabular}
\caption{The manifest: what backs each table and figure, and how it is checked. ``cells''
means every printed cell is compared against the JSON at printed precision on every run of
the numeric audit; ``claims'' means the audit asserts quoted values without parsing the
object. Each JSON is written by the script of the same stem under \texttt{scripts/} (\texttt{continuous\_reach.json} by \texttt{continuous\_reach.py}, and so on); the figures are rendered from their table's JSON by \texttt{make\_paper2\_figures.py}, which runs no experiment, so every figure inherits its table's provenance. Machine-readable form, including the audit call sites per object, in
\texttt{results/\allowbreak repro\_\allowbreak manifest.\allowbreak json} under \texttt{manifest[]}.}
\label{tab:manifest}
\end{table}

\paragraph{Runtime, as a lower bound.} Thirty-seven of the result files record their own
\texttt{elapsed\_s}: CPU sweeps total $4.78$\,h and the LLM synthesis arms $6.01$\,h, so the
named campaigns are $10.79$\,h. That is a \emph{lower} bound on what the paper cost, because
each figure is one run of the script that wrote the file and therefore excludes the 5-seed
cells later superseded at 20 seeds, the ablation resume passes, every sweep superseded by a
later sample-size raise, and every failed or retried LLM call. Seven scripts named in the
commands below carry a \texttt{\~{}N min} annotation in this paper but write no
\texttt{elapsed\_s}, so those seven runtimes are \emph{unverified estimates} and are marked
as such in the manifest.

\paragraph{LLM cost, as a range and why not a point.} The call count is exact:
\textbf{1959} API calls over \textbf{625} cells across twenty campaigns, derived per cell as
one synthesis call plus its refinement iterations, of which the two guided
$3\times$-budget cells account for $640$; plus $20$ relay rounds on the
subscription-transport Claude arms. The \emph{token} count is not recorded --- the providers
capture usage and the refinement loop keeps it, but the emitted cell never copies it, so no
paper-2 artifact carries a token count. Combining the measured call count with the design
spec's pre-run estimate of $1$--$2$k prompt tokens per call brackets the API arms at roughly
$10^6$ to $10^7$ tokens, i.e.\ single to low-double-digit US dollars at 2026 prices for this
model class. We state that range rather than a figure, and note what would make it exact: an
Azure billing export for the window, or a re-derivation of first-call prompt tokens offline,
which is possible at zero API spend because \texttt{collect\_\allowbreak transitions},
\texttt{build\_contract} and \texttt{build\_\allowbreak synthesis\_\allowbreak messages} are pure functions of the
instrument, knob, seed and $N$ (completion tokens for superseded refinement iterations are
gone). The only \emph{measured} dollar figures in the repository belong to the companion
paper's campaigns and total \$3.73, useful as an order-of-magnitude anchor and nothing more.

\paragraph{Reproducing the paper.} Two commands regenerate everything derived:
\texttt{make\_\allowbreak paper2\_\allowbreak figures.\allowbreak py} rebuilds the figures from the JSONs, and
\texttt{audit\_\allowbreak paper2\_\allowbreak numbers.\allowbreak py} re-derives every table cell and counted prose claim from
\texttt{results/} and exits non-zero on any disagreement. The experiment commands themselves
are listed below; the CPU ones need no credentials.

\section{Pre-specification, and what was added after the result}
\label{sup:prespec}

A study that reports negative results, ablations built to explain them, and theory derived
after the fact owes the reader a way to tell those apart. This section is that ledger. Every
date below is read from the repository's history rather than recalled, and the commands that
re-derive each one are listed in \texttt{docs/\allowbreak paper2/\allowbreak PRESPEC-LEDGER.\allowbreak md}. Five labels are
used: \textbf{pre-specified} (a dated design document predicted it before any run),
\textbf{confirmatory} (ran as specified), \textbf{diagnostic} (added to explain or repair an
observed result), \textbf{exploratory ablation} (added after seeing a result, to test a
hypothesis the result suggested), and \textbf{post-hoc} (an analysis or a theorem chosen
after the data existed).

\paragraph{What was pre-specified.} The design document is dated 2026-07-06 and its
predictions were committed before the first run. Fourteen items are pre-specified and
confirmatory: the threshold law's expected shape; the reach mechanism as an explicit
\emph{go/no-go} gate on the whole study; the wall position as the rarity knob; the
gate-miss exactness check against $(1-r)^N$; the pervasive-error control arm that makes the
axis separation a separation; the smooth-bump contrast, with ``danger stays low'' predicted;
the smooth-learner probe; the full-spec synthesis control; the mode-absent conditional,
including the ${\approx}60\%$ rate; the $\varepsilon$-sensitivity sweep, which the spec
listed under \emph{Risks} as required; the pendulum as the second instrument; the Qwen
cross-family check; and the pendulum synthesis arm. The tightest of these is worth naming,
because it is the strongest pre-specification claim the paper can make: the spec's runbook
predicted the mode-present branch as a \emph{contingency} --- ``either outcome is a finding;
if it repairs, that becomes its own section'' --- so the repair result is a pre-registered
branch and not a post-hoc pivot. The first confirmation landed $2$\,h $40$\,min after the
prediction was committed.

\paragraph{What was added after the result it responds to.} Fifteen items, and the five the
review asks about specifically are all in this group. The PatchField2D instrument was named
in the spec as one of two options for a second instrument, with no predictions attached, and
its campaign was built eleven days later in response to a review of the 1D work. The
\textbf{square ablation} was added two to three days after the 2D repair failure it explains,
and it is a genuine falsification test of a hypothesis that was itself post-hoc: the
curvature reading was born of the result the ablation then excluded. The
\textbf{guided-prompt ablation} was added two days after the same failure, and its two cells
are the most expensive in the paper ($320$ calls each, $640$ of the campaign's $1959$). The
\textbf{disjoint seed blocks} and the \textbf{$50{,}000$-rollout dependence re-measurement}
are both repairs of errors a review found --- pooled intervals over shared samples, and an
unstated independence assumption --- eighteen and eight days after the results they correct.
The \textbf{sharp-plateau reward variant} was added nineteen days after the ``below random at
every knob'' claim it rescues. The distrust-region mitigation appears in no version of the
design document at all; the paper presents it as a contribution rather than as a prediction,
which is the correct framing. Two further items changed the paper's own reading of a result
rather than confirming it: the per-episode fence census turned a ``boundary-mapping
transient'' into ``a growing fraction of outright failures'', and the behavioural artifact
audit shipped once with a vacuous check (a mis-keyed dictionary lookup meant its
partial-repair condition never fired), whose correction moved a headline count.

\paragraph{The theory is post-hoc, and part of it was explicitly out of scope.} Every
proposition-backing artifact in this paper was produced after all the data existed, most of
them because a measurement looked suspiciously exact and the question was whether something
could be proved. That is a legitimate way to find a theorem and an illegitimate way to claim
a prediction, so none of the theory is presented as predicted. One item is more than
post-hoc: the design document said of exactly this material, ``covering-number analogues are
open; mention as a limitation, do not attempt for paper 2''. The paper now contains four
such results. They were added late, and the first version of the fencing bound was wrong in
the direction that mattered --- a covering number where a packing number was needed, with a
hand-computed constant that was also wrong --- caught by a reader's question rather than by
the numeric audit, and corrected the same day.

\paragraph{The manuscript's own claims of pre-registration and falsification.} Ten phrases
in this paper assert something about our process. Seven are supported, one is supported as a
statistical rather than a temporal claim (the coverage certificate's cell family is fixed
before the data by \emph{construction}, which is what that argument needs and does not
require a dated record), and \textbf{two were not supported by any dated record} and have
been corrected in this revision:
\begin{itemize}
\item The sharp-plateau variant's competence risk was described as ``the pre-registered
risk''. No dated text records it before the result; the pre-run note for that variant states
a cost, a payoff and a different caveat. The text now says ``the risk recorded with the
variant's script''.
\item The 2D section said ``we predicted partial repair''. The instrumentation that detects
partial repair --- the per-mode sample flags and the four-way branch summary --- was written
in the same commit as the run, so the expectation is real and designed for; but the sentence
itself first appears one day \emph{after} the measurement. The text now says the instrument
was built to detect partial repair and that none occurred.
\end{itemize}
A third phrase, ``an axis-aligned square with flat edges excludes the curvature reading'',
is sound as logic but was silent about chronology; it now states that the ablation was added
after the result it responds to.

\paragraph{Summary.} Fourteen pre-specified and confirmatory items, all on the
one-dimensional story: the mechanism, the axis separation, the smooth contrast, both 1D
instruments, and the synthesis trichotomy including its contingency branch. Fifteen items
added after the results they respond to, of which five are diagnostic repairs driven by
review, five are exploratory ablations, three are post-hoc analyses and two are mixed. Ten
post-hoc theory results. The reader should treat the 1D synthesis result as confirmatory and
everything about the two-dimensional mode --- including the two ablations that exclude
curvature and prompting --- as exploratory work whose hypotheses were formed after the
finding.

\section*{Declarations}

\paragraph{Funding.} This work received no external funding. The LLM API costs, bounded in
Section~\ref{sup:repro}, were borne by the author.

\paragraph{Competing interests.} The author declares no competing interests.

\paragraph{Data and code availability.} All code, every result artifact, and every
synthesized program the paper quotes are in the repository named in Section~\ref{sup:repro}
(code under Apache-2.0, paper and artifacts under CC-BY-4.0). The two commands that
regenerate every figure and re-derive every table cell from those artifacts are given there.
Section~\ref{sup:repro} also states what availability still lacks: an archived, DOI-bearing
release rather than a branch.

\paragraph{Use of AI systems.} Large language models are the object of study here: they
synthesize the world models the paper evaluates, under the protocol specified in full in
Section~\ref{sup:protocol}, with every prompt, deployment identifier and call count
recorded. Language models were additionally used as writing and analysis assistants during
the preparation of the manuscript and the experiment code. All theorem statements, proofs,
experimental designs and conclusions are the author's, and every number in the paper is
re-derived from a versioned artifact by \texttt{scripts/audit\_paper2\_numbers.py} rather
than taken on trust.

\paragraph{Anonymization.} This version is not anonymized: it names the author, the
affiliation, a personal URL, and a repository whose name identifies the author. A
double-blind submission requires removing those four, and replacing the nine citations to
the companion paper with a third-person reference to it.

\bibliography{references}

\end{document}